\def\coltversion{0}
\def\standardversion{1}
\def\siamversion{2}
  \def\version{\siamversion}
  \def\final{1}
  \newcommand{\opdot}{.}
  \newcommand{\newsiamthm}[1]{}
  \newcommand{\coltauthor}[1]{}
  \newcommand{\qedhere}{}   
  \newcommand{\opdot}{.}
  \newcommand{\newsiamthm}[1]{}
  \newcommand{\coltauthor}[1]{}
  \def\citep{\cite}
  \newcommand{\opdot}{}
  \newcommand{\State}{\STATE}
  \newcommand{\Function}[2]{\State \textbf{function} \textsc{#1}(#2)}
  \renewcommand{\For}{\FOR}
  \newcommand{\EndFor}{\ENDFOR}
  \newcommand{\EndFunction}{}
  \newcommand{\email}[1]{\href{mailto:#1}{#1}}
\newcommand{\ifrac}[2]{{#1}/{#2}}
\newcommand{\mycitet}[3][]{\processifversion{usenatbib}{%
    \ifthenelse{\equal{#1}{}}{\citet{#3}}{\citet[#1]{#3}}}%
  \processifversion{nonatbib}{%
    \ifthenelse{\equal{#1}{}}{#2\@ \cite{#3}}{#2\@ \cite[#1]{#3}}}%
}
  \newtheorem{case}{Case}
  \theoremstyle{nonumberplain}
  \newtheorem{subproof}{Proof of Claim}
  \newcommand{\proofword}{Proof }
  \newcommand{\qedhere}{}
  \newenvironment{defn}{\begin{definition}}{\end{definition}}
  \newcommand{\resetcasecount}{\setcounter{case}{0}}
  \def\theoremstyle{}
  \theoremstyle{plain}
  \newtheorem{thm}{Theorem}[section]
  \newtheorem*{thm*}{Theorem}
  \newtheorem{prop}[thm]{Proposition}
  \newtheorem{cor}[thm]{Corollary}
  \newtheorem{lem}[thm]{Lemma}
  \newtheorem{obs}[thm]{Observation}
  \newtheorem{fact}[thm]{Fact}
  \newtheorem{claim}{Claim}[thm]
  \newtheorem{rmk}[thm]{Remark}
  \theoremstyle{remark}
  \newtheorem{case}{Case}
  \newtheorem{assump}[thm]{Assumption}
  \newenvironment{subproof}[1][Proof of claim]{\begin{proof}[#1] }{\end{proof}}
  \newcommand{\proofword}{Proof }
  \newenvironment{subproof}%
  {%
    \par\noindent{\bfseries\upshape Proof of Claim\ }%
  }%
  {\hfill$\blacktriangle$\\[2mm]}
  \newcommand{\proofword}{}
  \theoremstyle{definition}
  \newtheorem{defn}[thm]{Definition}
  \newcommand{\resetcasecount}{\setcounter{case}{0}}
\numberwithin{theorem}{section}
  \crefname{thm}{Theorem}{Theorems}
  \crefname{prop}{Proposition}{Propositions}
  \crefname{cor}{Corollary}{Corollaries}
  \crefname{lem}{Lemma}{Lemmas}
  \crefname{obs}{Observation}{Observations}
  \crefname{fact}{Fact}{Facts}
  \crefname{construction}{Construction}{Constructions}
  \crefname{claim}{Claim}{Claims}
  \crefname{rmk}{Remark}{Remarks}
  \crefname{assump}{Assumption}{Assumptions}
  \crefname{case}{Case}{Cases}
\newlength{\parsave}
\newlength{\caseindent}
\newenvironment{caseblock}{\setlength{\parsave}{\parindent}%
  \begin{list}{}{\setlength{\leftmargin}{\caseindent}}\item\relax%
    \setlength{\parindent}{\parsave}}{\end{list}}
\newcommand{\fg}[1][]{F}
\newcommand{\ind}[1]{{(#1)}}
\newcommand{\Pmap}{\ensuremath{P}}
\newcommand{\sphere}{S}
\newcommand{\suchthat}{\mid}
\newcommand{\dm}{m}
\newcommand{\ldim}{\dm}
\newcommand{\dn}{d}
\newcommand{\myZ}{e}
\newcommand{\myZv}{\vec \myZ}
\newcommand{\hbev}{\myZv}
\newcommand{\elpow}[1]{\ensuremath{^{\langle #1 \rangle}}}
\newcommand{\tsim}{\ensuremath{{\sim}}}
\newcommand{\indicator}[1]{\mathbbm 1_{[#1]}}
\newcommand{\orthsymb}{\ensuremath{Q}}
\newcommand{\porth}{\ensuremath{\orthsymb_+}}
\newcommand{\sef}{odeco function}
\newcommand{\Sef}{Odeco function}
\newcommand{\SEF}{Odeco Function}
\newcommand{\ansef}{an {\sef}}
\newcommand{\Ansef}{An {\sef}}
\newcommand{\psef}{positive {\sef}}
\newcommand{\apsef}{a {\psef}}
\newcommand{\range}{\mathcal R}
\newcommand{\Jacob}{\mathrm{D}}
\newcommand{\rcomp}[1]{^{(#1)}}
\newcommand{\pertgF}{\ensuremath{\widehat{\nabla F}}}
\def\tcompress{\ensuremath{\tau_{\raisebox{-1.5pt}{$\scriptstyle{\ref{prop:GI-Loop-Small-Coords-Error}}$}}}}
\newcommand{\fcondexpand}{\ensuremath{\big(\frac{\beta \gamma}{\alpha \delta}\big)}}
\newcommand{\expectation}{\operatorname{\mathbb{E}}}
\newcommand{\e}{\expectation}
\newcommand{\compl}[1]{\bar{#1}}
\newcommand{\Id}{\mathcal{I}}
\def\pinterval{\ensuremath{[0, 1]}}
\def\ninterval{\ensuremath{[-1, 0]}}
\newcommand{\tvec}[1]{\hat{\vec{#1}}}
\newlist{compactitem}{itemize}{3}\setlist[compactitem]{topsep=3pt,partopsep=0pt,itemsep=0pt,parsep=3pt}\setlist[compactitem,1]{label=\textbullet}\setlist[compactitem,2]{label=---}\setlist[compactitem,3]{label=*}
\newlist{compactdesc}{description}{3}\setlist[compactdesc]{topsep=0pt,partopsep=0pt,itemsep=0pt,parsep=0pt}
\newlist{compactenum}{enumerate}{3}\setlist[compactenum]{topsep=0pt,partopsep=0pt,itemsep=0pt,parsep=0pt}
\setlist[compactenum,1]{label=\arabic*.,ref=\arabic*}\setlist[compactenum,2]{label=\alph*., ref=\alph*}\setlist[compactenum,3]{label=\roman*.,ref=\roman*}
\DeclareMathOperator{\vol}{vol}
\newcommand{\lnote}[1]{[{\small Luis: \textbf{#1}}]\marginpar{*}}
\newcommand{\mnote}[1]{[{\small Misha: \textbf{#1}}]\marginpar{*}}
\newcommand{\vnote}[1]{[{\small Voss: \textbf{#1}}]\marginpar{*}}
\newcommand{\vnotep}[1]{[xxx {\small Voss: \textbf{#1}} xxx]}
\newcommand{\anonnote}[1]{[{\small anon: \textbf{#1}}]\marginpar{*}}
\newcommand{\sidecomment}[1]{\marginpar{\tiny #1}}
\newcommand{\details}[1]{[[#1]]}
\newcommand{\lnote}[1]{}
\newcommand{\mnote}[1]{}
\newcommand{\vnote}[1]{}
\newcommand{\vnotep}[1]{}
\newcommand{\anonnote}[1]{}
\newcommand{\sidecomment}[1]{}
\newcommand{\details}[1]{}
\newcommand{\abs}[1]{{\ensuremath | #1 |}}
\newcommand{\Abs}[1]{\ensuremath \left| #1 \right|}
\newcommand{\restr}[1]{\ensuremath \vert_{#1}}
\newcommand{\ip}[2]{\ensuremath{\langle #1,\, #2 \rangle}}
\newcommand{\norm}[2][]{\ensuremath{\lVert #2 \rVert_{#1}}}
\newcommand{\Norm}[2][]{\ensuremath{\left\lVert #2 \right\rVert_{#1}}}
\newcommand{\norms}[1]{{\lVert#1\rVert}^2}
\newcommand{\E}{\mathbb{E}}
\newcommand{\N}{\mathbb{N}}
\newcommand{\R}{\mathbb{R}}
\renewcommand{\Pr}{\mathbb{P}}
\newcommand{\CC}{\mathcal{C}}
\newcommand{\GG}{G}
\newcommand{\RR}{\mathcal{R}}
\newcommand{\HH}{\mathcal{H}}
\newcommand{\LL}{\mathcal{L}}
\newcommand{\NN}{\mathcal{N}}
\renewcommand{\SS}{\mathcal{S}}
\newcommand{\XX}{\mathcal{X}}
\newcommand{\gvec}[1]{\ensuremath{\boldsymbol{#1}}}
\DeclareMathOperator{\argmax}{arg\ max}
\DeclareMathOperator{\argmin}{arg\ min}
\DeclareMathOperator{\cov}{cov}
\DeclareMathOperator{\diag}{diag}
\DeclareMathOperator{\poly}{poly}
\DeclareMathOperator{\sign}{sign}
\DeclareMathOperator{\spn}{span}
\DeclareMathOperator{\var}{var}
\newcommand{\grad}{\nabla}
\renewcommand{\vec}[1]{\ensuremath{\mathbf{#1}}}
\newcommand*{\Cdot}[1][1.25]{%
  \mathpalette{\CdotAux{#1}}\bullet%
}
\newdimen\CdotAxis
\newcommand*{\CdotAux}[3]{%
  {%
    \settoheight\CdotAxis{$#2\vcenter{}$}%
    \sbox0{%
      \raisebox\CdotAxis{%
        \scalebox{#1}{%
          \raisebox{-\CdotAxis}{%
            $\mathsurround=0pt #2#3$%
          }%
        }%
      }%
    }%
    \dp0=0pt %
    \sbox2{$#2\bullet$}%
    \ifdim\ht2<\ht0 %
      \ht0=\ht2 %
    \fi
    \sbox2{$\mathsurround=0pt #2#3$}%
    \hbox to \wd2{\hss\usebox{0}\hss}%
  }%
}
\newcommand{\argdot}{\Cdot[.8]}
\newcommand{\ipCanonical}[2]{\ip{#1}{#2}}
\newcommand{\ipCanonicalp}[2]{\ip{#1}{#2}}
\title{Eigenvectors of Orthogonally Decomposable Functions%
  \thanks{An earlier version of this paper was presented at the 2016 Conference on Learning Theory (COLT) under the title ``Basis Learning as an Algorithmic Primitive''.}}
  \author{
    Mikhail Belkin \\
    Ohio State University \\
    \email{mbelkin@cse.ohio-state.edu}
    \and
    Luis Rademacher \\
    University of California, Davis \\
    \email{lrademac@ucdavis.edu}
    \and
    James Voss\thanks{Corresponding author.} \\
    Ohio State University \\
    \email{vossj@cse.ohio-state.edu}
  }
  \author{
    Mikhail Belkin%
    \thanks{Ohio State University, Columbus, OH (\email{mbelkin@cse.ohio-state.edu}).}
    \and
    Luis Rademacher%
    \thanks{University of California, Davis, CA (\email{lrademac@ucdavis.edu}).}
    \and
    James Voss%
    \thanks{Ohio State University, Columbus, OH (\email{vossj@cse.ohio-state.edu})}
  }
\DeclareFixedFootnote{\footfixedpt}{These fixed points are fixed possibly up to a sign flip.
  Alternatively stated, these are fixed points in projective space.}
\begin{document}
\maketitle

\begin{abstract}
The Eigendecomposition of quadratic forms (symmetric matrices) guaranteed by the spectral theorem is a foundational result in applied mathematics.
Motivated by a shared structure found in inferential problems of recent interest---namely orthogonal tensor decompositions, Independent Component Analysis (ICA),  topic models, spectral clustering, and Gaussian mixture learning---we generalize the eigendecomposition from quadratic forms to a broad class of ``orthogonally decomposable" functions.
We identify a key role of convexity in our extension, and we generalize two traditional characterizations of eigenvectors:
First, the eigenvectors of a quadratic form arise from the optima structure of the quadratic form on the sphere.
Second, the eigenvectors are the fixed points of the power iteration.

In our setting, we consider a simple first order generalization of the power method which we call gradient iteration.
It leads to efficient and easily implementable methods for basis recovery.
It includes influential Machine Learning methods such as cumulant-based FastICA and the tensor power iteration for orthogonally decomposable tensors as special cases.

We provide a complete theoretical analysis of gradient iteration using the structure theory of discrete dynamical systems to show almost sure convergence
and fast (super-linear) convergence rates.
The analysis also extends to the case when the observed function is only approximately orthogonally decomposable, with bounds that are polynomial in dimension and other relevant parameters, such as perturbation size.
Our perturbation results can be considered as a non-linear version of the classical Davis-Kahan theorem for perturbations of eigenvectors of symmetric matrices.

\end{abstract}
%
%

\ifnum\version=\standardversion
\newpage
\fi

\section{Introduction}


The spectral theorem for symmetric matrices is no doubt among  the most fundamental mathematical results used ubiquitously throughout mathematics and applications.
The spectral theorem states that a symmetric matrix $A$ can be diagonalized in some orthonormal ``eigenvector" basis $\myZv_i$ or, equivalently, that any quadratic form $\langle \vec u, A \vec u\rangle $  can be written as $\langle \vec u, A \vec u\rangle = \sum_i \lambda_i \ip{\vec u}{\myZv_i}^2 $.  Recovering  the basis $\myZv_i$'s  accurately and efficiently is one of the key problems in numerical analysis and a subject of an extensive literature.

More recently it has been realized that a number of problems in data analysis and signal processing can be recast as recovering an orthogonal  basis from a more general non-quadratic function.

In this paper we introduce ``orthogonally decomposable" functions,  a generalization of quadratic forms and orthogonally decomposable tensors, allowing for a basis decomposition similar to that given by the spectral theorem.
We identify a key role of convexity in the extension of traditional characterizations of eigenvectors of quadratic forms to our framework.
Moreover, we will show that  a number of problems and techniques of recent theoretical and practical interest can be viewed within our setting.

Let   $\{\myZv_1, \dotsc, \myZv_\dm\}$ be a full or partial, unknown orthonormal basis in $\R^\dn$.
Choosing  a set of one-dimensional {\it contrast functions\footnote{We call the $g_i$s contrast functions following the Independent Component Analysis (ICA) terminology. Note, however, that in the ICA setting our ``contrast functions" correspond to different scalings of the ICA contrast function.}}  $g_i : \R \rightarrow \R$, we define the orthogonally decomposable (odeco) function $F : \R^\dn \rightarrow \R$ as
\begin{equation}
  \fg(\vec u) := \sum_{i=1}^\dm g_i(\ip {\vec u} {\myZv_i}) \ .
\end{equation}
Our goal  will be to recover the set $\{\myZv_1, \dotsc, \myZv_\dm\}$ (fully or partially) through access to  $\nabla \fg(\vec u)$ (the exact setting), or to provide a provable approximation to these vectors given an estimate of $\nabla\fg(\vec u)$ (the noisy/perturbation setting).

To see how this basis recovery problem for orthogonally decomposable functions generalizes the eigenvector recovery problem for a symmetric matrix $A$, first consider the quadratic form $F_A(\vec u) := \ip{\vec u}{A \vec u}$.
From the eigendecomposition of $A$, we obtain $F_A(\vec u) = \sum_i \lambda_i \ip{\vec u}{\myZv_i}^2$, and we see that $F_A$ is an odeco function with contrast functions $g_i(t) = \lambda_i t^2$.
In addition, there are two characterizations of matrix eigenvectors which lead to algorithms for eigenvector recovery which we wish to generalize:
\begin{enumerate}
\item (dynamical system) the eigenvectors are the fixed points\footnote{%
Up  to sign  or in projective space.
} of the map ${\vec u} \mapsto \ifrac{A \vec u}{\norm{A\vec u}}$ and 
\item (maximization) the eigenvector with the largest eigenvalue corresponds to the global maximum 
of the quadratic form on the sphere, the second largest is the maximum in the orthogonal direction to the largest, and so on.
\end{enumerate}
Note that the discrete dynamical system point of view leads to the classical power method for matrix eigenvector recovery while the maximization view suggests various optimization procedures.

In what follows we identify conditions which allow these characterizations to be extended to a broad class of general orthogonally decomposable functions.
It turns out that the key is a specific kind of convexity, namely that  the functions $\abs{g_i(\sqrt x)}$ need to be convex.

Taking the dynamical systems point of view, we propose a fixed point method for recovering the hidden basis.
The basic algorithm consists simply of replacing the point with the normalized gradient at each step using the ``gradient iteration'' map ${\vec u} \mapsto \ifrac{\grad \fg(\vec u)}{\norm{\grad \fg(\vec u)}}$.
%
%
%
We show that when $\abs{g_i(\sqrt x)}$ is {\it strictly} convex, the desired basis directions  
are the only stable fixed points of the gradient iteration, and moreover that
the gradient iteration converges to one of the basis vectors given almost any starting point. 
Further, we link this gradient iteration algorithm to optimization of $F$ over the unit sphere by demonstrating that the hidden basis directions (that is, the stable fixed point of the gradient iteration) are also a complete enumeration of the local maxima of $\abs{F(\vec u)}$.
%
%

In this paper, we analyze the odeco function framework in the setting
where each $\abs{g_i(\sqrt x)}$ is \textit{strictly} convex.
Since in the matrix setting each $g_i(x) = \lambda_i x^2$ satisfies
only that $g_i(\sqrt{x})$ is convex but not \textit{strictly} so, the
matrix setting is precluded from the analysis.\footnote{ While our
  analysis does not capture the matrix case, some of the ideas
  underlying our analysis apply to the matrix setting. In the
  introduction of \cref{sec:guar-conv-hidd}, we briefly sketch how the
  proof techniques in our analysis relate to standard proofs that the
  matrix power iteration converges to the top eigenvector.
} The matrix setting ends up being a limit case to the analysis with
related but slightly different properties.

The proposed gradient iteration as analyzed directly generalizes several influential fixed point methods for performing hidden basis recovery in the machine learning and signal processing contexts including
cumulant-based FastICA \citep{Hyvarinen1999} and
 the tensor power method \citep{de1995higher,AnandkumarTensorDecomp} for orthogonally decomposable symmetric tensors. 
One of our  main conceptual contributions is to demonstrate that the success of such power iterations need not be viewed as a consequence of a linear or multi-linear algebraic structure, but instead relies on an orthogonal decomposition of the function $F$ combined with a more fundamental \emph{convexity} structure.
Compared to the matrix and tensor cases, the dynamics of the general gradient iteration is significantly more complex. To show convergence, we use general results on stable/unstable manifolds for discrete dynamical systems.



Under our assumptions, we demonstrate that the gradient iteration exhibits superlinear convergence as opposed to the linear convergence of the standard power iteration for matrices but in line with some known results for ICA and tensor power methods~\citep{Hyvarinen1999,Nguyen2009,AnandkumarTensorDecomp}.
We provide conditions on the contrast functions $g_i$ to obtain specific higher orders of convergence.

It turns out that a  similar analysis still holds when we only have access to an approximation of $\nabla F$ (the noisy setting). In order to give polynomial run-time bounds  we analyze gradient iteration with
occasional random jumps\footnote{%
In a related work, \mycitet{Ge et al.}{ge2015escaping} use the standard gradient descent with random jumps to escape from saddle points in the context of online tensor decompositions.
}. The resulting algorithm still provably recovers an approximation to a hidden basis element.
By repeating the algorithm we can  recover the  full basis  $\{\myZv_1, \dotsc, \myZv_\dm\}$.
We provide an analysis of the resulting algorithm's accuracy and running time under a general perturbation model. 
Our bounds involve low degree polynomials in all relevant parameters---e.g., the ambient dimension, the number of basis elements to be recovered, and the perturbation size---and capture the superlinear convergence speeds of the gradient iteration.
Our accuracy bounds can be considered as a non-linear version of the classical perturbation theorem of {Davis and Kahan}~\citep{davis1970rotation} for eigenvectors of symmetric matrices.
Interestingly, to obtain these bounds we only require approximate access to $\nabla F$  and do not need to assume anything about the perturbations of the second derivatives of $F$ or even $F$ itself.
We note that our perturbation results allow for substantially more general perturbations than those used in the matrix and tensor settings, where the perturbation of a matrix/tensor is still a matrix/tensor.
In many realistic settings the perturbed model does not have the same structure as the original.
For example, in computer computations, $A \vec x$ is not  actually a linear function of $\vec x$ due to finite precision of floating point arithmetic.
Our perturbation model for $\nabla F$ still applies in these cases.

To highlight the parallels and differences with  the classical matrix case we provide a brief summary in the table below:
\vspace{1em}
\begin{small}
\begin{center}
\begin{tabular}{ccc}
\toprule
  & Symmetrix matrix $A$ & Odeco function \\
\midrule
  Functional form  & $\fg(\vec u)  =\langle \vec u, A \vec u\rangle = \sum_i \lambda_i \ip{\vec u}{\myZv_i}^2 $ & $\fg(\vec u) = \sum_{i=1}^\dm g_i(\ip {\vec u} {\myZv_i})$\\
\midrule
  Fixed point iteration & ${\vec u} \mapsto \frac{A \vec u}{\norm{\vec u}} $ & ${\vec u} \mapsto \frac{\grad \fg(\vec u)}{\norm{\grad \fg(\vec u)}}$ \\
\midrule
 ``Eigenvalues"  &  Constants $\lambda_i$ & monotone functions, Eq.~\ref{equ:monotone}  \\
\midrule
 Maxima on sphere & Top eigenvector&All ``eigenvectors"\\
\midrule
Attractors of iteration& Top eigenvector & All ``eigenvectors"\\
\midrule
Convergence rate & Linear & Superlinear\\
\midrule
Analysis & Based on homogeneity & Stable/unstable manifolds\\
&&discrete dynamical systems\\
\midrule
Perturbation stability& Linear& Linear\\
\bottomrule
\end{tabular}
\end{center}
\end{small}
\vspace{1em}

Below in \cref{sec:example-befs} we will show how  a number of  problems can be viewed in terms of hidden basis recovery.
Specifically, we briefly discuss how our primitive
can be used to recover clusters in spectral clustering, independent components in Independent Component Analysis (ICA), parameters of Gaussian mixtures and certain tensor decompositions.
Finally, in \cref{sec:robust-ICA} we apply our framework to obtain the first provable ICA recovery algorithm for arbitrary model perturbations.

\paragraph{Organization of the paper\opdot}
In \cref{sec:framework} we introduce the problem of basis recovery and sketch the main theoretical results of the paper.
We also show how our framework relates to spectral clustering, ICA, matrix and tensor decompositions and Gaussian Mixture Learning.
In \cref{sec:extrema-structure} we analyze the structure of the extrema of {\sef}s.
In \cref{sec:gi-stability-structure} we show that the fixed points of gradient iteration are in one-to-one correspondence with the {\sef}'s maxima and analyze convergence of gradient iteration in the exact case.
In \cref{sec:interpr-grad-iter} we give an interpretation of the gradient iteration algorithm as a form of adaptive gradient ascent.
In \cref{sec:gi-error-analysis} we describe a robust version of our algorithm and give a complete theoretical analysis for arbitrary perturbations.
\begin{vstandard}
Then, in \cref{sec:robust-ICA}, we show how to apply our framework to obtain a perturbation analysis of ICA under arbitrary model perturbations.
\end{vstandard}
\begin{vcolt}
In \cref{sec:robust-ICA} \begin{vshort}of the full version of this extended abstract \end{vshort}we apply our framework to obtain the first ICA algorithm which is provably robust for arbitrary model perturbations.
\end{vcolt}


\section{Problem description and the main results}
\label{sec:framework}



We consider a function optimization framework for hidden basis recovery.
More formally, let $\{\myZv_1, \dotsc, \myZv_\dm\}$ be a non-empty set of orthogonal unit vectors in $\R^\dn$.
These unit vectors form the unseen basis.
A function on a closed unit ball $F : \overline{B(0, 1)} 
\rightarrow \R$
is defined from ``contrast functions'' $g_i : [-1, 1] \rightarrow \R$ as:
\begin{equation}\label{eq:fg-def}
  \fg(\vec u) := \sum_{i=1}^\dm g_i(\ip {\vec u} {\myZv_i}) \ .
\end{equation}
We call $\fg$ an \textit{orthogonally decomposable function (\sef)}\label{defn:sef} with the associated tuples $\{(g_i, \myZv_i) \suchthat i \in [\dm]\}$.
The goal is to recover the hidden basis vectors $\myZv_i$ for $i \in [\dm]$ up to sign given evaluation access to $\fg$ and its gradient.
We will assume that $\dn \geq 2$ since otherwise the problem is trivial.
We consider contrast functions $g_i \in \CC^{(2)}([-1, 1])$
which satisfy the following assumptions:
\begin{assump}
  \label{assumpt:first}\label{assumpt:gsymmetries} $g_i$ is either an even or
  odd function.
\end{assump}
\begin{assump}
  \label{assumpt:convex}
  Strict convexity of $\abs{g_i(\sqrt x)}$: Either $\frac {d^2} {dx^2}
  g_i(\sqrt x) > 0$ on $(0, 1]$ or $- \frac {d^2} {dx^2} g_i(\sqrt x) > 0$ on
  $(0,
  1]$. 
\end{assump}
\begin{assump}
  \label{assumpt:deriv0}\label{assumpt:last-non-essential}
  The right derivative at the origin $\frac d {d x} g_i( \sqrt{ x }) |_{x =
    0^+}=0$.
\end{assump}
\begin{assump}
  \label{assumpt:origin-val} 
  $g_i(0) = 0$.
  \label{assumpt:last}
\end{assump}
\Cref{assumpt:convex} is slightly stronger than stating that one of $\pm g_i(\sqrt x)$ is strictly convex on $(0, 1]$.
From now on  $F$ and the term {\sef} will refer to {\ansef} with associated $\myZv_i$s and $g_i$s satisfying \crefrange{assumpt:first}{assumpt:last} unless otherwise stated.
\begin{rmk}
  \Cref{assumpt:origin-val} is non-essential.
  If each $g_i$
  satisfies \crefrange{assumpt:first}{assumpt:last-non-essential}, then $x
  \mapsto [g_i(x) - g_i(0)]$ satisfies \crefrange{assumpt:first}{assumpt:last}
  making $[\fg(\vec u) - \fg(\vec 0)] = \sum_{i=1}^\dm [g_i(\ip{\vec
    u}{\myZv_i}) - g_i(0)]$ {\ansef} of the desired form.
\end{rmk}

We shall see in section \cref{sec:example-befs} that {\sef}s arise naturally in a number of problems of interest within machine learning.
However, we will first summarize our main results showing that given {\ansef}, the directions $\myZv_1, \dotsc, \myZv_{\dm}$ can be efficiently recovered up to sign.

\subsection{Summary of the main results}
\label{sec:results-summary}

In what follows it will be convenient to append arbitrary orthonormal directions $\myZv_{\dm+1}, \dotsc, \myZv_{\dn}$ to  our hidden ``basis"
to obtain a full basis.
For the remainder of this paper, we simplify our notation by indexing vectors in $\R^{\dn}$ with respect to this hidden basis $\myZv_1, \dotsc, \myZv_\dn$.
That allows us to introduce the notation $u_i:=\ip {\vec u}{\myZv_i}$ for $\vec u \in \R^{\dn}$. Thus, $\fg(\vec u) = \sum_{i=1}^\dm g_i(u_i)$.

We now state the first result indicating that {\ansef} encodes   the basis $\myZv_1, \dotsc, \myZv_\dm$.
We use $S^{\dn - 1} := \{\vec u \suchthat \norm{\vec u} = 1\}$ to denote the unit sphere in $\R^{\dn}$.


\begin{thm}\label{thm:gen-simplex-optima}
  The set $\{\pm \myZv_i \suchthat i \in [\dm]\}$ is a complete enumeration of the local 
  maxima of $\abs \fg $ with respect to the domain $S^{\dn - 1}$.
\end{thm}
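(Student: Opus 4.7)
My plan rests on a ``hidden convexity'' reformulation. For each $i \in [\dm]$, define $\Phi_i : [0,1] \to \R$ by $\Phi_i(y) := \epsilon_i\, g_i(\sqrt{y})$, where $\epsilon_i \in \{\pm 1\}$ is the sign from Assumption~\ref{assumpt:convex} making $\Phi_i$ strictly convex on $(0,1]$. By \ref{assumpt:deriv0} and~\ref{assumpt:origin-val}, $\Phi_i(0) = 0$ and $\Phi_i'(0^+) = 0$, so strict convexity forces $\Phi_i(y) > 0$ and $\Phi_i'(y) > 0$ on $(0,1]$. The parity Assumption~\ref{assumpt:gsymmetries} yields the identities $|g_i(u)| = \Phi_i(u^2)$ and $g_i(u) = s_i(u)\,\Phi_i(u^2)$ for a sign $s_i(u) \in \{\pm 1\}$ that is locally constant on $\{u \neq 0\}$, together with $g_i'(u)/u = 2\,s_i(u)\,\Phi_i'(u^2)$ for $u \neq 0$ and $g_i'(0) = 0$.

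To show each $\pm \myZv_k$ is a local maximum of $|\fg|$, I use the pointwise majorization
\[
  |\fg(\vec u)| \;\le\; G(\vec u) \;:=\; \sum_{i=1}^{\dm} \Phi_i(u_i^2),
\]
which is tight at $\pm \myZv_k$. On $S^{\dn-1}$, writing $u_k^2 = 1 - \sum_{i \neq k} u_i^2$ and Taylor-expanding gives $\Phi_k(u_k^2) - \Phi_k(1) = -\Phi_k'(1)\sum_{i \neq k} u_i^2 + \text{h.o.t.}$, while $\Phi_i(u_i^2) = o(u_i^2)$ for $i \neq k$ (since $\Phi_i'(0^+) = 0$). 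As $\Phi_k'(1) > 0$, this shows $G$, and hence $|\fg|$, attains a strict local maximum at $\pm \myZv_k$.

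For the converse, let $\vec u^*$ be a local maximum of $|\fg|$ on $S^{\dn-1}$. A perturbation argument rules out $\fg(\vec u^*) = 0$: if all $u_i^* = 0$ for $i \le \dm$, moving toward any $\myZv_k$ produces $|\fg| \ge \Phi_k(u_k^2) > 0$; otherwise, strict monotonicity of the $\Phi_i'$ precludes $\fg \equiv 0$ in a neighborhood. Without loss of generality $\fg(\vec u^*) > 0$, so $|\fg| = \fg$ locally and $\vec u^*$ is a local max of $\fg$. The Lagrange condition $\nabla \fg(\vec u^*) = \mu \vec u^*$ reads $g_i'(u_i^*) = \mu u_i^*$ for $i \le \dm$ and $\mu u_j^* = 0$ for $j > \dm$. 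Rewritten on the support $I := \{i \le \dm : u_i^* \neq 0\}$ via $g_i'(u)/u = 2\,s_i\,\Phi_i'(u^2)$, the system becomes $2\,s_i\,\Phi_i'((u_i^*)^2) = \mu$ for all $i \in I$; since $\Phi_i' > 0$ and $\fg(\vec u^*) > 0$, this forces $\mu > 0$, all $s_i = +1$ on $I$, and (via $\mu \neq 0$) $u_j^* = 0$ for all $j > \dm$.

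The decisive step rules out $|I| \ge 2$ through the second-order condition. If $i, j \in I$ are distinct, I consider the tangent curve $\theta \mapsto \vec u(\theta)$ on $S^{\dn-1}$ obtained by keeping every coordinate outside $\{i,j\}$ and every sign fixed, while restricting $(u_i, u_j)$ to the circle $u_i^2 + u_j^2 = C$ with $C := (u_i^*)^2 + (u_j^*)^2$. Along this curve $\fg(\vec u(\theta)) = \Phi_i(C\cos^2\theta) + \Phi_j(C\sin^2\theta) + \text{const}$, and a direct computation at the critical $\theta^*$ yields
\[
  \fg''(\theta^*) \;=\; 4 (u_i^*)^2 (u_j^*)^2 \bigl[\Phi_i''((u_i^*)^2) + \Phi_j''((u_j^*)^2)\bigr] \;>\; 0,
\]
the strict positivity coming from Assumption~\ref{assumpt:convex}. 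This contradicts $\vec u^*$ being a local maximum, so $|I| = 1$; combined with $u_j^* = 0$ for $j > \dm$ and $\|\vec u^*\| = 1$, we conclude $\vec u^* = \pm \myZv_k$. The main technical obstacle will be making the second-order step rigorous in the presence of the parity-dependent signs $s_i$, together with carefully ruling out the $\fg(\vec u^*) = 0$ subcase when some $u_k^* \neq 0$ via the strict-monotonicity argument above.
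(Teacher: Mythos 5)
Your proof is correct, and its architecture is essentially the same as the paper's: expose the hidden convexity by reparameterizing in $u_i^2$, reduce the critical-point structure to the Lagrange conditions $2s_i\Phi_i'((u_i^*)^2)=\mu$ on the support, and then use a second-order/perturbation argument to rule out critical points whose support has size at least two. Still, two of your moves differ from the paper's in a way worth noting. For the forward direction, you prove $\pm\myZv_k$ is a strict local max via the clean pointwise majorization $|\fg|\le G:=\sum_i\Phi_i(u_i^2)$, tight at $\pm\myZv_k$, together with a Taylor expansion of $G$ on the sphere; the paper's Proposition~\ref{prop:absfg-maxima-canonical} instead checks the Lagrangian second-order sufficient condition directly, showing $D_{\vec u}^2\LL$ restricted to the tangent space is negative definite. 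Both are fine, and your majorant route is a bit more elementary. For the converse, you rule out $|I|\ge 2$ by computing the second derivative of $\fg$ along a great circle constrained to $u_i^2+u_j^2=C$, obtaining $\fg''(\theta^*)=4(u_i^*)^2(u_j^*)^2[\Phi_i''((u_i^*)^2)+\Phi_j''((u_j^*)^2)]>0$, whereas the paper's Proposition~\ref{prop:absfg-maxima-no-extra} is a first-order mean-value-theorem estimate along the squared-mass-shift curve $\vec w(\delta)=(\vec v^{\langle 2\rangle}+\delta\myZv_j-\delta\myZv_k)^{\langle 1/2\rangle}$, concluding that $\vec v$ is in fact a local minimum of $|\fg|$ along that curve. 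The paper's MVT bound is the more quantitative version (and feeds naturally into the perturbation analysis elsewhere in the paper), but your curvature computation is an equally valid classical approach. One cleanup: your preliminary step ruling out $\fg(\vec u^*)=0$ does not actually need the ``strict monotonicity precludes $\fg\equiv 0$'' argument; the first-order condition already does it, since $\mu\ne 0$ and $s_i\mu>0$ force a common sign across $I$, whence $\fg(\vec u^*)=\sum_{i\in I}s_i\Phi_i((u_i^*)^2)$ is a nonempty sum of same-signed positive terms.
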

\processifversion{vshort}{\vskip-7pt}

\Cref{thm:gen-simplex-optima} implies that a form of gradient ascent can be used to recover maxima of $\abs \fg$ and hence the hidden basis\footnote{We note that \cref{assumpt:gsymmetries} is stronger than what is actually required in \cref{thm:gen-simplex-optima}.
  In particular, we could replace \cref{assumpt:gsymmetries} with the assumption that $x \mapsto g_i(-\sqrt{\abs x})$ is either strictly convex or strictly concave on $\ninterval$ for each $i \in [\dm]$.}. 
However, the performance of gradient ascent is dependent on the choice of a learning rate parameter.
We propose  a simple and practical parameter-free fixed point  method, {\it gradient iteration},  for finding the hidden basis elements $\myZv_i$ in this setting.


The proposed method is  based on  the \textit{gradient iteration function} $\GG: S^{\dn - 1} \rightarrow S^{\dn - 1}$ defined by
\begin{displaymath}
\GG(\vec u) :=
\frac{\grad \fg(\vec u)}{\norm{\grad \fg(\vec u)}}  
\end{displaymath}
with the convention that $\GG(\vec u) = \vec u$ if  $\grad \fg(\vec u) = \vec 0$.
We use the map $\GG$ as a fixed point iteration for recovering the hidden basis elements\footnote{A special case of this iteration was introduced in the context of ICA~\citep{voss2013fast}.}.

However, there is a difficulty:
at any given step, the derivative $\partial_i F(\vec u)$ can be of a different sign than $u_i$ causing $\sign(u_i) \ne \sign(\GG_i(\vec u))$. Note that we do not know which coordinates flip their signs as the coordinates are hidden. As it turns out, this does not affect the algorithm, but
the analysis is more transparent in a space of equivalence classes%
%
%
\footnote{Alternative approaches to fixing the sign issue include analyzing the fixed points of the double iteration $\vec u \to \GG(\GG(\vec u))$ or working in projective space.}. We divide $S^{\dn - 1}$ into equivalence classes using the equivalence relation $\vec v \sim \vec u$ if $\abs{v_i} = \abs{u_i}$ for each $i \in [\dn]$.
Given $\vec v \in S^{\dn-1}$, we denote by $[\vec v]$ its corresponding equivalence class.
The resulting quotient space $S^{\dn - 1}/{\sim}$ may  be identified with the positive orthant of the sphere  $\porth^{\dn - 1} := \{\vec u \in S^{\dn - 1} \suchthat u_i \geq 0 \text{ for all } i \in [\dn]\}$.
There is a bijection $\phi: S^{\dn - 1}/{\sim} \rightarrow \porth^{\dn - 1}$ given by $\phi([\vec u]) = \sum_{i=1}^\dn \abs{u_i} \myZv_i$. We treat $S^{\dn - 1} /\tsim$ as a metric space with the metric $\mu([\vec u], [\vec v]) = \norm{\phi([\vec v]) - \phi([\vec u])}$.
Under \cref{assumpt:gsymmetries}, if $\vec u \sim \vec v$ then $\GG(\vec u) \sim \GG(\vec v)$.
As such, sequences are consistently defined modulo this equivalence class, and we consider the fixed points of $\GG/{\sim}$.

We will use the following terminology.
A class $[\vec v]$ is a \textit{fixed point} of $\GG/{\sim}$ if $\GG(\vec v) \sim \vec v$.
We will consider sequences of the form $\{\vec u(n)\}_{n=0}^\infty$ defined recursively by $\vec u(n) = \GG(\vec u(n-1))$.
In addition, by abuse of notation, we will sometimes refer to a vector $\vec v \in S^{\dn-1}$ as a fixed point 
of $\GG/{\sim}$.

We demonstrate that the attractors of $\GG/{\sim}$ are precisely the hidden basis elements, and that all other fixed points of $\GG/{\sim}$ are non-attractive (unstable hyperbolic). 
Further, convergence to a hidden basis element is guaranteed given almost any starting point $\vec u(0) \in S^{\dn - 1}$.

\begin{thm}[Gradient iteration stability]\label{thm:gi-stability}
  The hidden basis elements $\{[\myZv_i] \suchthat i \in [\dm]\}$ are attractors of the dynamical system $\GG/{\sim}$.
  Further,
  there is a full measure set $\mathcal X \subset S^{\dn - 1}$ such that for all $\vec u(0)  \in \mathcal X$, $[\vec u(n)] \rightarrow [\myZv_i]$ for some $\myZv_i$ as $n \rightarrow \infty$.
\end{thm}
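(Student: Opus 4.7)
The plan is to pass to ``simplex coordinates'' in which the gradient iteration inherits a strict Lyapunov function, and then combine this with a local stability analysis at each hidden basis element and a stable-manifold argument at the remaining fixed points.

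\emph{Reduction and Lyapunov function.} Because $F$ has no dependence on directions outside the hidden basis, $\partial_j F \equiv 0$ for $j > \dm$, so one application of $\GG$ places the iterate in $\spn\{\myZv_1,\ldots,\myZv_\dm\}$; after this initial step we may assume $\dn=\dm$. Passing to the $\sim$-quotient via $x_i := u_i^2$ maps $S^{\dm-1}/{\sim}$ onto the simplex $\Delta := \{\vec x\geq 0:\sum_i x_i=1\}$, and Assumption~\ref{assumpt:gsymmetries} yields the identity $g_i'(u_i)^2 = 4x_i T_i'(x_i)^2$ with $T_i(x) := |g_i(\sqrt{x})|$. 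Assumptions~\ref{assumpt:convex}--\ref{assumpt:origin-val} make $T_i$ smooth and strictly convex on $[0,1]$ with $T_i(0)=T_i'(0^+)=0$, so $T_i'\geq 0$ is strictly increasing. The iteration becomes
\[
x_i\longmapsto x_i' \;=\; \frac{x_i\, T_i'(x_i)^2}{\sum_j x_j\, T_j'(x_j)^2}.
\]
Setting $a_i := T_i'(x_i)\geq 0$, $Z := \sum_j x_j a_j^2$, and $P(\vec x) := \sum_i T_i(x_i)$, the convexity inequality $T_i(x_i')-T_i(x_i) \geq T_i'(x_i)(x_i'-x_i)$ yields after substitution $P(\vec x') - P(\vec x) \geq Z^{-1}\cov_{\vec x}(a,a^2)$, the covariance being with respect to the probability weights $\vec x$. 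Since $a \geq 0$ makes $a$ and $a^2$ comonotone, Chebyshev's sum inequality gives $\cov_{\vec x}(a,a^2)\geq 0$; strict convexity of $T_i$ upgrades this to $P(\vec x') > P(\vec x)$ whenever $\vec x'\neq \vec x$, so $P$ is a strict Lyapunov function.

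\emph{Classification of fixed points.} A point $\vec x \in \Delta$ is fixed iff $a_i^2$ is constant over its support $S \subseteq [\dm]$; strict monotonicity of each $T_i'$ then leaves at most one fixed point per nonempty $S$, so fixed points are finite in number. Singleton supports give precisely the vertices $[\myZv_i]$. For local attraction at $[\myZv_i]$, I track the ratios in positive-orthant coordinates: the update satisfies $v_j'/v_i' = (v_j/v_i)\cdot(T_j'(v_j^2)/T_i'(v_i^2))$, and by Assumption~\ref{assumpt:deriv0} the multiplier tends to $0$ as $\vec v \to \myZv_i$ while $T_i'(1) > 0$, so a contraction factor strictly less than $1$ holds in a small neighborhood, producing an attracting basin. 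For a fixed point $\vec p$ with $|S|\geq 2$, bumping one supported coordinate up and another down and invoking the strict monotonicity of the $T_i'$ produces an iterate with strictly amplified corresponding coordinate ratio; this exhibits an eigenvalue of the Jacobian of $\GG/{\sim}$ at $\vec p$ strictly exceeding $1$ in modulus, so $\vec p$ is a hyperbolic saddle.

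\emph{Full-measure convergence and main obstacle.} By the stable-manifold theorem for $C^1$ discrete dynamical systems, the basin of each saddle is an injectively immersed submanifold of positive codimension and hence Lebesgue-null on the $(\dm{-}1)$-dimensional sphere; pulling back through the first application of $\GG$ via a standard submersion/coarea argument keeps this null in $S^{\dn-1}$. Excising the resulting finite union, together with the null set $\{\vec u: u_i = 0 \text{ for all } i\leq \dm\}$ where $\nabla F = 0$, defines the full-measure set $\mathcal X$. For any $\vec u(0)\in \mathcal X$, the sequence $P(\vec x(n))$ is monotone and bounded, and continuity of $\GG/{\sim}$ together with strict monotonicity of $P$ off fixed points forces every accumulation point to be a fixed point; isolation of the fixed-point set then forces convergence to a single one, and by construction of $\mathcal X$ it cannot be a saddle, so it is some $[\myZv_i]$. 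The principal obstacle will be making the stable-manifold invocation rigorous at the mixed fixed points: computing the Jacobian of $\GG/{\sim}$ explicitly on the tangent space to the sphere, verifying hyperbolicity (no unit-modulus eigenvalues), and carefully bookkeeping the $\sim$-quotient and the even/odd symmetry when lifting back to $S^{\dn-1}$.
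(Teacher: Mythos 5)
\noindent Your approach takes a genuinely different route from the paper's. The paper establishes global convergence by a direct ``big become bigger'' coordinate-descent argument (Proposition~\ref{prop:gi-expand-all}), driving the coordinates of $\vec u(n)$ to zero one by one (Theorem~\ref{thm:canonical-global-attraction}); you instead exhibit a strict Lyapunov function. Writing $x_i := u_i^2$ and $T_i(x) := \abs{g_i(\sqrt x)}$, your verification that $P(\vec x) := \sum_i T_i(x_i)$ strictly increases off fixed points is correct: the convexity inequality yields
\begin{equation*}
P(\vec x')-P(\vec x) \;\geq\; Z^{-1}\cov_{\vec x}(a,a^2), \qquad a_i := T_i'(x_i)\geq 0,\quad Z := \sum_j x_j a_j^2,
\end{equation*}
the covariance of comonotone functions is nonnegative, and strict convexity of each $T_i$ upgrades this to a strict inequality whenever $\vec x'\neq\vec x$. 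In the paper's notation $P$ is exactly the associated \psef\@ $\bar F$ restricted to $\porth^{\dn-1}$, so you have made explicit what the gradient-ascent interpretation (Lemma~\ref{lem:gi-is-grad-ascent}) only suggests: $\bar F$ strictly increases along non-stationary trajectories. Because $P$ is continuous and the fixed points are finitely many and isolated, a standard Lyapunov/compactness argument forces convergence to a single fixed point. This cleanly replaces Proposition~\ref{prop:gi-expand-all} and the repeated-excision argument in Theorem~\ref{thm:canonical-global-attraction}.

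The genuine gap --- which you partially flag --- is in the measure-theoretic half. The claim that the global basin of a mixed fixed point is ``an injectively immersed submanifold of positive codimension and hence Lebesgue-null'' is a theorem for diffeomorphisms, but $\GG$ is not a priori a diffeomorphism. The basin is the countable union $\bigcup_{n\geq 0}\GG\rcomp{-n}(M)$ of preimages of the local stable manifold $M$, and concluding it is null requires showing that $\GG^{-1}$ preserves null sets, which in turn needs $\Jacob\GG$ to be nonsingular off a null set. The paper establishes this explicitly: $\GG\restr{\porth^{\dm-1}}$ is a continuous bijection (Lemma~\ref{lem:G-bijective}), its Jacobian is invertible on the open set where all hidden coordinates are nonzero (Lemma~\ref{lem:G-IFT-precondition}), and $\GG^{-1}$ consequently maps null sets to null sets (Lemma~\ref{lem:Ginv-measure-0-maps}), giving Theorem~\ref{thm:vol0-unstable-pt-global}. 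Your ``submersion/coarea'' remark addresses only the single pull-back from $S^{\dn-1}$ to the span of the hidden basis (the paper's Theorem~\ref{thm:canonical-global-attraction2}), not the unbounded iteration at the mixed saddles. Supplying that nondegeneracy argument would close the gap; with it, your Lyapunov route is a valid and arguably more transparent alternative.
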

\processifversion{vshort}{\vskip -7pt}

One implication of \cref{thm:gi-stability} is that given a $\vec u(0) \in S^{\dn-1}$ drawn uniformly at random, then with probability 1, $\vec u(n)$ converges (up to $\sim$) to one of the hidden basis elements.

From \cref{thm:gi-stability,thm:gen-simplex-optima} we see how
odeco function basis recovery closely resembles the problem of
recovering the top eigenvector of a symmetric matrix.
A symmetric matrix $A$ may be represented by the quadratic form
$f(\vec u) = \vec u^T A \vec u$.
From $f$, the top eigenvector of $A$ may be characterized in two ways:
(1) as the attractive fixed point of the map $ \vec u \mapsto \nabla
f(\vec u) / \norm{\nabla f(\vec u)}$ in projective space or (2) as the
maximum of $f$ restricted to the unit sphere.
From \cref{thm:gen-simplex-optima,thm:gi-stability}, we see that for
the odeco function $\fg$ each basis element $\vec e_1, \dotsc, \vec
e_\dm$ satisfies both characterizations of being a top eigenvector.
See \cref{sec:example-befs} for more discussion on ways in which odeco function basis recovery and the gradient iteration relate to the symmetric matrix eigenvector problem and some tensorial generalizations.

When recovering a hidden basis element via repeated application of the gradient iteration function $\GG$, the rate of convergence is superlinear. to the hidden basis elements is fast (superlinear).
For reference, we include definitions of convergence rates in 

\begin{thm}[Gradient iteration convergence rate]\label{thm:gi-convergence}
  If $[\vec u(n)] \rightarrow [\myZv_i]$ as $n \rightarrow \infty$, then the convergence is superlinear.
  Specifically, if  $x \mapsto g_i(x^{1 / r})$ is convex on $\pinterval$ for some  $r > 2$ , then the rate of convergence is at least of order $r-1$.
\end{thm}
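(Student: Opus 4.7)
The plan is to diagonalize the iteration in coordinates with respect to the hidden basis and reduce the rate analysis to a scalar recursion of the form $u_j(n+1) \le C\, u_j(n)^{r-1}$ for each coordinate orthogonal to the limit.

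Relabel so that $[\vec u(n)] \to [\myZv_1]$, and work with the canonical positive-orthant representative $\phi([\vec u(n)])$. Setting $u_j(n) := \ip{\phi([\vec u(n)])}{\myZv_j} \ge 0$, Assumption~\ref{assumpt:gsymmetries} lets me write the iteration coordinate-wise as
\[
u_j(n+1) \;=\; \frac{|g_j'(u_j(n))|}{\|\nabla F(\vec u(n))\|},
\]
with $u_1(n) \to 1$ and $u_j(n) \to 0$ for $j \ne 1$.

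The crux is the derivative bound $|g_j'(t)| \le C_j\, t^{r-1}$ for $t \in [0,1]$. Writing $\phi_j(x) := g_j(x^{1/r})$, the hypothesis makes $\phi_j$ convex on $[0,1]$ with $\phi_j(0)=0$, and the chain rule gives $g_j'(t) = r t^{r-1} \phi_j'(t^r)$. Under Assumptions~\ref{assumpt:convex}--\ref{assumpt:origin-val}, the function $g_j$ is sign-definite on $[0,1]$: strict convexity or concavity of $g_j(\sqrt x)$ together with $g_j(0) = 0$ and the vanishing right derivative at the origin forces $g_j$ to lie on one side of zero throughout $[0,1]$. This sign-definiteness, combined with convexity of $\phi_j$ and $\phi_j(0) = 0$, promotes $\phi_j$ to a monotone function on $[0,1]$, so $|\phi_j'(x)| \le |\phi_j'(1)|$ uniformly, yielding the desired bound with $C_j = r|\phi_j'(1)|$.

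Next, I lower-bound $\|\nabla F(\vec u(n))\|$: since $u_1(n) \to 1$ and the strict convexity of $g_1(\sqrt x)$ with Assumption~\ref{assumpt:deriv0} forces $g_1'(1) \ne 0$, we have $\|\nabla F(\vec u(n))\| \ge |g_1'(u_1(n))| \ge c$ for some $c > 0$ and all large $n$. Combining this with the derivative bound yields $u_j(n+1) \le (C_j/c)\, u_j(n)^{r-1}$ for every $j \ne 1$. Passing to the quotient metric $d_n := \mu([\vec u(n)], [\myZv_1])$, the identity $d_n^2 = 2(1 - u_1(n))$ makes $d_n^2$ comparable to $\sum_{j \ne 1} u_j(n)^2$, and the elementary inequality $\sum_j a_j^{p} \le (\sum_j a_j)^{p}$ for $a_j \ge 0$ and $p \ge 1$ (applied with $a_j = u_j^2$, $p = r-1$) then gives $d_{n+1} \le C d_n^{r-1}$, which is precisely convergence of order at least $r-1$.

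The main obstacle I anticipate is the derivative bound itself. Absent the sign control supplied by Assumptions~\ref{assumpt:convex}--\ref{assumpt:deriv0}, a convex $\phi_j$ vanishing at $0$ could have $\phi_j'(0^+) = -\infty$ and the clean bound on $|g_j'|$ would fail. The sign-definiteness argument that promotes $\phi_j$ to a monotone function is the key structural use of the \sef{} assumptions and is what converts the convexity of $g_j(x^{1/r})$ into the quantitative estimate driving the superlinear rate.
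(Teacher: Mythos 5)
Your strategy parallels the paper's Proposition~\ref{prop:gi-convergence} closely (both pivot on the coordinate-wise split, the convexity of the $g_j(x^{1/r})$ change of variable, and a lower bound on $\norm{\nabla F}$ near $\myZv_1$), and your scalar-recursion formulation $u_j(n+1) \le (C_j/c)\, u_j(n)^{r-1}$ is a cleaner repackaging of the paper's ratio argument $\rho(j,1;n)$. But the step you flag as the crux is exactly where there is a genuine gap.

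You claim that sign-definiteness of $g_j$ plus convexity of $\phi_j := g_j(x^{1/r})$ with $\phi_j(0)=0$ ``promotes $\phi_j$ to a monotone function, so $|\phi_j'(x)| \le |\phi_j'(1)|$ uniformly.'' That last implication is false in the negative sign case. If $g_j < 0$ on $(0,1]$ (which the \sef\ assumptions permit: $h_j := g_j(\sqrt{\cdot})$ strictly \emph{concave}), then $\phi_j \le 0$ and $\phi_j$ is \emph{decreasing}; convexity gives $\phi_j'$ non-decreasing with $\phi_j' \le 0$, so $|\phi_j'|$ is non-\emph{increasing} and $|\phi_j'(x)| \ge |\phi_j'(1)|$ — the inequality goes the wrong way, and $|\phi_j'(0^+)|$ can be infinite. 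Concretely, $g_j(x) = -x^{2\alpha}$ with $1 < \alpha < r/2$ satisfies Assumptions~\ref{assumpt:first}--\ref{assumpt:last}, has $g_j(x^{1/r}) = -x^{2\alpha/r}$ convex, yet $|g_j'(t)| = 2\alpha\,t^{2\alpha-1}$ with $2\alpha - 1 < r-1$, so your bound $|g_j'(t)| \le C_j t^{r-1}$ and the resulting recursion fail: the coordinate decays only with order $2\alpha-1 < r-1$.

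What makes the paper's proof work is that it proves Proposition~\ref{prop:gi-convergence} only for \emph{\psef}s (Section~\ref{sec:gi-stability-structure} assumes $F$ is \apsef\ throughout), where $g_j \ge 0$ and hence $\phi_j \ge 0$, $\phi_j' \ge 0$. The general Theorem~\ref{thm:gi-convergence} is then obtained by passing to the associated {\psef} $\bar F$ with $\bar g_j = |g_j|$ via Proposition~\ref{prop:gi-sym-classes}; the convexity hypothesis is to be read on $|g_i|(x^{1/r})$, not on $g_i(x^{1/r})$ directly. Your proof needs this same reduction made explicit: either pass to $\bar F$ before invoking the derivative bound, or restrict attention to the \psef\ case as the paper does, and note that the hypothesis is vacuous in the negative-sign branch once the reduction is made. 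Once you do that, the remainder of your argument — the $\norm{\nabla F}$ lower bound from $g_1'(1) \ne 0$, the comparability $d_n^2 = 2(1-u_1(n)) \asymp \sum_{j\ne 1}u_j(n)^2$, and the subadditivity $\sum_j a_j^{p} \le (\sum_j a_j)^{p}$ for $p \ge 1$ — is sound.
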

\processifversion{vshort}{\vskip -7pt}
The above Theorems suggest the following practical algorithm for recovering the hidden basis elements:
\begin{algorithm}[H]\caption{\label{alg:gi-generic}The gradient iteration algorithm.}
  \begin{enumerate}
    \item \label{alg:gi-general:init}
    Choose an initial $\vec u \in \sphere^{\dn - 1}$
    \item \label{alg:gi-general:iterate}
    Repeat the iteration $\vec u \leftarrow \GG(\vec u)$ until convergence is
    achieved to recover a single hidden basis direction.
    \item Repeat steps~\ref{alg:gi-general:init}
    and~\ref{alg:gi-general:iterate} with the starting $\vec u$ chosen in the
    orthogonal complement to previously found $\vec u$ in order to recover
    additional hidden basis directions.
  \end{enumerate}
\end{algorithm}
\noindent In practice, one may threshold $\min(\norm{\GG(\vec u) - \vec u},\ \norm{-\GG(\vec u) - \vec u})$ to determine if convergence is achieved.

From a practical standpoint, the fast and guaranteed convergence properties of the gradient iteration make it an attractive algorithm for hidden basis recovery.
We also demonstrate that the gradient iteration is robust to a perturbation. Specifically, we modify the gradient iteration algorithm by occasionally performing a small random jump of size $\sigma$ on the sphere. We call this algorithm
\textsc{RobustGI-Recovery} and show that it approximately recovers all hidden basis elements.
More precisely, we consider the following notion of a perturbation of $\nabla F$:  If for every $\vec u \in \overline{B(0, 1)}$, $\norm{\nabla F(\vec u) - \pertgF(\vec u)} \leq \epsilon$, then we say that $\pertgF$ is an $\epsilon$-approximation of $F$.
Further, if $F$ satisfies a strong version of \cref{assumpt:convex}, namely that there exists positive constants $\alpha \geq \beta$ and $\gamma \leq \delta$ such that for each $i \in [\dm]$, $\beta x^{\delta - 1} \leq \abs{\frac {d^2}{d x^2} g_i(\sqrt x)} \leq \alpha x^{\gamma - 1}$ for all $x \in (0, 1]$, then our perturbation result can be summarized as follows.

\begin{thm}[simplified]
  Treating $\gamma$ and $\delta$ as constants, if $\sigma \leq \poly^{-1}(\frac \alpha \beta, \dn, \dm)$ and if $\epsilon \leq \sigma\beta \poly^{-1}(\frac \alpha \beta, \dm, \dn)$, then with probability $1 - p$, \textsc{RobustGI-Recovery} takes
  \begin{displaymath}
  \poly(\frac 1 \sigma, \frac \alpha \beta, \dm, \dn)\log(\frac 1 p) + \poly(\dn, \dm)\log_{1+2\gamma}(\log(\frac{\beta}{\epsilon}))
  \end{displaymath}
  time to recover $O(\epsilon/\beta)$ approximations of each $\myZv_i$ up to a sign.
  Specifically, \textsc{RobustGI-Recovery} returns vectors $\gvec \mu_1, \dotsc, \gvec \mu_\dm$ such that there exists
  a permutation $\pi$ of $[\dm]$ such that $\norm{\pm \gvec \mu_i - \myZv_{\pi(i)}} \leq O(\epsilon / \beta)$ for all $i \in [\dm]$.
\end{thm}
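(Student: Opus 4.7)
The plan is to decompose the analysis into a \emph{search phase} and a \emph{refinement phase}, repeated once per basis element with deflation onto the orthogonal complement of previously recovered directions. In the search phase, random jumps of size $\sigma$ guarantee that, with constant probability per jump, the iterate lands in the basin of attraction of some yet-to-be-recovered $\myZv_i$; in the refinement phase, the perturbed iteration $\widehat\GG(\vec u) := \pertgF(\vec u)/\|\pertgF(\vec u)\|$ contracts the distance to $\myZv_i$ down to the $O(\epsilon/\beta)$ floor forced by the perturbation. The first additive term $\poly(1/\sigma,\alpha/\beta,\dm,\dn)\log(1/p)$ in the runtime comes from boosting the per-jump success probability to $1 - p/\dm$ via independent trials; the second term $\poly(\dn,\dm)\log_{1+2\gamma}\log(\beta/\epsilon)$ comes from the superlinear convergence of the refinement phase, with order matching what Theorem~\ref{thm:gi-convergence} predicts from the strong convexity bound.

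The heart of the analysis is the local contraction step. Working in aligned coordinates $(u_1,\dots,u_\dn)$ near $\myZv_i$ and writing $\eta^2 := \sum_{j \ne i} u_j^2$, the hypothesis $\beta x^{\delta-1} \leq |\tfrac{d^2}{dx^2}g_j(\sqrt x)| \leq \alpha x^{\gamma-1}$, combined with assumptions A3--A4, implies after integration that $|g_j'(u_j)| = \Theta(|u_j|^{2\gamma+1})$ for $|u_j|$ small and $|g_i'(u_i)| = \Theta(1)$ near $u_i = \pm 1$. One step of the \emph{exact} iteration $\GG$ therefore multiplies each off-$i$ coordinate by a factor of order $|u_j|^{2\gamma}$, giving contraction of $\eta$ with order $1+2\gamma$. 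Folding in the perturbation yields a recursion of the form
\begin{equation*}
\eta' \leq C_1\, \eta^{1+2\gamma} + C_2\, \epsilon/\beta,
\end{equation*}
so once $\eta$ enters a sufficiently small neighborhood it decreases doubly exponentially until it plateaus at $\Theta(\epsilon/\beta)$, which takes $O(\log_{1+2\gamma}\log(\beta/\epsilon))$ iterations. The hypothesis $\epsilon \leq \sigma\beta\,\poly^{-1}(\alpha/\beta,\dm,\dn)$ then ensures that this plateau lies strictly inside the basin, so the iteration cannot be kicked back out by the perturbation.

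For the search phase, the key geometric fact is that the union of basins of attraction of the $\myZv_i$ under $\widehat\GG/{\sim}$ has measure at least $\poly^{-1}(\alpha/\beta,\dm,\dn)$ on $S^{\dn-1}$. Theorem~\ref{thm:gi-stability} already gives that in the exact case only the $[\myZv_i]$ are attractors and all other fixed points of $\GG/{\sim}$ are unstable hyperbolic, hence supported on sets of positive codimension; for $\widehat\GG$, the $\sigma$-jumps combined with the smallness of $\epsilon$ relative to $\sigma\beta$ prevent the iterate from concentrating near the stable manifolds of those unstable fixed points. A uniform-on-sphere jump then hits a basin with probability at least $\poly^{-1}(\alpha/\beta,\dm,\dn)$, and $O(\poly(1/\sigma,\alpha/\beta,\dm,\dn)\log(\dm/p))$ jumps suffice for total failure probability at most $p$ after a union bound over the $\dm$ recovered directions. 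Deflation then restricts the iteration to the orthogonal complement of the found vectors; since the latter are accurate to $O(\epsilon/\beta)$, the residual perturbation in the restricted BEF remains $O(\epsilon)$ and the induction closes, yielding the claimed permutation $\pi$. The step I expect to be the main obstacle is precisely the quantitative basin-measure lower bound in the search phase: one must show that the unstable mixture fixed points of $\GG/{\sim}$ repel under $\widehat\GG$ with enough slack that a $\sigma$-perturbation escapes them, and then translate this escape into a uniform $\poly^{-1}(\alpha/\beta,\dm,\dn)$ lower bound on the measure of the basin union --- this is where the polynomial dependence on $\alpha/\beta$ and the dimension really gets pinned down.
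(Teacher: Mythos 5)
Your refinement-phase analysis is essentially correct and matches the paper: Proposition~\ref{prop:GI-Loop-Small-Coords-Error} establishes exactly the recursion $\eta' \lesssim \eta^{1+2\gamma} + \epsilon/\beta$ for the small coordinates, yielding the $\log_{1+2\gamma}\log(\beta/\epsilon)$ term. But your search phase does not match the algorithm or the paper's argument, and the step you yourself flag as the main obstacle --- a $\poly^{-1}(\alpha/\beta,\dm,\dn)$ lower bound on the measure of the union of basins of attraction under $\widehat\GG$ --- is not a step the paper takes and is not a step that can be straightforwardly made to work.

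Two concrete problems. First, the algorithm does not draw fresh uniform-on-sphere starting points: step~\ref{alg:step:jump1} of \textsc{FindBasisElement} draws $\vec x$ uniformly from the tiny sphere $\sigma S^{\dn-1}\cap\vec u^\perp$ and sets $\vec w = \exp_{\vec u}(\vec x)$, i.e., each ``jump'' is a small perturbation of the current iterate, not a restart. So ``a uniform-on-sphere jump hits a basin'' is not something the algorithm does. Second, even for the exact dynamics $\GG/{\sim}$ the union of basins already has full measure (Theorem~\ref{thm:canonical-global-attraction2}), so the relevant obstacle was never basin \emph{measure}; it is that near the stable manifolds of the $2^\dm - \dm - 1$ saddle fixed points, convergence \emph{time} is unbounded. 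A basin-membership argument gives you no handle on time-to-converge, which is what the runtime bound requires.

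The paper sidesteps all of this with an incremental, one-coordinate-at-a-time argument. Let $A$ be the set of coordinates that have already fallen below the threshold $\tau_{\ref{prop:GI-Loop-Small-Coords-Error}}$. The loop invariant (Claim~\ref{claim:main-loop-invariant}) is that $A$ never shrinks, and each main-loop iteration (jump of size $\sigma$ followed by a $\textsc{GI-Loop}$ of length $N_2$) strictly grows $A$ with \emph{constant} probability $\frac 1 2 C_{\ref{fact:perturb-sphere-0sep}}$, by combining Lemma~\ref{lem:lpert-analysis-expected} (a $\sigma$-jump escapes a stagnation plateau near a mixture fixed point $\vec v$ with constant probability) with Proposition~\ref{prop:ui-spread-time-bound} (once one coordinate exceeds $v_i(1+\eta)$, the big-become-bigger dynamics drives a new coordinate below threshold in $O(\Nspread{\cdot})$ steps). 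The $I = O(\dm\log(\dm/p))$ total jump count is then just the number of successes you need ($\dm-1$) times a $\log(\dm/p)$ boosting factor, each success happening with constant conditional probability. There is no $\poly^{-1}$ per-jump probability and no basin-measure lemma anywhere; the polynomial in $1/\sigma$, $\alpha/\beta$, $\dm$, $\dn$ enters entirely through $N_2$ (the number of inner gradient steps between jumps), not through the number of jumps. You should replace the basin argument with this incremental escape-from-stagnation argument; as stated, the gap you identified is real, and the route you chose around it would not close.
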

Several observation are now in order:
\begin{compactenum}
  \item We note that we only need a zero-order error bound for $\nabla
  F(\vec u)$ for the perturbation analysis and do not need to assume
  anything about the perturbations of the second derivatives of $F$ or
  even $F$ itself.
  This perhaps surprising fact is due to the convexity conditions.
  \item Our perturbation results allow for substantially more general
  perturbations than those used in the matrix and tensor settings,
  where the perturbation of a tensor is still a tensor. In our setting
  the perturbation of {\ansef} corresponding to a tensor does not have to
  be tensorial in structure. This situation is very common whenever an
  observation of an object is not exact.
  For example, $A \vec x$ is not a linear function of $\vec x$ on a finite precision machine. The same phenomenon occurs in the tensor case.
  \item $\log_{1 + 2\gamma}(\log(\frac{\beta}{\epsilon}))$ above
  corresponds to the superlinear convergence from
  \cref{thm:gi-convergence} in the unperturbed setting.
\end{compactenum}
The full algorithm and analysis for \textsc{RobustGI-Recovery}, complete with more precise bounds, can be found in \cref{sec:gi-error-analysis}\begin{vshort} of the long version of this extended abstract\end{vshort}.

Finally, in \cref{sec:robust-ICA}\begin{vshort} of the long version of this extended abstract\end{vshort}, we show how to apply \textsc{RobustGI-Recovery} to cumulant-based ICA under an arbitrary perturbation from the ICA model.
%
In this setting, \textsc{RobustGI-Recovery} provides an algorithm for robustly recovering the approximate ICA model.

\subsection{Motivations for and Examples of \SEF\@ Recovery}
\label{sec:example-befs}
Before proving our main results on \sef\@ recovery, we first motivate why the \sef\@ recovery problem is of interest through a series of examples.
We first show how \sef\@ recovery and the gradient iteration relate to ideas from the eigenvector analysis of matrices and tensors.
Then, we discuss several settings where the problem of \sef\@ recovery arises naturally in machine learning.

\paragraph{Connections to matrix eigenvector recovery\opdot}
Our algorithm can be viewed as a generalization of the classical  power iteration method for eigendecomposition of symmetric matrices.
Let $A$ be a symmetric matrix. Put $\fg(\vec u) = \vec u^T A \vec u$. From the spectral theorem for matrices, we have
$\fg(\vec u) = \sum_i \lambda_i \ip{\vec u}{\myZv_i}^2$ where each $\lambda_i$ is an eigenvalue of $A$ with corresponding eigenvector $\myZv_i$. We see that  $\fg(\vec u)$ is {\ansef}\footnote{Note that \cref{assumpt:convex} is not satisfied as in this case $g_i(\sqrt x)$ is convex but not strictly convex.} with the contrast functions $g_i(x) := \lambda_i x^2$.
It is easy to see that our gradient iteration is an equivalent update to the power method update $\vec u \mapsto \ifrac{A \vec u}{\norm{A \vec u}}$.
As such, the fixed points\footfixedpt\@ of the gradient iteration are eigenvectors of the matrix $A$.
We also note that it is not necessary to know each $g_i(x)$ to have access to the {\sef} $\fg(\vec u)$ or its derivative $\nabla \fg(\vec u)$.

In addition, we note that the gradient iteration for \ansef\@ may be written to look very much like the power iteration for matrices.
Let $F(\vec u) = \sum_{i=1}^\dm g_i(\ip{\vec u}{\myZv_i})$ denote \ansef\@.
In order to better capture the convexity \cref{assumpt:convex}, we may define functions $h_i(t) := g_i(\sign(t) \sqrt{\abs t})$.
To compress notation, we use $\pm$ to denote the $\sign(\ip{\vec u}{\myZv_i})$.
Then, $F(\vec u) = \sum_{i=1}^\dm h_i(\pm \ip{\vec u}{\myZv_i}^2)$.
Taking derivatives, we obtain that
\begin{equation}
\label{equ:monotone}
\nabla F(\vec u) = 2\sum_{i=1}^\dm \pm h_i'(\pm \ip{\vec u}{\myZv_i}^2)\ip{\vec u}{\myZv_i}\myZv_i \ .
\end{equation}
Note that in the matrix example above, the power iteration can be expanded as
\begin{displaymath}
  A \vec u = \sum_i \lambda_i \ip{\vec u}{\myZv_i}\myZv_i \ .
\end{displaymath}
We see that the formula for $\nabla F(\vec u)$ is the same as the power iteration for matrices with the (constant) eigenvalues $\lambda_i$ being replaced by the functional term $\pm h_i'(\pm \ip{\vec u}{\myZv_i}^2)$.
By \cref{assumpt:convex},  $|h_i(t)|$ is strictly convex, and in particular each $|h_i'(t)|$ is strictly increasing as a function of $|t|$.
The gradient iteration for general \sef\@s may be thought of as a power iteration where matrix eigenvalues are being replaced by functions whose magnitude grows with the magnitude of their respective coordinate values  $\ip{\vec u}{\myZv_i}$. 
The change in these ``eigenvalues'' by location allows each of the basis directions $\myZv_1, \dotsc, \myZv_\dm$ to become an attractor locally since there is no single fixed ``top eigenvalue" as in the matrix setting.

\paragraph{Connections to the tensor eigenvector problem\opdot}
While in general not a special case of the \sef\@ framework, there are also connections between the gradient iteration algorithm and the definition of an eigenvector of a symmetric tensor~\citep{qi2005eigenvalues,lim2006singular}.
In particular, given a symmetric tensor $T \in \R^{\dn \times \cdots \times \dn}$ (with $r$ copies of $\dn$), we may treat $T$ as an operator on $\R^\dn$ using the operation
$T \vec u^r := \sum_{i_1, \dotsc, i_r}T_{i_1\dotsc i_r}u_{i_1}\cdots u_{i_r}$.
We note that this formula encapsulates the matrix quadratic form $\vec u^T A \vec u =A \vec u^2$ as a special case.
We also denote by $T \vec u^{r-1}$ the vector such that $[T \vec u^{r-1}]_j = \sum_{i_2,\dotsc,i_r}T_{j i_2\dotsc i_r}u_{i_2}\cdots u_{i_r}$.
If we define the function $f(\vec u) = T \vec u^r$, then
the Z-eigenvectors of $T$ are defined to be vectors $\vec u$ for which there exists $\lambda \in \R$ such that $\nabla f(\vec u) =  r \lambda \vec u$.
Expanding this formula, we get the slightly more familiar looking form that the Z-eigenvectors of $T$ are the points such that $T \vec u^{r-1} = \lambda \vec u$, or alternatively the fixed points\footfixedpt\@ of the iteration $\vec u \mapsto \frac{T\vec u^{r-1}}{\norm{T \vec u^{r-1}}}$.
Note that this iteration may alternatively be written as $\vec u \mapsto \frac{\nabla f(\vec u)}{\norm{\nabla f(\vec u)}}$.
Replacing the function $f(\vec u) =T \vec u^r$  with \ansef\@ $F$, the fixed points\footfixedpt\@ of the gradient iteration $\vec u \mapsto \frac{\nabla F(\vec u)}{\norm{\nabla F(\vec u)}}$ are like eigenvectors for our function $F$ in this dynamical systems sense.

\paragraph{Orthogonal tensor decompositions\opdot}
In a recent work~\citep{AnandkumarTensorDecomp}, it was shown that the tensor eigenvector recovery problem for tensors with orthogonal decompositions\footnote{%
  Another related work \citep{anandkumar2015learning} investigates properties of the tensor power method in certain settings where the symmetric tensor is not orthogonal decomposable and has symmetric rank exceeding $\dn$.
} can be applied to a variety of problems including ICA and previous works on learning mixtures of spherical Gaussians~\citep{hsu2013learning}, latent Dirichlet allocation~\citep{AnandkumarLDA2012}, and learning hidden Markov models~\citep{anandkumar2012method}.

Their framework involves using the moments of the various models to obtain a tensor of the form $T = \sum_{k=1}^\dm w_k \gvec \mu_k^{\otimes r}$ where (1) each $w_k \in \R \setminus \{0\}$, (2) each $\gvec \mu_k \in \R^{\dn}$ is a unit vector, and (3) $\gvec \mu_k^{\otimes r}$ is the tensor power defined by $(\gvec \mu_k^{\otimes r})_{i_1\dotsc i_r} = (\gvec \mu_k)_{i_1}\dotsc(\gvec \mu_k)_{i_r}$.
The $\gvec \mu_k$s may be assumed to have unit norm by rescaling the $w_k$s appropriately.
In the special case where the $\gvec \mu_k$s are orthogonal, then the direction of each $\gvec \mu_k$ can be recovered using tensor power methods~\citep{AnandkumarTensorDecomp}.
It can be shown that $T \vec u^r = \sum_{k=1}^\dm w_k \ip{\vec u}{\gvec \mu_k}^r$.
In particular, the function $F(\vec u) = T \vec u^r$ is {\ansef} with the contrasts $g_i(x) := w_i x^r$ and hidden basis elements $\myZv_k := \gvec \mu_k$.
Further, the fixed point iteration $\vec u \mapsto \frac{T \vec u^{r-1}}{\norm{T \vec u^{r-1}}}$ proposed by \mycitet{Anandkumar et al.}{AnandkumarTensorDecomp} for eigenvector recovery in this setting can be equivalently written as the gradient iteration update $\vec u \mapsto \frac{\nabla F(\vec u)}{\norm{\nabla F(\vec u)}}$.

\paragraph{Spectral clustering\opdot}

Spectral clustering is a class of methods for multiway cluster analysis. We describe now a prototypical version of the method that works in two phases \citep{DBLP:journals/jmlr/BachJ06,ng2002spectral,ShiMal00,yu2003multiclass}. The first phase, spectral embedding, constructs a similarity graph based on the features of the data and then embeds the data in $\R^\dn$ (where $\dn$ is the number of clusters) using the bottom $\dn$ eigenvectors of the Laplacian matrix of the similarity graph. The second phase clusters the embedded data using a variation of the $k$-means algorithm. A key aspect in the justification of spectral clustering is the following observation: If the graph has $\dn$ connected components, then a pair of data points is either mapped to the same vector if they are in the same connected component or mapped to orthogonal vectors if they are in different connected components \citep{weber2004perron}. If the graph is close to this ideal case, which can be interpreted as a realistic graph with $\dn$ clusters, then the embedding is close to that ideal embedding.

This suggests the following alternate approach \citep{BelkinRV14} to the second phase of spectral clustering by interpreting it as a hidden basis recovery problem: Let $\vec x_1, \dotsc, \vec x_n \in \R^\dn$ be the embedded points. Let $g : \R \to \R$ be a function satisfying \crefrange{assumpt:first}{assumpt:last}. Let
\begin{equation}\label{equ:spectralbef}
F(\vec u) = \sum_{i=1}^n g(\ip{\vec u}{\vec x_i}).
\end{equation}
In the ideal case, there exists an orthonormal basis $\vec Z_1, \dotsc, \vec Z_\dn$ of $\R^\dn$ and positive scalars $b_1, \dotsc, b_\dn$ such that $\vec x_i = b_j \vec Z_j$ for every $i$ in the $j$\textsuperscript{th} connected component of the graph.
Thus, in the ideal case we can write
\begin{displaymath}
  F(\vec u) = \sum_{j=1}^\dn a_j g(b_j \ip{\vec u}{\vec Z_j})
\end{displaymath}
where $a_j$ is the number of points from the $j$\textsuperscript{th} connected component. Thus, $F$ is \ansef\@ in the ideal case with contrasts $g_j(t) := a_j g(b_j t)$.
In the general case, it is a perturbed {\sef} and the hidden basis can be approximately recovered using our 
robust algorithm (\cref{sec:gi-error-analysis}).
Note that via \cref{equ:spectralbef}, $F$ and its derivatives can be evaluated at any $\vec u$ just with knowledge of the $\vec x_i$s, and without knowing the hidden basis.

We note that for this spectral clustering application, the choice of $g$ is arbitrary so long as it satisfies \crefrange{assumpt:first}{assumpt:last}.
In particular, this is an example where the generality of the gradient iteration beyond the tensorial setting provides greater flexibility.

\paragraph{Independent component analysis (ICA)\opdot}

In the ICA model, one observes samples of the random vector $\vec X = A \vec S$ where $A\in \R^{\dn \times \dn}$ is a mixing matrix and $\vec S = (S_1, \dotsc, S_\dn)$ is a latent random vector such that the $S_i$s are mutually independent and non-Gaussian.
The goal is to recover the mixing matrix $A=[A_1 | \cdots | A_\dn]$, typically with the goal of using $A^{-1}$ to invert the mixing process and recover the original signals.
This recovery is possible up to natural indeterminacies, namely the ordering of the columns of $A$ and the choice of the sign of each $A_i$~\citep{comon1994independent}.
ICA has a vast literature (see the books \citep{Comon2010,Hyvarinen2001} for a broad overview) with numerous applications including speech separation~\citep{makino2007blind}, denoising of EEG/MEG brain recordings~\citep{vigario2000independent}, and various vision tasks~\citep{bartlett2002face, Bell1997} to name a few.

To demonstrate that ICA fits within our {\sef} framework, we rely on the properties of the cumulant statistics.\begin{vlong}\footnote{%
An important class of ICA methods with guaranteed convergence to the columns of $A$ are based on the optimization of $\kappa_4(\ip{\vec u}{\vec X})$ over the unit sphere (see e.g., \citep{AroraGMS12,delfosse1995adaptive,Hyvarinen1999}).
Other contrast functions are also frequently used in the practical implementations of ICA (see e.g., \citep{DBLP:journals/sigpro/HyvarinenO98}).
However, these non-cumulant functions can have spurious maxima \citep{wei2015study}.}
\end{vlong}
Let $\kappa_r(X)$
denote the $r$\textsuperscript{th} cumulant of a random variable $X$.
The cumulant $\kappa_r(X)$ satisfies the following:  (1) Homogeneity: $\kappa_r(\alpha X) = \alpha^r\kappa_r(X)$ for any $\alpha \in \R$ and (2) Additivity: if $X$ and $Y$ are independent, then $\kappa_r(X + Y) = \kappa_r(X) + \kappa_r(Y)$.
Given an ICA model $\vec X = A\vec S$, these properties imply that for all $\vec u \in \R^\dn$, $\kappa_r(\ip{\vec u}{\vec X}) = \kappa_r(\sum_{i=1}^\dn\ip{\vec u}{A_i} S_i) = \sum_{i=1}^\dn\ip{\vec u}{A_i}^r\kappa_r(S_i)$.
A preprocessing step called whitening (i.e., linearly transforming the observed data to have identity covariance) makes the columns of $A$ into orthogonal unit vectors.
Under whitening, the columns of $A$ form a hidden basis of the space.
In particular, defining the contrast functions $g_i(x) := x^r \kappa_r(S_i)$ and the basis encoding elements $\myZv_i := A_i$, then the function $F(\vec u) := \kappa_r(\ip{\vec u}{\vec X}) = \sum_{i=1}^\dn g_i(\ip{\vec u}{\myZv_i})$ is {\ansef} so long as each $\kappa_r(S_i) \neq 0$.
Further, these directional cumulants and their derivatives have natural sample estimates (see e.g., \citep{kenney1962mathematics,voss2013fast} for the third and fourth order estimates), and as such this choice of $F$ will be admissible to our algorithmic framework for basis recovery.


Interestingly, it has been noted in several places~\citep{Hyvarinen1999,Nguyen2009,zarzoso2010robust} that cubic convergence rates can be achieved using optimization techniques for recovering the directions $A_i$, particularly when performing ICA using the fourth cumulant or the closely related fourth moment.
One explanation as to why this is possible arises from the dual interpretation (discussed in \cref{sec:interpr-grad-iter}) of the gradient iteration algorithm as both an optimization technique and as a power method.
In the ICA setting, the gradient iteration algorithm for cumulants was introduced by~\mycitet{Voss et al.}{voss2013fast}.
This paper provides a significant generalization of those ideas as well as a theoretical analysis.


\paragraph{Parameter estimation in a spherical Gaussian Mixture Model\opdot}

A Gaussian Mixture Model (GMM) is a parametric family of probability distributions.
A spherical GMM is a distribution whose density can be written in the form $f(\vec x) = \sum_{i=1}^k w_i f_i(\vec x)$, where $w_i \geq 0$, $\sum_i w_i = 1$ and $f_i$ is a $d$-dimensional Normal density with mean $\gvec \mu_i$ and covariance matrix $\sigma_i^2 \Id$, for $\sigma_i > 0$.
The parameter estimation problem is to estimate $w_i, \gvec \mu_i, \sigma_i$ given i.i.d. samples of random vector $\vec x$ with density $f$. For clarity of exposition, we only discuss the case $k=d$ and $\sigma_i = \sigma$ for some fixed, unknown $\sigma$.
Our argument is a variation of the moment method of \mycitet{Hsu and Kakade}{hsu2013learning}. As in their work,
similar ideas should work for the case $k<d$ and non-identical $\sigma_i$s.

We explain how to recover the different parameters from observable moments. Firstly, $\sigma^2$ is the smallest eigenvalue of the covariance matrix of $\vec x$. This recovers $\sigma$.
Let $\vec v$ be any unit norm eigenvector corresponding to the eigenvalue $\sigma^2$.
Define
$
M_2 := \e(\vec x \vec x^T) - \sigma^2 \Id \in \R^{d \times d}
$.
Then we have
$M_2 = \sum_{i=1}^d w_i \gvec \mu_i \gvec \mu_i^T$.
Denote $D = \diag(w_1, \dotsc, w_d)$, $A = (\gvec\mu_1, \dotsc, \gvec\mu_d) \in \R^{d \times d}$. With this notation we have $M_2 = A D A^T$. Let $M = M_2^{1/2}$ (symmetric). This implies
$M = A D^{1/2} R$,
where $R$ is some orthogonal matrix.

We have $\e(\ip{\vec x}{\vec u}^3) = \sum_{i=1}^d w_i \ip{\gvec \mu_i}{\vec u}^3 + 3 \sigma^2 \norms{\vec u} \e(\ip{\vec x}{\vec u})$.\details{The mixture can be written as $x=D+G$ where $D$ is discrete and $G$ is Gaussian. For origin symmetric $Y$, we have $\e(X+Y)^3 = \e(X^3) + 3\e(X) \e(Y^2)$. The claim follows.}
Then,
\begin{align*}
F(\vec u) &:= \e(\ip{\vec x}{M^{-1}\vec u}^3) - 3 \sigma^2 \norms{M^{-1} \vec u} \e(\ip{\vec x}{M^{-1} \vec u})
= \sum_{i=1}^d w_i \ip{\gvec\mu_i}{M^{-1} \vec u}^3 \\
&= \sum_{i=1}^d w_i (\vec u^T R^T D^{-1/2} \vec e_i)^3
= \sum_{i=1}^d w_i^{-1/2} \ip{\vec u}{\vec R_{i \cdot}}^3
\end{align*}
is {\ansef} encoding the rows of $R$, with basis vectors $\vec z_i = \vec R_{i\cdot}$ and contrasts $g_i (t) = w_i^{-1/2} t^3$.
The recovery of the rows of $R$ allows the recovery of the directions of the columns of $A$, that is, the directions of $\gvec \mu_i$s. The actual $\gvec \mu_i$s then can be recovered from the identity $\ip{\gvec \mu_i}{\vec v} = \ip{\e(\vec x)}{\vec v}$.
\vnote{The previous sentence uses that $\vec v$ is a 0 eigenvector of $\cov(X)$.}
Finally, denoting $\vec w = (w_1, \dotsc, w_d)$ we have $\e(\vec x) = A\vec w$ and we recover $\vec w =  A^{-1} \e(\vec x)$.

\section{Extrema structure of Basis Encoding Functions}
\label{sec:extrema-structure}

In this section, we investigate the maximum structure of $|F|$ on the unit sphere and prove \cref{thm:gen-simplex-optima}.

The optima structure of $\fg$ relies on the hidden convexity implied by \cref{assumpt:convex}.
To capture  this structure, we define $h_i : [-1, 1] \rightarrow \R$ as $h_i(x) := g_i(\sign(x) \sqrt{\abs x})$ for $i \in [\dm]$ and $h_i := 0$ for $i \in [\dn] \setminus [\dm]$.
Thus,
\begin{equation}
  \label{eq:fg-via-his}
  \fg(\vec u) 
    = \sum_{i=1}^\dm h_i(\sign(u_i) u_i^2) \ .
\end{equation}
These $h_i$ functions capture the convexity from \cref{assumpt:convex}.
Indeed, the functions $h_i$ have the following properties:

\begin{lem} \label{lem:h-props}
The following hold for all $i \in [\dm]$:
  \begin{compactenum}
  \item \label{lem:h-props:hi-convex} The magnitude function $\abs{h_i(t)}$ is strictly convex. 
  \item \label{lem:h-props:hi-deriv0} $h_i'(0) = 0$.
  \item \label{lem:h-props:hi-continuous} $h_i$ is continuously differentiable.
  \item \label{lem:h-props:hi-increase} 
    The derivative's magnitude function $\abs{h_i'(t)}$ is strictly increasing as a function of $\abs t$.
    In particular, $\abs{h_i'(t)} > 0$ for all $t \neq 0$.
  \item\label{lem:h-props:sign} Fix $I$ to be one of the intervals $(0, 1]$ or $[-1, 0)$.
    If $h_i$ is strictly convex on $I$, then $\sign(t)h_i'(t) > 0$ for all $t \in I$, and otherwise $\sign(t)h_i'(t) < 0$ for all $t \in I$.
  \end{compactenum}
\end{lem}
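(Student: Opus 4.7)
\medskip

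\noindent\textbf{Proof plan.} The backbone of the argument is that the change of variable $t = \sign(x) x^2$ (so $x = \sign(t)\sqrt{|t|}$) transforms the contrast function $g_i$ into $h_i$ in a way that transports the convexity encoded by Assumption~\ref{assumpt:convex} into ordinary convexity of $|h_i|$ on the full interval $[-1,1]$. I would first record the identity $h_i(t) = g_i(\sqrt{t})$ for $t \in [0,1]$ and, using Assumption~\ref{assumpt:gsymmetries}, the identity $h_i(-t) = \pm h_i(t)$ (plus sign when $g_i$ is even, minus when odd). Thus in either case $|h_i|$ is an even function, and it suffices to analyze $|h_i|$ on $[0,1]$ and extend by symmetry. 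By the same symmetry, $h_i'$ will be odd (when $g_i$ is even) or even (when $g_i$ is odd); in either case $|h_i'|$ is even, which is the key reason part~\ref{lem:h-props:hi-increase} even makes sense as a function of $|t|$.

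Given this set-up, part~\ref{lem:h-props:hi-convex} is immediate on $(0,1]$ from Assumption~\ref{assumpt:convex}, and extends to $[-1,0)$ by the symmetry above; to glue across $t=0$ I would use $h_i(0)=0$ (from Assumption~\ref{assumpt:origin-val}) plus the vanishing of the one-sided derivative at $0$ supplied by Assumption~\ref{assumpt:deriv0} to see that the left and right strictly convex branches of $|h_i|$ meet with matching slope $0$, giving strict convexity on all of $[-1,1]$. Part~\ref{lem:h-props:hi-deriv0} follows by combining Assumption~\ref{assumpt:deriv0} (giving $h_i'(0^+)=0$) with the symmetry identities above to get $h_i'(0^-)=0$ as well.

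Part~\ref{lem:h-props:hi-continuous} is the only genuinely delicate step. On $(0,1]$ and $[-1,0)$, $C^2$-regularity of $g_i$ makes $h_i$ smooth. The real question is continuity of $h_i'$ at $0$. For $t>0$, $h_i'(t) = g_i'(\sqrt{t})/(2\sqrt{t})$. Using the Taylor expansion $g_i(y)=g_i'(0)y+\tfrac12 g_i''(0)y^2+o(y^2)$ together with Assumption~\ref{assumpt:deriv0}, which forces $\lim_{y\to 0^+}(g_i(y)-g_i(0))/y^2 = 0$, one deduces $g_i'(0)=0$ and $g_i''(0)=0$. Plugging this back, $h_i'(t) = \bigl(g_i''(0)\sqrt{t}+o(\sqrt{t})\bigr)/2 = o(1)$ as $t\to 0^+$, and the symmetric argument handles $t\to 0^-$. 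This is the main obstacle; the rest is essentially bookkeeping.

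For part~\ref{lem:h-props:hi-increase}, strict convexity of $|h_i|$ together with $h_i'(0)=0$ implies that $|h_i'|$ is strictly increasing on $[0,1]$ (on an interval where $h_i$ is strictly convex, $h_i'$ is strictly increasing from $0$, hence positive; on an interval where $-h_i$ is strictly convex, $h_i'$ is strictly decreasing from $0$, hence negative; in both cases the magnitude strictly increases), and evenness of $|h_i'|$ extends this to a statement in $|t|$. Finally, part~\ref{lem:h-props:sign} is read off directly: on $(0,1]$, strict convexity of $h_i$ (versus $-h_i$) combined with $h_i'(0)=0$ pins down the sign of $h_i'$, and the same reasoning with signs flipped handles $[-1,0)$.
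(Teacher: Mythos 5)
Your proposal is correct in substance, but it takes a genuinely different route from the paper in two places. For part~\ref{lem:h-props:hi-continuous}, the paper does not Taylor-expand: it establishes that $h_i'$ exists everywhere, notes that one of $\pm h_i$ is convex on each of $[0,1]$ and $[-1,0]$, and then invokes a standard theorem of convex analysis (that a finite convex function which is differentiable on an open interval has a continuous derivative there) to conclude $h_i'$ is continuous. Your argument instead extracts $g_i'(0)=0$ and $g_i''(0)=0$ from the second-order Taylor expansion of $g_i$ combined with Assumption~\ref{assumpt:deriv0}, then reads off $h_i'(t)\to 0$. This is more elementary and self-contained; the paper actually performs a very similar Taylor-style computation, but in the separate Lemma~\ref{lem:h-g-relations} rather than inside the proof of Lemma~\ref{lem:h-props}. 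For part~\ref{lem:h-props:hi-convex}, the paper avoids a ``glue two convex branches'' argument entirely: having already established parts~\ref{lem:h-props:hi-increase} and~\ref{lem:h-props:sign}, it computes $\frac{d}{dt}|h_i(t)| = \sign(h_i(t))h_i'(t) = \sign(t)|h_i'(t)|$ and observes that this is strictly increasing globally on $[-1,1]$ by part~\ref{lem:h-props:hi-increase}, which immediately yields strict convexity of $|h_i|$. Your gluing argument also works, but it needs two additional observations spelled out: first, strict convexity of one of $\pm h_i$ on $(0,1]$ does not instantly give strict convexity of $|h_i|$ there --- you must first use $h_i(0)=h_i'(0)=0$ to pin down the sign of $h_i$ on $(0,1]$ as matching whichever of $\pm h_i$ is convex (this is essentially your part~\ref{lem:h-props:sign} reasoning, so there is a mild forward reference); and second, that the one-sided derivatives of $|h_i|$ at $0$ both vanish (which follows from $h_i'(0)=0$ and the fact that $|h_i|$ has a minimum at $0$). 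With those filled in, your argument is sound. The payoff of the paper's order of reasoning (prove~\ref{lem:h-props:hi-deriv0}--\ref{lem:h-props:sign} first, then deduce~\ref{lem:h-props:hi-convex}) is that the gluing step becomes unnecessary: monotonicity of a derivative on the full interval is a cleaner sufficient condition than patching convexity across a point.
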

\begin{proof}
  We first show parts~\ref{lem:h-props:hi-deriv0} and \ref{lem:h-props:hi-continuous}.
  We compute the derivative of $h_i$ to see
  \begin{displaymath}
    h_i'(x) =
    \begin{cases}
      \frac 1 2 g_i'(\sign(x)\sqrt {\abs x} )/\sqrt {\abs x} & \text{if } x \neq 0 \\
      0 & \text{if } x = 0
    \end{cases}    
  \end{displaymath}
  where the derivative at the origin is due to \cref{assumpt:gsymmetries,assumpt:deriv0}.
  Since the derivative $h_i'(t)$ exists for all $t$, and since one of $\pm h_i$ is convex on either of the intervals $[0, 1]$ and $[-1, 0]$, it follows that $h_i'$ is continuous \citep[Corollary 4.2.3]{hiriart1996convex}.

  To see part~\ref{lem:h-props:hi-increase}, we note that $h_i'(0) = 0$ and apply \cref{assumpt:convex} to see that $h_i'$ is strictly monotonic on $[0, 1]$.
  As such, $\abs{h_i'(t)}$ is strictly increasing on $[0, 1]$.
  The symmetries of \cref{assumpt:gsymmetries} imply that $\abs{h_i'(t)}$ is strictly increasing more generally as a function of $\abs{t}$.

  To see part~\ref{lem:h-props:sign}, we note that $h_i'(t) = h_i'(0) + \int_0^t h_i''(x) \, dx = \int_0^t h_i''(x) \, dx$.
  Then, we use \cref{assumpt:convex} to obtain the stated correspondence between $\sign(h_i''(x))$ (which is +1 on $I$ if $h_i$ is convex and $-1$ otherwise) and $\sign(h_i'(t))$.

  To see that $\abs{h_i}$ is strictly convex, it suffices to use that $\abs{h_i}$ is continuously differentiable and to show that $\frac d {dt}\abs{h_i(t)}$ is strictly increasing.
  Note that $\frac d {dt}\abs{h_i(t)} = \sign(h_i(t))h_i'(t)$, and also that $\sign(h_i(t)) = \sign(\int_0^t h_i'(t)) = \sign(t h_i'(t))$ by part~\ref{lem:h-props:sign}.
  It follows that $\sign(\frac d {d t} \abs{h_i(t)}) = \sign(t)$.
  Taking this sign into account, part~\ref{lem:h-props:hi-increase} implies part~\ref{lem:h-props:hi-convex}.
\end{proof}

In order to avoid dealing with unnecessary sign values, we restrict ourselves to analyzing the optima structure of $\abs{\fg}$ over the domain $\porth^{\dn - 1}$ (the all positive orthant of the sphere).
Due to the symmetries of the of the problem (\cref{assumpt:gsymmetries}), it is actually sufficient to analyze the maxima structure of $\abs F$ on $\porth^{\dn - 1}$ in order to fully characterize the maxima of $\abs F$ on the entire sphere $S^{\dn - 1}$.

\begin{vlong}
To characterize the extrema structure of the restriction of $\abs F$ to $\porth^{\dn - 1}$, we will use its derivative structure expanded in terms of the $h_i$ functions.
It will be useful to establish some relationships between the $g_i$ and $h_i$ functions.
We denote by $\indicator \bullet$ the indicator function, and we use the convention that any summand containing a $\indicator{\textsc{False}}$ coefficient is 0 even if the term is indeterminant (e.g., $\indicator{\textsc{False}}/0 = 0$ and $\infty\cdot\indicator{\textsc{False}} = 0$).

\begin{lem}\label{lem:h-g-relations}
  The following hold for each $i \in [\dm]$:
  \begin{compactenum}
  \item For $x \in [0, 1]$, $g_i'(x) = 2 h_i'(x^2) x$ and $h_i'(x^2) = \frac{g'(x)}{2 x}\indicator{x \neq 0}$.
  \item For $x \in [0, 1]$, $g_i''(x) = \indicator{x\neq 0}[4 h_i''(x^2)x^2 + 2 h_i'(x^2)]$
  \item Fox $x \in (0, 1]$, $h_i''(x^2) = \frac 1 4 [g_i''(x)/x^2 - g_i'(x)/x^3]$.
  \end{compactenum}
\end{lem}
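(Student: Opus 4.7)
The plan is to derive all three identities from the definitional relation $h_i(x) = g_i(\sqrt{x})$ on $[0,1]$ by straightforward application of the chain rule, handling the boundary point $x=0$ separately with help from the assumptions on $g_i$ and the facts already established in Lemma~\ref{lem:h-props}.

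For part 1, I would restrict to the domain $x \in [0,1]$, where $\sign(x)\sqrt{|x|} = \sqrt{x}$ and so $h_i(x) = g_i(\sqrt{x})$. For $x > 0$, differentiating gives $h_i'(x) = g_i'(\sqrt{x})/(2\sqrt{x})$, and substituting $x \mapsto x^2$ yields $h_i'(x^2) = g_i'(x)/(2x)$ for $x > 0$, which rearranges to $g_i'(x) = 2 h_i'(x^2) x$. The indicator factor $\indicator{x\neq 0}$ in the second identity is exactly what the notational convention handles: when $x = 0$, both sides are $0$, since the formula evaluates to $0$ by convention while $g_i'(0) = 2 h_i'(0) \cdot 0 = 0$ using $h_i'(0) = 0$ (Lemma~\ref{lem:h-props}, part~\ref{lem:h-props:hi-deriv0}).

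For part 2, differentiate the first identity $g_i'(x) = 2 h_i'(x^2) x$ on $(0,1]$ using the product and chain rules:
\begin{equation*}
g_i''(x) = 2 h_i'(x^2) + 2x \cdot \bigl(2x\, h_i''(x^2)\bigr) = 4 h_i''(x^2) x^2 + 2 h_i'(x^2).
\end{equation*}
For $x = 0$, the indicator $\indicator{x\neq 0}$ turns the right-hand side into $0$, and one checks this is consistent with the fact that, under the assumptions, $g_i''(0)$ either vanishes or the identity holds in the limit sense (the convention that indeterminate summands with $\indicator{\textsc{False}}$ coefficient equal zero makes the statement trivially true at $x=0$). For part 3, I would simply algebraically solve the part 2 identity for $h_i''(x^2)$ on the open interval $(0,1]$, where $x^2 \neq 0$ and division is legal:
\begin{equation*}
4 h_i''(x^2) x^2 = g_i''(x) - 2 h_i'(x^2) = g_i''(x) - \frac{g_i'(x)}{x},
\end{equation*}
using part 1 to rewrite $2 h_i'(x^2)$, and then divide by $4x^2$ to obtain $h_i''(x^2) = \tfrac{1}{4}\bigl[g_i''(x)/x^2 - g_i'(x)/x^3\bigr]$.

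There is no real obstacle here; the proof is purely computational. The only subtlety is keeping track of the behavior at $x=0$, which is why the indicator functions appear in parts 1 and 2 and why part 3 is stated on the open interval $(0,1]$. These are exactly the points where the chain rule derivation requires $x \neq 0$ to avoid dividing by zero, and the indicator convention set up just before the lemma statement absorbs the boundary case cleanly.
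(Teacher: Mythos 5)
Your overall plan — differentiate the definitional relation $h_i(x^2) = g_i(x)$ on $(0,1]$ and then patch the endpoint $x = 0$ — matches the paper's approach. Parts 1 and 3 are handled correctly. There is, however, a genuine gap in your treatment of part 2 at $x = 0$.

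You write that the indicator convention ``makes the statement trivially true at $x=0$.'' That is not so. The indicator convention only tells you what the \emph{right-hand side} of $g_i''(x) = \indicator{x\neq 0}[4 h_i''(x^2)x^2 + 2 h_i'(x^2)]$ evaluates to at $x = 0$, namely $0$. The identity still requires that the \emph{left-hand side} $g_i''(0)$ equals $0$, and this is a substantive fact about $g_i$, not a notational triviality. Your hedge ``$g_i''(0)$ either vanishes or the identity holds in the limit sense'' does not close this: continuity of $g_i''$ (from $g_i \in \CC^{(2)}$) only gives that both sides converge to the same value as $x \to 0^+$; it does not by itself pin that value down to $0$, because the RHS contains $h_i''(x^2)x^2$, whose behavior near $0$ is not controlled without further input.

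What is actually needed is the computation the paper carries out:
\begin{equation*}
g_i''(0) = \lim_{c\to 0}\frac{g_i'(c) - g_i'(0)}{c}
= 2 \lim_{c\to 0^+}\tfrac{1}{2}\,\frac{g_i'(\sqrt c)}{\sqrt c}
= 2 \lim_{c\to 0^+}\Bigl(\tfrac{d}{dx}g_i(\sqrt x)\Bigr)\Big|_{x=c}
= 2\Bigl(\tfrac{d}{dx}g_i(\sqrt x)\Bigr)\Big|_{x=0} = 0,
\end{equation*}
where the second equality uses $g_i'(0) = 0$ (itself a consequence of Assumption~\ref{assumpt:deriv0} via the inequality $h \le \sqrt h$ near the origin), the fourth equality uses that $\frac{d}{dx}g_i(\sqrt x)$ is continuous because $g_i(\sqrt x)$ is convex or concave on $[0,1]$ (Assumption~\ref{assumpt:convex} together with standard facts about differentiable convex functions), and the last equality is Assumption~\ref{assumpt:deriv0}. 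Without this chain the claim $g_i''(0) = 0$ is left unproved, and the $x = 0$ case of part 2 is a gap rather than a formality. A similar (though much milder) issue touches your part~1 endpoint check, where asserting $g_i'(0) = 2 h_i'(0)\cdot 0$ is circular as written; it is cleaner to either invoke the independent derivation of $g_i'(0) = 0$ from Assumption~\ref{assumpt:deriv0}, or to argue by continuity of both sides of the already-established identity on $(0,1]$.
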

\begin{proof}
  By construction, $h_i(x^2) = g_i(x)$.
  Taking derivatives, we obtain $2 h_i'(x^2)x = g_i'(x)$.
  Since $h_i'(0) = 0$ by \cref{assumpt:deriv0}, $h_i'(x^2) = \frac{ g_i'(x) }{2 x}\indicator{x \neq 0}$.

  Taking a second derivative away from $x = 0$, we see that $g_i''(x) = 4 h_i''(x^2)x^2 + 2h_i'(x^2)$.
  At $x = 0$, the right derivative is given by:
  \begin{align*}\label{eq:g-second-deriv}
    \partial_+ g_i'(0) &= \lim_{c \rightarrow 0^+} \frac{g_i'(c) - g_i'(0)}{c}
    = 2 \lim_{c \rightarrow 0^+} \frac 1 2 \frac{g_i'(\sqrt c)}{\sqrt c} \\
    &= 2 \lim_{c \rightarrow 0^+} \Big(\frac d {dx} g_i(\sqrt x)\Big)\Big|_{x = c} 
    = 2 \Big(\frac d {dx} g_i(\sqrt x)\Big)\Big|_{x = 0}
    = 0 \ .
  \end{align*}
  In the above, the second equality uses that $g_i'(0) = 0$, a fact which is implied by \cref{assumpt:deriv0} (in particular, $\abs{g_i'(0)} \leq \abs{\lim_{h \rightarrow 0^+} \frac{g(\sqrt h) - g(0)}{\sqrt h}} \leq \abs{\lim_{h \rightarrow 0^+} \frac{g(\sqrt h) - g(0)}{h}} = 0$ since $h \leq \sqrt h$ in a neighborhood of the origin).
  The fourth equality uses that $\frac d {dx} g_i(\sqrt x)$ is continuous due to the convexity of $g_i(\sqrt x)$ \citep[Corollary 4.2.3]{hiriart1996convex}.
  The final equality uses \cref{assumpt:deriv0}.
  
  The argument in \eqref{eq:g-second-deriv} also holds if we replace
  $g_i(x)$ with $g_i(-x)$ since $g_i(-x)$ also satisfies
  \cref{assumpt:gsymmetries,assumpt:deriv0,assumpt:convex,assumpt:origin-val}.
  Hence, $\partial_- g_i'(0) = 0$ and $g_i''(0) = 0$.
  We thus obtain the formula on $[0, 1]$ of
  \begin{displaymath}
    g_i''(x) = \indicator{x\neq 0}[4h_i''(x^2)x^2 + 2h_i'(x^2)]
  \end{displaymath}
  as desired.

  When $x \neq 0$, we may rearrange terms to obtain:
  \begin{displaymath}
    h_i''(x^2) 
    = \frac{g_i''(x) - 2h_i'(x^2)}{4x^2}
    = \frac{g_i''(x)}{4x^2} - \frac{g_i'(x)}{4x^3} \ . \qedhere
  \end{displaymath}
\end{proof}

As $F(\vec u) = \sum_{i=1}^\dm g_i(u_i)$ has first and second order derivatives of $\grad F(\vec u) = \sum_{i=1}^\dm g_i'(u_i)\myZv_i$ and $\HH F(\vec u) = \sum_{i = 1}^\dm g_i''(u_i) \myZv_i \myZv_i^T$, we obtain the following derivative formulas for $F(\vec u)$ in terms of the $h_i$ functions for any $\vec u \in \porth^{\dn - 1}$:
\end{vlong}
\begin{vshort}
  We denote by $\indicator \bullet$ the indicator function of a Boolean variable with the convention that any summand containing a $\indicator{\textsc{False}}$ coefficient is 0 even if the term is undefined (e.g., $\indicator{\textsc{False}}/0 = 0$ and $\infty\cdot\indicator{\textsc{False}} = 0$).
  We obtain the following derivative formulas for $F(\vec u)$ in terms of the $h_i$ functions for any $\vec u \in \porth^{\dn - 1}$:
\end{vshort}
\begin{vlong}
\begin{align}
  \label{eq:F-derivs}
  \grad F(\vec u) &= 2\sum_{i=1}^\dm h_i'(u_i^2)u_i \myZv_i &
  \HH F(\vec u) &= \sum_{i=1}^\dm\indicator{u_i\neq 0}[4 h_i''(u_i^2) + 2 h_i'(u_i^2)]\myZv_i \myZv_i^T
\end{align}
\end{vlong}
\begin{vshort}
  $\grad F(\vec u) = 2\sum_{i=1}^\dm h_i'(u_i^2)u_i \myZv_i$ and 
  $\HH F(\vec u) = \sum_{i=1}^\dm\indicator{u_i\neq 0}[4 h_i''(u_i^2) + 2 h_i'(u_i^2)]\myZv_i \myZv_i^T$.
\end{vshort}

The first derivative necessary condition for $\vec u \in S^{\dn - 1}$ to be an extrema of $F$ over $\porth^{\dn - 1}$ can be obtained using the Lagrangian function $\LL : \overline{B(0, 1)}  \times \R$ defined as $\LL(\vec u, \lambda) := \fg(\vec u) - \lambda [\norm{\vec u}^2 - 1]$.
In particular, a point $\vec u \in \porth^{\dn - 1}$ is a critical point of $\fg$ with respect to $\porth^{\dn - 1}$ \processifversion{vlong}{(that is, it satisfies the first order necessary conditions to be a local maximum of $F$ with respect to $\porth^{\dn - 1}$)} if and only if there exists $\lambda \in \R$ such that $(\vec u, \lambda)$ is a critical point of $\LL$.
The following result then enumerates the critical points of $\fg$ with respect to the $\porth^{\dn - 1}$.
\begin{lem}\label{lem:fg-crit-points}
  Let $\vec u \in \porth^{\dn - 1}$ and $\lambda \in \R$.
  The pair $(\vec u, \lambda)$ is a critical point of $\LL$ if and only if $\lambda \indicator{u_i \neq 0} = h_i'(u_i^2)$ for all $i \in [\dn]$.
\end{lem}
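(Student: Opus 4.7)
The plan is to write out the Lagrangian gradient in the hidden basis $\myZv_1, \dotsc, \myZv_\dn$, set it to zero, and match coefficients. Since the lemma characterizes critical points of $\LL$ on $\overline{B(0,1)} \times \R$ restricted to inputs $\vec u \in \porth^{\dn-1}$, the $\porth^{\dn-1}$ membership is just a domain assumption, not an additional constraint to enforce via KKT---so only the equality condition $\nabla \LL = \vec 0$ needs to be unpacked.

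First I would handle the $\lambda$-partial: $\partial_\lambda \LL(\vec u, \lambda) = -(\norm{\vec u}^2 - 1)$, which vanishes automatically since $\vec u \in \porth^{\dn-1} \subset S^{\dn-1}$, so this condition is free. Next I would compute $\nabla_{\vec u} \LL = \nabla F(\vec u) - 2\lambda \vec u$ and substitute the derivative formula $\nabla F(\vec u) = 2\sum_{i=1}^{\dm} h_i'(u_i^2) u_i \myZv_i$ derived just above the lemma. Using the convention that $h_i := 0$ for $i \in [\dn]\setminus[\dm]$ (so that $h_i'(u_i^2) = 0$ on those coordinates), I can write $\nabla F(\vec u) = 2\sum_{i=1}^{\dn} h_i'(u_i^2) u_i \myZv_i$ uniformly. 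Combined with $2\lambda \vec u = 2\sum_{i=1}^{\dn}\lambda u_i \myZv_i$, the gradient condition becomes
\begin{equation*}
\sum_{i=1}^{\dn}\bigl[h_i'(u_i^2) - \lambda\bigr] u_i \myZv_i = \vec 0.
\end{equation*}
Since $\{\myZv_i\}$ is orthonormal, this is equivalent to $[h_i'(u_i^2) - \lambda] u_i = 0$ for every $i \in [\dn]$.

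To rewrite this in the stated indicator form, I would split on whether $u_i = 0$. When $u_i \neq 0$, the bracket must vanish, giving $h_i'(u_i^2) = \lambda = \lambda \indicator{u_i \neq 0}$. When $u_i = 0$, Lemma~\ref{lem:h-props}(\ref{lem:h-props:hi-deriv0}) (applied for $i \in [\dm]$, and trivially for $i > \dm$ since $h_i \equiv 0$) yields $h_i'(u_i^2) = h_i'(0) = 0 = \lambda \indicator{u_i \neq 0}$. Both cases collapse to the single identity $h_i'(u_i^2) = \lambda \indicator{u_i \neq 0}$, and the converse is immediate by multiplying through by $u_i$ and summing in the basis to recover $\nabla_{\vec u}\LL = \vec 0$.

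There is no serious obstacle; the only minor subtlety is correctly absorbing the $u_i = 0$ case, which is what the $\indicator{u_i \neq 0}$ factor on the right-hand side is arranged to encode, and which uses $h_i'(0) = 0$ from Lemma~\ref{lem:h-props}. The proof is thus a short direct calculation rather than an argument requiring structural insight.
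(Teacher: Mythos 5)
Your proof is correct and follows essentially the same route as the paper: compute the coordinate-wise partial $\partial_i \LL = 2h_i'(u_i^2)u_i - 2\lambda u_i$, set it to zero, and split on whether $u_i=0$, using $h_i'(0)=0$ (the paper cites Assumption~\ref{assumpt:deriv0} directly where you cite Lemma~\ref{lem:h-props}, but these are the same fact). Your handling of the $\lambda$-partial and the explicit mention of the converse direction are small completeness points the paper leaves implicit.
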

\vnote{The remainder of this section can be cut and summarized in the short version, if need be.}
\begin{proof}
  We set the derivative
  \begin{vlong}
  \begin{equation}\label{eq:Lagrangian-deriv}
  \frac{\partial}{\partial u_i} \LL(\vec u, \lambda) = \partial_i F(\vec u) - 2 \lambda u_i = 2h_i'(u_i^2) u_i - 2\lambda u_i  
  \end{equation}
  \end{vlong}
  \begin{vshort}
    $\frac{\partial}{\partial u_i} \LL(\vec u, \lambda) = \partial_i F(\vec u) - 2 \lambda u_i = 2h_i'(u_i^2) u_i - 2\lambda u_i$
  \end{vshort}
   equal to 0 to obtain $h_i'(u_i^2) u_i = \lambda u_i$.
   If $u_i = 0$, then $h_i'(u_i^2) = h_i'(0) = 0$ by \cref{assumpt:deriv0}.
   Otherwise, $h_i'(u_i^2) = \lambda$.
\end{proof}

While there are exponentially many (with respect to $\dm$) critical points of $F$ as a function on the sphere, it turns out that only the hidden basis directions correspond to maxima of $F$ on the sphere. The proof of the following statements 
uses the convexity structure from \cref{lem:h-props}.


\begin{prop}\label{prop:absfg-maxima-canonical}
  If $j \in [\dm]$, then $\myZv_j$ is a strict local maximum of $\abs \fg$ with respect to $\porth^{\dn - 1}$.
\end{prop}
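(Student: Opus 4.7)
The plan is to parameterize a small neighborhood of $\myZv_j$ inside $\porth^{\dn-1}$ by the non-negative coefficients in directions orthogonal to $\myZv_j$, and show that moving $\vec u$ away from $\myZv_j$ strictly decreases $|\fg|$ at second order. Concretely, for $\gvec \epsilon = (\epsilon_i)_{i \in [\dn] \setminus \{j\}}$ with $\epsilon_i \geq 0$ and $\norm{\gvec \epsilon} < 1$, set $\vec u(\gvec \epsilon) := \sqrt{1 - \norm{\gvec \epsilon}^2}\,\myZv_j + \sum_{i \neq j} \epsilon_i \myZv_i$, which parameterizes every point of $\porth^{\dn-1}$ near $\myZv_j$. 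Since every coordinate is non-negative, \eqref{eq:fg-via-his} reduces to $\fg(\vec u(\gvec \epsilon)) = h_j(1 - \norm{\gvec \epsilon}^2) + \sum_{i \in [\dm] \setminus \{j\}} h_i(\epsilon_i^2)$.

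Next I would remove the absolute value locally. By Lemma~\ref{lem:h-props}.\ref{lem:h-props:sign} the sign of $h_j'$ on $(0,1]$ is constant, hence $h_j(1) = \int_0^1 h_j'(\tau)\,d\tau \neq 0$; by continuity of $\fg$, $\fg(\vec u(\gvec \epsilon))$ keeps the sign $s := \sign(h_j(1))$ for $\norm{\gvec\epsilon}$ small, so $|\fg(\vec u(\gvec \epsilon))| - |\fg(\myZv_j)| = s\bigl[\fg(\vec u(\gvec \epsilon)) - h_j(1)\bigr]$ there. It then suffices to show the right-hand side is strictly negative for $\gvec\epsilon \neq 0$.

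The main estimate splits the right-hand side into two pieces. For the $h_j$ piece, the mean value theorem and continuity of $h_j'$ combined with $s\,h_j'(1) = |h_j'(1)| > 0$ (using Lemma~\ref{lem:h-props}.\ref{lem:h-props:hi-increase} and Lemma~\ref{lem:h-props}.\ref{lem:h-props:sign}) give $s\bigl[h_j(1 - \norm{\gvec \epsilon}^2) - h_j(1)\bigr] \leq -\tfrac{1}{2}|h_j'(1)|\,\norm{\gvec\epsilon}^2$ for $\norm{\gvec\epsilon}$ small. For each remaining term, the mean value theorem together with $h_i'(0) = 0$ (Lemma~\ref{lem:h-props}.\ref{lem:h-props:hi-deriv0}) and monotonicity of $|h_i'|$ (Lemma~\ref{lem:h-props}.\ref{lem:h-props:hi-increase}) yield $|h_i(\epsilon_i^2)| \leq \epsilon_i^2 |h_i'(\epsilon_i^2)|$ with $|h_i'(\epsilon_i^2)| \to 0$ as $\epsilon_i \to 0$; summing, $\sum_{i \in [\dm]\setminus\{j\}} |h_i(\epsilon_i^2)| \leq \tfrac{1}{4}|h_j'(1)|\,\norm{\gvec\epsilon}^2$ once $\norm{\gvec\epsilon}$ is sufficiently small. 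Adding the two bounds yields $|\fg(\vec u(\gvec\epsilon))| - |\fg(\myZv_j)| \leq -\tfrac{1}{4}|h_j'(1)|\,\norm{\gvec\epsilon}^2 < 0$ for $\gvec\epsilon \neq 0$, which is the desired strict local maximum claim.

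The delicate point, and what I expect to be the main obstacle in the write-up, is making precise why the quadratic ``loss'' from $h_j$ must dominate every other contribution. This dominance rests crucially on $h_i'(0) = 0$, which makes each perpendicular term $h_i(\epsilon_i^2)$ genuinely $o(\epsilon_i^2)$; without Assumption~\ref{assumpt:deriv0} (which produces this property via Lemma~\ref{lem:h-props}.\ref{lem:h-props:hi-deriv0}), the other $h_i$ could contribute a competing quadratic term and $\myZv_j$ would not in general be a strict local maximum.
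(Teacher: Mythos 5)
Your proof is correct, and it takes a genuinely different route from the paper's. The paper verifies the second-order sufficient conditions for a constrained maximum via the Lagrangian $\LL(\vec u, \lambda)$: it checks that $(\myZv_j, h_j'(1))$ is a critical point of $\LL$ and then shows that the bordered Hessian restricted to $\myZv_j^\perp$ equals $-2h_j'(1)\,\mathrm{Id}$, relying on the fact that at $\myZv_j$ the indicator $\indicator{u_i \neq 0}$ kills every $4h_i''(u_i^2)u_i^2 + 2h_i'(u_i^2)$ contribution for $i \neq j$. You instead pick explicit coordinates $\gvec\epsilon \mapsto \vec u(\gvec\epsilon)$ on the positive orthant of the sphere, expand $\fg$ through \eqref{eq:fg-via-his}, and use the mean value theorem together with $h_i'(0)=0$ and continuity of the $h_i'$ to obtain the explicit local bound $|\fg(\vec u(\gvec\epsilon))| - |\fg(\myZv_j)| \leq -\tfrac14|h_j'(1)|\,\norm{\gvec\epsilon}^2$. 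Your version is more elementary (no appeal to the Lagrangian second-order sufficiency theorem, hence no worries about its smoothness hypotheses), handles the convex and concave cases uniformly through the sign $s$ rather than by a separate reduction, and makes the quadratic margin quantitative; the paper's version is shorter once the Lagrangian machinery is set up and reuses Lemma~\ref{lem:fg-crit-points}, which the paper needs anyway to enumerate critical points. Your closing remark that strictness hinges on $h_i'(0)=0$ is exactly right and matches the border case the paper flags for $g_i(x) = \lambda_i x^2$ (matrix power iteration), where $h_i' \equiv \lambda_i$ does not vanish at the origin and the smaller eigenvectors fail to be local maxima.
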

\begin{vlong}
\begin{proof}
  We will prove the case where $h_j$ is strictly convex on $[0, 1]$ and note that the case $h_j$ is strictly concave is exactly the same when replacing $\fg$ with $-\fg$.

  We first note that $\fg(\myZv_j) = h_j(1) > 0$ since $h_j'$ is strictly increasing (see \cref{lem:h-props}).
  In particular, using continuity of each $g_i$, it follows that $\fg(\vec u) > 0$ on a neighborhood of $\myZv_j$, and it suffices to demonstrate that $\fg$ takes on a maximum with respect to 
  $S^{\dn-1}$ at $\myZv_j$.
  Letting $D_{\vec u}$ denote the derivative operator with respect to the variable $\vec u$ and continuing from \cref{eq:Lagrangian-deriv}, we obtain
  \begin{equation}
    \label{eq:Lagrangian-hessian}
    D_{\vec u}^2 \LL(\vec u, \lambda)
    = \HH F(\vec u) - 2\lambda D_{\vec u} \vec u
    = \sum_{i=1}^\dm \indicator{u_i \neq 0}[4h_i''(u_i^2) u_i^2 + 2h_i'(u_i^2)]\myZv_i\myZv_i^T - 2 \lambda I \ .
  \end{equation}

  We now use the Lagrangian criteria for constrained extrema (see e.g., \citep[chapter 11]{luenberger2008linear} for a discussion of the first order necessary and second order sufficient conditions for constrained extrema) to show that $\myZv_j$ is a maximum of $\fg \restr{\porth^{\dn - 1}}$.
  From \cref{lem:fg-crit-points}, we see that $(\myZv_j, h_j'(1))$ is a critical point of $\LL$.
  Further, for any non-zero $\vec v$ such that $\vec v \perp \myZv_j$, we obtain
  $\vec v^T (D_{\vec u}^2\LL)(\vec \myZv_j, h_j'(1))\vec v = -2h_j'(1) \norm{\vec v}^2$.
  As $h_j'(1) > 0$, it follows that $\vec v^T (D_{\vec u}^2\LL)(\vec \myZv_j, h_j'(1))\vec v < 0$.
  Thus, $\myZv_j$ is a local maximum of $\fg$.
\end{proof}
\end{vlong}

\begin{prop}\label{prop:absfg-maxima-no-extra}
  If $\vec v \in \porth^{\dn - 1}$ is not contained in the set $\{\myZv_i \suchthat i \in [\dm]\}$, then $\vec v$ is not a local maximum of $\abs \fg$ with respect to $\porth^{\dn - 1}$.
\end{prop}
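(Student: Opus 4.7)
The plan is to reparametrize via $s_i := u_i^2$, which embeds $\porth^{\dn-1}$ into the standard simplex $\Delta := \{\vec s \in \R^\dn : s_i \ge 0,\ \sum_i s_i = 1\}$, and to exploit the strict convexity furnished by Lemma~\ref{lem:h-props}. Partition $[\dm] = I^+ \disunion I^-$ where $I^+ := \{i \in [\dm] : h_i \text{ is strictly convex on } [0, 1]\}$, set $T_i := |h_i|$ for $i \in [\dm]$, and write $I^0 := [\dn] \setminus [\dm]$. By Lemma~\ref{lem:h-props} each $T_i$ is $\mathcal{C}^1$, strictly convex and strictly increasing on $[0, 1]$, with $T_i(0) = T_i'(0) = 0$ and $T_i(t) > 0$ for $t > 0$. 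In the $\vec s$ coordinates,
$$\tilde F(\vec s) := F(\vec u) = \sum_{i \in I^+} T_i(s_i) - \sum_{i \in I^-} T_i(s_i).$$
Write $S(\vec v) := \{k \in [\dn] : v_k > 0\}$. I will show that for any $\vec v \in \porth^{\dn-1}$ with $\vec v \notin \{\myZv_i : i \in [\dm]\}$ there is a feasible curve in $\Delta$ issuing from $\vec v^2$ along which $|\tilde F|$ strictly increases, which contradicts any local-maximum assumption on $\vec v$.

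If both $I^+ \cap S(\vec v)$ and $I^- \cap S(\vec v)$ are nonempty, pick $i \in I^+ \cap S(\vec v)$ and $j \in I^- \cap S(\vec v)$ and set $c := v_i^2 + v_j^2$. Restrict to the edge $\{s_i + s_j = c\}$ of $\Delta$ with all other coordinates fixed. On this edge $\tilde F = T_i(s_i) - T_j(c - s_i) + \mathrm{const}$, whose derivative at $s_i = v_i^2$ equals $T_i'(v_i^2) + T_j'(v_j^2) > 0$ by strict monotonicity. Hence $\tilde F$ is strictly monotonic along the edge near $\vec v^2$, and pushing $s_i$ in the direction matching $\sign(F(\vec v))$ (either direction if $F(\vec v) = 0$) strictly increases $|\tilde F|$.

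Otherwise $S(\vec v) \cap [\dm]$ lies entirely in one of $I^\pm$; without loss of generality suppose entirely in $I^+$, so $F(\vec v) \ge 0$. If $|S(\vec v) \cap I^+| \ge 2$, take distinct $i, j \in I^+ \cap S(\vec v)$ and the same type of edge. Then $T_i(s_i) + T_j(c - s_i)$ is strictly convex in $s_i$, so at one of the endpoints $s_i = 0$ or $s_j = 0$ its value strictly exceeds that at $s_i = v_i^2$; there $\tilde F$ becomes strictly larger while remaining nonnegative, and $|\tilde F|$ strictly increases. Otherwise $|S(\vec v) \cap I^+| \le 1$; since $\vec v \neq \myZv_i$ and $\norm{\vec v} = 1$, some $k \in I^0 \cap S(\vec v)$ exists. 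If $|S(\vec v) \cap I^+| = 1$, let $i_0$ be its unique element: pushing mass from $s_k$ to $s_{i_0}$ increases $s_{i_0}$ above $v_{i_0}^2 > 0$, so $T_{i_0}(s_{i_0})$ strictly increases and $\tilde F$ moves strictly up from $F(\vec v) > 0$, growing $|\tilde F|$. If instead $S(\vec v) \cap [\dm] = \emptyset$, then $F(\vec v) = 0$; pick any $i \in [\dm]$ (nonempty by hypothesis) and push mass from $s_k$ to $s_i$, so $T_i(s_i) > 0 = T_i(0)$ by strict convexity with $T_i(0) = 0$, yielding $|\tilde F| > 0 = |F(\vec v)|$.

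The cases above exhaust $\porth^{\dn-1} \setminus \{\myZv_i : i \in [\dm]\}$, so no such $\vec v$ is a local maximum of $|F|$. The main technical subtlety is the final subcase when $v_i = 0$: the first-order response of $T_i$ at the origin vanishes since $T_i'(0) = 0$, so no Lagrangian/Hessian argument applies directly; but strict convexity combined with $T_i(0) = 0$ still forces $T_i(t) > 0$ for $t > 0$, which is exactly what lets even a small push of mass into a fresh $[\dm]$-coordinate strictly separate $\tilde F$ from zero.
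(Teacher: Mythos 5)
Your proof is correct, and it takes a genuinely different route from the paper's. Both proofs work, at least implicitly, in the simplex coordinates $s_i = u_i^2$ and both pick a one-dimensional ``edge'' $\{s_i + s_j = c\}$ along which to witness that $\abs F$ is not locally maximized; but they diverge in how they argue on that edge. The paper first restricts attention to critical points of $\abs F$, invokes the Lagrangian characterization (Lemma~\ref{lem:fg-crit-points}) to deduce that all $h_i'(v_i^2)$ share a common value $\lambda$, infers $\sign F(\vec v)$ from that, and then uses a mean-value-theorem comparison along the edge to show that $\abs F$ strictly increases away from $\vec v$. Your proof dispenses with the Lagrange-multiplier machinery entirely: you partition $[\dm]$ into $I^\pm$ by the fixed sign type of each $g_i$, pass to $T_i := \abs{h_i}$, and observe that the convexity/monotonicity facts from Lemma~\ref{lem:h-props} already force an improving direction in every configuration of $S(\vec v)$---mixed-sign supports give a strictly monotone edge derivative $T_i'(v_i^2) + T_j'(v_j^2) > 0$, same-sign supports with two active coordinates give a strictly convex edge function, and the degenerate supports with at most one $[\dm]$-coordinate are handled by $T_i(0)=0$, $T_i'(0)=0$, and strict monotonicity. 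The payoff of your version is that it is more elementary (no first-order necessary conditions, no mean value theorem) and it exhibits the ``hidden convexity'' more transparently, since the simplex change of variables is made explicit and the sign bookkeeping is isolated in the $I^\pm$ partition; the cost is a slightly longer case analysis, since you handle configurations the paper's proof implicitly dismisses by restricting to critical points. Incidentally, your argument also sidesteps a sign discrepancy in the paper's displayed inequality $s[\fg(\vec w(\delta))-\fg(\vec v)] < s\delta[\lambda-\lambda]$, which as written should read $>$ to yield the stated conclusion that $\vec v$ is a minimum.
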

\begin{vshort}
\cref{thm:gen-simplex-optima} follows from \cref{prop:absfg-maxima-canonical,prop:absfg-maxima-no-extra}.
\end{vshort}
\begin{vlong}
\begin{proof}
  We first consider the case in which $v_i = 0$ for all but at most one $i \in [\dm]$.
  We will call this $i \in [\dm]$ for which $v_i \neq 0$ as $j$ if it exists and otherwise let $j \in [\dm]$ be arbitrary.
  Fix any $\vec w \in \porth^{\dn - 1}$ such that $w_j > v_j$ and $w_i = 0$ for $i \in [\dm] \setminus \{j\}$.
  Such a choice is possible since $\vec v \neq \myZv_j$ implies $v_j < 1$.
  Then, $\abs{\fg(\vec v)} = \abs{h_j(v_j^2)}$ and $\abs{\fg(\vec w)} = \abs{h_j(w_j^2)}$.
  Since $\abs{h_j(t)}$ is a strictly increasing function on $[0, 1]$ from $\abs{h_j(0)} = 0$ (see \cref{lem:h-props}), it follows that $\abs{\fg(\vec w)} > \abs{\fg(\vec v)}$.
  Since $\vec w$ can be constructed in any open neighborhood of $\vec v$, $\vec v$ is not a local maximum of $\abs \fg$ on $\porth^{\dn - 1}$.

  Now, we consider the case where $\vec v$ is an extremum (either a maximum or a minimum) of $\abs \fg$ with respect to $\porth^{\dn - 1}$ such that there exists $j, k \in [\dm]$ distinct such that $v_j > 0$ and $v_k > 0$.
  We will demonstrate that this implies that $\vec v$ is a minimum of $\abs \fg$.

  We use the notation for a vector $\vec u$, $\vec u \elpow k := \sum_i u_i^k \vec e_i$ is the coordinate-wise power.
  Fix $\eta > 0$ sufficiently small that for all $\delta \in (-\eta, \eta)$ we have that $\vec w(\delta) := (\vec v\elpow 2 + \delta \myZv_j - \delta \myZv_k)\elpow{1/2} \in \porth^{\dn - 1}$.
  We now consider the difference $\fg(\vec w(\delta)) - \fg(\vec v)$ for a non-zero choice of $\delta \in (-\eta, \eta)$:
  \begin{align*}
    \fg(\vec w(\delta))-\fg(\vec v) 
    &= h_j(w_j(\delta)^2) - h_j(v_j^2) + h_k(w_k(\delta)^2) - h_k(v_k^2) \\
    &= h_j'(x_j(\delta)^2)[w_j(\delta)^2 - v_j^2] + h_k'(x_k(\delta)^2)[w_k(\delta)^2 - v_k^2] \\
    &= \delta[h'_j(x_j(\delta)^2) - h'_k(x_k(\delta)^2)] \ ,
  \end{align*}
  where $x_i(\delta) \in (v_j, w_j(\delta))$ and $x_i(\delta) \in (w_k(\delta), v_k)$ under the mean value theorem.

  As $\vec v$ must be an extremum of $\fg$ in order to be an extremum of $\abs \fg$, there exists $\lambda$ such that the pair $(\vec v, \lambda)$ is a critical point of $\LL$.
  Let $\SS = \{ i \suchthat v_i \neq 0\}$.
  \Cref{lem:fg-crit-points} implies that $\lambda = h_i'(v_i^2)$ for all $i \in \SS$.
  In particular, $\sign(h_i'(v_i^2))$ is the same for each $i \in \SS$, and we will call this sign value $s$.
  Under \cref{eq:fg-via-his}, we have $\fg(\vec v) = \sum_{i \in \SS} h_i(v_i^2)$.
  By \cref{lem:h-props}, $sh_i$ is strictly increasing from $sh_i(0)=0$ on $\pinterval$ for each $i \in \SS$.
  As such, $\fg(\vec v)$ is separated from 0 and $\sign(\fg(\vec v)) = s$.
  Further,
  \begin{displaymath}
    s[\fg(\vec w(\delta))-\fg(\vec v)] 
    = s\delta[h'_j(x_j(\delta)^2) - h'_k(x_k(\delta)^2)]
    < s\delta[\lambda - \lambda] = 0
  \end{displaymath}
  holds by noting that each $sh_i'$ is strictly increasing on $\pinterval$ (by \cref{lem:h-props}).
  Thus, $\vec v$ is a minimum of $\abs \fg$.
\end{proof}

\Cref{thm:gen-simplex-optima} follows by combining \cref{prop:absfg-maxima-canonical,prop:absfg-maxima-no-extra} and using the symmetries of $F$ from \cref{assumpt:gsymmetries}. 
\end{vlong}

\section{Stability and convergence of gradient iteration}
\label{sec:gi-stability-structure}

In this section we will sketch the analysis for the stability and convergence of gradient iteration (\cref{thm:gi-stability,thm:gi-convergence}). It turns out that a special form of {\sef} is sufficient for our analysis.
\begin{defn}
  {\Ansef} $\fg(\vec u) = \sum_{i=1}^\dm g_i(u_i)$ is called a \emph{positive {\sef}} if $x \mapsto g_i(\sign(x)\sqrt{|x|})$ is strictly convex for each $i \in [\dm]$.
\end{defn}

The {\psef} has several especially nice properties.
Its name is justified by the fact that for {\apsef} $\fg$, $\fg(\vec u) \geq 0$ for all $\vec u \in S^{\dn - 1}$.
Further, when we expand $\fg(\vec u) = \sum_{i=1}^\dm h_i(\sign(u_i)u_i^2) = \sum_{i=1}^\dm h_i(u_i^2)$ under \cref{eq:fg-via-his}, we see that each $h_i$ is strictly convex over its entire domain.
Finally,  given {\ansef} $\fg$, we construct a {\psef}  $\bar F(\vec u) := \sum_{i=1}^{\dm} \bar g_i(u_i)$ where $\bar g_i(x) = \abs{g_i(x)}$.
We call $\bar F$ the \emph{{\psef} associated with $F$}.

We first establish that for
{\psef}s, the gradient iteration $\GG$ is a true fixed point method on $S^{\dn - 1}$ without the need to consider equivalence classes (as in \cref{sec:results-summary}).
Let $\phi$ and $\mu$ be defined as in \cref{sec:results-summary}.
We identify each orthant of $S^{\dn - 1}$ by a sign vector $\vec v$ where each $v_i \in \{+1, -1\}$ by defining $\orthsymb_{\vec v}^{\dn - 1} := \{ \vec u \in S^{\dn - 1} \suchthat v_iu_i \ge 0 \text{ for each } i \in [\dn]\}$ as the orthant of $S^{\dn - 1}$ containing $\vec v$.
\begin{lem}\label{lem:orthant-dist}
  Let $\vec v \in \R^\dn$ be a sign vector (that is, $v_i \in \{\pm 1\}$ for each $i \in [\dn]$). 
  If $\vec u, \vec w \in \orthsymb^{\dn - 1}_{\vec v}$, then $\mu([\vec u], [\vec w]) = \norm{\vec u - \vec w}$.
\end{lem}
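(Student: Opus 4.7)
The plan is a direct calculation in the hidden-basis coordinates. Recall that $\{\myZv_1,\dotsc,\myZv_\dn\}$ is an orthonormal basis and $u_i = \ip{\vec u}{\myZv_i}$, so expanding in this basis gives
\begin{equation*}
\norm{\vec u - \vec w}^2 = \sum_{i=1}^\dn (u_i - w_i)^2,
\qquad
\norm{\phi([\vec u]) - \phi([\vec w])}^2 = \sum_{i=1}^\dn (\abs{u_i} - \abs{w_i})^2.
\end{equation*}
So the lemma reduces to showing $(u_i - w_i)^2 = (\abs{u_i} - \abs{w_i})^2$ coordinatewise.

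The key observation is that membership in a common orthant $\orthsymb^{\dn-1}_{\vec v}$ forces $u_i$ and $w_i$ to be sign-compatible: the defining condition $v_i u_i \ge 0$ and $v_i w_i \ge 0$ (with $v_i = \pm 1$) means that $u_i$ and $w_i$ either share the sign $v_i$ or one of them is zero. In the nonzero case, writing $u_i = v_i \abs{u_i}$ and $w_i = v_i \abs{w_i}$ gives $u_i - w_i = v_i(\abs{u_i} - \abs{w_i})$, hence $(u_i - w_i)^2 = (\abs{u_i} - \abs{w_i})^2$. In the case where one is zero, say $u_i = 0$, both sides equal $w_i^2 = \abs{w_i}^2$. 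Thus the desired coordinatewise equality holds in every case.

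Summing over $i$ and taking square roots gives
\begin{equation*}
\mu([\vec u], [\vec w]) = \norm{\phi([\vec u]) - \phi([\vec w])} = \norm{\vec u - \vec w},
\end{equation*}
as claimed. There is no real obstacle here; the only thing to check carefully is the degenerate case where a coordinate of $\vec u$ or $\vec w$ vanishes, which is handled by the absolute-value identity above.
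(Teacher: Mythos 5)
Your proof is correct and matches the paper's argument: both expand $\mu$ and $\norm{\vec u - \vec w}$ in the hidden orthonormal basis and reduce to the coordinatewise identity $(u_i - w_i)^2 = (\abs{u_i} - \abs{w_i})^2$, which follows because membership in a common orthant makes $u_i$ and $w_i$ sign-compatible. Your explicit handling of the zero-coordinate case is a nice touch that the paper only notes in passing.
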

\begin{vlong}
\begin{proof}
  By direct calculation we see:
  \ifnum\version=\siamversion
  \begin{align*}
    \mu([\vec u], [\vec w])^2
    &= \Norm{\sum_{i=1}^\dn \abs {u_i}\myZv_i - \sum_{i=1}^\dn \abs {w_i}\myZv_i}^2
    = \sum_{i=1}^\dn (\abs{u_i} - \abs{w_i})^2 \\ &
    = \sum_{i=1}^\dn (u_i - w_i)^2
    = \norm{\vec u - \vec w}^2 \ .
  \end{align*}
  \else
  \begin{align*}
    \mu([\vec u], [\vec w])^2
    &= \Norm{\sum_{i=1}^\dn \abs {u_i}\myZv_i - \sum_{i=1}^\dn \abs {w_i}\myZv_i}^2
    = \sum_{i=1}^\dn (\abs{u_i} - \abs{w_i})^2
    = \sum_{i=1}^\dn (u_i - w_i)^2
    = \norm{\vec u - \vec w}^2 \ .
  \end{align*}
  \fi
  The first equality uses the definition of $\mu$, and the third equality uses that $\vec u, \vec w \in \orthsymb^{\dn - 1}_{\vec v}$, i.e., $u_i$ and $w_i$ share the same sign (up to the possibility of being 0) for each $i \in [\dn]$.
\end{proof}
\end{vlong}
In \cref{prop:gi-sym-classes} below, we see that $\bar \GG$ is orthant preserving, and that the iterations
$\GG/{\sim}$ and $\bar \GG \restr{\porth^{\dn-1}}$ are equivalent under the isometry $\phi$.
These iterations thus have equivalent fixed point properties.
It will suffice to analyze $\bar \GG \restr{\porth^{\dn-1}}$ in place of $\GG / {\sim}$.
\begin{prop} \label{prop:gi-sym-classes}
  Let $\vec v$ be a sign vector in $\R^\dn$.
  Then, $\bar \GG$ has the following properties:
  \begin{compactenum}
  \item \label{property:PSEF-same-orth} If $\vec u \in \orthsymb_{\vec v}^{\dn - 1}$, then $\bar \GG(\vec u) \in \orthsymb_{\vec v}^{\dn-1}$.
  \item \label{property:PSEF-SEF-equiv} If $\vec u, \vec w \in S^{\dn - 1}$ are such that $\vec u \sim \vec w$, then $\GG(\vec u) \sim \bar \GG(\vec w)$.
  \end{compactenum}
\end{prop}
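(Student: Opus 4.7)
The plan is to reduce both properties to coordinate-wise sign/magnitude comparisons, using the functions $h_i$ and $\bar h_i := \abs{h_i}$ from Section~\ref{sec:extrema-structure}. Since $\bar F$ is \apsef, I can use the clean diagonal expression $\bar F(\vec u) = \sum_{i=1}^{\dm} \bar h_i(u_i^2)$, which yields $\partial_i \bar F(\vec u) = 2 \bar h_i'(u_i^2) u_i$ without the sign gymnastics needed for a general \sef.

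For Property~\ref{property:PSEF-same-orth}, I would show that $\bar \GG$ preserves signs coordinate by coordinate. Lemma~\ref{lem:h-props}(\ref{lem:h-props:hi-convex}) gives strict convexity of $\bar h_i$, and Lemma~\ref{lem:h-props}(\ref{lem:h-props:hi-deriv0}) gives $\bar h_i'(0)=0$, so $\bar h_i'(u_i^2)\geq 0$. Hence $\partial_i \bar F(\vec u) = 2\bar h_i'(u_i^2) u_i$ shares the sign of $u_i$ (or vanishes with $u_i$), and normalization by $\norm{\nabla \bar F(\vec u)}$ preserves this sign pattern. Thus $\vec u \in \orthsymb^{\dn-1}_{\vec v}$ implies $\bar \GG(\vec u)\in \orthsymb^{\dn-1}_{\vec v}$.

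For Property~\ref{property:PSEF-SEF-equiv}, the crux is the identity $\abs{\partial_i F(\vec u)} = \abs{\partial_i \bar F(\vec u)}$ together with the fact that this common magnitude depends on $\vec u$ only through $\abs{u_i}$. From the derivation given in the introduction, $\partial_i F(\vec u) = 2\sign(u_i)\, h_i'(\sign(u_i)u_i^2)\, u_i$, so $\abs{\partial_i F(\vec u)} = 2\abs{h_i'(u_i^2)}\abs{u_i}$ using Lemma~\ref{lem:h-props}(\ref{lem:h-props:hi-increase}), which says $\abs{h_i'(t)}$ depends only on $\abs t$. On the \psef\ side, I would extract from within the proof of Lemma~\ref{lem:h-props}(\ref{lem:h-props:hi-convex}) the relation $\bar h_i'(t) = \sign(t)\abs{h_i'(t)}$, giving $\bar h_i'(u_i^2) = \abs{h_i'(u_i^2)}$ and hence $\abs{\partial_i \bar F(\vec u)} = 2\abs{h_i'(u_i^2)}\abs{u_i}$. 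Given $\vec u \tsim \vec w$, we have $\abs{u_i} = \abs{w_i}$ for each $i$, so coordinate-wise $\abs{\partial_i F(\vec u)} = \abs{\partial_i \bar F(\vec w)}$ and consequently $\norm{\nabla F(\vec u)} = \norm{\nabla \bar F(\vec w)}$. Taking ratios coordinate by coordinate gives $\abs{\GG(\vec u)_i} = \abs{\bar \GG(\vec w)_i}$ for every $i$, i.e., $\GG(\vec u) \tsim \bar \GG(\vec w)$. The degenerate case $\nabla F(\vec u) = \vec 0 \Leftrightarrow \nabla \bar F(\vec w) = \vec 0$ is handled by the convention $\GG(\vec u) = \vec u \tsim \vec w = \bar \GG(\vec w)$.

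The main obstacle, though really a notational one, is the sign identity $\bar h_i'(t) = \sign(t)\abs{h_i'(t)}$ buried inside the proof of Lemma~\ref{lem:h-props}(\ref{lem:h-props:hi-convex}); once isolated as an auxiliary observation, both parts become short sign and magnitude accounting that uses only previously proven facts.
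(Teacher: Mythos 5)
Your proposal is correct and follows essentially the same route as the paper: both parts reduce to coordinate-wise sign and magnitude accounting using $\partial_i \bar F(\vec u) = 2\bar h_i'(u_i^2)u_i$ and the symmetry built into Assumption~\ref{assumpt:gsymmetries}. The only cosmetic difference is that for Property~\ref{property:PSEF-SEF-equiv} the paper compares $\abs{g_i'(u_i)}$ to $\abs{g_i'(w_i)}$ directly (leaving the relation $\abs{\bar g_i'} = \abs{g_i'}$ implicit), whereas you route the comparison through $\abs{h_i'}$ and make explicit the auxiliary identity $\bar h_i'(t) = \sign(t)\abs{h_i'(t)}$ that the paper only uses silently.
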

\begin{proof}
  \vnote{This proof can be removed from the short version if we are looking to save space.}
  \processifversion{vlong}{We first demonstrate property \ref{property:PSEF-same-orth} holds.}
  Let $\bar h_1, \dotsc, \bar h_\dn$ be defined for $\bar \fg$ in the same way that $h_1, \dotsc, h_\dn$ are defined for $F$ in \cref{sec:extrema-structure}.
  Then, $\partial_i \bar \fg(\vec u) = 2 \bar h_i'(u_i^2) u_i$ for all $i \in [\dn]$.
  Under \cref{lem:h-props}, $\sign(x)\bar h_i'(x) \geq 0$ on for all $x \in \R$ and all $i \in [\dm]$.
  As $\bar h_i := 0$ for all $i \in [\dn] \setminus [\dm]$, it follows that $\sign(u_i)\partial_i \bar \fg(\vec u) \geq 0$ for all $i \in [\dn]$.
  Thus, $\bar \GG(\vec u) \in \orthsymb_{\vec v}^{\dn - 1}$.

  We now demonstrate that property~\ref{property:PSEF-SEF-equiv} holds.
  Since $\vec u \sim \vec w$, there exist sign values $s_i \in \{+1, -1\}$ such that $u_i = s_i w_i$.
  By \cref{assumpt:gsymmetries} (i.e., $g_i$ and hence its derivative is either an even or odd function), we see that
  $  \abs{\partial_i \fg(\vec u)}
    = \abs{g'_i(u_i)}
    = \abs{g_i'(w_i)}
    = \abs{\partial_i \bar \fg(\vec w)} $.
  In particular, it follows that $\norm {\grad \bar \fg(\vec w)} = \norm {\grad  \fg(\vec u)}$, and that $\abs{\bar \GG_i(\vec w)} = \abs{\GG_i(\vec u)}$ for each $i \in [\dn]$.
  Thus, $\bar \GG(\vec w) \sim \GG(\vec u)$.
\end{proof}

Throughout this section, we will assume that $\fg(\vec u) = \sum_{i=1}^\dm g_i(u_i)$ is {\apsef}. The functions $h_i$ are defined  as in \cref{sec:extrema-structure}.
We will analyze the associated gradient iteration function $\GG$ on the domain $\porth^{\dn -1}$.
{It suffices to analyze {\psef}s on $\porth^{\dn - 1}$, and the results can be easily extended to general {\sef}s on $S^{\dn - 1}$ due to \cref{prop:gi-sym-classes}.}
Unless otherwise stated, we will also assume in this section that $\{\vec u(n)\}_{n=0}^\infty$ is a sequence in $\porth^{\dn - 1}$ satisfying $\vec u(n) = \GG(\vec u(n-1))$ for all $n \geq 1$.

\begin{vshort}
In the long version of this paper we proceed with the formal analysis of the global stability structure and the rate of convergence of our dynamical system $G/{\sim}$.
\end{vshort}
\begin{vlong}
We now proceed with the formal analysis of the global stability structure and the rate of convergence of our dynamical system $G/{\sim}$.
It will be seen in \cref{sec:gi-convergence} that the fast convergence properties of the gradient iteration are due to the strict convexity in \cref{assumpt:convex}.
\end{vlong}
\begin{vshort}
We show in \cref{sec:gi-stability-structure} of the long version of this extended abstract that the fast convergence properties of the gradient iteration are due to the strict convexity in \cref{assumpt:convex}.
\end{vshort}
However, we will spend most of our time characterizing the stability of fixed points of $\GG/{\sim}$, in particular demonstrating that the hidden basis elements $\myZv_1, \dotsc, \myZv_\dm$ are attractors, and that for almost any starting point $\vec u(0)$, $\vec u(n)$ converges to one of the hidden basis elements as $n\rightarrow \infty$.

We now give a brief outline of the argument for the global attraction of the hidden basis elements.
For simplicity, we provide this sketch for the case where $\dn = \dm$.
However, we will later provide all statements and proofs necessary to obtain the global stability in full generality.
This argument has four main elements.\lnote{changed $\dm$ to $\dn$ in sketch}

\paragraph{1. Enumeration of the fixed points of the gradient iteration\processifversion{vlong}{ (\cref{sec:enum-fixed-points})}\opdot}
We enumerate the fixed points of $\GG$ and see that, including the hidden basis elements $\myZv_1, \dotsc, \myZv_\dn$, the dynamical system $\GG$ actually has $2^\dn - 1$ fixed points in $\porth^{\dn - 1}$.
In particular, we will see that for any subset $\SS \subset [\dn]$, there exists exactly one fixed point $\vec v$ of $\GG$ in $\porth^{\dn - 1}$ such that $v_i \neq 0$ iff $i \in \SS$.
The proof of this enumeration of fixed points is based on the expansion $\GG(\vec u) = \frac{\nabla F(\vec u)}{\norm{\nabla F(\vec u)}}$ where $\nabla F(\vec u) = \sum_{i=1}^\dm h_i'(u_i^2) \myZv_i$ and the monotonicity of the $h_i'$ functions from \cref{lem:h-props}.
The proof also uses an observation that the fixed points of $\GG$ are exactly the critical points of $F$ on $S^{\dn - 1}$ arising in the optimization view.

\paragraph{2. Hyperbolic fixed point structure and stability/instability implications\processifversion{vlong}{ (\cref{sec:ae-attraction})}\opdot}
We show that all fixed points of $\GG$
are hyperbolic, i.e. the eigenvalues of the Jacobian matrix are different from $1$ in absolute value%
\processifversion{vlong}{ (\cref{lem:crit-point-eigendecomp})}.
As such, the stability properties of the fixed points of $\GG$ can be inferred from the eigenvalues of its Jacobian.

We denote by $\Jacob \GG_{\vec u}$ the Jacobian of $\GG$ evaluated at $\vec u$, and we let $\vec p$ be a fixed point of $\GG$ outside of the set $\{\myZv_1, \dotsc, \myZv_\dn\}$.
Then, we show that as a linear operator $\Jacob \GG_{\vec p} : \vec p^{\perp} \rightarrow \vec p^\perp$,
$\Jacob \GG_{\vec p}$ has at least one eigenvalue with magnitude strictly greater than 1.
This implies that $\vec p$ is locally repulsive for the discrete dynamical system $\GG$ except potentially on a low dimensional manifold called the local stable manifold of $\vec p$\processifversion{vlong}{
  (\cref{lem:GG-LSM})}. 
As the local stable manifold of $\vec p$ is low dimensional, it is also of measure zero.
By analyzing the measure of repeated compositions of $\GG^{-1}$ applied to the local stable manifold of $\vec p$, 
we are able to demonstrate that globally on the sphere, the set of starting points $\vec u(0)$ such that $\vec u(n) \rightarrow \vec p$ is measure zero%
\processifversion{vlong}{ (\cref{thm:vol0-unstable-pt-global})}.

We will also see that at a hidden basis element $\myZv_i$, $\Jacob \GG_{\myZv_i} : \myZv_i^\perp \rightarrow \myZv_i^\perp$ is the zero  map.
In particular, $\myZv_i$ is an attractor of the dynamical system $\GG$.
Taken together, these results show that the hidden basis directions $\myZv_i$ are the attractors of the gradient iteration, and that all other fixed points are unstable.

\paragraph{3. The big become bigger, and the small become smaller\processifversion{vlong}{ (\cref{sec:gi-divergence-criteria})}\opdot}
We show that coordinates of $\vec u(n)$ go to zero as $n \rightarrow \infty$ under certain conditions.
In particular, let $\SS \subset [\dn]$ and let $\vec v$  be the fixed point of $\GG$ such that $v_i \neq 0$ if and only if $i \in \SS$.
An implication of the convexity \cref{assumpt:convex} is that if $u_i > v_i$, then $\partial_i F(\vec u) / u_i > \partial_i F(\vec v) / v_i$, and similarly if $u_i < v_i$, then $\partial_i F(\vec u) / u_i < \partial_i F(\vec v) / v_i$.
To see that these orderings hold, we use the expansion $F(\vec u) = \sum_{i=1}^\dm h_i(u_i^2)$ to see $\partial_i F(\vec u) / u_i = 2h_i'(u_i^2)$ and we recall (from \cref{lem:h-props}) that each $h_i'$ is an increasing function.
Using this monotonicity, we show that each gradient iteration update has the effect of increasing the gap (as a ratio) between $\max_{i \in \SS} \partial_i F(\vec u) / u_i$ and $\min_{i \in \SS} \partial_i F(\vec u) / u_i$.
This implies a divergence between the coordinates of $\vec u(n)$ under the gradient iteration.

In particular, we show that if there exists an $i \in \SS$ and $k \in \N$ such that $u_i(k) > v_i$, then the ratio between maximum magnitude and minimum magnitude coordinate values of $\vec u(n)$ within $\SS$ 
 goes to infinity as $n \rightarrow \infty$.
In particular, there will exist an $i\in \SS$ such that $u_i(n) \rightarrow 0$ as $n \rightarrow \infty$.

\paragraph{4. Global attraction of the hidden basis\processifversion{vlong}{ (\cref{thm:canonical-global-attraction})}\opdot}
We alternate between applying parts 2 and 3 of this sketch in order to demonstrate that for almost any $\vec u(0)$, all but one of the coordinates of $\vec u(n)$ go to zero as $n$ goes to infinity.
Part 3 of the sketch allows us to force coordinates of $\vec u(n)$ to approach 0. 
By part 2, the trajectory never converges to one of the unstable fixed points of $\GG$.  This  guarantees for any particular unstable fixed point $\vec v$ that a coordinate of $\vec u(n)$ eventually exceeds the corresponding non-zero coordinate of $\vec v$ due to the interplay with part 3.
As all but one of the hidden coordinates of $\vec u(n)$ must eventually go to 0, it follows that $\vec u(n) \rightarrow \myZv_i$ for some $i \in [\dm]$ as $n \rightarrow \infty$.

\begin{vlong}

\subsection{Enumeration of fixed points}
\label{sec:enum-fixed-points}

We now begin the process of enumerating the fixed points of $\GG$.
First, we observe that the fixed points of $\GG$ are very closely related to the maxima structure of $F$.
\begin{obs}
  \label{obs:stationary-criterion}
  A vector $\vec v \in \porth^{\dn - 1}$ is a stationary point of $\GG$ if and only if there exists $\lambda^*$ such that $(\vec v, \lambda^*)$ is a critical point of the Lagrangian\footnote{This is the same Lagrangian function which arose in \cref{sec:extrema-structure}.
Its critical points $(\vec u, \lambda)$ give the locations $\vec u$ where $\fg$ satisfies the first order conditions for a constrained extrema on the sphere.} function $\LL(\vec u,\ \lambda) = \fg(\vec u) - \lambda[\norm u ^2 - 1]$.
  In particular, if $\vec v$ is a stationary point of $\GG$, then $\lambda^* \indicator{v_i \neq 0} = h_i'(v_i^2)$ for each $i \in [\dn]$.
\end{obs}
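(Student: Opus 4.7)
My plan is to prove this by directly unpacking the fixed-point equation $\GG(\vec v) = \vec v$ and comparing it to the stationarity conditions for the Lagrangian $\LL(\vec u, \lambda) = \fg(\vec u) - \lambda[\norm{\vec u}^2 - 1]$. The first step is to compute the partial derivatives: $\nabla_{\vec u}\LL(\vec u,\lambda) = \nabla F(\vec u) - 2\lambda\vec u$ and $\partial_\lambda \LL(\vec u,\lambda) = 1 - \norm{\vec u}^2$. Thus $(\vec v, \lambda^*)$ is a critical point of $\LL$ if and only if $\vec v \in S^{\dn-1}$ and $\nabla F(\vec v) = 2\lambda^* \vec v$.

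For the ($\Leftarrow$) direction, suppose $(\vec v, \lambda^*)$ is critical. If $\lambda^* = 0$ then $\nabla F(\vec v) = \vec 0$, and by the convention defining $\GG$ we have $\GG(\vec v) = \vec v$. If $\lambda^* \neq 0$, I will argue $\lambda^* > 0$: since $\fg$ is a PBEF and $\vec v \in \porth^{\dn-1}$, each coordinate $\partial_i F(\vec v) = 2 h_i'(v_i^2) v_i$ is non-negative (using Lemma~\ref{lem:h-props} to see $h_i' \geq 0$ on $[0,1]$), so $\nabla F(\vec v) = 2\lambda^* \vec v$ forces $\lambda^* \geq 0$. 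Hence $\nabla F(\vec v) / \norm{\nabla F(\vec v)} = \vec v$, giving $\GG(\vec v) = \vec v$. For the ($\Rightarrow$) direction, if $\GG(\vec v) = \vec v$ then either $\nabla F(\vec v) = \vec 0$ (set $\lambda^* = 0$) or $\nabla F(\vec v) = \norm{\nabla F(\vec v)}\,\vec v$ (set $\lambda^* = \norm{\nabla F(\vec v)}/2$); in either case $(\vec v, \lambda^*)$ satisfies the Lagrangian critical point equations.

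The ``in particular'' clause is then an immediate invocation of Lemma~\ref{lem:fg-crit-points}, which translates the condition $\nabla F(\vec v) = 2\lambda^* \vec v$ coordinate-by-coordinate into $\lambda^* \indicator{v_i \neq 0} = h_i'(v_i^2)$, handling the $i \in [\dn]\setminus[\dm]$ coordinates via the convention $h_i := 0$ there. The only subtlety (rather than a real obstacle) is the zero-gradient case, where the definition $\GG(\vec u) := \vec u$ when $\nabla F(\vec u) = \vec 0$ must be paired with the choice $\lambda^* = 0$ on the Lagrangian side; once that is handled, the equivalence is essentially a restatement of definitions.
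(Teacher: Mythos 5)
Your proof is correct. The paper dispatches this observation in one line by citing Lemma~\ref{lem:gi-is-grad-ascent} (the reinterpretation of $\GG$ as adaptive projected gradient ascent) together with Lemma~\ref{lem:fg-crit-points}; you instead unpack the fixed-point equation $\GG(\vec v) = \vec v$ from scratch. The substance is the same---both arguments reduce to the fact that $\vec v$ is fixed by $\GG$ iff $\nabla F(\vec v)$ is a \emph{non-negative} scalar multiple of $\vec v$, which is precisely the Lagrangian first-order condition on $S^{\dn-1}$. Where the paper gets the sign constraint for free from the positivity of the adaptive step size $\eta = \ip{\vec v}{\nabla F(\vec v)}^{-1}$ established in Lemma~\ref{lem:gi-is-grad-ascent}, you derive it directly from the PBEF structure: each $\partial_i F(\vec v) = 2h_i'(v_i^2)v_i \geq 0$ on $\porth^{\dn-1}$, so the Lagrange multiplier is forced to be non-negative, and the $\lambda^* = 0$ case is absorbed by the $\GG(\vec v) = \vec v$ convention when $\nabla F(\vec v) = \vec 0$. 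Your route is more elementary and self-contained (it does not require having read Section~\ref{sec:interpr-grad-iter}), at the cost of redoing a small amount of work that the gradient-ascent lemma already packages; the paper's route is shorter and emphasizes the conceptual link between $\GG$ and gradient ascent, which it wants to highlight anyway. The handling of the ``in particular'' clause via Lemma~\ref{lem:fg-crit-points} is identical in both.
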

\begin{proof}
  This is a result of \cref{lem:gi-is-grad-ascent,lem:fg-crit-points}.
\end{proof}

With this characterization, we are actually able to enumerate the fixed points $\GG$.
Note that if $v_i = 0$ for each $i \in [\dm]$, then by the definition of $\GG$, $\vec v$ is a stationary point.
The remaining stationary points are enumerated by the following lemma.
\vnote{Note:  The following Lemma is used to slightly simplify the proof of \cref{prop:unstable-stationary-pts} in this section, but is not really necessary in this section.  However, this Lemma is required in \cref{sec:robust-grad-iter}, for instance in \cref{claim:robust-GI-progress-single-step}.}
\begin{lem}\label{lem:stationary-pt-enum}
  Let $\SS \subset [\dm]$ be non-empty.
  Then there exists exactly one stationary point $\vec v$ of $\GG\restr{\porth^{\dn-1}}$ such that $v_i \neq 0$ for each $i \in \SS$ and $v_i = 0$ for each $i \in [\dm] \setminus \SS$.
  Further, $v_i = 0$ for each $i \in [\dn] \setminus \SS$.
\end{lem}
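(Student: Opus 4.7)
The plan is to invoke Observation~\ref{obs:stationary-criterion} and recast the problem as the system $\lambda^*\indicator{v_i\neq 0}=h_i'(v_i^2)$ for all $i\in[\dn]$, subject to $\vec v\in\porth^{\dn-1}$ and $\|\vec v\|=1$. Two coordinate blocks need separate handling. The block $i\in[\dm]\setminus\SS$ is fixed to zero by the stated hypothesis. For $i\in[\dn]\setminus[\dm]$ we have $h_i\equiv 0$ by definition, so if any $v_i\neq 0$ with $i>\dm$, the condition forces $\lambda^*=0$. Picking any $j\in\SS$ (nonempty by assumption), this yields $h_j'(v_j^2)=0$ with $v_j^2\in(0,1]$, which contradicts Lemma~\ref{lem:h-props}.\ref{lem:h-props:hi-increase} together with $h_j'(0)=0$ (so $|h_j'|>0$ on $(0,1]$). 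Hence $v_i=0$ also for $i\in[\dn]\setminus[\dm]$, which is the ``further'' assertion of the lemma.

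For $i\in\SS$ the \psef\ hypothesis yields strict convexity of $h_i$ together with $h_i'(0)=0$, so $h_i'$ is continuous and strictly increasing on $[0,1]$ and positive on $(0,1]$. Consequently, $h_i'(v_i^2)=\lambda^*$ has the unique nonnegative-root solution $v_i=\sqrt{(h_i')^{-1}(\lambda^*)}$ whenever $\lambda^*\in(0,h_i'(1)]$. Imposing $\sum_{i\in\SS}v_i^2=1$ then reduces both existence and uniqueness of $\vec v$ to the scalar equation
\[
  \phi(\lambda)\;:=\;\sum_{i\in\SS}(h_i')^{-1}(\lambda)\;=\;1,
\]
considered on the interval $(0,M]$ with $M:=\min_{i\in\SS}h_i'(1)$. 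I would check that $\phi$ is continuous and strictly increasing (as a sum of such functions), that $\phi(\lambda)\to 0^+$ as $\lambda\to 0^+$ (each $(h_i')^{-1}$ is continuous at $0$ with value $0$), and that $\phi(M)\geq 1$ (the index $i_*$ achieving the minimum contributes $(h_{i_*}')^{-1}(M)=1$ and all other summands are nonnegative). The intermediate value theorem then delivers a unique $\lambda^*\in(0,M]$ with $\phi(\lambda^*)=1$, which uniquely pins down each $v_i$ for $i\in\SS$.

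The only delicate step is the first: ruling out nonzero $v_i$ for $i\in[\dn]\setminus[\dm]$ critically uses both nonemptiness of $\SS$ and the nonvanishing of $h_j'$ on $(0,1]$. Once that is done, the remainder is a clean one-dimensional monotone-inversion argument built on top of the convexity structure recorded in Lemma~\ref{lem:h-props}.
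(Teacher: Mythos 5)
Your proof is correct, and your existence argument is genuinely different from (and cleaner than) the paper's. The paper establishes existence via a constructive greedy algorithm (\textsc{ApproxFixPt}), which increments whichever coordinate currently minimizes $h_k'(u_k^2)$, followed by a compactness and uniform-equicontinuity argument to extract a convergent subsequence; this is fairly involved. You instead reduce both existence and uniqueness to a single scalar root-finding problem: you observe that, because each $h_i'$ is a continuous, strictly increasing bijection from $[0,1]$ onto $[0,h_i'(1)]$ with $h_i'(0)=0$, the Lagrange condition $h_i'(v_i^2)=\lambda^*$ uniquely determines $v_i\geq 0$ from $\lambda^*$, so the sphere constraint becomes $\phi(\lambda^*):=\sum_{i\in\SS}(h_i')^{-1}(\lambda^*)=1$. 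Since $\phi$ is continuous and strictly increasing on $(0,M]$ with $M=\min_{i\in\SS}h_i'(1)$, vanishes at $0^+$, and satisfies $\phi(M)\geq 1$, the intermediate value theorem gives a unique $\lambda^*$ and hence a unique $\vec v$. Your uniqueness part is in spirit the same as the paper's (the paper argues by contradiction that two different multipliers would force all $v_i^2 < w_i^2$, contradicting both summing to $1$; you get the same conclusion from strict monotonicity of $\phi$). Your handling of the $i\in[\dn]\setminus[\dm]$ coordinates is essentially the paper's argument. Overall, the monotone-inversion route avoids the algorithmic construction and the equicontinuity machinery while yielding exactly the same statement; the paper's version buys nothing extra for this lemma, though the explicit \textsc{ApproxFixPt} construction foreshadows some of the algorithmic reasoning used later.
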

\begin{proof}
  We prove this in two parts.
  First, we show that a $\vec v$ exists with all of the desired properties.
  Then, we show uniqueness.
  \begin{claim}\label{claim:stationary-pt-existence}
    There exists $\vec v$ a stationary point of $\GG\restr{\porth^{\dn-1}}$ such that $v_i \neq 0$ if and only if $i \in \SS$.
  \end{claim}
  \begin{subproof} 
  We will construct $\vec v$ as the limit of a sequence.
  Consider the following construction of an approximation to $\vec v$ whose precision depends on the magnitude of $\frac 1 N$ where $N \in \N$.

        \begin{algorithmic}[1]
        \Function{ApproxFixPt}{$N$}
          \State $\vec u \leftarrow \vec 0$
          \For{$i \leftarrow 1$ to $N$}
            \State\label{alg:appr-fix-pt:ln-argmin} $j \leftarrow \argmin_{k\in \SS} h_k'(u_k^2)$
            \State $u_j \leftarrow \sqrt{u_j^2 + \frac 1 N}$
          \EndFor
          \State \Return $\vec u$
        \EndFunction
        \end{algorithmic}

  Let $\epsilon_0 > 0$ be fixed.
  Let $\epsilon_k = \frac 1 k \epsilon_0$ for each $k \in \N$.
  Since $[0, 1]$ is a compact space and there are a finite number of $h_i'$ functions, the $h_i'$s are uniformly equicontinuous on this domain.
  Thus for each $k \in \N \cup \{0\}$ and all $i \in \SS$, there exists $\delta_k > 0$ such that for $x, y \in [0, 1]$, $\abs{x-y} \leq \delta_k$ implies that $\abs{h_i'(x) - h_i'(y)} \leq \epsilon_k$.
  We fix constants $N_k \in \N \cup \{0\}$ such that (1) $\frac 1 {N_k} \leq \delta_k$ for each $k$, (2) for each $k \geq 1$, $N_k$ is an integer multiple of $N_0$, and (3) $N_0 \geq \abs \SS$.
  Then we construct a sequence $\{\vec u(k)\}_{k=0}^\infty$ by setting $\vec u(k) = \textsc{ApproxFixPt}(N_k)$ for each $k \in \N \cup \{0\}$.
  It follows by construction that $\abs{h_i'(u_i^2(k)) - h_j'(u_j^2(k))} \leq \epsilon_k $ for each $i, j \in \SS$.

  It can be seen that $\min_{i\in\SS} h_i'(u_i^2(k)) \geq \min_{i\in\SS} h_i'(u_i^2(0)) > 0$ for each $k \in \N$.
  To see the second inequality $\min_{i\in \SS} h_i'(u_i^2(0)) > 0$, we note that the $h_i'$s are strictly increasing from 0 by \cref{lem:h-props}, and in particular during the first $\abs \SS$ iterations of the loop in \textsc{ApproxFixPt}, a new coordinate of $\vec u$ will be incremented.
  To see the first inequality $\min_{i\in\SS} h_i'(u_i^2(k)) \geq \min_{i\in\SS} h_i'(u_i^2(0))$ for each $k \in \N$, we argue by contradiction.
  Let $j = \argmin_{i \in \SS} h_i'(u_i^2(k))$.
  If $h_j'(u_j^2(k)) < \min_{\min i\in \SS} h_i'(u_i^2(0))$, then $u_j^2(k) < \min_{i \in \SS} u_i^2(0)$, and thus there exists $\ell \in \SS$ with $\ell \neq j$ such that $u_\ell^2(k) > u_\ell^2(0)$.
  However, for this to be true, then during course of the execution of \textsc{ApproxFixPt}($N_k$) the decision must be made at line~\ref{alg:appr-fix-pt:ln-argmin} that $\ell = \argmin_{k \in \SS} h_k'(u_k^2)$ when $u_\ell^2 = u_\ell^2(0)$ (since $N_k$ is an integer multiple of $N_0$).
  During this update, strict monotonicity of $h_i'$ implies that $h_j'(u_j^2) \leq h_j'(u_j^2(k)) < \min_{\min i\in \SS} h_i'(u_i^2(0)) \leq h_\ell'(u_\ell^2)$.
  But this contradicts that $\ell = \argmin_{k \in \SS} h_k'(u_k^2)$ at line~\ref{alg:appr-fix-pt:ln-argmin}.
  It follows that there exists a $\Delta > 0$ such that for each $i \in \SS$ and each $k \in \{0, 1, 2, \dotsc\}$ we have $h_i'(u_i^2(k)) > \Delta$, and in particular that $u_i^2(k) \geq \min_{j \in \SS}(h_j')^{-1}(\Delta) > 0$.

  Since $S^{\dn - 1}$ is compact, there exists a subsequence $i_1, i_2, i_3, \dotsc$ of $0, 1, 2, \dotsc$ such that the sequence $\{\vec u(i_k)\}_{k=1}^\infty$ converges to a vector $\vec v \in S^{\dn - 1}$.
  Since each $\vec u(i_k) \in \porth^{\dn - 1}$, $\vec v \in \porth^{\dn - 1}$.
  Further, since the $u_j^2(i_k)$s are bounded from below by a constant $\Delta' = \min_{j \in \SS}(h_j')^{-1}(\Delta) > 0$ for each $j \in \SS$, we see that $v_j^2 \geq \Delta' > 0$ for each $j \in \SS$.
  That is, $v_i = 0$ if and only if $i \in \SS$.
  Further, for any $j, \ell \in \SS$, $h_\ell'(v_\ell^2) - h_j'(v_j^2) = \lim_{k\rightarrow \infty } [h_\ell'(u_\ell^2(i_k)) - h_j'(u_j^2(i_k))] = 0$, and in particular $h_\ell'(v_\ell^2) = h_j'(v_j^2)$.
  By \cref{obs:stationary-criterion}, $\vec v$ is a stationary point of $\GG$.
\end{subproof}

  \begin{claim}\label{claim:stationary-pt-uniqueness}
    There exists only one stationary point $\vec v$ of $\GG\restr{\porth^{\dn-1}}$ such that the following hold: (1) $v_i \neq 0$ if $i \in \SS$ and (2) $v_i = 0$ if $i \in [\dm] \setminus \SS$.
  \end{claim}
  \noindent \textit{Proof of Claim.}
    We first show that if $\vec v$ is a stationary point of $\GG\restr{\porth^{\dn-1}}$ meeting the conditions of the claim, then $v_i = 0$ for each $i \in [\dn] \setminus [\dm]$.
    To see this, we use \cref{obs:stationary-criterion}, and we note that for each $i, j \in [\dn]$ such that $u_i \neq 0$ and $u_j \neq 0$, then $h_i'(u_i^2) = h_j'(u_j^2)$.
    In particular, choosing $i \in \SS$, we see that $h_i'(u_i^2) > 0$.
    But for each $i \in [\dn] \setminus [\dm]$, $h_i := 0$ implies that $h_i'(u_i^2) = 0$.
    In particular, for $i \in [\dn] \setminus [\dm]$, $u_i = 0$.

    Now suppose that there are two stationary points $\vec v$ and $\vec w$ meeting the requirements of this Claim.
    By \cref{obs:stationary-criterion}, there exists $\lambda_{\vec v}$ and $\lambda_{\vec w}$ such that $h_i'(v_i^2) = \lambda_{\vec v}$ and $h_i'(w_i^2) = \lambda_{\vec w}$ for each $i \in \SS$.
    If $\lambda_{\vec v} < \lambda_{\vec w}$, then strict monotonicity of each $h_i'$ implies that $v_i^2 < w_i^2$ for each $i \in \SS$.
    But this contradicts that $\sum_{i\in \SS} v_i^2 = 1 = \sum_{i \in \SS}w_i^2$.
    By similar reasoning, it cannot be that $\lambda_{\vec w} < \lambda_{\vec v}$.
    As such, $\lambda_{\vec v} = \lambda_{\vec w}$, and further for each $i \in \SS$ it follows that $h_i'(v_i^2) = h_i'(w_i^2)$.
    Using strict monotonicity of the $h_i'$s, we see that $\vec v = \vec w$.

    Note that the $\vec v$ constructed in \cref{claim:stationary-pt-existence} gives the unique solution to this claim.
\end{proof}


\subsection{Convergence to the hidden basis directions}
\label{sec:guar-conv-hidd}

So far, we have enumerated the fixed points of the dynamical $\GG$ on $\porth^{\dn - 1}$.
We now analyze the stability properties of these fixes points.
In \cref{sec:gi-divergence-criteria}, we create a divergence criteria from the fixed points of $\GG$ excluding the hidden basis elements $\myZv_1, \dotsc, \myZv_\dm$.
This divergence criterion sets up a natural manner under which the large coordinates of $\vec u(0)$ can increase in magnitude while other coordinates are driven rapidly towards 0.
Then, in \cref{sec:ae-attraction}, we demonstrate that the set of hidden basis elements of $\GG$ are essentially global attractors of the dynamical system.
In particular, it is seen that each $\myZv_i$ is locally an attractor, and that for $\vec u(0)$ drawn from a set of full measure on $S^{\dn - 1}$, the sequence $\vec u(n)$ converges to one of the hidden basis elements.

The main intuition for why the gradient iteration converges to a hidden basis direction comes from two key concepts (working in $\porth^{\dn - 1}$ with a {\psef}):
\begin{enumerate}
\item\label{item:expand-all-explanation-ordering} For each iteration there is an implicit ordering $i \prec_n j$ if $h_i'(u_i(n)) < h_j'(u_j(n))$ such that the ratio
\begin{equation}\label{eq:power-iteration-expand-demo}
  \frac{u_{i}(n+1)}{u_{j}(n+1)} 
  = \frac{\GG_{i}(\vec u(n))}{\GG_{j}(\vec u(n))}
  = \frac{h_{i}'(u_{i}(n)^2)u_{i}(n)}{h_{j}'(u_{j}(n)^2)u_{j}(n)}
\end{equation}
expands from $u_i(n)/u_j(n)$ if and only if $i \succ_n j$.
\item The function $n \mapsto \max_i h_i'(u_i(n)^2)$ is non-decreasing and $n \mapsto \min_i h_i'(u_i(n))$ is non-increasing\footnote{See \cref{lem:increasing-maxhi'}.  We don't actually prove that $n \mapsto \min_i \abs{h_i'(u_i(n))}$ is non-decreasing as we don't end up needing this fact, but the proof closely resembles that of \cref{lem:increasing-maxhi'} with the roles of $\max$ and $\min$ swapped and the also ordering role of $<$ reversed to $>$.}.  As such, the maximal expansionary effect seen between coordinates in part \ref{item:expand-all-explanation-ordering} can only increase with each iteration.
\end{enumerate}
These two ideas provide intuition for why the gradient iteration should drive coordinates to zero until a hidden basis direction $\myZv_i$ is recovered.
By repeated use of the gradient iteration, we expect $u_j(n) \rightarrow 0$ for some $j$ in order to support the expansion of the fraction $u_i(n) / u_j(n)$.

Interestingly, these two observations hold even if \cref{assumpt:convex} is relaxed to allow functions $g_i$ where $g_i(\sqrt{x})$ is convex but not \textit{strictly} convex, and if \cref{assumpt:deriv0} is omitted\footnote{In particular, the proof of \cref{lem:increasing-maxhi'} holds under this relaxation.}.
We conjecture that guarantees similar to those of matrix eigenvector recovery via the power iteration are achievable for more generic odeco functions satisfying such relaxed assumptions.
However, formulating precise statements for when odeco basis recovery works in such relaxed conditions is beyond the scope of this paper.

Where the analysis of odeco functions is more challenging in our
setting than in the matrix and tensor settings is that the ordering
$\prec_n$ changes with $n$.
In the matrix setting, each $h_i'(u_i(n)^2) = \lambda_i$ is simply the
$i$\textsuperscript{th} eigenvalue making the ordering $\prec_n$ fixed
for all $n$.
If $\lambda_1$ corresponds to a maximal eigenvalue with multiplicity 1
and $u_i(0) \neq 0$, then it is not difficult to see that
$\abs{u_1(n)}/ \abs{u_i(n)} \rightarrow \infty$ as $n \rightarrow
\infty$ under the power iteration by repeated application of
\eqref{eq:power-iteration-expand-demo} for all $i \neq 1$; indeed,
this is a standard proof of convergence to the top eigenvector in the
matrix setting.
In the odeco function setting $\prec_n$ is not fixed.
Because we cannot directly chain together applications of
\eqref{eq:power-iteration-expand-demo}, we do the following.
\begin{enumerate}
\item In \cref{sec:gi-divergence-criteria} we rely on an interplay between the coordinates of $\vec u(n)$ in the $\prec_n$ ordering and the coordinate values of a fixed point $\vec v$ of $\GG$ to create a globally expanding ratio which forces a coordinate of $\vec u(n)$ to zero.
\item In \cref{sec:ae-attraction} we use stable-unstable manifold theory to show $\{\vec u(n)\}_{n=0}^\infty$ converges to a hidden basis direction $\myZv_i$ given almost any starting point. 
\end{enumerate}


\paragraph{Notation\opdot}
Throughout this subsection, we will make use of the following notations.
Given a $\SS \subset [\dn]$, we define the projection matrix $\Pmap_\SS := \sum_{i \in \SS} \myZv_i\myZv_i^T$.
In particular, this implies $\Pmap_\SS \vec u := \sum_{i \in \SS} u_i \myZv_i$.
We will denote the set complement by $\bar \SS := [\dn] \setminus \SS$.
Two projections will be of particular interest:  the projection onto the distinguished basis elements $\Pmap_{[\dm]} \vec u := \sum_{i=1}^\dm u_i \myZv_i$ and its complement projection which we will denote by $\Pmap_0 \vec u := \sum_{i=\dm + 1}^\dn u_i \myZv_i$.
In addition, if $\mathcal X$ is a subspace of $\R^\dn$, we will denote by $\Pmap_{\mathcal X}$ the orthogonal projection operator onto the subspace $\mathcal X$.

We denote by $\vol_{k-1}$ the volume measure on the unit sphere $S^{k-1}$.
When the value of $k$ is clear, we suppress it from the notation and simply write $\vol$ for the volume measure on the unit sphere (``surface area measure'').
Finally, if $f : M \rightarrow N$ (with $M$ and $N$ manifolds), we denote by $\Jacob f_{\vec x}$ the Jacobian (or transposed derivative) of $f$ evaluated at $\vec x$.
We also treat $\Jacob f_{\vec x}$ as linear operator between tangent spaces:  $\Jacob f_{\vec x} : T_{\vec x} M \rightarrow T_{f(\vec x)} N$, where $T_{\vec x}M$ denotes the tangent space of $M$ at $\vec x$. See the book of \mycitet{do Carmo Valero}{do1992riemannian} for an overview of Riemannian manifolds and the definition of volume on manifold surfaces.
%

\subsubsection{Divergence criteria for unstable fixed points}\label{sec:gi-divergence-criteria}
\newcounter{thmsavetmp}
\newcounter{thmsavegiexpand}
\begin{prop}\label{prop:gi-expand-all}
  There exists $\epsilon > 0$ such that the following holds.
  Let $\vec v \in \porth^{\dn - 1}$ be a stationary point of $\GG$, denote by $\SS_{\vec v} := \{ i \suchthat v_i \neq 0\}$, and suppose $\SS_{\vec v} \subset [\dm]$.
  Suppose $\norm{\Pmap_{\compl \SS_{\vec v}} \vec u(0)} \leq \epsilon$ and there exists $i \in \SS_{\vec v}$ such that $u_i(0) > v_i$, then there exists $j \in \SS_{\vec v}$ such that $u_j(n) \rightarrow 0$ as $n \rightarrow \infty$.
\end{prop}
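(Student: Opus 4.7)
The plan is to rescale coordinates inside the support of $\vec v$ and track their dispersion. For $i \in \SS_{\vec v}$ set $s_i(n) := u_i(n)^2/v_i^2$ and $\phi_i(s) := h_i'(s\,v_i^2)$. By Observation~\ref{obs:stationary-criterion} (whose characterization of stationary points is made unique by Lemma~\ref{lem:stationary-pt-enum}), there is a common $\lambda > 0$ with $\phi_i(1) = h_i'(v_i^2) = \lambda$ for every $i \in \SS_{\vec v}$, and by Lemma~\ref{lem:h-props} each $\phi_i$ is strictly increasing on $[0, 1/v_i^2]$ with $\phi_i(0) = 0$. A direct computation turns the gradient iteration into
\begin{equation*}
  s_i(n+1) \;=\; s_i(n)\,\phi_i(s_i(n))^2\big/N(n)^2, \qquad N(n)^2 \;:=\; \sum_{k=1}^{\dm} u_k(n)^2\,h_k'(u_k(n)^2)^2.
\end{equation*}
Coordinates in $[\dn]\setminus [\dm]$ vanish after one step, and coordinates in $[\dm]\setminus \SS_{\vec v}$ are linearly super-stable at $\vec v$ (their Jacobian eigenvalue is $0$ since $h_i'(0)=0$), so for $\epsilon$ chosen small enough we can maintain $\norm{\Pmap_{\compl \SS_{\vec v}} \vec u(n)} = O(\epsilon)$ for all $n$.

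The sphere constraint then reads $\sum_{i \in \SS_{\vec v}} v_i^2\,s_i(n) = 1 - O(\epsilon^2)$ at every step, so there are always indices $i^*(n), j^*(n) \in \SS_{\vec v}$ with $s_{i^*}(n) \geq 1 - O(\epsilon^2)$ and $s_{j^*}(n) \leq 1 - O(\epsilon^2)$. The hypothesis $u_{i_0}(0) > v_{i_0}$ gives $s_{i_0}(0) > 1$, and by the sphere equality some $j_0 \in \SS_{\vec v}$ must have $s_{j_0}(0) < 1$; strict monotonicity of $\phi_{i_0}$ and $\phi_{j_0}$ through their common value $\lambda$ at $s=1$ yields $\phi_{i_0}(s_{i_0}(0)) > \lambda > \phi_{j_0}(s_{j_0}(0))$. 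The key recursion
\begin{equation*}
  \frac{s_i(n+1)}{s_j(n+1)} \;=\; \frac{s_i(n)}{s_j(n)}\cdot\frac{\phi_i(s_i(n))^2}{\phi_j(s_j(n))^2},
\end{equation*}
applied to $i=i^*(n),\ j=j^*(n)$, shows that $R(n) := s_{i^*}(n)/s_{j^*}(n)$ is multiplied at each step by a factor strictly greater than $1$ and bounded away from $1$ so long as the maximum--minimum gap does not collapse, which the sphere equality prevents. A standard monotone-divergence argument then gives $R(n)\to\infty$; since each $s_i(n) \leq 1/\min_k v_k^2$ is uniformly bounded, it follows that $\min_{j \in \SS_{\vec v}} s_j(n) \to 0$.

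To upgrade this to a single index $j$ with $u_j(n)\to 0$, I would use that once $s_j(n_0) < \delta$ for a sufficiently small $\delta$, continuity of $\phi_j$ at $0$ and the lower bound $N(n)^2 \geq v_{i^*}^2\lambda^2 > 0$ force the multiplier $\phi_j(s_j(n))^2/N(n)^2$ to be strictly less than $1/2$; hence $s_j$ shrinks monotonically and geometrically to $0$. Pigeonholing over the finite set $\SS_{\vec v}$ then isolates a single $j$ for which this threshold is eventually crossed, and so $u_j(n) \to 0$. The main obstacle is not the ratio amplification itself but controlling the coordinates in $[\dm]\setminus \SS_{\vec v}$: although super-stable in the linearization at $\vec v$, they can be amplified nonlinearly if the trajectory strays away from $\vec v$, so $\epsilon$ must be chosen small enough that $\Pmap_{\compl \SS_{\vec v}} \vec u(n)$ stays uniformly small throughout the ratio-amplification phase — essentially a local-unstable-manifold style coupling at the saddle $\vec v$.
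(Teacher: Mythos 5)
Your overall plan — pass to the rescaled variables $s_i = (u_i/v_i)^2$, observe that $\phi_i(1)=h_i'(v_i^2)=\lambda$ is the same for all $i\in\SS_{\vec v}$, and show that the max-over-min ratio $R(n)$ is amplified at each step — is the same route the paper takes (compare Claims~\ref{claim:lt-fixed-coord}--\ref{claim:expand-recursed}). Your recursion $s_i(n+1) = s_i(n)\,\phi_i(s_i(n))^2/N(n)^2$ is correct, and the small-coordinate handling (coordinates in $[\dn]\setminus[\dm]$ die in one step; those below a threshold contract geometrically) matches Corollary~\ref{cor:zero-coord-attraction}.

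However, there is a genuine gap in the ratio-amplification step, and it is precisely the invariant you do not track. You need the multiplier $\phi_{i^*}(s_{i^*})^2/\phi_{j^*}(s_{j^*})^2$ to be $>1$ at every step, but $\phi_{i^*}$ and $\phi_{j^*}$ are \emph{different} increasing functions that agree only at $s=1$; the inequality $s_{i^*}>s_{j^*}$ does not by itself give $\phi_{i^*}(s_{i^*})>\phi_{j^*}(s_{j^*})$. What does give it is the pair of inequalities $s_{i^*}\ge 1 \ge s_{j^*}$: then $\phi_{i^*}(s_{i^*})\ge\lambda\ge\phi_{j^*}(s_{j^*})$, and your strict-monotonicity argument supplies the quantitative gap. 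The sphere constraint gives the right-hand inequality $s_{j^*}\le 1$ (since $\sum_i v_i^2 s_i \le 1$), but it does \emph{not} give the left-hand one $s_{i^*}\ge 1$ --- it only yields $\max_i s_i \ge 1-O(\epsilon^2)$, and indeed the point $\vec u = \vec v$ with all $s_i = 1$ exactly satisfies the sphere constraint, so "the sphere equality prevents the gap from collapsing" is simply not true; the constraint permits the trajectory to sit at the fixed point. The invariant $\max_i s_i(n) > 1$ has to be established by a separate argument: the paper proves it via Lemma~\ref{lem:increasing-maxhi'}, which shows $n\mapsto \max_i |h_i'(u_i(n)^2)|$ is nondecreasing; since the hypothesis $u_{i_0}(0)>v_{i_0}$ forces this max strictly above $\lambda$ at $n=0$, it stays above $\lambda$, and by strict monotonicity of the $h_i'$ that gives $\max_i s_i(n)>1$ for all $n$, which Claim~\ref{claim:expand-recursed} then threads through the induction. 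Without this (or an equivalent) monotonicity lemma, your multiplier could be $<1$ when both $s_{i^*},s_{j^*}<1$ (e.g.\ if $\phi_{i^*}$ has a very steep drop just below $1$ while $\phi_{j^*}$ is nearly flat), so the geometric divergence of $R(n)$ does not follow. You also flag control of the coordinates outside $\SS_{\vec v}$ as "the main obstacle," but that part is routine given Corollary~\ref{cor:zero-coord-attraction}; the invariant above is the actual crux, and it is missing from your proposal.
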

\ifnum\version=\siamversion
\else \setcounter{thmsavegiexpand}{\value{thm}} \fi

We now proceed with the proof of \cref{prop:gi-expand-all}.
We will need a couple of facts about the behavior of small coordinates of $\{\vec u(n)\}_{n=0}^\infty$ under the gradient iteration.
In particular, we need to show that $\GG(\vec u)$ is generally well behaved (i.e., $\norm{\nabla F(\vec u)}$ is typically separated from 0), and that the small coordinates of $\vec u(0)$ are attracted to 0.

\begin{lem}\label{lem:normF-lower-bound}
  Let $F$ be a fixed {\sef}.
  Given $\Delta \in [0, 1)$, there exists $L > 0$ such that the following holds:
  For all $\vec u \in \porth^{\dn - 1}$ such that $\norm{\Pmap_0 \vec u} \leq \Delta$, $\norm{\nabla F(\vec u)} > L$.
\end{lem}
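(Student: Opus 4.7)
My plan is to reduce the lower bound on $\norm{\nabla F(\vec u)}$ to a term-wise bound using the gradient formula from Section~\ref{sec:extrema-structure}, namely $\nabla F(\vec u) = 2\sum_{i=1}^{\dm} h_i'(u_i^2) u_i \myZv_i$. Since the $\myZv_i$ are orthonormal and the sum only runs over $i \in [\dm]$, we have the identity
\begin{equation*}
  \norm{\nabla F(\vec u)}^2 = 4 \sum_{i=1}^{\dm} \bigl(h_i'(u_i^2)\bigr)^2 u_i^2 .
\end{equation*}
So it will suffice to produce a single index $i^* \in [\dm]$ for which $u_{i^*}^2$ is bounded below by a positive constant depending only on $\Delta$ and $\dm$.

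The assumption $\norm{\Pmap_0 \vec u} \leq \Delta$ together with $\vec u \in S^{\dn-1}$ gives $\norm{\Pmap_{[\dm]} \vec u}^2 = 1 - \norm{\Pmap_0 \vec u}^2 \geq 1 - \Delta^2 > 0$. By pigeonhole there exists $i^* \in [\dm]$ with $u_{i^*}^2 \geq (1-\Delta^2)/\dm =: c > 0$. This is a uniform bound in $\vec u$, depending only on $\Delta$ and $\dm$.

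Next I invoke Lemma~\ref{lem:h-props}: for each $i \in [\dm]$, $\abs{h_i'(t)}$ is strictly increasing as a function of $\abs t$, with $h_i'(0) = 0$. Consequently $\abs{h_i'(c)} > 0$ for every $i \in [\dm]$, and $\abs{h_{i^*}'(u_{i^*}^2)} \geq \abs{h_{i^*}'(c)}$ whenever $u_{i^*}^2 \geq c$. Set
\begin{equation*}
  m := \min_{i \in [\dm]} \abs{h_i'(c)} > 0,
\end{equation*}
where the positivity of the minimum uses that $[\dm]$ is finite and each term is positive. Then keeping only the $i^*$ summand gives
\begin{equation*}
  \norm{\nabla F(\vec u)}^2 \geq 4 \bigl(h_{i^*}'(u_{i^*}^2)\bigr)^2 u_{i^*}^2 \geq 4 m^2 c,
\end{equation*}
so $L := 2m\sqrt{c} > 0$ works, and it depends only on $F$, $\Delta$, and $\dm$. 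There is no real obstacle here; the only thing to be careful about is that $c$ must be strictly positive (which uses $\Delta < 1$), and that the minimum over $[\dm]$ of $\abs{h_i'(c)}$ is strictly positive (which uses the strict monotonicity portion of Lemma~\ref{lem:h-props} together with finiteness of $\dm$).
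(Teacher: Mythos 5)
Your proof is correct and follows essentially the same route as the paper's: pigeonhole a coordinate $i^*$ in $[\dm]$ with $u_{i^*}^2 \geq (1-\Delta^2)/\dm$, then use the strict monotonicity of $\abs{h_i'}$ from $h_i'(0)=0$ (Lemma~\ref{lem:h-props}) and finiteness of $\dm$ to get a uniform positive lower bound on $\norm{\nabla F(\vec u)}^2 = 4\sum_i h_i'(u_i^2)^2 u_i^2$. You are in fact slightly more careful than the paper's displayed inequality, which drops the square on $h_i'$ in its final minimum; your $m := \min_i \abs{h_i'(c)}$ and final bound $4m^2 c$ fix that.
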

\begin{proof}
  Since $\sum_{i\in [\dm]} u_i^2 = 1 - \norm{\Pmap_0 \vec u}^2 \geq 1 - \Delta^2$, there exists $j \in [\dm]$ such that $u_j \geq \sqrt{ \frac {1 - \Delta^2}{\dm}}$.
  It follows that
  \begin{align*}
    \norm{\nabla F(\vec u)}^2
    &= \sum_{ i = 1 }^\dm (2h_i'(u_i^2)u_i)^2
    \geq \max_{i \in [\dm]} 4 h_i'(u_i^2)^2 u_i^2 \\
    &\geq 4h_j'(u_j^2)^2u_j^2
    \geq \min_{i \in [\dm]} 4h_i'\Big(\frac{1-\Delta^2}{\dm}\Big) \cdot \frac{1-\Delta^2}{\dm} > 0 \ .
  \end{align*}
  For the last inequality, we use that each $h_i'$ is strictly increasing on $[0, 1]$ from 0.
\end{proof}

\begin{lem}\label{lem:zero-coord-attraction}
  Let $F$ be {\apsef}, let $C > 0$, and let $\Delta \in [0, 1)$.
  There exists $\epsilon > 0$ such that the following holds:
  Let $\vec u \in \porth^{\dn - 1}$ be such that $\norm{\Pmap_0 \vec u} \leq \Delta$.
  Define $A_\epsilon := \{ i \suchthat u_i \leq \epsilon\}$.
  For all $i \in A_\epsilon$, $\GG_i(\vec u) < Cu_i$.
\end{lem}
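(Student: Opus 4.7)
The plan is to directly expand $\GG_i(\vec u)$ using the formula $\nabla F(\vec u) = 2\sum_{i=1}^{\dm} h_i'(u_i^2) u_i \myZv_i$ derived in Section~\ref{sec:extrema-structure}, and control it using a uniform lower bound on $\norm{\nabla F(\vec u)}$ together with the fact that each $h_i'$ vanishes at the origin.

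First I would apply Lemma~\ref{lem:normF-lower-bound} with the given $\Delta$ to obtain a constant $L = L(\Delta) > 0$ such that $\norm{\nabla F(\vec u)} > L$ whenever $\vec u \in \porth^{\dn-1}$ satisfies $\norm{\Pmap_0 \vec u} \le \Delta$. Under that hypothesis, for every $i \in [\dm]$ one has
\begin{equation*}
  \GG_i(\vec u) \;=\; \frac{\partial_i F(\vec u)}{\norm{\nabla F(\vec u)}} \;=\; \frac{2\, h_i'(u_i^2)\, u_i}{\norm{\nabla F(\vec u)}} \;<\; \frac{2\, h_i'(u_i^2)}{L}\, u_i,
\end{equation*}
and for $i \in [\dn]\setminus[\dm]$, $h_i \equiv 0$ forces $\GG_i(\vec u) = 0$, which is strictly smaller than $C u_i$ whenever $u_i > 0$ (and is otherwise trivially an equality).

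Next I would invoke Lemma~\ref{lem:h-props}: each $h_i'$ is continuous on $[-1,1]$ with $h_i'(0) = 0$. Hence for every $i \in [\dm]$ there exists $\epsilon_i > 0$ such that $|h_i'(t^2)| < CL/2$ whenever $|t| \le \epsilon_i$. Setting $\epsilon := \min_{i \in [\dm]} \epsilon_i > 0$, if $i \in A_\epsilon$ and $i \in [\dm]$, then $u_i \le \epsilon$ gives $2 h_i'(u_i^2)/L < C$, and combined with the display above this yields $\GG_i(\vec u) < C u_i$. The value $\epsilon$ depends only on $C$, $L$, and the fixed BEF, not on $\vec u$, so this $\epsilon$ works uniformly over the set of admissible $\vec u$.

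There is no real obstacle here: the argument is essentially continuity of $h_i'$ at $0$ plus the uniform lower bound on $\norm{\nabla F}$. The only care needed is the boundary case $u_i = 0$ for $i \in A_\epsilon$, which makes both sides vanish; on that degenerate locus the strict inequality would fail, so in applications one only invokes the lemma for indices with $u_i > 0$ (equivalently, the strict inequality is to be read modulo this trivial case, consistent with how the divergence criterion of Proposition~\ref{prop:gi-expand-all} is used).
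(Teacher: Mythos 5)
Your proof is essentially the same as the paper's: both invoke Lemma~\ref{lem:normF-lower-bound} for the uniform lower bound $L$ on $\norm{\nabla F(\vec u)}$, then use continuity of $h_i'$ at the origin with $h_i'(0)=0$ (from Lemma~\ref{lem:h-props}) to choose $\epsilon$ so small that $2h_i'(u_i^2) < CL$ for $u_i \le \epsilon$, which yields $\GG_i(\vec u) < Cu_i$. Your remark about the degenerate case $u_i = 0$ (where the claimed strict inequality fails trivially) is a legitimate observation that the paper's statement and proof both pass over silently, though it is harmless in the way the lemma is subsequently used.
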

\begin{proof}
  For all $i \in [\dm]$, $h_i'$ is continuously increasing from $h_i'(0) = 0$.
  Given any $L > 0$, there exists $\epsilon > 0$ such that for all $i \in [\dm]$, $u_i \leq \epsilon$ implies that $2 h_i'(u_i^2) < C L$.
  With the choice of $L$ from \cref{lem:normF-lower-bound} and the above construction of $\epsilon$, we obtain the following:
  For all $i \in A_\epsilon$, $\GG_i(\vec u) = \frac{2 h_i'(u_i^2) u_i}{\norm{\nabla F(\vec u)}} < \frac{C L u_i}{L} = C u_i$.
\end{proof}

\begin{cor}\label{cor:zero-coord-attraction}
  Let $F$ be {\ansef}.
  There exists $\epsilon > 0$ such that the following holds:
  Let $\{\vec u(n)\}_{n = 0}^\infty$ be a sequence in $\porth^{\dn - 1}$ defined recursively by $\vec u(n) = \GG(\vec u(n-1))$ such that $\norm{\Pmap_0\vec u(0)} \neq 1$.
  Let $A_{\epsilon}(n) := \{ i \suchthat u_i(n) \leq \frac 1 {2^n} \epsilon \}$.
  Then, $A_\epsilon(0) \subset A_\epsilon(1) \subset A_\epsilon(2) \subset \cdots$.
\end{cor}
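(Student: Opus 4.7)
The plan is to apply Lemma~\ref{lem:zero-coord-attraction} once, with contraction factor $C = 1/2$, and let it power the induction on $n$. I would first reduce the general BEF case to the PBEF case handled by that lemma. Let $\bar F$ denote the PBEF associated with $F$ and $\bar \GG$ its gradient iteration. By Proposition~\ref{prop:gi-sym-classes}, $\bar \GG$ preserves $\porth^{\dn-1}$ and $\bar\GG(\vec u) \sim \GG(\vec u)$, so the coordinate magnitudes agree. Because the sequence $\{\vec u(n)\}$ lies in $\porth^{\dn-1}$, the values $u_i(n) = |u_i(n)|$ are exactly what one obtains by iterating $\bar\GG$ from $\vec u(0)$. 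Thus it suffices to prove the corollary assuming $F$ is already a PBEF.

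Next, I would show that $\|\Pmap_0 \vec u(n)\|$ admits a uniform bound $\Delta < 1$. Since $F$ depends only on $u_1,\dotsc,u_\dm$, its gradient lies in $\spn\{\myZv_1,\dotsc,\myZv_\dm\}$, so $\Pmap_0 \GG(\vec u) = 0$ whenever $\nabla F(\vec u) \neq 0$. The hypothesis $\|\Pmap_0 \vec u(0)\| \neq 1$ forces $\Pmap_{[\dm]}\vec u(0)\neq 0$, and Lemma~\ref{lem:h-props}(\ref{lem:h-props:hi-increase}) then gives $\nabla F(\vec u(0)) \neq 0$. Iterating, $\vec u(n)$ is well-defined for all $n$ and $\Pmap_0 \vec u(n) = 0$ for every $n \geq 1$. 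Hence $\Delta := \|\Pmap_0 \vec u(0)\| < 1$ is an upper bound on $\|\Pmap_0 \vec u(n)\|$ for all $n \geq 0$.

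Finally, I invoke Lemma~\ref{lem:zero-coord-attraction} with $C=1/2$ and this $\Delta$ to obtain $\epsilon > 0$. The inclusion $A_\epsilon(n) \subset A_\epsilon(n+1)$ then follows immediately: if $i \in A_\epsilon(n)$, then $u_i(n) \leq \epsilon/2^n \leq \epsilon$, so $i$ lies in the set ``$A_\epsilon$'' of the lemma applied to $\vec u(n)$, which yields $u_i(n+1) = \GG_i(\vec u(n)) < \tfrac{1}{2} u_i(n) \leq \epsilon/2^{n+1}$, placing $i$ in $A_\epsilon(n+1)$. I do not foresee a real obstacle here; all of the analytic work is absorbed into Lemma~\ref{lem:zero-coord-attraction}. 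The only subtlety is that a single $\epsilon$ must work for every iterate, and this is handled precisely by the uniform bound $\|\Pmap_0 \vec u(n)\|\le \Delta$ established above, which lets one reuse the $\epsilon$ returned by the lemma (a function of $C$, $\Delta$, and $F$ alone) at every step.
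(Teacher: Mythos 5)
Your proof takes essentially the same route as the paper's: invoke Lemma~\ref{lem:zero-coord-attraction} with $C=1/2$ and iterate. Your observation that $\Pmap_0 \GG(\vec u) = \vec 0$ for a PBEF (so $\Pmap_0 \vec u(n) = \vec 0$ for all $n \ge 1$ once $\nabla F(\vec u(0)) \ne \vec 0$) is a useful clarification that the paper's one-line proof leaves implicit, and the reduction from BEF to PBEF, while harmless, is redundant here because Section~\ref{sec:gi-stability-structure} already takes $F$ to be a PBEF.

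There is, however, a quantifier-order subtlety that your write-up brushes against without resolving. You set $\Delta := \|\Pmap_0 \vec u(0)\|$ and take $\epsilon$ to be the constant supplied by Lemma~\ref{lem:zero-coord-attraction} for that $\Delta$; but $\Delta$, and hence your $\epsilon$, depends on the sequence. The corollary's phrasing places ``there exists $\epsilon > 0$'' before the sequence is introduced, promising one $\epsilon$ uniform over all starting points with $\|\Pmap_0\vec u(0)\| \ne 1$. Such a uniform $\epsilon$ cannot exist when $\dm < \dn$: take $\vec u(0)$ with each $u_i(0) \le \epsilon$ for $i \in [\dm]$ (which only requires pushing $\|\Pmap_0\vec u(0)\|$ close to $1$); then $\vec u(1)$ lies in $\spn\{\myZv_1,\dotsc,\myZv_\dm\}$ and is a unit vector, so some $u_i(1) \ge 1/\sqrt{\dm} > \epsilon/2$, breaking $A_\epsilon(0)\subset A_\epsilon(1)$ already at the first step. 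What you actually prove is the weaker—but for the paper's purposes entirely sufficient—statement: for each fixed $\Delta \in [0,1)$ there is an $\epsilon > 0$ that works for every sequence with $\|\Pmap_0\vec u(0)\|\le\Delta$. That is exactly what the sole downstream application (Proposition~\ref{prop:gi-expand-all}) needs, since there $\|\Pmap_0\vec u(0)\|$ is controlled separately. The paper's own proof never pins down the $\Delta$ it feeds to Lemma~\ref{lem:zero-coord-attraction}, so this looseness is inherited from the source rather than introduced by you, but you should be explicit about the quantifier shape rather than presenting the sequence-dependent $\epsilon$ as settling the matter.
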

\begin{proof}
  We then apply \cref{lem:zero-coord-attraction} with the choice of $C = \frac 1 2$ in order to choose $\epsilon$.
  With this choice of $\epsilon$, we see that $A_{\epsilon}(n) \supset A_{\epsilon}(n-1)$ for all $n \in \N$ by \cref{lem:zero-coord-attraction}.
\end{proof}

In the following Lemma, we identify a useful notion of progress for the gradient iteration.
\begin{lem}\label{lem:increasing-maxhi'}
  The function $n \mapsto \max_{i \in [\dm]} \abs{h_i'(u_i(n)^2)}$ is a non-decreasing function of $n$.
\end{lem}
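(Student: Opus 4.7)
The plan is to pick an index $j$ that realizes $\max_i |h_i'(u_i(n)^2)|$ at step $n$, show that the gradient iteration can only increase $u_j(n)^2$, and then use monotonicity of $h_j'$ to conclude that $h_j'(u_j(n+1)^2) \geq h_j'(u_j(n)^2)$, which bounds the max at step $n+1$ from below.

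First, recall that in the PBEF setting each $h_i$ is strictly convex on $[-1,1]$ with $h_i'(0) = 0$, so $h_i'(t) \geq 0$ for all $t \in [0,1]$ and strictly so for $t > 0$ (by Lemma~\ref{lem:h-props}). In particular the absolute values in the statement are redundant: $|h_i'(u_i^2)| = h_i'(u_i^2)$. Fix $n$, write $\vec u = \vec u(n)$, and pick $j \in [\dm]$ with $h_j'(u_j^2) = \max_{i \in [\dm]} h_i'(u_i^2)$. If this max is zero there is nothing to show, so assume $h_j'(u_j^2) > 0$, which in turn forces $\norm{\nabla F(\vec u)} > 0$ and hence $\vec u(n+1) = \nabla F(\vec u)/\norm{\nabla F(\vec u)}$.

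The key computation is the following convex combination bound. Using $\nabla F(\vec u) = 2\sum_i h_i'(u_i^2) u_i \myZv_i$ and $\sum_i u_i^2 = 1$ (since $\vec u \in S^{\dn-1}$; note contributions from $i \notin [\dm]$ vanish as $h_i \equiv 0$ there),
\begin{equation*}
\norm{\nabla F(\vec u)}^2 \;=\; \sum_{i=1}^\dm 4\, h_i'(u_i^2)^2\, u_i^2 \;\leq\; 4\,\max_{i\in[\dm]} h_i'(u_i^2)^2 \;=\; 4\, h_j'(u_j^2)^2,
\end{equation*}
since the right-hand side of the first equality is a convex combination of the values $4h_i'(u_i^2)^2$ with weights $u_i^2$. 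Squaring the $j$-th coordinate of the iteration,
\begin{equation*}
u_j(n+1)^2 \;=\; \frac{4\, h_j'(u_j^2)^2\, u_j^2}{\norm{\nabla F(\vec u)}^2} \;\geq\; \frac{4\, h_j'(u_j^2)^2\, u_j^2}{4\, h_j'(u_j^2)^2} \;=\; u_j^2.
\end{equation*}

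Finally, because $h_j'$ is non-decreasing on $[0,1]$ (Lemma~\ref{lem:h-props}), the inequality $u_j(n+1)^2 \geq u_j(n)^2$ gives $h_j'(u_j(n+1)^2) \geq h_j'(u_j(n)^2)$. Therefore
\begin{equation*}
\max_{i\in[\dm]} |h_i'(u_i(n+1)^2)| \;\geq\; h_j'(u_j(n+1)^2) \;\geq\; h_j'(u_j(n)^2) \;=\; \max_{i\in[\dm]} |h_i'(u_i(n)^2)|,
\end{equation*}
which is the claim. There is no real obstacle here: the only subtlety is the convex-combination step, which works precisely because we are on the unit sphere so the $u_i^2$ are genuine weights summing to $1$; the positivity of $h_i'$ (coming from the PBEF hypothesis) is what lets the max dominate the weighted average without absolute values interfering.
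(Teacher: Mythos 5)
Your proof is correct and follows essentially the same route as the paper's: both arguments fix the maximizing index $i^*$ and show $u_{i^*}(n+1)^2 \geq u_{i^*}(n)^2$, then conclude by monotonicity of $h_{i^*}'$. The paper reaches that inequality by bounding the coordinate ratios $|u_j(n+1)|/|u_{i^*}(n+1)| \leq |u_j(n)|/|u_{i^*}(n)|$ and summing, whereas you bound $\|\nabla F(\vec u)\|^2 \leq 4\max_i h_i'(u_i^2)^2$ directly via the convex-combination observation; these are the same computation packaged slightly differently, and your version is arguably the cleaner of the two.
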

Note that when given a stationary point $\vec v \in \porth^{\dn - 1}$, \cref{obs:stationary-criterion} implies the existence of $\lambda > 0$ such that $h_i'(v_i^2) = \lambda$ for all $i \in \SS_{\vec v}$.
As the $h_i'$s are strictly increasing functions, we note that for an $i \in \SS_{\vec v}$ $u_i(k) > v_i$ if and only if each $h_i'(u_i(k)^2) > h_i'(v_i^2)$.
This criterion will be useful in demonstrating that once there exists
\begin{proof}[\proofword of \cref{lem:increasing-maxhi'}]
  Let $A := \{ i \suchthat u_i(0) \neq 0\} \cap [\dm]$.
  We may assume that $A \neq \emptyset$ as otherwise $\{\vec u(n)\}_{n=0}^\infty$ is a constant sequence, leaving nothing to prove.
  We only need consider the indices in $A$ since for all $i \in \compl A$, $\GG_i(\vec u(n+1)) \propto h_i'(u_i(n)^2)u_i(n) = 0$.
  We note that for $i, j \in A$,
  \begin{displaymath}
    \frac{\GG_i(\vec u(n+1))}{\GG_j(\vec u(n+1))} = \frac{h_i'(u_i(n)^2)}{h_j'(u_j(n)^2)} \cdot \frac {u_i(n)}{u_j(n)} \ .
  \end{displaymath}
  Fixing $i^* = \argmax_{i \in A} \abs{h_i'(u_i(n)^2)}$, we see that the ratio $\frac{\abs{\GG_{j}(\vec u(n+1))}}{\abs{\GG_{i^*}(\vec u(n+1))}} \leq \frac {\abs{u_j(n)}}{\abs{u_{i^*}(n)}}$ for all $j \in A$.
  In particular,
  \begin{displaymath}
    \frac 1 {\GG_{i^*}(\vec u(n+1))^2}
    = \sum_{j \in A} \frac{\GG_{j}(\vec u(n+1))^2}{\GG_{i^*}(\vec u(n+1))^2}
    \leq \sum_{j \in A} \frac{u_j(n)^2}{u_{i^*}(n)^2}
    = \frac 1 {u_{i^*}(n)^2}
  \end{displaymath}
  implies that $\abs{\GG_{i^*}(\vec u(n+1))} \geq \abs{u_{i^*}}$.
  As each $h_i'$ is a monotone function on $[0, 1]$, it follows:
  \begin{displaymath}
    \max_{i \in [\dm]} \abs{h_i'(u_i(n+1)^2)} \geq \abs{h_{i^*}'(u_{i^*}(n+1)^2)} \geq \abs{h_{i^*}'(u_{i^*}(n)^2)} = \max_{i \in [\dm]} \abs{h_i'(u_i(n)^2)} \ . \qedhere
  \end{displaymath}
\end{proof}

We now proceed with the proof of \cref{prop:gi-expand-all}.
\ifnum\version=\siamversion
\else
\setcounter{thmsavetmp}{\value{thm}}
\setcounter{thm}{\value{thmsavegiexpand}}
\fi
\begin{proof}[\proofword of \cref{prop:gi-expand-all}]
  We set $\lambda = h_i'(v_i^2)$ for any $i \in \SS_{\vec v}$.
  Using \cref{obs:stationary-criterion}, we see that $\lambda = h_i'(v_i^2)$ for all $i \in \SS_{\vec v}$.
  We choose $\epsilon > 0$ sufficiently small such that $u_i(0) \leq \epsilon$ implies that $h_i'(u_i(0)) < \lambda$, and also such that $\epsilon$ satisfies the conditions of \cref{cor:zero-coord-attraction}.

  We will assume that $u_i(0) \neq 0$ for each $i \in \SS_{\vec v}$, since otherwise $u_i(n) = 0$ for all $n \in \N$ (for this choice of $i$), leaving nothing to prove.
  We will make use of the following claims.
  \begin{claim}\label{claim:lt-fixed-coord}
    For any $\vec w \in \porth^{\dn - 1}$, there exists $j \in \SS_{\vec v}$ such that $w_j \leq v_j$.
  \end{claim}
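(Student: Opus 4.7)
The plan is to argue by contradiction using the unit-norm constraint on the sphere. By Lemma~\ref{lem:stationary-pt-enum} applied to the stationary point $\vec v$, we know that $v_i = 0$ for every $i \in [\dn] \setminus \SS_{\vec v}$, so the identity $\norm{\vec v} = 1$ reduces to $\sum_{i \in \SS_{\vec v}} v_i^2 = 1$. This pins down the total squared mass that $\vec v$ places on the coordinates in $\SS_{\vec v}$, and it is really the only piece of structure we need about $\vec v$ beyond the fact that $v_i > 0$ for $i \in \SS_{\vec v}$.

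Now suppose, toward contradiction, that $w_j > v_j$ for every $j \in \SS_{\vec v}$. Since $\vec v, \vec w \in \porth^{\dn - 1}$ have non-negative coordinates and $v_j > 0$ for each $j \in \SS_{\vec v}$, squaring preserves the strict inequality, yielding $w_j^2 > v_j^2$ for every $j \in \SS_{\vec v}$. Summing over $\SS_{\vec v}$ gives
\begin{equation*}
  \sum_{j \in \SS_{\vec v}} w_j^2 \;>\; \sum_{j \in \SS_{\vec v}} v_j^2 \;=\; 1,
\end{equation*}
while at the same time $\sum_{j \in \SS_{\vec v}} w_j^2 \leq \sum_{j=1}^{\dn} w_j^2 = \norm{\vec w}^2 = 1$, a contradiction. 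Hence there must exist some $j \in \SS_{\vec v}$ with $w_j \leq v_j$.

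There is no real obstacle here---the claim is essentially a pigeonhole observation on the sphere. The only point worth being careful about is invoking Lemma~\ref{lem:stationary-pt-enum} to conclude that $\vec v$ places no mass outside $\SS_{\vec v}$, so that the sum of squared entries of $\vec v$ restricted to $\SS_{\vec v}$ genuinely equals $1$ (and not merely something $\leq 1$).
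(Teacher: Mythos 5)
Your argument is correct and is essentially the paper's: both proofs use the pigeonhole observation that if every $w_j > v_j$ on $\SS_{\vec v}$, then $\sum_{j \in \SS_{\vec v}} w_j^2 > \sum_{j \in \SS_{\vec v}} v_j^2 = 1$, contradicting $\norm{\vec w} = 1$. The only cosmetic difference is that you invoke Lemma~\ref{lem:stationary-pt-enum} to conclude $v_i = 0$ off $\SS_{\vec v}$, which is actually immediate from the definition $\SS_{\vec v} = \{i \suchthat v_i \neq 0\}$ and needs no lemma.
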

  \begin{subproof}
    As $\norm{\Pmap_{\SS_{\vec v}}\vec w}^2 = \sum_{i\in \SS_{\vec v}} w_i^2 \leq 1 = \sum_{i \in \SS_{\vec v}} v_i^2$, it must hold that for some $j \in \SS_{\vec v}$, $w_j \leq v_j$.
    Otherwise, we would reverse the inequality, i.e.\@ $\sum_{i \in \SS_{\vec v}} w_i(n)^2 > \sum_{i \in \SS_{\vec v}} v_i^2 = 1$, which yields a contradiction.
  \end{subproof}

  \begin{claim}\label{claim:expand}
    Given a fixed $\eta > 0$, there exists a choice of $\Delta > 0$ such that the following holds:
    If $\vec w \in \porth^{\dn -1}$ satisfies that $w_i \neq 0$ for all $i \in \SS_{\vec v}$, that there exists $i \in \SS_{\vec v}$ such that $w_i > v_i$, and that $\max_{i, j \in \SS_{\vec v}} \frac{w_i/v_i}{w_j / v_j} \geq 1 + \eta$, then $\max_{i, j} \frac{\GG_i(\vec w)/v_i}{\GG_j(\vec w)/v_j} \geq (1+\Delta)\max_{i, j \in \SS_{\vec v}} \frac{w_i/v_i}{w_j / v_j}$.
  \end{claim}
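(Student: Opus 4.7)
The plan is to reduce the quantity in question to a product of two factors whose growth we can control separately via strict monotonicity of the $h_i'$s. Using the formula $\partial_i F(\vec w) = 2h_i'(w_i^2) w_i$, the normalization $\|\nabla F(\vec w)\|$ cancels in the ratio, giving
\begin{equation*}
  \frac{\GG_i(\vec w)/v_i}{\GG_j(\vec w)/v_j} \;=\; \frac{h_i'(w_i^2)}{h_j'(w_j^2)} \cdot \frac{w_i/v_i}{w_j/v_j}.
\end{equation*}
Let $i^* \in \argmax_{i \in \SS_{\vec v}} w_i/v_i$ and $j^* \in \argmin_{j \in \SS_{\vec v}} w_j/v_j$, and set $a := w_{i^*}/v_{i^*}$, $b := w_{j^*}/v_{j^*}$. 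The hypothesis that some $w_i > v_i$ gives $a > 1$, while Claim~\ref{claim:lt-fixed-coord} applied to $\vec w$ gives $b \leq 1$. The hypothesis $\max_{i,j}\frac{w_i/v_i}{w_j/v_j} \geq 1+\eta$ becomes $a \geq (1+\eta) b$. It then suffices to prove that with the choice $(i,j) = (i^*, j^*)$, the first factor $h_{i^*}'(w_{i^*}^2)/h_{j^*}'(w_{j^*}^2)$ is bounded below by $1+\Delta$.

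Recalling from Observation~\ref{obs:stationary-criterion} that $h_i'(v_i^2) = \lambda$ for every $i \in \SS_{\vec v}$, this reduces to showing that either $h_{i^*}'(w_{i^*}^2)/\lambda$ is bounded away from $1$ from above, or $h_{j^*}'(w_{j^*}^2)/\lambda$ is bounded away from $1$ from below, uniformly. I would split by cases, choosing an auxiliary threshold $\alpha \in (1, \sqrt{1+\eta})$, say $\alpha := (1+\eta)^{1/4}$. In Case A, $a \geq \alpha$, so $w_{i^*}^2 \geq \alpha^2 v_{i^*}^2$, and strict monotonicity of $h_{i^*}'$ (Lemma~\ref{lem:h-props}(\ref{lem:h-props:hi-increase})) yields $h_{i^*}'(w_{i^*}^2) \geq h_{i^*}'(\alpha^2 v_{i^*}^2) > \lambda$; combined with $h_{j^*}'(w_{j^*}^2) \leq h_{j^*}'(v_{j^*}^2) = \lambda$ this gives the desired gap. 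In Case B, $a < \alpha$ forces $b < \alpha/(1+\eta) < 1$, hence $w_{j^*}^2 < (\alpha/(1+\eta))^2 v_{j^*}^2$, and strict monotonicity again yields $h_{j^*}'(w_{j^*}^2) < \lambda \leq h_{i^*}'(w_{i^*}^2)$ with a quantitative gap.

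The main obstacle is producing a single $\Delta$ that works uniformly over all admissible $\vec w$, since the strict inequalities above depend a priori on which index $i^*$ (or $j^*$) is active and on $v_{i^*}$ (or $v_{j^*}$). This is resolved by noting that $\vec v$ is fixed in the proposition, so $\SS_{\vec v}$ is finite and the quantities
\begin{equation*}
  \Delta_1 := \min_{i \in \SS_{\vec v}} \bigl[h_i'(\alpha^2 v_i^2)/\lambda - 1\bigr], \qquad
  \Delta_2 := \min_{j \in \SS_{\vec v}} \bigl[1 - h_j'((\alpha/(1+\eta))^2 v_j^2)/\lambda\bigr]
\end{equation*}
are both strictly positive (each term is positive by strict monotonicity of $h_i'$ and the fact that $v_i > 0$ for $i \in \SS_{\vec v}$, and the min is over a finite set). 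Setting $\Delta := \min(\Delta_1,\, \Delta_2/(1-\Delta_2))$ then yields the claim in both cases.
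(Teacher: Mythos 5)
Your proof is correct and follows the same overall strategy as the paper's: cancel $\norm{\nabla F(\vec w)}$ to decompose the ratio as $\frac{h_i'(w_i^2)}{h_j'(w_j^2)}\cdot\frac{w_i/v_i}{w_j/v_j}$, use $\lambda = h_i'(v_i^2)$ common to all $i \in \SS_{\vec v}$, split into cases depending on whether the top or bottom deviates from $\vec v$, and extract a uniform $\Delta$ from strict monotonicity of the $h_i'$. The one substantive place you diverge is in quantifying the dichotomy: the paper uses \emph{additive} gaps ($w_i > v_i + \eta/4$ or $w_j < v_j - \eta/4$), while you use \emph{multiplicative} gaps with the intermediate threshold $\alpha = (1+\eta)^{1/4}$. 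Your version is scale-free and cleaner; the paper's additive contrapositive computation $\frac{w_i/v_i}{w_j/v_j}\le\frac{1+\eta/4}{1-\eta/4}$ silently conflates additive and multiplicative deviations (the true bound is $\frac{1+(\eta/4)/v_i}{1-(\eta/4)/v_j}$), which is controlled only because $\vec v$ is a fixed quantity whose nonzero coordinates are bounded away from zero --- a point the paper leaves implicit. One small technical nit in your construction: $\Delta_1$ is defined via $h_i'(\alpha^2 v_i^2)$, which can fall outside the domain $[0,1]$ when $\alpha v_i > 1$. Since $w_{i^*} \le 1$, Case A simply cannot occur for such $i^*$, so restrict the min in $\Delta_1$ to $\{i : \alpha v_i \le 1\}$ (or cap the argument at $1$) and the proof goes through unchanged.
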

  \begin{subproof}
    Using \cref{obs:stationary-criterion}, there exists $\lambda$ such that $\lambda = h'_i(v_i^2)$ for each $i \in \SS_{\vec v}$.
    Since $h'_i$ is strictly increasing on $\pinterval$ for each $i \in [\dm]$, there exists a $\Delta > 0$ satisfying the following for each $i \in \SS_{\vec v}$:
    \begin{compactenum}
    \item Whenever $x > v_i + \eta/4$, then $\frac{h_i'(x^2)}{\lambda} > 1+\Delta$ for each $i \in \SS_{\vec v}$.
    \item Whenever $x < v_i - \eta/4$, then $\frac{h'_i(x^2)}{\lambda} < \frac 1 {1+\Delta}$ for each $i \in \SS_{\vec v}$.
    \end{compactenum}
    Further, whenever $\frac{w_i/v_i}{w_j/v_j} \geq 1+\eta$, either $w_i > v_i + \eta/4$ or $w_j < v_j - \eta / 4$ holds.
    This can be seen by arguing via the contrapositive:  If neither condition holds, then
  \begin{displaymath}
    \frac {w_k/v_k}{w_\ell/v_\ell} \leq \frac{1 + \eta/4}{1 - \eta/4}
    = 1 + \frac{\eta/2}{1 - \eta/4}
    < 1 + \eta \ ,
  \end{displaymath}
  where the last inequality uses that $1 - \eta/4 < \frac 1 2$.

  Choosing $(i, j) = \argmax_{i, j \in \SS_{\vec v}} \frac{ w_i/v_i}{w_j / v_j}$, we write:
  \begin{displaymath}
    \frac {\GG_i(\vec w)/v_i}{\GG_j(\vec w)/v_k}
    = \frac{h'_i(w_i^2)  w_i/v_i}{h'_k(w_j^2)  w_j/v_k}
    = \frac{h_i'(w_i^2) / \lambda}{h_j'(w_j^2)/\lambda} \cdot \frac{w_i/v_i}{w_j/v_j} \ .
  \end{displaymath}
  But by the construction of $\Delta$, we see that one of $h_i'(w_i^2)/\lambda > 1+\Delta$ or $[h_j'(w_j^2)/\lambda]^{-1} > 1+\Delta$.
  Using that $h_i'$ is strictly increasing we obtain that $h_i'(w_i^2)/\lambda \geq 1+\Delta$ and $[h_j'(w_j^2)/\lambda]^{-1} \geq 1$.
  Combining these results yields
  \begin{displaymath}
    \frac {\GG_i(\vec w)/v_i}{\GG_j(\vec w)/v_k} > (1+\Delta) \frac{w_i/v_i}{w_j/v_j} \ . \qedhere
  \end{displaymath}
  \end{subproof}

  \begin{claim}\label{claim:expand-recursed}
    Suppose there exists $i_0 \in \SS_{\vec v}$ such that $u_{i_0}(0) > v_{i_0}$.
    Then there exists $\Delta > 0$ such that the following holds:
    Defining $M_n := \max_{i, j \in \SS_{\vec v}}\frac{u_i(n)/v_i}{u_j(n)/v_j}$,
    then $M_n \geq (1+\Delta)^n$.
  \end{claim}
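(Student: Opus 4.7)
The plan is to induct on $n$, invoking Claim~\ref{claim:expand} at every step to obtain a geometric improvement $M_{n+1} \geq (1+\Delta) M_n$. First I would fix $\eta := M_0 - 1$ and check that $\eta > 0$. By hypothesis $u_{i_0}(0) > v_{i_0}$ gives $u_{i_0}(0)/v_{i_0} > 1$, while Claim~\ref{claim:lt-fixed-coord} applied to $\vec u(0)$ supplies some $j \in \SS_{\vec v}$ with $u_j(0)/v_j \leq 1$ (recall the WLOG assumption $u_i(0) > 0$ for $i \in \SS_{\vec v}$ already made in the proof of Proposition~\ref{prop:gi-expand-all}, so this ratio is finite). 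Hence $M_0 \geq (u_{i_0}(0)/v_{i_0})(v_j/u_j(0)) > 1$. Apply Claim~\ref{claim:expand} with this $\eta$ to pin down the constant $\Delta > 0$ that will appear in the conclusion.

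Before running the induction I would verify that the two side hypotheses of Claim~\ref{claim:expand} are maintained along the trajectory. The non-vanishing condition $u_i(n) > 0$ for each $i \in \SS_{\vec v}$ is preserved because $\GG_i(\vec u) = 2 h_i'(u_i^2) u_i / \norm{\grad F(\vec u)}$ and Lemma~\ref{lem:h-props}(\ref{lem:h-props:hi-increase}) ensures $h_i'(u_i^2) > 0$ whenever $u_i > 0$, so the iteration cannot drive a strictly positive coordinate to zero in one step.

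The main obstacle, and the reason the proof is not entirely routine, is propagating the third hypothesis of Claim~\ref{claim:expand}: the existence at step $n$ of some coordinate index $i$ with $u_i(n) > v_i$. Here I would invoke Lemma~\ref{lem:increasing-maxhi'}. Since $h_{i_0}'$ is strictly increasing on $\pinterval$ (Lemma~\ref{lem:h-props}) and $u_{i_0}(0) > v_{i_0}$, we have $\max_{i\in[\dm]}\abs{h_i'(u_i(0)^2)} \geq h_{i_0}'(u_{i_0}(0)^2) > h_{i_0}'(v_{i_0}^2) = \lambda$. Lemma~\ref{lem:increasing-maxhi'} then forces $\max_{i\in[\dm]}\abs{h_i'(u_i(n)^2)} > \lambda$ for every $n$; letting $i_n$ attain this maximum, strict monotonicity of $h_{i_n}'$ converts this into $u_{i_n}(n) > v_{i_n}$, which is precisely what we needed.

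With these three facts, the induction is immediate. The base case is $M_0 = 1+\eta$ by definition of $\eta$. For the inductive step, if $M_n \geq (1+\Delta)^n(1+\eta)$ then in particular $M_n \geq 1+\eta$, so all three hypotheses of Claim~\ref{claim:expand} hold at step $n$, yielding $M_{n+1} \geq (1+\Delta) M_n \geq (1+\Delta)^{n+1}(1+\eta)$. Dropping the factor $(1+\eta) \geq 1$ gives $M_n \geq (1+\Delta)^n$, as required.
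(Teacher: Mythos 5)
Your proof is correct and follows essentially the same route as the paper's: fix $\eta$ from the initial data, pull $\Delta$ from Claim~\ref{claim:expand}, propagate the ``exists $i$ with $u_i(n) > v_i$'' condition via Lemma~\ref{lem:increasing-maxhi'} and monotonicity of the $h_i'$s, and induct. The only cosmetic difference is that you set $\eta = M_0 - 1$ while the paper uses $\eta = u_{i_0}(0)/v_{i_0} - 1$; since $M_0 \geq u_{i_0}(0)/v_{i_0}$ both are valid inputs to Claim~\ref{claim:expand}, and you also spell out the non-vanishing of the coordinates in $\SS_{\vec v}$ which the paper dispatches via its earlier WLOG.
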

  \begin{subproof}
    Setting $\eta = u_{i_0}/v_{i_0} - 1$, we construct $\Delta$ as in \cref{claim:expand}.
    We define $i_n := \argmax_{i\in \SS_{\vec v}} u_i(n) / v_i$ and $j_n := \argmin_{j \in \SS_{\vec v}} u_j(n) / v_j$.

    We proceed by induction on $n$ with the following inductive hypothesis:
    For all $n \in \N$, $M_n \geq (1+\eta)(1+\Delta)^n$ and $u_{i_n}(n) \geq v_{i_n}$.

    \noindent
    \textbf{Base case $n=0$.}
    By \cref{claim:lt-fixed-coord}, there exists $j \in \SS_{\vec v}$ such that $u_j(0) \leq v_j$.
    Thus, $u_{j_0} \leq v_j$.
    It follows that $\frac{u_{i_0}(0)/v_{i_0}}{u_{j_0}(0)/v_{j_0}} \geq 1+\eta$.
    Note that $u_{i_0}(0)/v_{i_0} \geq 1 + \eta$ by the construction of $\eta$.

    \noindent
    \textbf{Inductive case.}
    We assume the inductive hypothesis for $n$
    We apply \cref{claim:expand} to see the final inequality in:
    \begin{displaymath}
      \frac{u_{i_{n+1}}(n+1)/v_i}{u_{j_{n+1}}(n+1)/v_j}
      \geq \frac{\GG_{i_n}(\vec u(n))/v_{i_n}}{\GG_{j_n}(\vec u(n))/v_{i_n}}
      > (1+\Delta) \frac{u_{i_{n}}(n)/v_i}{u_{j_{n}}(n)/v_j} \ .
    \end{displaymath}
    To see that $u_{i_{n+1}}(n+1) > v_i$, we use that $h_i'(v_i^2) = \lambda$, strict monotonicity of the $h_i'$s, and \cref{lem:increasing-maxhi'} to see that $\max_{i \in \SS_{\vec v}} h_i'(u_i(n+1)^2) \geq h_{i_n}'(u_{i_n}(n)^2) > h_{i_n}'(v_{i_n}^2) = \lambda$.
    It follows that there exists $i \in \SS_{\vec v}$ such that $u_i(n+1) > v_i$, and in particular $u_{i_{n+1}}(n+1) > v_{i_{n+1}}$.
  \end{subproof}
  Note that as a consequence of \cref{claim:expand-recursed}, $\min_{i \in \SS_{\vec v}} u_i(n) \rightarrow 0$ as $n \rightarrow \infty$.
  Choose $\epsilon > 0$ according to \cref{cor:zero-coord-attraction}.
  There exists $j \in \SS_{\vec v}$ and $N > 0$ such that $u_j(N) < \epsilon$.
  Applying \cref{cor:zero-coord-attraction} on the sequence $\{ \vec u(n+N) \}_{n=0}^\infty$, we obtain that $u_j(n+N) \leq \frac 1 {2^n} \epsilon$ for all $n \in \N$, and in particular $u_j(n) \rightarrow 0$ as $n \rightarrow \infty$.
\end{proof}
\ifnum\version=\siamversion
\else \setcounter{thm}{\value{thmsavetmp}} \fi

\subsubsection{Almost everywhere attraction of the hidden basis}\label{sec:ae-attraction}
In this subsection, we demonstrate that given a generic starting point $\vec u(0)$, then $\vec u(n) \rightarrow \myZv_i$ as $n \rightarrow \infty$ for some $i$.
Our proof relies on the theory of stable-unstable manifolds of dynamical systems.
Before proceeding, we first review the results from stable-unstable manifold theory that we require.

Given a linear operator $T : \R^k \rightarrow \R^k$, its eigenspace may be decomposed into several subspaces.
Denote by $(\lambda_1, \vec v_1), \dotsc, (\lambda_k, \vec v_k)$ the eigenvalue-eigenvector pairs for $T$, and define the subspaces:
\begin{align*}
  \mathcal E^S(T) &:= \spn\{\vec v_i \suchthat \abs{\lambda_i}< 1 \} \\
  \mathcal E^U(T) &:= \spn\{\vec v_i \suchthat \abs{\lambda_i}> 1 \} \\
  \mathcal E^C(T) &:= \spn\{\vec v_i \suchthat \abs{\lambda_i}= 1 \} \ .
\end{align*}
$\mathcal E^S(T)$, $\mathcal E^U(T)$, and $\mathcal E^C(T)$ are called
the stable, unstable, and center subspaces of the linear operator $T$.
Each subspace name captures a property of the fixed point $\vec 0$:
it is an attractor on the stable subspace, a repellor on the unstable
subspace, and neither on the center subspace.

Let $\vec x^*$ be a fixed point of a general (non-linear) discrete dynamical system $f$.
When $\Jacob f(\vec x^*)$ has no center subspace (or alternatively when $\Jacob f(\vec x^*)$ can be decomposed as $\mathcal E^S(\Jacob f(\vec x^*)) \oplus \mathcal E^U(\Jacob f(\vec x^*))$), then $\vec x^*$ is said to be a \textit{hyperbolic fixed point} of $f$.
For a hyperbolic fixed point of a discrete dynamical system, the dimensionality of the space on which the dynamical system converges to $\vec x^*$ is locally determined by the dimensionality of $\mathcal E^S(\Jacob f(\vec x^*))$.

More precisely, letting choices of $\vec x(0)$ implicitly define sequences $\{\vec x(n)\}_{n=0}^\infty$ recursively by $\vec x(n) = f(\vec x(n-1))$, the locally stable manifold of a fixed point $\vec x^*$ is defined as follows.
\begin{defn}\label{defn:LSM}
  Within a neighborhood $U$ of $\vec x^*$, the manifold 
  \begin{displaymath}
    \mathcal L_{loc}(\vec x^*) := \{ \vec x(0) \in U \suchthat \lim_{k \rightarrow \infty} \vec x(k) = \vec x^*, \vec x(k) \in U \ \forall k \in \mathbb N \}
  \end{displaymath}
  is called the \emph{local stable manifold}.
\end{defn}

The following result is a special case of Theorem 2.2 of~\mycitet{Luo}{luo2012regularity}.
\begin{thm}\label{thm:LSM-dimensionality}
  Let $f : \mathcal X \rightarrow \mathcal X$ be a discrete dynamical system with a hyperbolic fixed point $\vec x^*$ such that $f$ is continuously differentiable on a neighborhood of $\vec x^*$.
  Then, $\dim(\mathcal L_{loc}(\vec x^*)) = \dim( \mathcal E^S(\Jacob f(\vec x^*)) )$.
  Further, there exists $\delta > 0$ such that for all $\vec x(0) \not \in L_{loc}$, there exists $N \in \mathbb N$ such that $\norm{\vec x(N) - \vec x^*} > \delta$.
\end{thm}

We now proceed in arguing that for a sequence $\{\vec u(n)\}_0^{\infty}$ in $\porth^{\dn - 1}$ the gradient iteration for a {\psef} converges to one of the odeco basis elements $\myZv_i$ given almost any starting point.
We first demonstrate (in \cref{lem:crit-point-eigendecomp} below) that the fixed points of $\GG$ are hyperbolic except for the odeco basis directions.
As a direct implication, locally to any fixed point $\vec v$ of $\GG$ besides the hidden basis elements, the local stable manifold $M$ of $\vec v$ is not of full dimension, making it so that locally the gradient iteration is repulsive except on the 0-measure set $M$ (\cref{lem:GG-LSM}).
We later build up global convergence properties from these local results.

In what follows, we will make use of $T_{\vec v} S^{\dn - 1}$ the tangent space (or tangent plane) of the sphere $S^{\dn - 1}$ at $\vec v$ with $\vec v$ treated as the origin.
This may alternatively be defined as $T_{\vec v} S^{\dn - 1} := \vec v^\perp = \{ \vec u \in \R^{\dn} \suchthat \vec u \perp \vec v \}$.

\begin{lem}[Hyperbolicity of fixed points]\label{lem:crit-point-eigendecomp}
  Let $\vec v \in \porth^{\dn - 1}$ be a fixed point of $\GG$ and suppose that $\SS_{\vec v} := \{ i \suchthat v_i \neq 0\}$ is contained in $[\dm]$.
  Let $\phi : T_{\vec v} S^{\dn - 1} \rightarrow S^{\dn - 1}$ be the exponential%
  \footnote{The exponential map for a point on the sphere $\exp_{\vec v}:T_{\vec v}S^{\dn - 1} \rightarrow S^{\dn - 1}$ is defined by $\exp_{\vec v}(\vec x) = \vec v \cos(\norm{\vec x}) + \frac{\vec x}{\norm{\vec x}}\sin(\norm{\vec x})$.
      For our purposes, we only use that $\exp_{\vec v}$ is a coordinate system $\phi$ of $S^{\dn - 1}$ containing $\vec v$ such that $\Jacob \phi_{\vec v} = \Jacob \phi^{-1}_{\vec v} = \Pmap_{\vec v^\perp}$.}
  map.
  We let $R = \range(\Pmap_{\SS_{\vec v}})$ and $K = \range(\Pmap_{\compl \SS_{\vec v}})$.
  Then, $\Jacob[\phi \circ \GG \circ \phi^{-1}]_{\phi(\vec v)}$ is a symmetric matrix which satisfies:
  \begin{compactenum}
  \item $[\Jacob[\phi \circ \GG \circ \phi^{-1}]_{\phi(\vec v)}]\restr K$ is the 0 map.
  \item $[\Jacob[\phi \circ \GG \circ \phi^{-1}]_{\phi(\vec v)} - \Id]\restr{R \cap \vec v^\perp}$ is strictly positive definite.
    In particular, there exists $\lambda > 0$ such that for any $\vec w \in R \cap \vec v^\perp$, $\vec w^T[\Jacob[\phi \circ \GG \circ \phi^{-1}]_{\vec v} - \Pmap_{\SS}]\vec w \geq \lambda$.
  \end{compactenum}
\end{lem}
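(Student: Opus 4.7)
The plan is to compute $\Jacob \GG_{\vec v}$ explicitly via the quotient rule, simplify using the fixed-point identity for $\nabla F(\vec v)$, and then read off both claims from the diagonal form of $\HH F(\vec v)$ in the hidden basis.

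I would begin by translating the fixed-point property into algebra. By Observation~\ref{obs:stationary-criterion}, there is a scalar $\lambda > 0$ with $h_i'(v_i^2) = \lambda$ for every $i \in \SS_{\vec v}$, where positivity follows from Lemma~\ref{lem:h-props}. Plugging $\vec v$ into the expressions $\grad F(\vec u) = 2\sum_i h_i'(u_i^2) u_i \myZv_i$ and $\HH F(\vec u) = \sum_i \indicator{u_i\neq 0}[4h_i''(u_i^2) u_i^2 + 2h_i'(u_i^2)]\myZv_i\myZv_i^T$ gives $\grad F(\vec v) = 2\lambda \vec v$ (so $\norm{\grad F(\vec v)} = 2\lambda$) and, crucially,
\[
\HH F(\vec v) = \sum_{i \in \SS_{\vec v}}\bigl[4 h_i''(v_i^2) v_i^2 + 2\lambda\bigr]\myZv_i\myZv_i^T,
\]
a diagonal operator in the hidden basis whose only nonzero entries sit at coordinates in $\SS_{\vec v}$.

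Next, applying the quotient rule to $\GG(\vec u) = \grad F(\vec u)/\norm{\grad F(\vec u)}$ and substituting the identities above, the two resulting terms collapse to
\[
\Jacob \GG_{\vec v} = \tfrac{1}{2\lambda}\bigl(I - \vec v \vec v^T\bigr)\HH F(\vec v).
\]
The footnote to the statement gives $\Jacob \phi$ at the origin of $T_{\vec v}S^{\dn-1}$ as $\Pmap_{\vec v^\perp}$, so the chain rule together with $I - \vec v\vec v^T = \Pmap_{\vec v^\perp}$ yields
\[
\Jacob[\phi \circ \GG \circ \phi^{-1}]_{\phi(\vec v)} = \Pmap_{\vec v^\perp}\Jacob \GG_{\vec v}\Pmap_{\vec v^\perp} = \tfrac{1}{2\lambda}\Pmap_{\vec v^\perp}\HH F(\vec v)\Pmap_{\vec v^\perp}.
\]
Symmetry is now immediate from the symmetry of $\HH F(\vec v)$. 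Part 1 follows because $K \subset \vec v^\perp$ and each $\myZv_i$ with $i \notin \SS_{\vec v}$ lies in the kernel of $\HH F(\vec v)$ by construction.

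For part 2, I would take $\vec w \in R \cap \vec v^\perp$, write $\vec w = \sum_{i\in\SS_{\vec v}} w_i \myZv_i$, and use $\vec w \perp \vec v$ to drop the projections, giving
\[
\vec w^T\Jacob[\phi\circ\GG\circ\phi^{-1}]_{\phi(\vec v)}\vec w = \tfrac{1}{2\lambda}\vec w^T\HH F(\vec v)\vec w = \sum_{i\in\SS_{\vec v}}\Bigl(\tfrac{2h_i''(v_i^2)v_i^2}{\lambda} + 1\Bigr)w_i^2.
\]
Since $\Pmap_\SS$ acts as the identity on $R$, subtracting $\vec w^T \Pmap_\SS \vec w = \norm{\vec w}^2$ leaves $\sum_{i\in\SS_{\vec v}}\tfrac{2h_i''(v_i^2)v_i^2}{\lambda}w_i^2$, which is bounded below by $\bigl(\min_{i\in\SS_{\vec v}}\tfrac{2h_i''(v_i^2)v_i^2}{\lambda}\bigr)\norm{\vec w}^2$. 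Strict positivity of each coefficient is exactly the hidden-convexity hypothesis: Assumption~\ref{assumpt:convex} in the PBEF setting forces $h_i''(x) > 0$ on $(0,1]$, and $v_i \neq 0$ for $i\in\SS_{\vec v}$. The main obstacle is purely bookkeeping --- keeping the various subspaces $\vec v^\perp$, $R$, $K$ and the tangent-space interpretation via $\phi$ separate --- rather than any analytic difficulty; once the Jacobian is written in the hidden basis, everything reduces to the diagonal formula for $\HH F(\vec v)$.
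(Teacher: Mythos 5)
Your proposal is correct and follows essentially the same path as the paper's own proof: both compute $\Jacob \GG_{\vec v}$ via the quotient rule, use the fixed-point identity to simplify to $\Pmap_{\vec v^\perp}\HH F(\vec v)\Pmap_{\vec v^\perp}/\norm{\nabla F(\vec v)}$, split off the $+\Pmap_{\SS}$ term from $\HH F(\vec v)/\norm{\nabla F(\vec v)}$, and read off both parts from the diagonal structure in the hidden basis. Parametrizing by $\lambda = h_i'(v_i^2)$ rather than $\norm{\nabla F(\vec v)} = 2\lambda$ is purely cosmetic.
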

\begin{proof}
  We expand the formula $\Jacob[\phi \circ \GG \circ \phi^{-1}]_{\vec v}$ to obtain:
  \begin{equation}\label{eq:Jacob-GG-v-local-expand}
    \Jacob[\phi \circ \GG \circ \phi^{-1}]_{\vec v}
    = \Jacob \phi_{\GG\circ \phi^{-1}(\vec v)} \Jacob \GG_{\phi^{-1}(\vec v)} \Jacob \phi^{-1}_{\vec v}
    = \Pmap_{\vec v^\perp} \Jacob \GG_{\vec v} \Pmap_{\vec v^\perp}
  \end{equation}
  Since $\GG(\vec u) = \frac{\nabla F(\vec u)}{\norm{\nabla F(\vec u)}}$, the Jacobian of $\GG$ is
  \begin{equation}
    \label{eq:Jacob-G-in-F}
    \Jacob \GG_{\vec u} =
    \frac{\HH F(\vec u)}{\norm{\nabla F(\vec u)}} - \frac{\nabla F(\vec u)\nabla F(\vec u)^T\HH F(\vec u)}{\norm{\nabla F(\vec u)}^3}
    = \frac{\Pmap_{\GG(\vec u)^{\perp}}\HH F(\vec u)}{\norm{\nabla F(\vec u)}} \ .
  \end{equation}
  As $\vec v$ is a fixed point of $\GG$, \cref{eq:Jacob-G-in-F} implies that $\Jacob \GG_{\vec v} = \frac {\Pmap_{\vec v^\perp} \HH F(\vec v)}{\norm{\nabla F(\vec v)}}$.
  As such, \cref{eq:Jacob-GG-v-local-expand} becomes
  \begin{displaymath}
    \Jacob[\phi \circ \GG \circ \phi^{-1}]_{\vec v}
    = \frac 1 {\norm{\nabla F(\vec v)}} \Pmap_{\vec v^\perp} \HH F(\vec v) \Pmap_{\vec v^\perp}
  \end{displaymath}
  which is a symmetric map.

  Since $\vec v = \GG(\vec v) = \frac{\nabla F(\vec v)}{\norm{\nabla F(\vec v)}} = \frac{\sum_{i \in \SS}2 h_i'(v_i^2) v_i \myZv_i}{\norm{\nabla F(\vec v)}}$, we see that $2 h_i'(v_i^2) = \norm{\nabla F(\vec v)}$ for each $i \in \SS$.
  Expanding $\HH F(\vec v)$, we thus obtain:
  \begin{displaymath}
    \frac{\HH F(\vec v)}{\norm{\nabla F(\vec v)}}
    = \sum_{i \in \SS}\frac{4h_i''(v_i^2)v_i^2 + 2h_i'(v_i^2)}{\norm{\nabla F(\vec v)}}\myZv_i \myZv_i^T
    = \sum_{i \in \SS}\frac{4h_i''(v_i^2)v_i^2}{\norm{\nabla F(\vec v)}}\myZv_i \myZv_i^T + \Pmap_{\SS} \ .
  \end{displaymath}
  Notice that the first summand is strictly positive definite on $\range(\Pmap_\SS)$, and that the second term is the identity map on $\range(\Pmap_\SS)$.
  Careful inspection of the resulting equation
  \begin{displaymath}
    \Jacob[\phi \circ \GG \circ \phi^{-1}]_{\vec v}
    = \Pmap_{\vec v^\perp}\Big[ \sum_{i \in \SS}\frac{4h_i''(v_i^2)v_i^2}{\norm{\nabla F(\vec v)}}\myZv_i \myZv_i^T + \Pmap_{\SS} \Big]\Pmap_{\vec v^\perp}
  \end{displaymath}
  gives all of the claimed results.
  In particular if $\vec x \in K$, we note that $\vec x \in \vec v^\perp$ and $\vec x \perp \vec e_i$ for each $i \in \SS$; thus, $\Big[ \sum_{i \in \SS}\frac{4h_i''(v_i^2)v_i^2}{\norm{\nabla F(\vec v)}}\myZv_i \myZv_i^T + \Pmap_{\SS} \Big]\Pmap_{\vec v^\perp} \vec x = 0$.
  Further, for a non-zero $\vec x \in R \cap \vec v^\perp$, we have that the non-zero coordinates of $\vec x$ are contained in $\SS$.
  Thus, using that the coefficients $4 h_i''(v_i^2)v_i^2$ are strictly positive for $i \in \SS$, we obtain:
  \begin{displaymath}
    \vec x^T [\Jacob[\phi \circ \GG \circ \phi^{-1}]_{\vec v} - \Id ]\vec x
    = \vec x^T \Big[ \sum_{i \in \SS}\frac{4h_i''(v_i^2)v_i^2}{\norm{\nabla F(\vec v)}}\myZv_i \myZv_i^T\Big] \vec x  > 0 \ . \qedhere
  \end{displaymath}
\end{proof}

\begin{lem}[Local stable manifold]\label{lem:GG-LSM}
  Suppose that $\vec v \in\porth^{\dn - 1}$ is a stationary point of $\GG$.
  Let $\SS_{\vec v} = \{ i \suchthat v_i \neq 0\}$.
  Suppose that $\SS_{\vec v} \subset [\dm]$.
  In a neighborhood $U$ of $\vec v$ on $S^{\dn - 1}$, there is a manifold $M_K \subset U$ such that
  \begin{compactenum}
    \item\label{lem:GG-LSM-fp} $\vec v \in M_K$.
    \item\label{lem:GG-LSM-dim} $\dim(M_K) = \dim(\range(\Pmap_{\compl S})) = \dn - \abs {\SS_{\vec v}}$.
    \item\label{lem:GG-LSM-divergence} There exists a $\delta > 0$ such that if $\vec u(0) \in U \setminus M_K$, then for some $N \in \N$, $\norm{\vec u(N) - \vec v} \geq \delta$.
    \item\label{lem:GG-LSM-attraction} If $\vec u(0) \in M_K$, then $\vec u(N) \rightarrow \vec v$ as $n \rightarrow \infty$.
  \end{compactenum}
\end{lem}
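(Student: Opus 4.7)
The plan is to invoke the local stable manifold theorem for hyperbolic fixed points of smooth maps, using the spectral information on the Jacobian supplied by Lemma~\ref{lem:crit-point-eigendecomp}. In the exponential chart $\phi$ at $\vec v$, that lemma identifies $T_{\vec v}S^{\dn-1}$ with the orthogonal decomposition $K \oplus (R \cap \vec v^\perp)$, on which the Jacobian $J := \Jacob[\phi \circ \GG \circ \phi^{-1}]_{\phi(\vec v)}$ is a symmetric operator that vanishes on $K$ and satisfies $\vec w^T J \vec w \geq (1+\lambda)\norm{\vec w}^2$ on $R \cap \vec v^\perp$ for some $\lambda > 0$. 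Hence the eigenvalues of $J$ equal $0$ on $K$ (with multiplicity $\dim K = \dn - \abs{\SS_{\vec v}}$) and are at least $1+\lambda$ on $R \cap \vec v^\perp$. No eigenvalue is of modulus $1$, so $\vec v$ is a hyperbolic fixed point with stable subspace $K$ and unstable subspace $R \cap \vec v^\perp$.

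Before invoking the theorem, I verify that $\GG$ is $C^1$ on a neighborhood of $\vec v$. The fixed-point identity $\GG(\vec v) = \nabla F(\vec v)/\norm{\nabla F(\vec v)} = \vec v$ forces $\norm{\nabla F(\vec v)} > 0$ (alternatively, for any $i \in \SS_{\vec v}$ the component $\partial_i F(\vec v) = 2 h_i'(v_i^2) v_i$ is non-zero by Lemma~\ref{lem:h-props}). Hence $\nabla F$ does not vanish on a neighborhood of $\vec v$ and $\GG = \nabla F/\norm{\nabla F}$ is smooth there.

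I then apply the local stable manifold theorem for a smooth map at a hyperbolic fixed point---for concreteness, \citet{luo2012regularity}, Theorem 2.2, already referenced in this work---to obtain a smooth embedded manifold $M_K$ in a neighborhood $U$ of $\vec v$, containing $\vec v$, tangent at $\vec v$ to the stable subspace $K$, of dimension $\dim K = \dn - \abs{\SS_{\vec v}}$, and such that any orbit starting in $M_K$ stays in $U$ and converges to $\vec v$. This immediately yields parts~\ref{lem:GG-LSM-fp}, \ref{lem:GG-LSM-dim}, and \ref{lem:GG-LSM-attraction}. For part~\ref{lem:GG-LSM-divergence}, I use the companion local unstable manifold and the hyperbolic splitting: after shrinking $U$ to be contained in a ball $B(\vec v, \delta)$, any $\vec u(0) \in U \setminus M_K$ has a non-trivial unstable component, which is amplified by a factor at least $1+\lambda/2$ per iteration while the orbit remains close to $\vec v$ (by continuity of the Jacobian at $\vec v$), so the orbit must exit $B(\vec v, \delta)$ after finitely many steps.

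The principal obstacle is that $\GG$ is not a local diffeomorphism at $\vec v$, since its Jacobian has the non-trivial kernel $K$. This prevents a direct application of the usual textbook statement of the stable manifold theorem for $C^1$ diffeomorphisms. The key observation is that hyperbolicity (no eigenvalue of modulus $1$) is all that the theorem truly requires; the more general statements of the stable and unstable manifold theorems for smooth, possibly non-invertible, maps---including the one cited above---deliver the required manifold $M_K$. If a regularity issue nevertheless arose, one could perturb $\GG$ to a nearby smooth diffeomorphism (e.g., by composing with $\vec u \mapsto (\vec u + \eta \Pmap_K \vec u)/\norm{\vec u + \eta \Pmap_K \vec u}$ for small $\eta > 0$ in local coordinates), apply the classical theorem, and pass to the limit.
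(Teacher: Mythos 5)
Your proof is correct and takes essentially the same route as the paper: both identify the hyperbolic structure at $\vec v$ via Lemma~\ref{lem:crit-point-eigendecomp} (stable subspace $K$ with eigenvalue $0$, unstable subspace $R \cap \vec v^\perp$ with eigenvalues $>1$) and then invoke Theorem 2.2 of \citet{luo2012regularity} (reproduced as Theorem~\ref{thm:LSM-dimensionality}) to obtain the local stable manifold together with the divergence property~\ref{lem:GG-LSM-divergence}. You additionally check the $C^1$ hypothesis by noting $\nabla F(\vec v) \neq 0$ and flag that the result must accommodate non-invertible maps since $\Jacob \GG_{\vec v}$ has a nontrivial kernel; these are worthwhile sanity checks the paper leaves implicit, and they do not change the substance of the argument.
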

In \cref{lem:GG-LSM}, $M_K$ is called the local stable manifold of $\vec v$.
\begin{proof}[\proofword of \cref{lem:GG-LSM}]
Notice in \cref{lem:crit-point-eigendecomp}, $K:= \RR(\Pmap_{\compl \SS})$ is the 0-eigenspace of $[\Jacob[\phi \circ \GG \circ \phi^{-1}]_{\vec v}]\restr {v^\perp}$, and $R := \RR(\Pmap_{\SS})$ is the span of non-zero eigenvectors of $[\Jacob[\phi \circ \GG \circ \phi^{-1}]_{\vec v}]\restr {v^\perp}$, with each eigenvalue of $R$ being strictly greater than 1.
Further, $\dim(K) = d - \abs{\SS_{\vec v}}$.

Applying \cref{thm:LSM-dimensionality}, we obtain the existence of a locally stable manifold $M_K$ for the discrete dynamical system $\GG$ with $\dim(M_K) = \dim(K) = (d-1)$ (that is property~\ref{lem:GG-LSM-dim}).
The construction from \cref{thm:LSM-dimensionality} also implies that $M_K$ satisfies the properties~\ref{lem:GG-LSM-fp}, \ref{lem:GG-LSM-divergence}, and~\ref{lem:GG-LSM-attraction}.
\end{proof}

In \cref{lem:GG-LSM}, $\dim(M_K) = d-\abs{\SS_{\vec v}}$ implies a number of things.
If $\vec v$ is one of the hidden basis elements $\myZv_i$, then $\abs {\SS_{\myZv_i}} = 1$ implies that $\dim(M_K) = \dim(S^{\dn-1})$.
In this case, $M_K$ is an open neighborhood of $\myZv_i$.
Thus, the hidden basis elements are stable attractors.
\begin{prop}\label{prop:ei-attractors}
    The directions $\myZv_1, \dotsc, \myZv_\dm$ are attractors of
    $\GG \restr{\porth^{\dn - 1}}$ .
\end{prop}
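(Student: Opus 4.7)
The plan is to simply specialize Lemma~\ref{lem:GG-LSM} to the case $\vec v = \myZv_i$. Fix $i \in [\dm]$. Since $\myZv_i \in \porth^{\dn-1}$ and its only non-zero coordinate (in the hidden basis) lies in $[\dm]$, we have $\SS_{\myZv_i} = \{i\} \subset [\dm]$, so $\myZv_i$ satisfies the hypotheses of Lemma~\ref{lem:GG-LSM}. Moreover, $\myZv_i$ is a fixed point of $\GG$: this follows directly from Observation~\ref{obs:stationary-criterion} (or from a one-line computation, $\nabla F(\myZv_i) = 2h_i'(1)\myZv_i$ and $h_i'(1) \ne 0$ by Lemma~\ref{lem:h-props}\ref{lem:h-props:hi-increase}).

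Next I would read off the dimension. Lemma~\ref{lem:GG-LSM}\ref{lem:GG-LSM-dim} gives $\dim(M_K) = \dn - \abs{\SS_{\myZv_i}} = \dn - 1 = \dim(S^{\dn - 1})$. Since $M_K$ is an embedded submanifold of $S^{\dn-1}$ of full dimension, $M_K$ contains an open neighborhood $V$ of $\myZv_i$ in $S^{\dn - 1}$.

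Finally I would invoke the attraction property. By Lemma~\ref{lem:GG-LSM}\ref{lem:GG-LSM-attraction}, for every $\vec u(0) \in V \subset M_K$ the gradient iteration sequence $\vec u(n)$ satisfies $\vec u(n) \to \myZv_i$ as $n \to \infty$. Together with the fact that $\myZv_i$ itself is a fixed point, this says precisely that $\myZv_i$ is an attractor of $\GG \restr{\porth^{\dn - 1}}$, and the same argument applies for each $i \in [\dm]$.

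The proof has essentially no obstacle: all of the analytic work (hyperbolicity of the fixed point, construction of the local stable manifold and its dimension count, control of the non-$\SS_{\vec v}$ directions as the zero-eigenspace of the Jacobian) was already done in Lemmas~\ref{lem:crit-point-eigendecomp} and~\ref{lem:GG-LSM}. The only thing one needs to notice is the dimension match that turns the local stable manifold into an open neighborhood when $\abs{\SS_{\vec v}} = 1$.
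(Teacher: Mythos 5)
Your proof is correct and matches the paper's argument exactly: the paper derives Proposition~\ref{prop:ei-attractors} from Lemma~\ref{lem:GG-LSM} by the same dimension count $\dim(M_K) = \dn - \abs{\SS_{\myZv_i}} = \dn - 1$, observing that the local stable manifold is then an open neighborhood of $\myZv_i$. You have merely spelled out the fixed-point verification and the "full dimension implies open neighborhood" step that the paper treats as immediate.
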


Also under \cref{lem:GG-LSM}, if $\vec v \not \in \{\myZv_1, \dotsc, \myZv_\dm\}$, then $\abs {\SS_{\vec v}} \geq 2$ and $\dim(M_K) \leq \dn - 2$.
In this case, $M_K$ has volume measure 0 on the sphere's surface, and in particular $\vec v$ is an unstable fixed point of $\GG$.

We now wish to demonstrate that the set $\mathcal X := \{ \vec u(0) \in S^{\dn - 1} \suchthat \vec u(n) \rightarrow \vec v \text{ as }n\rightarrow \infty\}$ has measure 0 globally on $S^{\dn - 1}$.
We will proceed first in the setting in which $\dn = \dm$.
In this setting, we will see that $\GG^{-1}$ is a well defined function which maps measure 0 sets to measure 0 sets (\cref{lem:G-bijective} and \cref{lem:Ginv-measure-0-maps}).
Using that $\mathcal X$ can alternatively be viewed as the set of preimages of $M_K$ under repeated application of $\GG^{-1}$ we will obtain that $\vol(\mathcal X) = 0$ as desired (\cref{thm:vol0-unstable-pt-global}).

\begin{lem}\label{lem:G-bijective}
  Suppose that $\dn = \dm$ and that $F$ is a \psef.  Then $\GG : S^{\dn - 1} \rightarrow S^{\dn - 1}$ is a continuous bijection.
\end{lem}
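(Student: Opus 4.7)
My plan is to prove continuity first, then reduce both injectivity and surjectivity to a one-dimensional monotonicity argument. Since $F$ is a PBEF with $\dn=\dm$, the expansion $F(\vec u)=\sum_{i=1}^{\dn} h_i(u_i^2)$ holds, with each $h_i$ strictly convex and even on $[-1,1]$ by Lemma~\ref{lem:h-props}, so $\partial_i F(\vec u)=2h_i'(u_i^2)u_i$. By Lemma~\ref{lem:h-props}(iv), $h_i'(t)=0$ iff $t=0$, hence $\partial_i F(\vec u)=0$ iff $u_i=0$. Because $\vec u\in S^{\dn-1}$ rules out $\vec u=\vec 0$, we get $\nabla F(\vec u)\neq \vec 0$ on the whole sphere. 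Thus $G(\vec u)=\nabla F(\vec u)/\norm{\nabla F(\vec u)}$ is continuous on $S^{\dn-1}$, and the degenerate branch of the definition of $G$ never activates.

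The key one-dimensional object is $\phi_i(t):=2h_i'(t^2)t$ on $[-1,1]$. It is continuous, odd (since $h_i'$ is odd by evenness of $h_i$), and strictly increasing: on $[0,1]$ both factors $h_i'(t^2)$ and $t$ are non-negative and $h_i'(t^2)$ is strictly increasing from $0$, so their product is strictly increasing from $0$; oddness extends this to $[-1,1]$. Hence $\phi_i:[-1,1]\to[\phi_i(-1),\phi_i(1)]$ is a continuous strictly-increasing bijection with inverse $\phi_i^{-1}$ continuous. For any $\vec u\in S^{\dn-1}$, the coordinate of $\nabla F(\vec u)$ along $\myZv_i$ is exactly $\phi_i(u_i)$, so $G(\vec u)=\vec w$ iff there exists $c>0$ with $\phi_i(u_i)=cw_i$ for every $i\in[\dn]$.

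For surjectivity onto $S^{\dn-1}$, given $\vec w\in S^{\dn-1}$, set $c^\star:=\min_{i:w_i\neq 0}\phi_i(1)/|w_i|>0$ so that for every $c\in[0,c^\star]$, $cw_i$ lies in the range of $\phi_i$ and $u_i(c):=\phi_i^{-1}(cw_i)$ is well defined with the same sign as $w_i$. Define $\Psi(c):=\sum_{i=1}^{\dn} u_i(c)^2$; it is continuous, strictly increasing in $c$ (using strict monotonicity of each $\phi_i^{-1}$ and the fact that $w_i\neq 0$ for at least one $i$), with $\Psi(0)=0$ and $\Psi(c^\star)\geq 1$ since at $c^\star$ some $|u_i|=1$. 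The intermediate value theorem then yields a unique $c\in(0,c^\star]$ with $\Psi(c)=1$, producing an $\vec u\in S^{\dn-1}$ with $G(\vec u)=\vec w$. For injectivity, suppose $G(\vec u)=G(\vec v)$; then $\nabla F(\vec u)=\lambda\,\nabla F(\vec v)$ for some $\lambda>0$, i.e.\ $\phi_i(u_i)=\lambda\phi_i(v_i)$ for all $i$. WLOG $\lambda\geq 1$; strict monotonicity of $\phi_i$ gives $|u_i|\geq|v_i|$ coordinatewise, with strict inequality at any $i$ where $v_i\neq 0$ if $\lambda>1$. Summing squares and using $\norm{\vec u}=\norm{\vec v}=1$ forces $\lambda=1$, hence $|u_i|=|v_i|$ for all $i$; the sign condition ($\sign u_i=\sign\phi_i(u_i)=\sign\phi_i(v_i)=\sign v_i$) then gives $\vec u=\vec v$.

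The main obstacle is the surjectivity step, specifically the careful choice of the interval $[0,c^\star]$ on which $c\mapsto u_i(c)$ stays in $[-1,1]$ and the verification $\Psi(c^\star)\geq 1$, which is what makes the intermediate value argument close. Everything else is a straightforward consequence of the strict monotonicity of $\phi_i$ supplied by Lemma~\ref{lem:h-props}.
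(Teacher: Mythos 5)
Your proof is correct. Continuity and injectivity follow essentially the same argument as the paper (non-vanishing of $\nabla F$ on the sphere; strict monotonicity of $t\mapsto h_i'(t^2)t$ forcing the proportionality constant to equal $1$). Where you genuinely diverge from the paper is in surjectivity, and your route is cleaner. The paper builds a preimage by induction on the number of nonzero coordinates of $\vec w$: it constructs partial tuples $\Gamma^{(k)}(C)$ satisfying the required coordinate-ratio constraints and a norm constraint, invoking the intermediate value theorem once per inductive step on a ratio function $\rho_C(t)$. You instead observe that $\GG(\vec u)=\vec w$ is equivalent to the existence of a scalar $c>0$ with $\phi_i(u_i)=cw_i$ for all $i$, invert each $\phi_i$ to parameterize the candidate preimage by $c$ alone, and apply the IVT once to $\Psi(c)=\sum_i \phi_i^{-1}(cw_i)^2$. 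This collapses the paper's nested induction into a single one-parameter shooting argument; the domain $[0,c^\star]$ and the endpoint bound $\Psi(c^\star)\geq 1$ are chosen exactly to close the IVT. Both approaches rely on the same underlying monotonicity from Lemma~\ref{lem:h-props}; yours is shorter and more transparent, and as a bonus your $\Psi$ is strictly increasing, which gives uniqueness of the preimage directly (a second proof of injectivity for free), whereas the paper proves injectivity separately.
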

\begin{proof}
 Since $F(\vec u) = \sum_{i=1}^\dn h_i(u_i^2)$, we obtain
  \begin{equation}\label{eq:G-IFT-procondition:1}
    \GG(\vec u) = \frac{\sum_{i=1}^\dn 2h_i'(u_i^2)u_i \vec e_i}{\norm{\nabla F(\vec u)}} \ .
  \end{equation}
  To see that $\GG$ is continuous, we note that \cref{lem:normF-lower-bound} implies that $\norm{\nabla F(\vec u)} \neq 0$ on its entire domain (since $\dn = \dm$).
  As both the numerator and denominator of \cref{eq:G-IFT-procondition:1} are continuous, $\GG$ is continuous.

  To see that $\GG$ is one-to-one, we fix $\vec x, \vec y \in S^{\dn - 1}$ and suppose that $\GG(\vec x) = \GG(\vec y)$.
  Then, $\GG(\vec x) = \GG(\vec y)$ implies that $2h_i'(x_i^2)x_i \propto 2h_i'(y_i^2)y_i$, and in particular there exists $\lambda > 0$ (positive since $\GG$ is orthant preserving by \cref{prop:gi-sym-classes}) such that $2h_i'(x_i^2)x_i = \lambda 2h_i'(y_i^2)y_i$ for all $i \in S^{\dn - 1}$.
  If $\lambda < 1$, then $\abs{h_i'(x_i^2)x_i} < \abs{h_i'(y_i^2)y_i}$ implies (by monotonicity of each $h_i'$) that $x_i^2 < y_i^2$ for all $i$, which contradicts that $\norm {\vec x}^2 = \norm{\vec y}^2 = 1$.
  Similarly, it cannot happen that $\lambda > 1$.
  Thus, $\lambda = 1$, and that $\vec x = \vec y$.

  We now argue that $\GG$ is onto.
  Fix $\vec u \in S^{\dn - 1}$.  We will show that there exists $\vec w$ such that $\GG(\vec w) = \vec u$.
  By the symmetries of the problem, we may assume without loss of generality that $u_i \geq 0$ for each $i \in [\dn]$.

  We let $\alpha_1, \dotsc, \alpha_\ell$ be an enumeration of $\SS := \{ i \suchthat u_i \neq 0\}$.
  For each $k \in [\ell]$, we define $\Gamma^{(k)} : (0, 1] \rightarrow \R^k$ by $\Gamma^{(k)}(C) = (x_1, \dotsc, x_k)$ such that $h_{\alpha_i}'(x_i^2)x_i / (h_{\alpha_j}'(x_j^2)x_j) = u_{\alpha_i} / u_{\alpha_j}$ for each $i, j \in [k]$, $\norm{\Gamma^{(k)}(C)} = C$, and $x_i > 0$ for all $i \in [k]$.
  We proceed by induction on $k$ in proving that $\Gamma^{(k)}$ is well defined.
  In the base case, $\Gamma^{(1)}(C) = (C)$.
  We now consider the inductive step.

  Suppose the inductive hypothesis holds for $k$.
  Define $\beta(C, t) := (\Gamma^{(k)}(\sqrt{C-t^2}), t)$.
  As the functions $x \mapsto h_i'(x^2)x$ are continuous and strictly increasing from 0 when $x_i = 0$, it follows that
  \begin{displaymath}
    \rho_C(t) := \frac{h_{\alpha_{k+1}}'(\beta_{k+1}(C, t)^2)\beta_{k+1}(C, t)}{h_{\alpha_{k}}'(\beta_{k}(C, t)^2)\beta_{k}(C, t)}
  \end{displaymath}
  satisfies $\lim_{t \rightarrow 0^+} \rho_C(t) = 0$ and $\lim_{t\rightarrow C^+} \rho_C(t) = +\infty$.
  Since $\rho_C$ is a continuous function on $(0, C)$, there exists $t_0 \in (0, C)$ such that $\rho_C(t_0) = \frac{u_{\alpha_{k+1}}}{u_{\alpha_k}}$.
  In particular, defining $\Gamma^{(k+1)}(C) = (\Gamma^{(k)}(\sqrt{C - t_0^2}), t_0)$ according to this construction, it can be verified that $\norm{\Gamma^{(k+1)}(C)} = C$ and that
  \begin{displaymath}
  \frac{h_{\alpha_i}'(\Gamma^{(k+1)}_i(C)^2)\Gamma^{(k+1)}_i(C)}
       {h_{\alpha_j}'(\Gamma^{(k+1)}_j(C)^2)\Gamma^{(k+1)}_j(C)}
  = \frac{u_{\alpha_i}}{u_{\alpha_j}}
  \end{displaymath}
  for all $i, j \in [k]$ as desired.

  By construction, $\GG\big(\sum_{i = 1}^\ell \Gamma_i^{(\ell)}(1) \myZv_{\alpha_i} \big) = \vec u$.
\end{proof}

\begin{lem}\label{lem:G-IFT-precondition}
  Let $A := \{ \vec u \in S^{\dn - 1} \suchthat u_i \neq 0 \text{ for all } i \in [\dn] \}$.
  If $\dn = \dm$ and if $F$ is a {\psef}, then $\GG$ has the following properties:
  \begin{compactenum}
    \item\label{it:G-IFT-precondition:A-range} $\GG(A) = A$.
    \item\label{it:G-IFT-precondition:A-Jacob} For all $\vec p \in A$, $\Jacob \GG_{\vec p} : T_{\vec p} S^{\dn - 1} \rightarrow T_{\GG(\vec p)} S^{\dn - 1}$ is full rank (invertible).
    \item\label{it:G-IFT-precondition:Acompl-range} $\GG(\compl A) = \compl A$.
  \end{compactenum}
\end{lem}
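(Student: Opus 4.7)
The plan is to split the three claims: properties~\ref{it:G-IFT-precondition:A-range} and~\ref{it:G-IFT-precondition:Acompl-range} are zero-pattern preservation statements that follow from the coordinate formula for $\nabla F$ together with bijectivity of $\GG$, while property~\ref{it:G-IFT-precondition:A-Jacob} is a kernel calculation for the Jacobian. For the first and third, I would start from $\partial_i F(\vec u) = 2 h_i'(u_i^2)u_i$. Since $F$ is a PBEF, each $h_i$ is strictly convex on $[-1,1]$ with $h_i'(0)=0$ (Lemma~\ref{lem:h-props}), so $h_i'(u_i^2) > 0$ if $u_i \neq 0$ and $h_i'(u_i^2)=0$ if $u_i=0$. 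Consequently the $i$-th coordinate of $\GG(\vec u)$ is nonzero if and only if $u_i \neq 0$, giving $\GG(A) \subseteq A$ and $\GG(\compl A) \subseteq \compl A$. Combining with the bijectivity of $\GG$ (Lemma~\ref{lem:G-bijective}): since $\{A, \compl A\}$ partitions $S^{\dn-1}$, a strict inclusion $\GG(A) \subsetneq A$ would force $\GG(\compl A) \supsetneq \compl A$, contradicting the second inclusion. Hence both inclusions are equalities.

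For property~\ref{it:G-IFT-precondition:A-Jacob} I would reuse the identity
\begin{equation*}
\Jacob \GG_{\vec p} = \frac{\Pmap_{\GG(\vec p)^{\perp}}\,\HH F(\vec p)}{\norm{\nabla F(\vec p)}}
\end{equation*}
derived in the proof of Lemma~\ref{lem:crit-point-eigendecomp}. For $\vec p \in A$, the diagonal Hessian
$\HH F(\vec p) = \sum_{i=1}^{\dn}\bigl[4h_i''(p_i^2)p_i^2 + 2h_i'(p_i^2)\bigr]\myZv_i\myZv_i^T$
has strictly positive entries, since both $h_i' > 0$ and $h_i'' > 0$ on $(0,1]$ (strict monotonicity and strict convexity of $h_i$ from Lemma~\ref{lem:h-props}). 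Thus $\HH F(\vec p)$ is positive definite and invertible. Suppose $\vec w \in T_{\vec p} S^{\dn-1} = \vec p^{\perp}$ lies in the kernel of $\Jacob \GG_{\vec p}$; then $\HH F(\vec p)\vec w = c\,\nabla F(\vec p)$ for some $c \in \R$, so $\vec w = c\,[\HH F(\vec p)]^{-1}\nabla F(\vec p)$.

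The main (and only mildly delicate) step is to verify that the orthogonality constraint $\vec p \perp \vec w$ forces $c = 0$. A direct diagonal computation gives
\begin{equation*}
\vec p^{T}[\HH F(\vec p)]^{-1}\nabla F(\vec p) = \sum_{i=1}^{\dn} \frac{2 h_i'(p_i^2)\,p_i^2}{4 h_i''(p_i^2)p_i^2 + 2 h_i'(p_i^2)},
\end{equation*}
which is a sum of strictly positive terms whenever $\vec p \in A$ (each $p_i \neq 0$), hence non-zero. Therefore $c = 0$, so $\vec w = 0$, and $\Jacob \GG_{\vec p} : \vec p^{\perp} \to \GG(\vec p)^{\perp}$ has trivial kernel, i.e.\ is full rank. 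I expect no serious obstacle here; the underlying reason the sum does not vanish is that $\vec p$ and $\nabla F(\vec p)$ live in the same open orthant, so every summand receives a strictly positive contribution.
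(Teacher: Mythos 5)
Your proof is correct, and parts~\ref{it:G-IFT-precondition:A-range} and~\ref{it:G-IFT-precondition:Acompl-range} match the paper's argument exactly (coordinatewise zero-pattern preservation from $\partial_i F(\vec u)=2h_i'(u_i^2)u_i$, then surjectivity via Lemma~\ref{lem:G-bijective}). For part~\ref{it:G-IFT-precondition:A-Jacob} you use the same Jacobian identity $\Jacob\GG_{\vec p}=\Pmap_{\GG(\vec p)^\perp}\HH F(\vec p)/\norm{\nabla F(\vec p)}$ but phrase the kernel calculation slightly differently: the paper takes a nonzero $\vec x\in\vec p^\perp$, picks coordinates $j,k$ with $x_j<0<x_k$ (which must exist by orthogonality to the all-positive $\vec p$), and notes that $\HH F(\vec p)\vec x$ then has mixed signs so cannot be parallel to the all-positive $\GG(\vec p)$; you instead invert the diagonal Hessian (valid since $g_i''(p_i)=4h_i''(p_i^2)p_i^2+2h_i'(p_i^2)>0$ for $p_i\neq 0$ in a \psef) and verify $\vec p^T[\HH F(\vec p)]^{-1}\nabla F(\vec p)=\sum_i 2h_i'(p_i^2)p_i^2/g_i''(p_i)>0$, forcing $c=0$. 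Both versions rest on the same two ingredients---strict positivity of the Hessian diagonal and the fact that $\vec p$ and $\nabla F(\vec p)$ share an open orthant---so this is the same proof in slightly different form; the sign argument avoids appealing to invertibility of $\HH F$, while your explicit inner-product computation makes the mechanism a little more transparent. No gap.
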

\begin{proof}
  We first prove parts~\ref{it:G-IFT-precondition:A-range} and~\ref{it:G-IFT-precondition:Acompl-range}.
  Since each $h_i'(u_i^2)u_i = 0$ if and only if $u_i = 0$ (by \cref{lem:h-props} and by anti-symmetry of $h_i'$), it follows from \cref{eq:G-IFT-procondition:1} both that $\vec u \in A$ implies $\GG(\vec u)\in A$ and that $\vec u \in \compl A$ implies $\GG(\vec u) \in \compl A$.
  Thus, $\GG(A) \subset A$ and $\GG(\compl A) \subset \compl A$.
  Since $\GG(S^{\dn - 1}) = S^{\dn - 1}$ (by \cref{lem:G-bijective}), it follows that $\GG(A) = A$ (since otherwise, $A \not \subset \GG(A)$ and $A \cap \GG(\compl A) = \emptyset$ implies that $A \not \subset \GG(A \cup \compl A) = \GG(S^{\dn-1})$).
  By similar reasoning, $\GG(\compl A) = \GG(\compl A)$.

  We now prove part~\ref{it:G-IFT-precondition:A-Jacob}.
  Fix $\vec p \in A$. 
  Without loss of generality, we assume that $p_i > 0$ for all $i \in [\dn]$.
  Fix a non-zero $\vec x \in T_{\vec p} S^{\dn - 1}$.
  Since $\ip{\vec p}{\vec x} = \sum_{i\in \dn} p_ix_i = 0$ and $\vec x \neq 0$, there exists $j, k \in [\dn]$ such that $x_j < 0$ and $x_k > 0$.
  Note that
  \begin{displaymath}
    \Jacob \GG(\vec p) = \frac{\Pmap_{\GG(\vec p)^{\perp}}\HH F(\vec p)}{\norm{\nabla F(\vec p)}} = \frac{1}{\norm{\nabla F(\vec p)}} \Pmap_{\GG(\vec p)^{\perp}} \sum_{i=1}^\dn[4 h_i''(p_i^2)p_i^2 + 2 h_i'(p_i^2)] \vec e_i \vec e_i^T
  \end{displaymath}
  satisfies (by \cref{lem:h-props,assumpt:convex}) that each $[\HH F(\vec u)]_{ii} > 0$; it follows that $[\HH F(\vec u) \vec x]_j < 0$ and $[\HH F(\vec u) \vec x]_k > 0$.
  Since $\GG(\vec p) \in A$ satisfies $\GG_i(\vec p) > 0$ for all $i \in [\dn]$, we see that $\HH F(\vec u) \vec x \nparallel \GG(\vec p)$, and thus $\Jacob \GG(\vec p)\vec x \neq \vec 0$.
\end{proof}

\begin{lem}\label{lem:Ginv-measure-0-maps}
  Suppose $\dn = \dm$ and that $B \subset S^{\dn - 1}$ has volume measure 0.
  Then, $\vol(\GG^{-1}(B)) = 0$.
\end{lem}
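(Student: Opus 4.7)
The plan is to split $B$ according to the decomposition $S^{\dn-1} = A \disunion \compl A$ provided by Lemma~\ref{lem:G-IFT-precondition}, handle the ``coordinate hyperplane'' piece $\compl A$ by a direct dimension argument, and handle the piece inside $A$ using the inverse function theorem together with the fact that locally Lipschitz maps send null sets to null sets.

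First I would observe that $\compl A = \bigcup_{i=1}^{\dn} \{\vec u \in S^{\dn-1} \suchthat u_i = 0\}$ is a finite union of great $(\dn-2)$-spheres, so $\vol(\compl A) = 0$. Since $\GG$ is a bijection (Lemma~\ref{lem:G-bijective}) and $\GG(\compl A) = \compl A$ (Lemma~\ref{lem:G-IFT-precondition}, part~\ref{it:G-IFT-precondition:Acompl-range}), we also have $\GG^{-1}(\compl A) = \compl A$, and hence $\vol(\GG^{-1}(B \cap \compl A)) \leq \vol(\compl A) = 0$.

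Next I would handle $B \cap A$. By Lemma~\ref{lem:G-IFT-precondition}, $\GG$ maps the open set $A$ bijectively onto itself and has invertible Jacobian at every point of $A$, so the inverse function theorem applies: for each $\vec p \in A$ there is an open neighborhood $U_{\vec p} \subset A$ on which $\GG$ restricts to a $C^1$ diffeomorphism onto the open set $\GG(U_{\vec p}) \subset A$. The local inverse $(\GG\restr{U_{\vec p}})^{-1}$ is then $C^1$, and in particular locally Lipschitz on $\GG(U_{\vec p})$. Since locally Lipschitz maps between Riemannian manifolds of the same dimension send volume-null sets to volume-null sets, the set $(\GG\restr{U_{\vec p}})^{-1}(B \cap \GG(U_{\vec p}))$ is a null subset of $U_{\vec p}$. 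Because $S^{\dn-1}$ is second countable, $A$ admits a countable cover by such neighborhoods $U_{\vec p_1}, U_{\vec p_2}, \dotsc$, and
\begin{equation*}
\GG^{-1}(B \cap A) \;=\; \bigcup_{k=1}^{\infty} (\GG\restr{U_{\vec p_k}})^{-1}\bigl(B \cap \GG(U_{\vec p_k})\bigr),
\end{equation*}
which is a countable union of null sets and therefore null.

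Combining the two cases, $\GG^{-1}(B) = \GG^{-1}(B \cap A) \cup \GG^{-1}(B \cap \compl A)$ has volume measure zero, as required. The only subtle step is the appeal to the inverse function theorem on the open set $A$; everything else is routine once the two parts $A$ and $\compl A$ are treated separately.
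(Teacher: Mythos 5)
Your proof is correct and takes essentially the same approach as the paper: both split $B$ across $A \sqcup \compl A$, dispose of the $\compl A$ piece using $\GG^{-1}(\compl A)=\compl A$ together with $\vol(\compl A)=0$, and handle the $A$ piece via the inverse function theorem combined with the fact that sufficiently regular maps preserve null sets. The only implementation difference is that the paper works in a single global stereographic chart centered at a point of $\compl A$ and invokes Theorem~\ref{thm:continuous-f-measure0} once, whereas you cover $A$ by countably many inverse-function-theorem neighborhoods and apply the locally-Lipschitz-preserves-null-sets fact chart by chart; both are fine.
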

\begin{proof}
  We let the set $A$ be as in \cref{lem:G-IFT-precondition}. 
  Since $\GG(\compl A) = \compl A$ (by \cref{lem:G-IFT-precondition}), then $\GG^{-1}( \compl A ) = \compl A$.
  In particular, $\GG^{-1}(B\cap \compl A) \subset \compl A$ implies that $\vol(\GG^{-1}(B \cap \compl A) \leq \vol(\compl A) = 0$.

  On the open set $A = \GG(A)$, \cref{lem:G-IFT-precondition} combined with the inverse function theorem implies that $\GG^{-1}$ exists and is continuously differentiable function.
  As $B \cap A \subset A$ is a measure 0 set, \cref{thm:continuous-f-measure0} implies that $\GG^{-1}(B \cap A)$ is a measure 0 set by using an appropriate choice of coordinate atlas for $S^{\dn - 1}$.
  For instance, we fix $\vec p \in \compl A$ and let
  $\phi : \R^{\dn - 1}\rightarrow S^{\dn - 1} \setminus \{\vec p\}$ denote the coordinates arising from the stereographic projection through $\vec p$.
  Then, consider the map $\phi^{-1} \circ \GG^{-1} \circ \phi : \phi^{-1}(A) \rightarrow \phi^{-1}(A)$.
  As the canonical Riemannian metric on the sphere has everywhere positive determinant, $\vol(B\cap A) = 0$ implies that $\phi^{-1}(B\cap A)$ has Lebesgue measure 0.
  By \cref{thm:continuous-f-measure0}, it follows that $\phi^{-1}(\GG^{-1}(B\cap A)) = \phi^{-1} \circ \GG^{-1} \circ \phi( \phi^{-1}(B \cap A) )$ has Lebesgue measure 0, and hence $\vol(\GG^{-1}(B\cap A)) = 0$.

  Combining these results, we see $\vol(\GG^{-1}(B)) = \vol(\GG^{-1}(B\cap A)) + \vol(\GG^{-1}(B\cap \compl A)) = 0$.
\end{proof}


\begin{thm}\label{thm:vol0-unstable-pt-global}
  Suppose that $\dn = \dm$.
  Let $\SS \subset [\dm]$ be such that $\abs \SS \geq 2$, and let $\vec v$ be the stationary point of $\GG$ such that $v_i \neq 0$ if and only if $i \in \SS$.
  Define $\mathcal X_{\vec v} := \{ \vec u(0) \in S^{\dn - 1} \suchthat \vec u(n) \rightarrow \vec v \text{ as }n\rightarrow \infty\}$.
  The set $\mathcal X_{\vec v}$ has volume measure 0 on $S^{\dn - 1}$.
\end{thm}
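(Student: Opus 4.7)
The plan is to reduce the problem to showing that every trajectory converging to $\vec v$ must eventually land in the low-dimensional local stable manifold $M_K$ of Lemma~\ref{lem:GG-LSM}, and then use Lemma~\ref{lem:Ginv-measure-0-maps} together with countable subadditivity of the volume measure to conclude. Concretely, I would show
\[
\mathcal X_{\vec v} \;\subset\; \bigcup_{N=0}^\infty \GG^{-N}(M_K),
\]
where $M_K$ is the local stable manifold around $\vec v$ provided by Lemma~\ref{lem:GG-LSM}. Since $|\SS_{\vec v}|\ge 2$, we have $\dim(M_K)=\dn - |\SS_{\vec v}|\le \dn - 2$, so $M_K$ is an embedded submanifold of codimension at least $1$ in $S^{\dn-1}$ and hence $\vol(M_K)=0$. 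Applying Lemma~\ref{lem:Ginv-measure-0-maps} inductively, $\vol(\GG^{-N}(M_K))=0$ for every $N\ge 0$, and countable subadditivity then gives $\vol(\mathcal X_{\vec v})=0$.

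The heart of the argument is establishing the containment. Fix $\vec u(0)\in\mathcal X_{\vec v}$, and let $U$, $M_K$, and $\delta>0$ be as in Lemma~\ref{lem:GG-LSM}. By shrinking $U$ if necessary, I may assume $U\subset B(\vec v,\delta/2)$ on the sphere (the conclusions of Lemma~\ref{lem:GG-LSM} are preserved on any smaller neighborhood, using the same $M_K$ intersected with the smaller $U$). Because $\vec u(n)\to\vec v$, there exists $N_0$ with $\vec u(n)\in U$ and $\norm{\vec u(n)-\vec v}<\delta/2$ for every $n\ge N_0$. Suppose toward a contradiction that $\vec u(n_0)\notin M_K$ for some $n_0\ge N_0$. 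Applying property~\ref{lem:GG-LSM-divergence} of Lemma~\ref{lem:GG-LSM} to the shifted sequence $\vec u'(k):=\vec u(n_0+k)$ (which still satisfies $\vec u'(k)=\GG(\vec u'(k-1))$), some iterate $\vec u'(N)$ has $\norm{\vec u'(N)-\vec v}\ge\delta$, contradicting $\norm{\vec u(n)-\vec v}<\delta/2$ for $n\ge N_0$. Thus $\vec u(n_0)\in M_K$, which means $\vec u(0)\in\GG^{-n_0}(M_K)$, and the containment follows.

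With the containment established, the measure-theoretic wrap-up is routine: induction on $N$ using Lemma~\ref{lem:Ginv-measure-0-maps} gives $\vol(\GG^{-N}(M_K))=0$, and
\[
\vol(\mathcal X_{\vec v}) \;\le\; \sum_{N=0}^\infty \vol\!\bigl(\GG^{-N}(M_K)\bigr) \;=\; 0.
\]
The main subtlety, and the step I expect to require the most care, is the choice of neighborhood in the contradiction argument: property~\ref{lem:GG-LSM-divergence} of Lemma~\ref{lem:GG-LSM} only asserts a single escape iterate, not permanent departure, so one must pick the comparison radius ($\delta/2$ above) strictly less than $\delta$ to turn the single escape into a contradiction with convergence. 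Once that calibration is made, everything else is bookkeeping with the already-established facts.
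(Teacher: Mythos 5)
Your proposal is correct and follows essentially the same route as the paper: both reduce the problem to the containment $\mathcal X_{\vec v} \subset \bigcup_{n \geq 0} \GG^{-n}(M_K)$, both obtain that containment by forcing a convergent trajectory into $M_K$ via Lemma~\ref{lem:GG-LSM}, and both conclude with Lemma~\ref{lem:Ginv-measure-0-maps} plus countable subadditivity. The only difference is presentational: the paper invokes Lemma~\ref{lem:GG-LSM} tersely to assert $\vec u(n) \in M_K$ for $n \geq N$, while you unpack it into an explicit contradiction argument with a $\delta/2$ margin (a margin which is actually unnecessary if one reads $B(\vec v,\delta)$ as an open ball, since $\norm{\vec u(n)-\vec v}<\delta$ already contradicts property~\ref{lem:GG-LSM-divergence}, but it is harmless and makes the argument robust to the closed-ball convention).
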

\begin{proof}
    In this proof, we denote repeated applications of the gradient iteration and its inverse by
  \begin{align*}
    \GG \rcomp k &= \underbrace{\GG \circ \cdots \circ \GG}_{\text{$k$ times}} &
    & \text{and} &
    \GG \rcomp {-k} &= \underbrace{\GG^{-1} \circ \cdots \circ \GG^{-1}}_{\text{$k$ times}}
  \end{align*}
  with $\GG \rcomp 0$ being the identity map.

  Let $U$, $M_K$, and $\delta > 0$ be as in \cref{lem:GG-LSM}.
  For each $\vec u(0) \in \mathcal X$, there exists $N > 0$ such that for all $n \geq N$, $\vec u(n) \in U \cap B(\vec v, \delta)$.
  \Cref{lem:GG-LSM} implies that $\vec u(n) \in M_K$ for all $n \geq N$.
  In particular, it follows that $\vec u(0) \in \GG \rcomp{-n}(M_K)$ for all $n \geq N$.
  As such, $\mathcal X_{\vec v} \subset \bigcup_{n = 0}^{\infty} \GG \rcomp{-n}(M_K)$.

  Since $\vol(M_K) = 0$, \cref{lem:Ginv-measure-0-maps} implies $\vol(\GG \rcomp{-n}(M_K)) = 0$ for all $n \in \N$.
  As such, $\vol( \mathcal X ) \leq \vol(\bigcup_{n=0}^\infty \GG \rcomp{-n}(M_K)) \leq \sum_{n = 0}^\infty \vol(\GG \rcomp{-n}(M_K)) = 0$.
\end{proof}

We now proceed in showing (in the case where $\dn = \dm$) that for almost any starting point $\vec u(0) \in \porth^{\dn - 1}$, there exists $i \in [\dn]$ such that $\vec u(n) \rightarrow \myZv_i$ as $n \rightarrow \infty$.
The essential ingredients are the preceding measure 0 argument from \cref{thm:vol0-unstable-pt-global} combined with \cref{prop:gi-expand-all,lem:unstable-sep} below.
In particular, \cref{thm:vol0-unstable-pt-global} implies non-convergence to the unstable fixed points of the dynamical system $\GG$ from almost any starting point,
\cref{prop:gi-expand-all} provides criteria under which coordinates of $\vec u(n)$ can be driven towards 0,
and \cref{lem:unstable-sep} below will serve as a bridge between the non-convergence to unstable fixed points of $\GG$ and the preconditions of \cref{prop:gi-expand-all} for demonstrating that all coordinates are driven to 0.

\begin{lem}\label{lem:unstable-sep}
  Let $\vec v \in \porth^{\dn - 1}$ be a fixed point of $\GG$, and let $\SS :=
  \{ i \suchthat v_i \neq 0 \}$.
  Let $\vec u \in \porth^{\dn - 1}$ be such that $\norm{\Pmap_{\compl \SS}
    \vec u} < \frac 1 2 \eta$ and such that $\norm{\vec u - \vec v} >
  \eta$.
  Then there exists $i \in \SS$ such that $u_i > (1+ \frac 1 4 \eta^2)v_i$ and $j \in \SS$.
\end{lem}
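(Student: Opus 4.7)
The plan is to argue by contradiction. Two quantities will do all the work: the inner product $\ip{\vec u}{\vec v}$, which measures angular closeness, and $\norm{\Pmap_\SS \vec u}^2$, which measures how much of $\vec u$'s mass sits in the support of $\vec v$. The hypothesis $\norm{\vec u - \vec v} > \eta$ forces $\ip{\vec u}{\vec v}$ to be small, while the hypothesis $\norm{\Pmap_{\compl \SS}\vec u} < \eta/2$ forces $\norm{\Pmap_\SS \vec u}^2$ to be nearly as large as $\norm{\vec v}^2 = 1$. Bounding the ratios $u_i/v_i$ too tightly then makes these two requirements incompatible.

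First I would record the two quantitative consequences of the hypotheses. Since $\vec u, \vec v \in S^{\dn - 1}$, expanding gives $\norm{\vec u - \vec v}^2 = 2 - 2\ip{\vec u}{\vec v}$, so $\ip{\vec u}{\vec v} < 1 - \eta^2/2$. Because $v_i = 0$ for $i \in \compl \SS$ and $\vec u, \vec v$ are non-negative on $\porth^{\dn-1}$, this simplifies to $\sum_{i \in \SS} u_i v_i < 1 - \eta^2/2$. On the other hand, $\norm{\Pmap_\SS \vec u}^2 = 1 - \norm{\Pmap_{\compl \SS}\vec u}^2 > 1 - \eta^2/4$, i.e.\ $\sum_{i \in \SS} u_i^2 > 1 - \eta^2/4$.

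Now assume for contradiction that $u_i \le (1 + \tfrac{1}{4}\eta^2) v_i$ for every $i \in \SS$. Setting $c := 1 + \eta^2/4$ and using $u_i, v_i \ge 0$, we get $u_i^2 = u_i \cdot u_i \le c v_i u_i$, hence $u_i v_i \ge u_i^2 / c$. Summing over $\SS$ and applying the lower bound on $\sum u_i^2$,
\[
  \sum_{i \in \SS} u_i v_i \;\ge\; \frac{1}{c}\sum_{i \in \SS} u_i^2 \;>\; \frac{1 - \eta^2/4}{1 + \eta^2/4}.
\]
A direct algebraic check shows $\frac{1-\eta^2/4}{1+\eta^2/4} > 1 - \eta^2/2$ (equivalently $\eta^4/8 > 0$), contradicting the upper bound $\sum_{i\in \SS} u_i v_i < 1 - \eta^2/2$ derived above. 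Therefore some $i \in \SS$ must violate the assumption, yielding $u_i > (1 + \tfrac{1}{4}\eta^2)v_i$ as claimed.

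For the companion claim (the existence of a $j \in \SS$ on which $\vec u$ falls short of $\vec v$, which is presumably what the truncated second clause intends), I would observe that $\sum_{i \in \SS} u_i^2 \le 1 = \sum_{i \in \SS} v_i^2$; combined with the existence of some $i$ with $u_i > v_i$, pigeonhole forces some $j \in \SS$ with $u_j < v_j$. There is no real obstacle here — the only mildly delicate point in the whole argument is the algebraic inequality $\frac{1-\eta^2/4}{1+\eta^2/4} > 1 - \eta^2/2$, which is immediate once one clears denominators.
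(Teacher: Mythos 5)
Your proof is correct and takes essentially the same route as the paper's: both expand $\norm{\vec u - \vec v}^2 = 2 - 2\ip{\vec u}{\vec v}$ to get $\sum_{i\in\SS} u_i v_i < 1 - \tfrac12\eta^2$, assume for contradiction $u_i \le (1+\tfrac14\eta^2)v_i$ on $\SS$ to lower-bound $\sum u_i v_i$ by $\tfrac{1}{1+\eta^2/4}\sum u_i^2 > \tfrac{1-\eta^2/4}{1+\eta^2/4}$, and close with the algebraic fact $\tfrac{1-\eta^2/4}{1+\eta^2/4} - (1-\tfrac12\eta^2) = \tfrac{\eta^4/8}{1+\eta^2/4} > 0$; the paper phrases this last step as $1 < (1+\tfrac14\eta^2)(1-\tfrac12\eta^2) + \tfrac14\eta^2 = 1 - \tfrac18\eta^4 < 1$, which is the same computation after clearing the denominator. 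Your reading of the truncated final clause and the pigeonhole observation for $j\in\SS$ with $u_j\le v_j$ matches Claim~\ref{claim:lt-fixed-coord}, which is what the downstream arguments actually use.
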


\begin{proof}
  Expanding $\norm{\vec u - \vec v}^2 > \eta^2$ yields $\norm{\vec u}^2 - 2
  \ip {\vec u}{\vec v} + \norm{\vec v}^2 > \eta^2$.
  Hence, $\sum_{i \in \SS} u_i v_i < 1 - \frac 1 2 \eta^2$ (since
  $\norm{\vec u}^2 = \norm{\vec v}^2 = 1$).
  Assume for the sake of contradiction that $u_i \leq (1+\epsilon) v_i$ for all
  $i \in \SS$ where $\epsilon \geq 0$ is arbitrary to be chosen later.
  Then, $\sum_{i \in \SS} u_i v_i \geq \frac{1}{1+\epsilon} \sum_{i \in \SS}
  u_i^2 = \frac 1 {1+\epsilon} (1-\norm{\Pmap_{\compl \SS} \vec u}^2)$.
  In particular, we obtain:
  \begin{align*}
    \frac 1 {1+\epsilon}(1 - \norm{\Pmap_{\compl \SS} \vec u}^2)
    & < 1 - \frac 1 2 \eta^2 \\
    1 &< (1+\epsilon)(1 - \frac 1 2 \eta^2) + \norm{\Pmap_{\compl \SS} \vec
      u}^2 \ .
  \end{align*}

  In particular, with the choices of $\epsilon < \frac 1 4 \eta^2$ and
  $\norm{\Pmap_{\compl \SS} \vec u}^2 < \frac 1 4 \eta^2$, we obtain that
  \begin{align*}
    1 &< (1+\epsilon)(1 - \frac 1 2 \eta^2) + \norm{\Pmap_{\compl \SS} \vec u}^2 \\
    &< (1 + \frac 1 4 \eta^2)(1-\frac 1 2 \eta^2) + \frac 1 4 \eta^2 \\
    &= 1 - \frac 1 4 \eta^2 - \frac 1 8 \eta^4 + \frac 1 4 \eta^2 < 1 \ ,
  \end{align*}
  which is a contradiction.
\end{proof}

\begin{thm}[Global attraction of the hidden basis]\label{thm:canonical-global-attraction}
  Suppose that $d = m$.
  There exists a set $\mathcal \XX \subset \porth^{\dn - 1}$ with $\vol(\mathcal X) = 0$ and the following property:
  If $\vec u(0) \in \porth^{\dn - 1} \setminus \mathcal \XX$, then there exists $i \in [\dm]$ such that $\vec u(n) \rightarrow \myZv_i$ as $n \rightarrow \infty$.
\end{thm}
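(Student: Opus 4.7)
The plan is to define $\mathcal X$ as a finite union of measure-zero sets tied to the boundary of $\porth^{\dn-1}$ and to the basins of attraction of the unstable fixed points of $\GG$, and then argue by contradiction that any trajectory starting outside $\mathcal X$ must eventually enter the basin of attraction of some $\myZv_i$. The argument combines the divergence criterion of Proposition~\ref{prop:gi-expand-all} with the impossibility of converging to an unstable fixed point (Theorem~\ref{thm:vol0-unstable-pt-global}), bridged by Lemma~\ref{lem:unstable-sep}.

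\textbf{Construction of $\mathcal X$.} Let $\partial \porth^{\dn-1} := \{ \vec u \in \porth^{\dn-1} : u_i = 0 \text{ for some } i \}$ be the orthant boundary, which is a finite union of lower-dimensional spheres and hence has volume measure zero. By Lemma~\ref{lem:stationary-pt-enum}, the stationary points $\vec v$ of $\GG\restr{\porth^{\dn-1}}$ with $|\SS_{\vec v}| \geq 2$ form a finite set, and Theorem~\ref{thm:vol0-unstable-pt-global} says each corresponding basin $\mathcal X_{\vec v}$ has measure zero. Define $\mathcal X := \partial \porth^{\dn-1} \cup \bigcup_{\vec v} \mathcal X_{\vec v}$; being a finite union of measure-zero sets, $\mathcal X$ has measure zero. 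For $\vec u(0) \in \porth^{\dn-1} \setminus \mathcal X$, Lemma~\ref{lem:G-IFT-precondition} gives $u_i(n) > 0$ for all $n$ and $i$, and by construction the full sequence $\vec u(n)$ does not converge to any unstable fixed point of $\GG$.

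\textbf{Main contradiction.} Fix $\epsilon_1 > 0$ small enough that $B(\myZv_i, \epsilon_1)$ lies in the basin of $\myZv_i$ for each $i$ (which exists by Proposition~\ref{prop:ei-attractors}); it then suffices to show the trajectory eventually enters $\bigcup_i B(\myZv_i, \epsilon_1)$. Suppose for contradiction this fails, so $\vec u(n) \notin \bigcup_i B(\myZv_i, \epsilon_1)$ for every $n$. Choose $\tau > 0$ small (to be tuned) and set $\SS(n) := \{i : u_i(n) > \tau\}$; a direct geometric estimate shows $|\SS(n)| \geq 2$ for every such $n$ once $\tau$ is small enough. Pigeonhole over the finite collection of subsets of $[\dn]$ produces some $\SS$ with $|\SS| \geq 2$ such that $\SS(n_k) = \SS$ along an infinite subsequence $n_k$. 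Let $\vec v$ be the unique stationary point with $\SS_{\vec v} = \SS$ (Lemma~\ref{lem:stationary-pt-enum}). Using that $\vec u(0) \notin \mathcal X_{\vec v}$, and combining with the local hyperbolic structure of Lemma~\ref{lem:GG-LSM}, I would extract a sub-subsequence $n'_j$ at which both $\norm{\Pmap_{\compl \SS}\vec u(n'_j)} \leq \tau\sqrt{\dn}$ and $\norm{\vec u(n'_j) - \vec v} \geq \eta$ for some fixed $\eta > 0$; Lemma~\ref{lem:unstable-sep} then produces $i \in \SS$ with $u_i(n'_j) > (1+\eta^2/4)v_i$. Choosing $\tau$ also below the $\epsilon$ of Proposition~\ref{prop:gi-expand-all} uniformly over the finitely many candidate $\vec v$, that proposition restarted at $n'_j$ forces $u_\ell(n) \to 0$ for some $\ell \in \SS$, contradicting $u_\ell(n_k) > \tau$ for all $k$.

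\textbf{Main obstacle.} The hardest part is the sub-subsequence extraction: ensuring escape times $n'_j$ at which the $\compl\SS$-projection of $\vec u(n'_j)$ is still small while $\norm{\vec u(n'_j) - \vec v}$ is bounded below by a fixed positive constant. The difficulty arises specifically when the pigeonhole subsequence itself satisfies $\vec u(n_k) \to \vec v$, in which case no uniform lower bound on $\norm{\vec u(n_k) - \vec v}$ is available directly. This is resolved by the hyperbolicity of $\vec v$: by Lemma~\ref{lem:GG-LSM} property~\ref{lem:GG-LSM-divergence}, since $\vec u(n)$ avoids the local stable manifold $M_K$ of $\vec v$ (as in the proof of Theorem~\ref{thm:vol0-unstable-pt-global}), any entry into a small neighborhood of $\vec v$ is followed by a forced escape to radius $\geq \delta$, and by the contraction of the $\range(\Pmap_{\compl \SS})$ directions in the linearization at $\vec v$ (Lemma~\ref{lem:crit-point-eigendecomp}), the $\compl\SS$-component of the trajectory remains small throughout the escape. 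Calibrating the various thresholds $\tau,\eta,\delta$ uniformly across the finitely many candidate unstable fixed points closes the argument.
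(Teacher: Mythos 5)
Your construction of $\mathcal X$, your geometric estimate showing $|\SS(n)|\geq 2$ away from the $B(\myZv_i,\epsilon_1)$, and your closing use of Lemma~\ref{lem:unstable-sep} followed by Proposition~\ref{prop:gi-expand-all} are all correct and match the paper's machinery; the paper organizes the argument as a recursion (Claim peeling coordinates off of $\SS_0=[\dm]$ one at a time) while you run a contradiction with a pigeonholed minimal $\SS$, but the two structures are equivalent for this purpose.

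The one substantive issue is the ``main obstacle'' paragraph, where you invoke the hyperbolicity of $\vec v$ (Lemmas~\ref{lem:crit-point-eigendecomp} and~\ref{lem:GG-LSM}) to secure a sub-subsequence bounded away from $\vec v$. That machinery is unnecessary at this stage, and the route you sketch would not work as written: if the pigeonhole times $n_k$ satisfied $\vec u(n_k)\to\vec v$, then \emph{no} sub-subsequence of $\{n_k\}$ is bounded away from $\vec v$, so ``sub-subsequence extraction'' from $\{n_k\}$ is the wrong move. The correct observation is much simpler and is exactly what the paper uses: $\XX_{\vec v}$ is \emph{defined} as the set of starting points whose forward orbit converges to $\vec v$, so $\vec u(0)\notin\XX_{\vec v}$ immediately means the full sequence $\vec u(n)\not\to\vec v$, which yields some $\eta>0$ and infinitely many times $m$ of the full sequence with $\norm{\vec u(m)-\vec v}\geq\eta$. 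Separately, once $\SS(n)$ has stabilized at $\SS$, Corollary~\ref{cor:zero-coord-attraction} gives $\norm{\Pmap_{\compl\SS}\vec u(n)}\to0$ monotonically along the \emph{entire} tail, so for all sufficiently large $n$ (not just a subsequence) we have $\norm{\Pmap_{\compl\SS}\vec u(n)}\leq\tfrac12\eta$. Intersecting these two events at a single large $m$ gives the preconditions of Lemma~\ref{lem:unstable-sep} and then Proposition~\ref{prop:gi-expand-all}, closing the contradiction. Local stable manifolds and the contraction of the $\range(\Pmap_{\compl\SS})$-directions near $\vec v$ are only needed upstream, in the proof that $\vol(\XX_{\vec v})=0$ — having paid for that once, you should not pay again here.

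Two minor remarks: the orthant boundary $\partial\porth^{\dn-1}$ does not actually need to be added to $\mathcal X$, since a zero coordinate is preserved by $\GG$ and the argument restricts correctly to the lower-dimensional face (the paper omits it); and for $\tau$ you should explicitly say that it is taken below the attraction threshold of Lemma~\ref{lem:zero-coord-attraction}/Corollary~\ref{cor:zero-coord-attraction} so that $n\mapsto\SS(n)$ is eventually non-increasing, which is what upgrades ``pigeonhole subsequence'' to ``eventually constant'' and is needed when you later contradict $u_\ell(n)\to 0$ for some $\ell\in\SS$.
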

\begin{proof}
  Let $\gvec \mu : 2^{[\dm]} \rightarrow \porth^{\dn - 1}$ (denoting by $2^{[\dm]}$ the power set of $[\dm]$) be the map which takes $\SS \subset [\dm]$ to $\gvec \mu(\SS)$ the stationary point of $\GG$ in $\porth^{\dn - 1}$ such that $\mu_i(\SS) \neq 0$ if and only if $i \in \SS$.
  We define $\mathcal \XX_{\gvec \mu(\SS)}$ as in \cref{thm:vol0-unstable-pt-global}.
  Let $\mathcal \XX := \bigcup \{\XX_{\gvec \mu(\SS)} \suchthat \SS \subset [\dm], \abs \SS \geq 2\}$.
  Using \cref{thm:vol0-unstable-pt-global}, we see that $\vol(\mathcal \XX) \leq \sum_{\SS \subset [\dm], \abs{\SS}\geq 2} \vol(\gvec \mu(\SS)) = 0$.
  It remains to be seen that $\vec u(0) \not \in \mathcal \XX$ implies the existence of $i \in [\dm]$ such that $\vec u(n) \rightarrow \myZv_i$ as $n \rightarrow \infty$.
  The main idea behind the proof is to demonstrate various coordinates of $\vec u(n)$ approach 0 until only one coordinate remains separated from $0$.
  We will recurse on the following Claim.
  \begin{claim}\label{claim:NCZ}
    Let $\SS \subset [\dm]$ be such that $\abs \SS \geq 2$.
    If $u_i(n) \rightarrow 0$ for all $i \in \compl \SS$ as $n \rightarrow \infty$,
    then there exists $j \in \SS$ such that $u_j(n) \rightarrow 0$ as $n \rightarrow \infty$.
  \end{claim}
  \begin{subproof}
    Fix $\vec v = \gvec \mu(\SS)$.
    Since $\vec u(0) \not \in \XX_{\vec v}$, there exists $\eta > 0$ and an infinite subsequence $n_0, n_1, n_2, n_3, \dotsc$ of $\N$ such that $\norm{\vec u(n_i) -\vec v} \geq \eta$ for each $i \in \N$.
    Further, since $\norm{\Pmap_{\compl \SS} \vec u(n)} \rightarrow 0$ as $n \rightarrow \infty$, there exists $N \in \N$ such that $\norm{\Pmap_{\compl \SS} \vec u(n)} \leq \frac 1 2 \eta$ for all $n \geq N$.
    Choose $i \in \N$ such that $n_i \geq N$.
    By \cref{lem:unstable-sep}, there exists $j \in \SS$ such that $u_j(n_i) > v_j$.
    Thus, \cref{prop:gi-expand-all} implies the existence of $k \in \SS$ such that $u_k(n) \rightarrow 0$ as $n \rightarrow \infty$.
  \end{subproof}
  We set $\SS_0 = [\dm]$.
  Using \cref{claim:NCZ}, we see that there exists $i \in [\dm]$ such that $u_i(n) \rightarrow 0$ as $n \rightarrow \infty$.
  We construct $\SS_1 = \SS_0 \setminus \{ i \}$.

  By repeating this application of \cref{claim:NCZ}, we can construct a strictly decreasing sequence $\SS_0 \supset \SS_1 \supset \cdots \supset \SS_{\dm - 1}$ such that for each $k$, $\abs{\SS_k} = \dm - k$ and for all $i \in \SS_k$ $u_i(n) \rightarrow 0$ as $n \rightarrow \infty$.
  As $\norm{ \Pmap_{\SS_{\dm - 1}} \vec u(n)}^2 + \norm{ \Pmap_{\compl \SS_{\dm - 1}} \vec u(n)}^2 = 1$ with $\Pmap_{\compl \SS_{\dm - 1}} \vec u(n) \rightarrow \vec 0$ as $n \rightarrow \infty$, it follows that $\norm{\Pmap_{\SS_{\dm - 1}} \vec u(n)}^2 \rightarrow 1$ as $n \rightarrow \infty$.
  Letting $j$ be the lone element in $\SS_{\dm -1}$, we see that $\vec u(n) \rightarrow \myZv_j$ as $n \rightarrow \infty$.
\end{proof}

We now extend our result from \cref{thm:canonical-global-attraction} to the general setting in which $\dn \geq \dm$.

\begin{thm}\label{thm:canonical-global-attraction2}
  Suppose that $1 \leq \dm \leq \dn$.
  There exists a set $\mathcal X \subset \porth^{\dn - 1}$ with $\vol_{\dn-1}(\mathcal X) = 0$ which has the following property: If $\vec u(0) \not \in \mathcal \XX$, then there exists $i \in [\dm]$ such that $\vec u(0) \rightarrow \myZv_i$ for some $i \in [\dm]$.
\end{thm}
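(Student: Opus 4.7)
My plan is to reduce to the $\dn=\dm$ case of Theorem~\ref{thm:canonical-global-attraction} by exploiting that after a single gradient iteration the ``dummy'' coordinates in $[\dn]\setminus[\dm]$ are killed. Since $g_i\equiv 0$ for $i\in[\dn]\setminus[\dm]$, one has $\nabla \fg(\vec u)=\sum_{i=1}^\dm 2h_i'(u_i^2)u_i\myZv_i\in\mathrm{span}\{\myZv_1,\ldots,\myZv_\dm\}$ for every $\vec u\in\porth^{\dn-1}$, so $\GG(\vec u)$ lies in this $\dm$-dimensional subspace whenever $\nabla \fg(\vec u)\ne \vec 0$. Using Lemma~\ref{lem:h-props}, the set $\mathcal N:=\{\vec u\in\porth^{\dn-1}:\nabla\fg(\vec u)=\vec 0\}$ equals $\porth^{\dn-1}\cap\mathrm{span}\{\myZv_{\dm+1},\ldots,\myZv_\dn\}$, a sphere of dimension $\dn-\dm-1$, so $\vol_{\dn-1}(\mathcal N)=0$. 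I regard $\fg$ as a PBEF $F_\dm$ on $\R^\dm$ with full hidden basis $\myZv_1,\ldots,\myZv_\dm$, let $\mathcal X_\dm\subset \porth^{\dm-1}$ be the measure-zero bad set provided by Theorem~\ref{thm:canonical-global-attraction} for $F_\dm$, and take $\mathcal X:=\mathcal N\cup \GG^{-1}(\mathcal X_\dm)$. Correctness of the dynamics is then essentially immediate: for $\vec u(0)\notin \mathcal X$, the iterate $\vec u(1)=\GG(\vec u(0))$ lies in $\porth^{\dm-1}\setminus \mathcal X_\dm$ (naturally embedded in $\porth^{\dn-1}$); subsequent iterates remain in $\porth^{\dm-1}$ since $\GG$ restricted there coincides with the iteration for $F_\dm$; Theorem~\ref{thm:canonical-global-attraction} then gives $\vec u(n)\to \myZv_i$ for some $i\in[\dm]$.

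The real work is showing $\vol_{\dn-1}(\GG^{-1}(\mathcal X_\dm))=0$. I would use the join decomposition of $\porth^{\dn-1}$: almost every $\vec u$ has a unique representation $\vec u=\cos(t)\vec v+\sin(t)\vec w$ with $t\in(0,\pi/2)$, $\vec v\in \porth^{\dm-1}$, $\vec w\in\porth^{\dn-\dm-1}$, and the spherical measure decomposes as
\[
d\vol_{\dn-1}(\vec u)=\cos^{\dm-1}(t)\sin^{\dn-\dm-1}(t)\,dt\,d\vol_{\dm-1}(\vec v)\,d\vol_{\dn-\dm-1}(\vec w).
\]
Under this parameterization $\GG(\vec u)$ depends only on $(t,\vec v)$ and equals $\widetilde\GG_t(\vec v)$, the gradient iteration of the rescaled function $\widetilde F_t(\vec v):=\fg(\cos(t)\vec v)=\sum_{i=1}^\dm g_i(\cos(t)v_i)$ on $\R^\dm$. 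For each $t\in(0,\pi/2)$, $\widetilde F_t$ is itself a PBEF with $\dn'=\dm'=\dm$: the rescaled contrasts $\widetilde g_i(s):=g_i(\cos(t)s)$ inherit the even/odd symmetry, and the strict convexity of $x\mapsto\abs{g_i(\sqrt x)}$ on $\pinterval$ transfers to $x\mapsto\abs{\widetilde g_i(\sqrt x)}=\abs{g_i(\sqrt{\cos^2(t)\,x})}$ via the positive linear reparameterization $x\mapsto \cos^2(t)\,x$. Lemmas~\ref{lem:G-bijective} and~\ref{lem:Ginv-measure-0-maps} applied to $\widetilde F_t$ then give $\vol_{\dm-1}(\widetilde\GG_t^{-1}(\mathcal X_\dm))=0$ for every such $t$, and Fubini closes the argument.

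The main obstacle I anticipate is the bookkeeping needed to verify the PBEF assumptions for the rescaled family $\{\widetilde F_t\}_{t\in(0,\pi/2)}$ uniformly in $t$, which must be in place before Lemmas~\ref{lem:G-bijective} and~\ref{lem:Ginv-measure-0-maps} can be invoked with $\fg$ replaced by $\widetilde F_t$. The boundary parameters $t\in\{0,\pi/2\}$ cause no trouble: together they form a $\vol_{\dn-1}$-null subset of $\porth^{\dn-1}$, and the weight $\cos^{\dm-1}(t)\sin^{\dn-\dm-1}(t)$ vanishes there in any event.
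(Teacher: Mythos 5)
Your proposal is correct and follows essentially the same slicing strategy as the paper: both fiber $\porth^{\dn-1}$ over the ``null'' directions $\spn\{\myZv_{\dm+1},\dotsc,\myZv_\dn\}$, observe that on each fiber $\GG$ restricts to the gradient iteration of a rescaled {\psef} in $\dm$ variables (the paper's $\mathfrak F(\vec p,\cdot)$ with contrasts $g_i(t\sqrt{1-\norm{\vec p}^2})$ is literally your $\widetilde F_t$ with $\cos t = \sqrt{1-\norm{\vec p}^2}$), apply Lemma~\ref{lem:Ginv-measure-0-maps} fiberwise, and integrate to conclude $\vol_{\dn-1}(\GG^{-1}(\mathcal X_\dm))=0$, finally throwing in the null set where $\nabla F=\vec 0$. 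The only cosmetic difference is your use of join/polar coordinates $(t,\vec v,\vec w)$ in place of the paper's linear coordinate $\vec p$ on the ball $A$, and the paper also explicitly isolates the trivial $\dm=1$ case before running the measure-theoretic argument, which you should do too to keep the manifold machinery ($S^{\dm-1}$) cleanly applicable.
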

\begin{proof}
  We note that the case that $\dm = 1$ is trivial as $\GG(\vec u) = \myZv_1$ for all $\vec u \in \porth^{\dn - 1} \setminus \myZv_1^\perp$, and since $\myZv_1^\perp$ has volume 0.
  We assume without loss of generality that $\dm \geq 2$, thus making $S^{\dm - 1}$ a smooth manifold.

  Throughout this proof, we will treat $\R^\dm$ as a subset of $\R^\dn$ within $\spn\{\myZv_i \suchthat i \in [\dm]\}$ by mapping $(x_1, \dotsc, x_{\dm}) \mapsto (x_1, \dotsc, x_\dm, 0, \dotsc, 0)$ so that we can abuse notation and have $\vec x \in \R^{\dm}$ also part of the domain $\R^\dn$.
  In particular, we also will view $S^{\dm - 1} \subset S^{\dn - 1}$ in this fashion.

  We first construct a new family of {\sef}s.
  In particular, we let $A := B(\vec 0, 1) \cap \spn\{ \myZv_i \suchthat i \not \in [\dm] \}$ (with $B(\vec 0, 1)$ the open ball of radius 1 in $\R^\dn$).
  We define the functions $\mathfrak g_i : A \times \R$ by $\mathfrak g_i(\vec p, t) := g_i(t\sqrt{1 - \norm{\vec p}^2})$, $\mathfrak F : A \times \R^\dm$ by $\mathfrak F(\vec p, \vec u) = \sum_{i = 1}^\dm \mathfrak g_i(\vec p, u_i)$, and $\mathfrak G : A \times \porth^{\dm - 1} \rightarrow \porth^{\dm - 1}$ such that $\mathfrak G(\vec p, \bullet)$ is the gradient iteration function associated with $\mathfrak F(\vec p, \bullet)$.
  Notice that the functions $\mathfrak F(\vec p, \bullet)$ are {\sef}s.
  Further, it can be verified that $\mathfrak G(\vec p, \vec u) = \GG( \vec p + \vec u \sqrt{1 - \norm{\vec p}^2})$.
  It will sometimes be more convenient to use a more pure function notation, and we thus define $\mathfrak G_{\vec p} := \mathfrak G(\vec p, \bullet)$.

  Define $\mathcal X_\dm$ as $\mathcal X$ from \cref{thm:canonical-global-attraction} for the function $\mathfrak G(\vec 0, \bullet) = \GG \restr{\spn\{\myZv_i \suchthat i \in [\dm] \}}$.
  We note that $\vol_{\dm-1}(\mathcal X_\dm) = 0$.
  By \cref{lem:Ginv-measure-0-maps}, we see that $\vol_{\dm-1}(\mathfrak G_{\vec p}^{-1}(\mathcal X_{\dm})) = 0$ for any $\vec p \in A$.
  As such,
  \begin{align*}
    \vol_{\dn-1}( \GG^{-1}(\mathcal X_{\dm}) )
    &= \int_{\vec p \in A} (1 - \norm{\vec p}^2)^{\dm/2}\vol_{\dm-1}( \mathfrak G_{\vec p}^{-1}(\mathcal X_{\dm}) ) d \vec p = 0 \ .
  \end{align*}

  We define $\mathcal X := \GG^{-1}(X_{\dm}) \cup \{\vec u \suchthat u_i = 0 \text{ for all } i \in [\dm] \}$.
  Note that
  \begin{displaymath}
    \vol_{\dn-1}(\mathcal X) \leq \vol_{\dn-1}(\GG^{-1}(\mathcal X_{\dm})) + \vol_{\dn-1}(\{\vec u \suchthat u_i = 0 \text{ for all } i \in [\dm] \}) = 0 \ .
  \end{displaymath}
  Also note that for any $\vec u(0) \not \in \mathcal X$, $\vec u(1) \in \porth^{\dm - 1}$ and $\vec u(1) \not \in \mathcal X_{\dm}$.
  Applying \cref{thm:canonical-global-attraction} to the sequence $\{ \vec u(n) \}_{n=1}^\infty$ with gradient iteration function $\GG\restr{\porth^{\dm - 1}}$, we obtain that $\vec u(n) \rightarrow \myZv_i$ for some $i \in [\dm]$.
\end{proof}

Using the symmetries of the gradient iteration (\cref{prop:gi-sym-classes}), \cref{thm:gi-stability} is implied by \cref{prop:ei-attractors,thm:canonical-global-attraction2}.

\subsection{Fast convergence of the gradient iteration}
\label{sec:gi-convergence}

We now proceed with the proof of \cref{thm:gi-convergence}.
The stability analysis relied on the change of variable $ \vec u \mapsto (u_i^2) $ (which gave rise to the definitions of $h_i$ for $i \in [\dn]$) due the fact that for each $i \in [\dm]$, $g_i( x ^ {1/2} )$ is convex on $\pinterval$.
The fast convergence of the gradient iteration algorithm relies on a more general change of variable $\vec u \mapsto (u_i^r)$ where $r\geq 2$, and in particular it is assumed that $g_i(x ^ {1/r})$ is convex on $\pinterval$ for each $i \in [\dm]$.
We encode this potentially stronger convexity constraint within our {\psef} by extending the definition of the $h_i$s from \cref{sec:extrema-structure} to the more general family of maps $\gamma_{ir} : \pinterval \rightarrow \R$ defined by $\gamma_{ir}(x) := g_i(x^{\frac 1 r})$ for $i \in [\dm]$ and $\gamma_{ir} = 0$ for $i \not \in [\dm]$.
We note that $h_i = \gamma_{i2}$ on $\pinterval$ for each $i \in [\dn]$.
We then write
\begin{equation}
  \label{eq:fg-gamma-vers}
  \fg(\vec u) = \sum_{i = 1}^\dm g_i(u_i) = \sum_{i=1}^\dm \gamma_{ir}(u_i^r) \ ,
\end{equation}
where each $\gamma_{ir}$ is a convex function.

\begin{lem} \label{lem:convexity-deriv-equivs}
  For all $i \in [\dm]$, 
  $\gamma_{ir}'(x) = \frac 2 r \gamma_{i2}'(x^{\frac 2 r})x^{\frac {2-r} r}$ on the domain $(0, 1]$.
\end{lem}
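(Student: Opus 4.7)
The plan is to prove the identity by a direct chain-rule computation, since the lemma simply relates two different one-parameter families of reparameterizations of the same function $g_i$. I would first differentiate both sides explicitly in terms of $g_i$, then match.

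First, I would unpack the definition. Since $\gamma_{ir}(x) = g_i(x^{1/r})$ for $i \in [\dm]$ and $x \in \pinterval$, the chain rule gives
\begin{equation*}
  \gamma_{ir}'(x) = g_i'(x^{1/r}) \cdot \tfrac{1}{r} x^{1/r - 1} = \tfrac{1}{r}\, g_i'(x^{1/r})\, x^{(1-r)/r}
\end{equation*}
for $x \in (0,1]$. In particular, the same formula with $r=2$ gives $\gamma_{i2}'(y) = \tfrac{1}{2} g_i'(y^{1/2}) y^{-1/2}$.

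Next I would substitute $y = x^{2/r}$ into the second formula to rewrite $\gamma_{i2}'(x^{2/r})$ in terms of $g_i'(x^{1/r})$. Since $(x^{2/r})^{1/2} = x^{1/r}$ and $(x^{2/r})^{-1/2} = x^{-1/r}$, we obtain
\begin{equation*}
  \gamma_{i2}'(x^{2/r}) = \tfrac{1}{2}\, g_i'(x^{1/r})\, x^{-1/r}.
\end{equation*}
Multiplying by $\tfrac{2}{r} x^{(2-r)/r}$ collapses the exponents via $-1/r + (2-r)/r = (1-r)/r$, yielding
\begin{equation*}
  \tfrac{2}{r}\, \gamma_{i2}'(x^{2/r})\, x^{(2-r)/r} = \tfrac{1}{r}\, g_i'(x^{1/r})\, x^{(1-r)/r},
\end{equation*}
which matches the expression for $\gamma_{ir}'(x)$ derived in the first step.

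There is essentially no obstacle here: the restriction to $(0,1]$ is exactly what ensures all the fractional powers and negative exponents of $x$ are well-defined, and Assumption~\ref{assumpt:deriv0} together with continuity of $g_i'$ guarantees that the chain rule applies inside $(0,1]$ without boundary issues. The only thing to be slightly careful about is exponent bookkeeping, but as shown above a single substitution $y = x^{2/r}$ makes the two sides visibly agree.
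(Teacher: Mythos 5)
Your proof is correct and matches the paper's approach: both are direct chain-rule computations that express $\gamma_{ir}'$ and $\gamma_{i2}'$ in terms of $g_i'$ and then verify the identity by elementary exponent arithmetic. The only cosmetic difference is that the paper regroups the formula for $\gamma_{ir}'$ directly, whereas you substitute $y = x^{2/r}$ into $\gamma_{i2}'(y)$ and multiply through; these are the same calculation.
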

\begin{proof}
  This is by direct computation.  We have the formulas:
  \begin{align*}
    \gamma'_{i2}(x) &= \frac 1 2 g_i'(x ^{\frac 1 2})x^{-\frac 1 2} &
    \gamma'_{ir}(x) &= \frac 1 r g_i'(x ^{\frac 1 r})x^{\frac {1-r} r}
  \end{align*}
  We may rewrite $\gamma'_{ir}(x)$ as follows:
  \begin{displaymath}
    \gamma'_{ir}(x) = \frac 2 r \cdot \frac 1 2 g_i'((x^{\frac 2 r})^{\frac 1 2})(x^{\frac 2 r})^{- \frac 1 2} x^{\frac {2-r} r}
    = \frac 2 r \gamma'_{i2}(x^{\frac 2 r}) x^{\frac {2-r} r} \ . \qedhere
  \end{displaymath}

\end{proof}

\begin{prop}\label{prop:gi-convergence}
  Suppose that $\{\vec u(n)\}_{n=0}^\infty$ is a sequence in $\porth^{\dn-1}$ defined recursively by $\vec u(n) = \GG(\vec u(n-1))$ which converges to a $\myZv_j$ for some $j \in [\dm]$.  Then, the following hold:
  \begin{compactenum}
    \item The sequence $\{\vec u(n)\}_{n=0}^\infty$ converges to $\myZv_j$ at a super-linear rate.
    \item Fix $r \geq 2$.
      If $x \mapsto g_i(x^{\frac 1 r})$ is convex for every $i \in [\dm]$,
      then $\{\vec u(n)\}_{n=0}^\infty$ converges to $\myZv_j$ with order of convergence at least $r - 1$.
  \end{compactenum}
\end{prop}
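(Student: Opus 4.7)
}

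The plan is to work coordinate-by-coordinate in the hidden basis, exploit the representation $F(\vec u)=\sum_{i=1}^\dm \gamma_{ir}(u_i^r)$ from \eqref{eq:fg-gamma-vers}, and then translate a coordinate-wise contraction into the corresponding rate on the sphere. For $u_i>0$ the chain rule gives $\partial_i F(\vec u)=r\,\gamma_{ir}'(u_i^r)\,u_i^{r-1}$, so the gradient iteration reads
\begin{equation*}
u_i(n+1) \;=\; \frac{r\,\gamma_{ir}'(u_i(n)^r)\,u_i(n)^{r-1}}{\|\nabla F(\vec u(n))\|}, \qquad i\in[\dn].
\end{equation*}
For the passage between coordinates and sphere distance I will use
$\|\vec u-\myZv_j\|^2 = 2(1-u_j) = \tfrac{2}{1+u_j}\sum_{i\neq j}u_i^2$,
so once $u_j(n)\geq 1/2$ we have $\|\vec u(n)-\myZv_j\|^2 \asymp \sum_{i\neq j}u_i(n)^2$ up to constant factors.

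\emph{Order $r-1$ (part 2).} Since $\vec u(n)\to\myZv_j$, one has $u_j(n)\to 1$ and $\|\nabla F(\vec u(n))\|\to |g_j'(1)| = r|\gamma_{jr}'(1)|>0$, where positivity uses strict convexity of $\gamma_{j2}=h_j$ together with $h_j'(0)=0$ from Lemma~\ref{lem:h-props}. Hence there exists $L>0$ such that $\|\nabla F(\vec u(n))\|\geq L$ for all large $n$. By convexity of $\gamma_{ir}$ on $[0,1]$, its derivative $\gamma_{ir}'$ is monotone on $(0,1]$ and therefore bounded there by some constant $M_i$; combining these facts yields
\begin{equation*}
u_i(n+1) \;\leq\; C\,u_i(n)^{r-1}, \qquad i\neq j,
\end{equation*}
with $C=r(\max_i M_i)/L$. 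Summing and invoking the super-additivity inequality $\sum_k a_k^{p}\leq(\sum_k a_k)^{p}$ for $p=r-1\geq 1$ and $a_k\geq 0$,
\begin{equation*}
\sum_{i\neq j} u_i(n+1)^2 \;\leq\; C^2 \sum_{i\neq j}(u_i(n)^2)^{r-1} \;\leq\; C^2\Bigl(\sum_{i\neq j} u_i(n)^2\Bigr)^{r-1}.
\end{equation*}
The coordinate-distance comparison then gives $\|\vec u(n+1)-\myZv_j\|\leq C'\|\vec u(n)-\myZv_j\|^{\,r-1}$ for large $n$, i.e.\ order of convergence at least $r-1$.

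\emph{Super-linearity (part 1).} The generic BEF case corresponds to $r=2$, so the bound above degenerates to a linear one. The key extra fact is $\gamma_{i2}'(0)=h_i'(0)=0$ (Lemma~\ref{lem:h-props}\ref{lem:h-props:hi-deriv0}), whence for $i\neq j$
\begin{equation*}
\frac{u_i(n+1)}{u_i(n)} \;=\; \frac{2\,h_i'(u_i(n)^2)}{\|\nabla F(\vec u(n))\|} \;\xrightarrow{n\to\infty}\; \frac{2\cdot 0}{2|h_j'(1)|} \;=\; 0.
\end{equation*}
Thus $\max_{i\neq j} u_i(n+1)/u_i(n)\to 0$, which together with the sphere-distance comparison yields $\|\vec u(n+1)-\myZv_j\|/\|\vec u(n)-\myZv_j\|\to 0$, i.e.\ super-linear convergence.

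\emph{Main obstacle.} The technical heart is the uniform bound on $|\gamma_{ir}'|$ over $(0,1]$. Monotonicity immediately yields the upper bound $\gamma_{ir}'(x)\leq\gamma_{ir}'(1)$; for the lower bound one must rule out $\gamma_{ir,+}'(0)=-\infty$. Under Assumption~\ref{assumpt:convex} (convex case), $h_i\geq 0$ on $[0,1]$, hence $g_i\geq 0$ there, so $\gamma_{ir}\geq 0$ with $\gamma_{ir}(0)=0$ forces $\gamma_{ir,+}'(0)\geq 0$ and the bound is clean. The concave case of Assumption~\ref{assumpt:convex} is handled symmetrically by replacing $F$ with $-F$ (equivalently, passing to the associated PBEF $\bar F$ from Section~\ref{sec:gi-stability-structure}), which changes nothing in the iteration because of Proposition~\ref{prop:gi-sym-classes}. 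Everything else --- the lower bound on $\|\nabla F(\vec u(n))\|$, the super-additivity inequality, and the coordinate-distance comparison --- is routine.
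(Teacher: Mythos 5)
Your proof is correct and follows essentially the same route as the paper's: both expand $F$ via the $\gamma_{ir}$ representation from equation~\eqref{eq:fg-gamma-vers}, obtain a coordinate-wise contraction $u_i(n+1)\lesssim u_i(n)^{r-1}$ for $i\neq j$, and convert this to a rate on the sphere, with super-linearity in both cases ultimately traceable to $h_i'(0)=0$ (Lemma~\ref{lem:h-props}). The only difference is bookkeeping: your exact identity $\norm{\vec u-\myZv_j}^2 = 2(1-u_j)$ is a slightly cleaner passage between coordinates and sphere distance than the paper's $\ell^1$ detour (which introduces a factor $\dn$), and for the order-$(r-1)$ constant you use the coarse bound $\gamma_{ir}'\leq\gamma_{ir}'(1)$ where the paper tracks the sharper $L=\max_{i\neq j}\lim_{x\to 0^+}\gamma_{ir}'(x)$, whose vanishing at $r=2$ is the paper's version of your direct limit computation for super-linearity.
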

\begin{proof}
  It is sufficient to consider a sequence converging to $\myZv_1$.  If
  there exists $n_0$ such that $\vec u(n_0) = \myZv_1$, then there is nothing
  to prove as $\myZv_1$ is a stationary point of $\GG$.  So, we assume that
  $\vec u(n) \neq \myZv_1$ for all $n \in \N$.

  Taking derivatives of $\fg$ from \cref{eq:fg-gamma-vers}, we get:
  $\partial_i F(\vec v) = r \gamma'_{ir}(v_i^r) v_i^{r-1}$.
  We will make use of the following ratios in analyzing the rate of convergence of $\vec u(n)$:
  \begin{displaymath}
    \rho(i, j; n) := \frac{u_i(n)}{u_j(n)} = \frac{\gamma_{ir}'(u_i(n-1)^r)u_i(n-1)^{r-1}}{\gamma_{jr}'(u_j(n-1)^r)u_j(n-1)^{r-1}} \ .
  \end{displaymath}

  Define $U = \gamma_{1r}'(1)$ and $L = \max_{j \neq 1} \{ \lim_{x \rightarrow 0^+} \gamma_{jr}'(x)\}$.
  We note that the strict convexity of $x \mapsto g_i(\sqrt x)$ (for $i \in [\dm]$) implies
  that $\gamma_{i2}'(1) > 0$, and since \cref{lem:convexity-deriv-equivs} implies $\gamma_{ir}'(1) = \frac 2 r \gamma_{i2}'(1) > 0$, it follows that $U > 0$.
  Since $\gamma_{ir}$ is convex, $\gamma_{jr}'$ is a non-decreasing function.
  It follows that $L$ is well defined and is also equal to $\max_{j\neq 1}\{\inf_{x > 0} \gamma_{jr}'(x) \}$.
  Finally, noting that $\gamma_{i2}'$ is non-negative on $\pinterval$ (indeed, $\gamma_{i2}'$ is increasing from $\gamma_{i2}'(0) = 0$ by \cref{lem:h-props}), it follows from \cref{lem:convexity-deriv-equivs} that $\gamma'_{ir}(x) \geq 0$ for all $x > 0$, and in particular $L \geq 0$.

 Fix $\epsilon \in (0, \frac 1 2 U)$.  There exists $\delta > 0$ such that:
 \begin{enumerate}
 \item If $\vec v \in \porth^{\dn - 1}$ is such that $1 - v_1 < \delta$, then $\gamma'_{1r}(u_1) > U - \epsilon$.
   The existence of such a choice for $\delta$ is implied by the continuity of $g_1'$ and hence $\gamma'_{1r}$ near 1.
 \item If $\vec v \in \porth^{\dn - 1}$ is such that $v_j < \delta$ for some $j \neq 1$, then $\gamma'_{jr}(u_j) < L + \epsilon$.
   The existence of such a $\delta$ follows from the characterization of $L$ as $\max_{j\neq 1}\{\inf_{x > 0} \gamma_{jr}'(x) \}$ and $\gamma_{jr}'$ being monotonic on $\pinterval$.
 \end{enumerate}
  Fix $N$ sufficiently large that for each $n \geq N$, $\norm[1]{\myZv_1 - \vec u(n)} < \delta$.
  With any fixed $j \neq 1$ and $n \geq N+1$, it follows that
  \begin{equation}\label{eq:ratio-bound}
    \rho(j, 1; n) = \frac{\gamma_{jr}'(u_j(n-1)^r)u_j(n-1)^{r-1}}{\gamma_{1r}'(u_i(n-1)^r)u_1(n-1)^{r-1}}
    < \frac {L + \epsilon}{U - \epsilon} \cdot \frac{u_j(n-1)^{r-1}}{u_1(n-1)^{r-1}} \ .
  \end{equation}
  Denote by $\vec u'$ the vector $\sum_{i=2}^\dn u_i\myZv_i$. Then,
  \begin{align*}
    \norm{\myZv_1 - \vec u(n)} &= \norm{\myZv_1(1-u_1(n)) - (\vec u(n) - u_1(n)\myZv_1)} \\
    &\leq \norm{\myZv_1(1 - u_1(n))} + \norm{\vec u'(n)}
    = 1-u_1(n) + \norm{\vec u'(n)} \ .
  \end{align*}
  Since $\vec u$ is a unit vector, we see that $u_1(n) + \norm{\vec u'(n)} \geq u_1(n)^2 + \norm{\vec u'(n)}^2 = 1$.
  It follows that $1 - u_1(n) \leq \norm{\vec u'(n)}$.  Thus,
  \begin{align*}
    \norm{\myZv_1 - \vec u(n)}
    &\leq 2\norm{\vec u'(n)}
    \leq 2 \norm[1]{\vec u'(n)}
    = 2\sum_{i=2}^\dn u_i(n)  \\
    &\leq 2 \sum_{i=2}^\dn \rho(i, 1; n)
    < 2\cdot \frac{L+\epsilon}{U-\epsilon} \cdot \frac {\sum_{i=2}^\dn u_i(n-1)^{r-1}}{u_1(n-1)^{r-1}}
  \end{align*}
  where the second to last inequality uses that $\vec u(n)$ is a unit vector making $u_1(n) \leq 1$, and the last inequality uses \cref{eq:ratio-bound}.
  Continuing (with $n \geq N+1$), we see $u_1(n-1) \geq 1-\norm[1]{\myZv_1 - \vec u(n-1)} \geq 1-\delta$. Hence,
  \begin{displaymath}
    \norm{\myZv_1 - \vec u(n)}
    < 2\cdot \frac{L+\epsilon}{(U-\epsilon)(1-\delta)^{r-1}} \cdot \sum_{i=2}^\dn {u_i(n-1)^{r-1}} \ .
  \end{displaymath}
  Since for each $i \geq 2$ we have $u_i(n-1) \leq \norm{\myZv_1 - \vec u(n-1)}$
  \begin{displaymath}
    \frac{\norm{\myZv_1 - \vec u(n)}}{\norm{\myZv_1 - \vec u(n-1)}^{r-1}}
    < 2 \dn \cdot \frac{L + \epsilon}{(U-\epsilon)(1-\delta)^{r-1}} \ .
  \end{displaymath}
  As the right hand side is a finite constant, the sequence has order of
  convergence at least $r - 1$.
  In the case where $r = 2$, \cref{lem:h-props} combined with the fact that $\gamma_{i2}' = 0$ for each $i \in [\dn]\setminus [\dm]$ implies that $\lim_{x \rightarrow 0^+} \gamma_{i2}'(x) = 0$ for each $i \in [\dn]$; and in particular, $L = 0$.
  Since $\epsilon$ can be chosen arbitrarily small, the sequence $\{\vec u\ind n\}_{n=0}^\infty$ has super-linear convergence even when $r = 2$.
\end{proof}

Under \cref{prop:gi-sym-classes}, 
part 1 of \cref{thm:gi-convergence} is implied by \cref{prop:gi-convergence}.
Part 2 of \cref{thm:gi-convergence} follows from the fact that for any $i$ such that $\vec u \perp \myZv_i$, then $\partial_i F(\vec u) = 0$ implies that $\GG(\vec u) \perp \myZv_i$.
In particular, it can be seen by induction on $n$ that for a sequence defined recursively by $\vec u(n) = \GG(\vec u(n-1))$ and $\vec u(0) \perp \myZv_i$, then $\vec u(n) \perp \myZv_i$ for all $n \in \N$ and in particular $\vec u(n) \not \rightarrow \myZv_i$.
\end{vlong}

\section{Connections of gradient iteration to gradient ascent and power methods}
\label{sec:interpr-grad-iter}

In this section, we briefly interpret the gradient iteration as a form of adaptive, projected gradient ascent. 
As the gradient iteration is also a generalized power iteration,
these dual interpretations closely link the gradient iteration and other power methods with hill climbing techniques for finding the maxima of a function%
\begin{vlong}%
\footnote{We note that in a special setting of recovering a parallelopiped a closely related observation was made by \mycitet{Nguyen and Regev}{Nguyen2009}.}%
\end{vlong}.
In particular, this connection gives a conceptual explanation of the relationship between the fixed points of the gradient iteration and the maxima structure of \ansef\@ $F$ on the unit sphere.
For the remainder of this section, we take $\fg$ to be {\apsef}.

The projected gradient ascent update (with learning rate $\eta$) is given in the function \textsc{GradAscentUpdate} below.

\begin{algorithm}[H]
\caption[Algorithm:  Projected gradient ascent update.]{
  \label{alg:GradAscentUpdate}
  A single projected gradient ascent step for function maximization over $S^{\dn - 1}$.
}
\begin{algorithmic}[1]
  \Function{GradAscentUpdate}{$\vec u$, $\eta$} \label{alg:GradAscentStep1}
  \State $\vec u' \leftarrow \vec u + \eta \Pmap_{\vec u^\perp} \grad \fg(\vec u)$ \label{alg:grad-ascent:step-update}
  \State \Return $\frac {\vec u'}{\norm{\vec u'}}$ \label{alg:grad-ascent:step-project}
  \EndFunction
\end{algorithmic}
\end{algorithm}

\begin{vlong}
The update in \textsc{GradAscentUpdate} differs from the standard gradient ascent in two ways.
First, the update occurs in the direction $\Pmap_{\vec u^\perp} \grad \fg(\vec u)$ rather than $\grad \fg(\vec u)$.
This takes into account the geometry structure of $S^{\dn-1}$ by updating within the plane tangent to $S^{\dn-1}$ at $\vec u$.
This arises naturally when treating $S^{\dn-1}$ as a manifold with the local coordinate system defined by the projective space centered at $\vec u$.
Second, $\vec u'$ is projected back onto the sphere in order to stay within $S^{\dn-1}$.
\end{vlong}

\vnote{This paragraph is essentially a proof of the following Lemma and is currently in the short version.  This should be late on the chopping block, but this paragraph can be removed and replaced with a 1 sentence summary if need be.}
We now compare the update rules $\vec u \leftarrow \textsc{GradAscentUpdate}(\vec u,\ \eta)$ and $\vec u \leftarrow \GG(\vec u)$.
If $\Pmap_{\vec u^\perp} \grad \fg(\vec u) = \vec 0$, then both updates are the identity map and are thus identical.
If $\Pmap_{\vec u^\perp} \grad \fg(\vec u) \neq \vec 0$, then 
\begin{equation}\label{eq:G-grad-ascend-decomp}
  \GG(\vec u)
    = \frac{\grad \fg(\vec u)}{\norm{\grad \fg(\vec u)}}
    = \frac{\ip{\grad \fg(\vec u)}{\vec u}\vec u + \Pmap_{\vec u^\perp} \grad \fg(\vec u)}{\norm{\grad \fg(\vec u)}}
    = \frac{\vec u + \Pmap_{\vec u^\perp} \grad \fg(\vec u)/\ip{\grad \fg(\vec u)}{\vec u}}{\norm{\grad \fg(\vec u)}/\ip{\grad \fg(\vec u)}{\vec u}} \ .
\end{equation}
The numerator of the rightmost fraction can be interpreted as line~\ref{alg:grad-ascent:step-update} of \textsc{GradAscentUpdate}($\vec u,\ \eta$) using the choice $\eta = \ip{\vec u}{\grad \fg(\vec u)}^{-1}$.
\Cref{lem:h-props} implies that $u_i > 0$ if and only if $\partial_i \fg(\vec u) = 2h_i'(u_i^2)u_i > 0$.
More generally, the symmetries from \cref{assumpt:gsymmetries} imply that $\sign(u_i) = \sign(\partial_i F(\vec u))$ for all $i \in [\dm]$.
As such, $\eta = \ip{\vec u}{\grad \fg(\vec u)}^{-1} > 0$ is a valid learning rate generically (whenever $\nabla F(\vec u) \neq \vec 0$).
The denominator of the rightmost fraction in \cref{eq:G-grad-ascend-decomp} gives the normalization to project back onto the unit sphere (line~\ref{alg:grad-ascent:step-project} of \textsc{GradAscentUpdate}).
We obtain the following relationship between gradient ascent and gradient iteration.

\begin{lem} \label{lem:gi-is-grad-ascent}
  The update $\vec u \leftarrow \GG(\vec u)$ is an adaptive form of projected gradient ascent.
  Specifically,
  \begin{compactenum}
  \item If $\grad \fg(\vec u) \neq \vec 0$, then $\GG(\vec u) =$ \textsc{GradAscentUpdate}$(\vec u, \ip{\vec u}{\grad \fg(\vec u)}^{-1})$.
  \item If $\grad \fg(\vec u) = \vec 0$ and $\eta \in \R$, then $\GG(\vec u) =$ \textsc{GradAscentUpdate}$(\vec u, \eta)$.
  \end{compactenum}
\end{lem}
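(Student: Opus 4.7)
My plan is to handle the two cases separately, since case (2) is essentially trivial and case (1) is a direct algebraic manipulation already foreshadowed in the paragraph preceding the lemma.

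For case (2), when $\grad \fg(\vec u) = \vec 0$, the gradient iteration $\GG$ returns $\vec u$ by the convention stated when $\GG$ was first defined. On the other hand, \textsc{GradAscentUpdate}$(\vec u, \eta)$ computes $\vec u' = \vec u + \eta \Pmap_{\vec u^\perp}\vec 0 = \vec u$ and then returns $\vec u/\norm{\vec u} = \vec u$ (since $\vec u \in S^{\dn-1}$). Both sides equal $\vec u$ for every $\eta$, settling case (2).

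For case (1), the key preliminary observation is that the proposed learning rate $\eta = \ip{\vec u}{\grad \fg(\vec u)}^{-1}$ is well-defined, i.e., the inner product in the denominator is nonzero. Using the expansion $\partial_i F(\vec u) = 2 h_i'(u_i^2) u_i$ (from the derivative formula in Section~\ref{sec:extrema-structure}) together with Assumption~\ref{assumpt:gsymmetries}, we obtain
\[
\ip{\vec u}{\grad \fg(\vec u)} = \sum_{i=1}^{\dm} 2 h_i'(u_i^2) u_i^2.
\]
Since $\fg$ is {\apsef}, each $h_i$ is strictly convex with $h_i'(0)=0$ and $h_i'$ strictly increasing (Lemma~\ref{lem:h-props}), so every summand is nonnegative and is strictly positive whenever $u_i \neq 0$. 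The hypothesis $\grad \fg(\vec u) \neq \vec 0$ forces some $i \in [\dm]$ with $\partial_i F(\vec u) \neq 0$, hence $u_i \neq 0$ for that $i$, and therefore $\ip{\vec u}{\grad \fg(\vec u)} > 0$.

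With the learning rate justified, I will carry out the orthogonal decomposition along $\vec u$ and $\vec u^\perp$: writing $\grad \fg(\vec u) = \ip{\vec u}{\grad \fg(\vec u)} \vec u + \Pmap_{\vec u^\perp}\grad \fg(\vec u)$ and dividing both numerator and denominator of $\GG(\vec u) = \grad \fg(\vec u)/\norm{\grad \fg(\vec u)}$ by $\ip{\vec u}{\grad \fg(\vec u)}$ gives
\[
\GG(\vec u) = \frac{\vec u + \eta\,\Pmap_{\vec u^\perp}\grad \fg(\vec u)}{\norm{\grad \fg(\vec u)}/\ip{\vec u}{\grad \fg(\vec u)}}
\]
with $\eta = \ip{\vec u}{\grad \fg(\vec u)}^{-1}$. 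The numerator matches line~\ref{alg:grad-ascent:step-update} of \textsc{GradAscentUpdate}, and since the result lies on $S^{\dn-1}$, the denominator is precisely the norm of that numerator, matching line~\ref{alg:grad-ascent:step-project}. This completes case (1).

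The only subtle point is verifying that $\ip{\vec u}{\grad \fg(\vec u)} > 0$ under the PBEF hypothesis—the rest is essentially bookkeeping. There is no serious obstacle; the lemma is really a reformulation of the identity already displayed in equation~\eqref{eq:G-grad-ascend-decomp}, and my proof will just cite that expansion after establishing the sign of the inner product.
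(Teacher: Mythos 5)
Your proof is correct and follows the same route as the paper's informal argument preceding the lemma: establish $\ip{\vec u}{\grad\fg(\vec u)} > 0$ via the expansion $\partial_i F(\vec u) = 2h_i'(u_i^2)u_i$ and the sign structure from Lemma~\ref{lem:h-props}, then perform the decomposition of equation~\eqref{eq:G-grad-ascend-decomp}. The only cosmetic difference is that you handle positivity of the inner product up front and split cases on $\grad\fg(\vec u) = \vec 0$ versus $\neq \vec 0$ (matching the lemma statement exactly), whereas the paper splits on $\Pmap_{\vec u^\perp}\grad\fg(\vec u) = \vec 0$ and defers the sign check; both are fine, and your ordering is arguably cleaner since the positivity is what makes the degenerate subcase work.
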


\begin{vlong}
\vnote{This paragraph is not necessary in the short version.}
The step size chosen by the gradient iteration function is in several ways very good.
By \cref{prop:gi-sym-classes}, $\GG(\vec u)$ and hence $\grad \fg(\vec u)$ belong to the same orthant as $\vec u$.
As such we never overshoot a basis direction $\myZv_i$ during the ascent procedure.
Further, the gradient iteration has the fast convergence properties stated in \cref{thm:gi-convergence}.
\end{vlong}


\begin{vshort}
\refstepcounter{section}\label{sec:gi-error-analysis}
\end{vshort}
\begin{vlong}
\section{Gradient Iteration Under a Perturbation}
\label{sec:gi-error-analysis}

\begin{algorithm}[tb]
  \caption[GI-Loop]{\label{alg:GI-LOOP} Perform the gradient iteration for a predetermined number of iterations.
    The inputs are 
    $\vec u(0)$ (an initialization vector) and $N$ (the number of iterations).
    The output is $\vec u(N)$ (the $N$\textsuperscript{th} element of the resulting gradient iteration sequence).
  }
  \begin{algorithmic}
  \Function{GI-Loop}{$\vec u(0), N$}
  \For{$n\leftarrow 1$ to $N$}
  \State $\vec u(n) \leftarrow \hat \GG(\vec u(n-1))$
  \EndFor
  \State \Return $\vec u(N)$
  \EndFunction
  \end{algorithmic}
\end{algorithm}

\begin{algorithm}[tb]
  \caption[\textsc{FindBasisElement}]{\label{alg:SingleRecovery}
    A robust extension to the gradient iteration algorithm for guaranteed recovery of a single hidden basis element.}
    Inputs:\\
    \begin{tabularx}{\linewidth}{lX}
    $\{\gvec \mu_1, \dotsc, \gvec \mu_k\}$ & A (possibly empty) set of approximate hidden basis directions.  \\
    $\sigma$ & Positive parameter determining jump size to break stagnation of $\hat \GG$. \\
    $\pertgF$ & Function pointer to our estimate of $\nabla F$.  $\hat \GG$ is also being implicitly defined from this in our pseudo-code. \\
    $N_1$, $N_2$, $I$ & Parameters which determine total loop iterations.
    \end{tabularx}\\
    Outputs: An approximate basis element not estimated by any of $\gvec \mu_1, \dotsc, \gvec \mu_k$.
    \hrule
  \NumTabs{2}
  \begin{algorithmic}[1]
    \Function{FindBasisElement}{$\{\gvec \mu_1, \dotsc, \gvec \mu_k\}$, $\sigma$, $\pertgF$, $N_1$, $N_2$, $I$}
      \State // Find a starting vector sufficiently outside the subspace $\spn(\myZv_{\dm + 1}, \dotsc, \myZv_\dn)$.
      \State \label{alg:step-xi-spread} Let $\vec x_1, \dotsc, \vec x_{\dn - k}$ be orthonormal vectors in $\spn(\gvec \mu_1, \dotsc, \gvec \mu_k)^\perp$.
      \State\label{alg:step:best-xi} $j \leftarrow \argmax_{i \in [\dn-k]} \norm{\pertgF(\vec x_i)}$
      \State\label{alg:step:zero-nonbasis-coords}
        $\vec u \leftarrow \hat \GG(\vec x_j)$ \tab // ``Zero'' the values of $u_{m+1}, \dotsc, u_\dn$.
        \State\label{alg:step:zero-nonbasis-coords2}
        $\vec u \leftarrow \Call{GI-Loop}{\vec u, N_1}$
      \For{$i\leftarrow 1$ to $I$}
        \tab // Start of the main loop
        \State\label{alg:step:draw-jump}
        Draw $\vec x$ uniformly at random from $\sigma \sphere^{\dn - 1} \cap \vec u^\perp$ 
        \State\label{alg:step:jump1}
        $\vec w \leftarrow \vec u \cos(\norm{\vec x}) + \frac{\norm{\vec x}}{\vec x}\sin(\norm{\vec x})$
        \tab // A random jump from $\vec u$
        \State\label{alg:step:gi-in-main-loop}
            $\vec u \leftarrow \textsc{GI-Loop}(\vec w, N_2)$
            \EndFor
      \State\label{alg:step:exit}
          \Return $\vec u$
    \EndFunction
  \end{algorithmic}
\end{algorithm}

\begin{algorithm}[tbh]
  \caption[\textsc{RobustGI-Recovery}]{\label{alg:FullRecovery}
    A robust algorithm to recover approximations to all of the hidden basis elements.
  }
    Inputs:\\
    \begin{tabularx}{\linewidth}{lX}
    $\hat \dm\qquad\qquad$
    & The desired number of basis elements to recover.  It is required that $\hat \dm \geq \dm$. \\
    $\sigma$ & Parameter determining perturbation noise added to escape near ``stationary points'' of $\hat \GG$. \\
    $\pertgF$ & Function pointer to our estimate of $\nabla F$.  $\hat \GG$ is also being implicitly defined from this in our pseudo-code. \\
    $N_1$, $N_2$, $I$ & Parameters which determine total loop iterations.
    \end{tabularx}\\
    Outputs:\\
    \begin{tabular}{ll}
    $\gvec \mu_1, \dotsc, \gvec \mu_{\hat \dm}$
    & The first $\dm$ of these are approximate hidden basis elements.
    \end{tabular}\hrule
  \begin{algorithmic}[1]
    \Function{RobustGI-Recovery}{$\hat \dm$, $\sigma$, $\pertgF$, $N_1$, $N_2$, $I$}
    \For{$i \leftarrow 1$ to $\hat \dm$}
      \State $\gvec \mu_i \leftarrow$ \Call{FindBasisElement}{$\{\gvec \mu_1, \dotsc, \gvec \mu_{i-1}\}$, $\sigma$, $\pertgF$, $N_1$, $N_2$ $I$}
    \EndFor
    \State \Return $\gvec \mu_1, \dotsc, \gvec \mu_{\hat \dm}$
    \EndFunction
  \end{algorithmic}
\end{algorithm}

In \cref{sec:gi-stability-structure}, we saw that the hidden basis
elements $\myZv_i$ are attractors, that convergence to this set of attractors
is guaranteed except on a set of measure 0, and that the rate of convergence is super-linear.
In this section, we provide a robust extension to the gradient iteration
algorithm for recovering all of the hidden basis elements.
We demonstrate that for a wide class of contrasts, the recovery process is
robust to a perturbation, and that the hidden basis elements $\myZv_1, \dotsc,
\myZv_\dm$ can be efficiently recovered given approximate access to $\nabla F$.

To provide quantifiable algorithmic bounds, we require quantifiable
assumptions upon the hidden convexity (or concavity) of the $h_i$ functions
associated with $F$.
For smooth functions, convexity is characterized by the second derivative of
the function.
In particular, we use the following notion of robustness.
\begin{defn}\label{defn:BEF-robust}
  Let $\alpha$, $\beta$, $\gamma$, and $\delta$ be strictly positive constants, and let $D \subset \R$.
  A contrast function $g : D \rightarrow \R$ satisfying \crefrange{assumpt:first}{assumpt:last} is said to be \emph{$(\alpha, \beta, \gamma, \delta)$-robust}\index{contrast function!robust} if for all $x > 0$, $\beta \abs x^{\delta - 1} \leq \Abs{\frac {d^2}{dt^2}[g(\sqrt t)]\big|_{t = x}} \leq \alpha \abs x^{\gamma - 1}$.
  Further, \ansef\@ $F(\vec u) = \sum_{i = 1}^\ldim g_i(\ipCanonical{\vec u}{\hbev_i})$ is said to be \emph{$(\alpha, \beta, \gamma, \delta)$-robust}\index{Basis Encoding Function!robust orthogonal} if each of its contrast functions $g_i$ are $(\alpha, \beta, \gamma, \delta)$-robust on the domain $[-1, 1]$.
\end{defn}

This definition is designed to capture a broad class of functions of interest.
For instance, we capture monomials of the form $p_{a, r}(x) = \frac{a}{(r+1)r}x^{2r+2}$ on $[0, 1]$ where $r > 0$ and $a > 0$ are real (with either positive or negative reflections of this on $[-1, 0]$.
Indeed, the robustness criterion in \cref{defn:BEF-robust} may alternatively be stated as
$\frac{d^2}{dt^2}(p_{\beta, \delta}(\sqrt t))\bigr|_{t = x}
\leq \Abs{\frac {d^2}{dt^2}[g(\sqrt t)]\big|_{t = x}}
\leq \frac{d^2}{dt^2}(p_{\alpha, \gamma}(\sqrt t))\bigr|_{t = x}$ for all $x > 0$ in the domain of $g$.
In particular, the monomial functions $ax^r$ with $r \geq 3$ an integer which arise in the setting of orthogonal tensor decompositions are captured as a special case.

\Cref{defn:BEF-robust} provides several natural condition numbers which arise in our analysis.

\begin{rmk}\label{rmk:condition-nums-robust-BEF}
  If $F$ is $(\alpha, \beta, \gamma, \delta)$-robust, then $\alpha \geq \beta$ and $\gamma \leq \delta$.
\end{rmk}
\begin{proof}
  To see that $\alpha \geq \beta$, we note that $\alpha x^{\gamma-1} \geq \beta x^{\delta - 1}$ holds at $x = 1$.
  To see that $\gamma \leq \delta$, we note that asymptotically as $x \rightarrow 0$ from the right, $\beta x^{\delta - 1} = O(x^{\gamma - 1})$.
\end{proof}
Under \cref{rmk:condition-nums-robust-BEF}, we see that $\frac \alpha \beta$ and $\frac \delta \gamma$ are both lower bounded by $1$.
These ratios will act as condition numbers in our time and error bounds.

For the remainder of this section, we will assume that $F$ is $(\alpha, \beta, \gamma, \delta)$-robust unless otherwise specified.
Hatted objects such as $\pertgF$ and $\hat \GG$ will represent the natural estimates of un-hatted objects, and in particular $$\hat \GG(\vec u) := \begin{cases}
\pertgF(\vec u)/\norm{\pertgF(\vec u)} & \text{if } \pertgF(\vec u) \neq \vec 0 \\
\vec u & \text{otherwise}
\end{cases} \ .$$
For $\epsilon > 0$, we say that $\pertgF$ is an \emph{$\epsilon$-approximation} of $\nabla F$ if $\norm{\pertgF(\vec u) - \grad F(\vec u)} \leq \epsilon$ for all $\vec u \in \overline{B(0, 1)}$.
We assume (unless otherwise stated) throughout this section that $\pertgF$ is an $\epsilon$-approximation of $\nabla F$ with any bounds on $\epsilon$ being made clear by context.

We will see that under these assumptions, we are able to recover approximations of hidden basis elements using \textsc{FindBasisElement} (\cref{alg:SingleRecovery}).
We use the following notion of recovery since we do not care about the ordering or the sign associated with the original hidden basis.
\begin{defn}
  Consider the distance $d(\vec u, \vec v) := \min(\norm{\vec u - \vec v},
  \norm{-\vec u - \vec v})$.
  We say that $\tilde \myZv_1, \dotsc, \tilde \myZv_k$ is an
  \emph{$\epsilon$-recovery} of the basis $\myZv_1, \dotsc, \myZv_k$ if
  there exists a permutation $\pi$ of $[k]$ such that $d(\tilde \myZv_i,
  \myZv_{\pi(i)}) \leq \epsilon$ for all $i \in [k]$.
\end{defn}
If we have approximately recovered several hidden basis elements using \textsc{FindBasisElement}, then we may use \textsc{FindBasisElement} to approximately recover a new hidden basis element.
In particular, \textsc{FindBasisElement} may be run repeatedly to recover all hidden basis elements.
Formally, we have the following result.

For clarity, we will denote by $C_0, C_1, C_2, \dotsc$ positive universal constants in the main theorem statements.
These can represent different constant values in different theorem statements.
\begin{thm}\label{thm:single-recovery-main}
  Suppose that
  \begin{compactitem}
  \item $\epsilon \leq C_1 4^{-\frac{4+2\delta}{\gamma}}\frac{\sigma \beta}{\delta}\big[\frac {\beta\gamma}{\alpha \delta} \big]^{\frac{4\delta + 7}{2\gamma}}\dm^{-\frac \delta \gamma(2\delta - \gamma + \frac 7 2)}\dn^{-\frac 1 2 - \delta}$,
  \item $\sigma \leq \frac{C_0}{\sqrt{\dn(1+\delta)}} \big[ \frac {\beta\gamma}{16\alpha\delta}\big]^{\frac 1 \gamma}\dm^{-\frac \delta \gamma}$,
  \item $N_1 \geq C_2 \lceil \log_{1+2\gamma}(\log_2(\frac{\beta}{\delta \epsilon}))\rceil$, and
  \item $N_2 \geq C_3\bigr\lceil 4^{\frac 2 \gamma}\frac{\sqrt \dn}{\sigma} (\frac{\alpha\delta}{\beta\gamma})^{\frac {\delta+2} \gamma}\dm^{\frac \delta \gamma(\delta - \gamma + 2)}[\frac 1 \gamma \log(\frac{\alpha \delta}{\beta \gamma}) + \frac \delta \gamma \log(\dm)]\bigr\rceil \newline \phantom{.}\qquad \ + C_2 \lceil \log_{1+2\gamma}(\log_2(\frac{\beta}{\delta \epsilon}))\rceil$.
  \end{compactitem}
  Let $p \in (0, 1)$.
  Suppose that $I \geq C_3 \dm \lceil \log(\dm / p) \rceil$,
  that $\gvec \mu_1, \dotsc, \gvec \mu_k$ is a $C_4 \delta \epsilon / \beta$-recovery of $\myZv_1, \dotsc, \myZv_k$
  and that $k < \dm$.
  After executing
  \begin{displaymath}
    \gvec \mu_{k+1} \leftarrow
    \textsc{FindBasisElement}(\{\gvec \mu_1, \dotsc, \gvec \mu_k\}, \sigma, \pertgF, N_1, N_2, I) \ ,
\end{displaymath}
  then with probability at least $1 - p$, there exists $j \in \{k+1, k+2, \dotsc, \dm\}$ such that $\gvec \mu_{k+1}$ is a $C_4 \delta \epsilon / \beta$-recovery of $\myZv_j$.
\end{thm}

\textsc{FindBasisElement} operates as follows.
We first find a warm start $\vec u$ which is approximately contained in $\spn(\gvec \mu_1, \dotsc, \gvec \mu_k)$ for which $\norm{\Pmap_0 \vec u}$ is small.
Then, we enter the main loop.
There are three main ideas underlying the main loop and its analysis:
\paragraph{Small Coordinates Decay Rapidly}
There exists a threshold $\tau > 0$ such that if $i \in [\dm]$ satisfies that $\abs{u_i} \leq \tau$, then when applying the gradient iteration $\abs{\hat \GG_i(\vec u)} \leq C \abs{u_i}$ (with $C < 1$) unless $u_i$ is already on the order of $\epsilon$.
We call coordinates of $\vec u$ small if they are below such a threshold and large if they are above it.
This constant $C$ actually gets smaller as the $u_i$ gets smaller, and we are able to see that the small coordinates of $\vec u$, and we see super-exponential decay in the small coordinates of $\vec u$.
This super-exponential decay is seen in the lower bound on $N_1$, which interestingly includes the only dependency on $\epsilon$ seen in the running time of \textsc{FindBasisElement}.
This phenomenon is analyzed in \cref{sec:small-coord-analysis}.

\paragraph{The Big Become Bigger}
During the execution of step~\ref{alg:step:gi-in-main-loop}, we may consider a fixed point $\vec v$ of $\GG / {\sim}$ such that $v_i \neq 0$ if and only if $i$ corresponds to a large coordinate of $\vec w$.
Similarly to what was seen before in \cref{prop:gi-expand-all} in the exact case, if there is an $i$ such that $w_i > v_i$ with a sufficient gap, then the gradient iteration drives one of the large coordinates to become small.
The remaining large coordinates become bigger to compensate.
When finally only one hidden coordinate of $\vec u$ remains big, we have recovered an approximate hidden basis element.
This phenomenon is analyzed in \cref{sec:large-coord-analysis-1}.

\paragraph{Jumping Out of Stagnation}
It is possible for the gradient iteration to stagnate.
In particular, this can occur as follows.
If $\SS \subset [\dm]$ is the set of large coordinates, $\vec v$ is the fixed point of $\GG / {\sim}$ such that $v_i \neq 0$ if and only if $i \in \SS$, and if $\abs{u_i} \leq \abs{v_i}$ (or under the perturbed setting, $\abs{u_i}$ is not sufficiently larger than $\abs{v_i}$ from the unperturbed setting), then the large coordinate progress from the preceding paragraph is not guaranteed.
However, by taking a small random jump from $\vec u$ as is done in steps~\ref{alg:step:draw-jump} and~\ref{alg:step:jump1} of \textsc{FindBasisElement}, then with at least constant probability, we can make one of the large coordinates of $\vec u$ sufficiently greater than the corresponding coordinate of $\vec v$.
Then, the large coordinate analysis from the preceding paragraph applies.
It is from this interplay between the big becoming bigger and the jumping out of stagnation that we are able guarantee with probability $1-\Delta$ that $O(\dm \log(\dm / \Delta))$ iterations of the main loop suffice to drive all but one of the hidden coordinates of $\vec u$ to 0, and hence producing an approximation to one of the hidden basis elements.
This jumping phenomenon is analyzed in \cref{sec:perturb-step-analysis}.

Finally, in \textsc{RobustGI-Recovery} (\cref{alg:FullRecovery}), we run \textsc{FindBasisElement} until all hidden basis elements are well approximated.
More formally, we have the following result.
\begin{thm}\label{thm:full-recovery-main}
  Suppose that
  \begin{compactitem}
  \item $\sigma \leq \frac{C_0}{\sqrt{\dn(1+\delta)}} \big[ \frac {\beta\gamma}{16\alpha\delta}\big]^{\frac 1 \gamma}\dm^{-\frac \delta \gamma}$,
  \item $\epsilon \leq C_1 4^{-\frac{4+2\delta}{\gamma}}\frac{\sigma \beta}{\delta}\big[\frac {\beta\gamma}{\alpha \delta} \big]^{\frac{4\delta + 7}{2\gamma}}\dm^{-\frac \delta \gamma(2\delta - \gamma + \frac 7 2)}\dn^{-\frac 1 2 - \delta}$,
  \item $N_1 \geq C_2 \lceil \log_{1+2\gamma}(\log_2(\frac{\beta}{\delta \epsilon}))\rceil$, and
  \item $N_2 \geq C_3\bigr\lceil 4^{\frac 2 \gamma}\frac{\sqrt \dn}{\sigma} (\frac{\alpha\delta}{\beta\gamma})^{\frac {\delta+2} \gamma}\dm^{\frac \delta \gamma(\delta - \gamma + 2)}[\frac 1 \gamma \log(\frac{\alpha \delta}{\beta \gamma}) + \frac \delta \gamma \log(\dm)]\bigr\rceil \newline \phantom{.}\qquad \ + C_2 \lceil \log_{1+2\gamma}(\log_2(\frac{\beta}{\delta \epsilon}))\rceil$.
  \end{compactitem}
  Let $p \in (0, 1)$,
  and suppose that $I \geq C_3 \dm \lceil \log(\dm / p) \rceil$.
  If $\hat \dm \geq \dm$ and we execute $\gvec \mu_1, \dotsc, \gvec \mu_{\hat \dm} \leftarrow \textsc{RobustGI-Recovery}(\hat \dm, \sigma, \pertgF, N_1, N_2, I)$, then with probability at least $1-p$, $\gvec \mu_1, \dotsc, \gvec \mu_\dm$ is a $C_4\delta \epsilon / \beta$-recovery of $\myZv_1, \dotsc, \myZv_\dm$.
\end{thm}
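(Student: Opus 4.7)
The plan is to bootstrap Theorem~\ref{thm:single-recovery-main} through induction on the $\dm$ iterations of the outer loop of \textsc{RobustGI-Recovery}, together with a union bound over the failure events of the individual calls. First I would rescale the failure probability: apply Theorem~\ref{thm:single-recovery-main} with parameter $p' := p/\dm$ in place of $p$, which only changes the requirement on $I$ from $C_4 \dm \lceil \log(\dm/p)\rceil$ to $C_4 \dm \lceil \log(\dm^2/p)\rceil = O(\dm \log(\dm/p))$; this factor of $2$ in the logarithm is absorbed by choosing the constant $C_3$ in the statement of Theorem~\ref{thm:full-recovery-main} appropriately larger than $C_4$. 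All other hypotheses on $\sigma$, $\epsilon$, $N_1$, $N_2$ transfer verbatim since they do not depend on $p$.

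Next I would prove by induction on $k \in \{0, 1, \dotsc, \dm\}$ the following hypothesis: with probability at least $1 - k p / \dm$, after the $k$\textsuperscript{th} iteration of the outer loop there exists an injection $\pi_k : [k] \hookrightarrow [\dm]$ such that $\norm{\pm \gvec \mu_i - \myZv_{\pi_k(i)}} \leq C_5 \delta \epsilon / \beta$ for all $i \in [k]$. The base case $k=0$ is vacuous. For the inductive step, assume the hypothesis holds at stage $k < \dm$, and condition on this success event. Then $\gvec \mu_1, \dotsc, \gvec \mu_k$ are $C_5 \delta \epsilon / \beta$-approximations of $\myZv_{\pi_k(1)}, \dotsc, \myZv_{\pi_k(k)}$, which is precisely the precondition of Theorem~\ref{thm:single-recovery-main}. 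Applying that theorem to the $(k{+}1)$\textsuperscript{th} call of \textsc{FindBasisElement} with failure probability $p/\dm$, we obtain with conditional probability $\geq 1 - p/\dm$ an index $j \in [\dm]\setminus\{\pi_k(1), \dotsc, \pi_k(k)\}$ such that $\norm{\pm \gvec \mu_{k+1} - \myZv_j} \leq C_5 \delta \epsilon / \beta$. Extending $\pi_k$ by $\pi_{k+1}(k+1) := j$ yields the required injection, and the total failure probability after $k+1$ steps is at most $kp/\dm + p/\dm = (k+1)p/\dm$ by a union bound.

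After $\dm$ iterations the injection $\pi_\dm : [\dm] \to [\dm]$ is automatically a permutation of $[\dm]$, and the overall failure probability is at most $p$. If $\hat \dm > \dm$, the additional iterations produce extra outputs $\gvec \mu_{\dm+1}, \dotsc, \gvec \mu_{\hat\dm}$ about which the theorem claims nothing, so no further analysis is required. Taking $C_4$ in the statement of Theorem~\ref{thm:full-recovery-main} to equal the $C_5$ from Theorem~\ref{thm:single-recovery-main} completes the argument.

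The main obstacle I anticipate is the inductive step where Theorem~\ref{thm:single-recovery-main} is invoked: one must check that the approximate accuracy $C_5 \delta \epsilon/\beta$ guaranteed for the inputs $\gvec \mu_1, \dotsc, \gvec \mu_k$ matches exactly what Theorem~\ref{thm:single-recovery-main} assumes on its inputs, so that the induction is self-sustaining without degrading the accuracy constant across calls. This is the reason both theorems use the same constant $C_5$ (respectively $C_4$) governing the output accuracy, so that the accuracy bound is a fixed point of the recursion rather than a quantity that drifts with $k$. A secondary subtlety is step~\ref{alg:step-xi-spread} of \textsc{FindBasisElement}, which chooses orthonormal directions in $\spn(\gvec \mu_1,\dotsc,\gvec \mu_k)^\perp$ rather than $\spn(\myZv_{\pi_k(1)},\dotsc,\myZv_{\pi_k(k)})^\perp$; since $\norm{\pm \gvec \mu_i - \myZv_{\pi_k(i)}} = O(\delta\epsilon/\beta)$ the two subspaces are close and the deviation is absorbed into the perturbation bound on $\pertgF$ entering Theorem~\ref{thm:single-recovery-main}, provided the requirements on $\epsilon$ are chosen with that slack in mind (which they are, thanks to the polynomial dependence on $\dm$ and $\dn$ in the upper bound on $\epsilon$).
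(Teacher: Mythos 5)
Your proposal is correct and matches the paper's own argument essentially step for step: both perform induction on $k$ with the hypothesis that after $k$ calls to \textsc{FindBasisElement} the outputs $\gvec\mu_1,\dotsc,\gvec\mu_k$ are $O(\delta\epsilon/\beta)$-approximations with failure probability at most $kp/\dm$, both invoke Theorem~\ref{thm:single-recovery-main} with failure parameter $p/\dm$ (absorbing the factor-of-two change in $\log(\dm^2/p)$ into the constant on $I$), and both observe that the output accuracy bound is a fixed point of the precondition so the induction is self-sustaining. Your aside about the subspace $\spn(\gvec\mu_1,\dotsc,\gvec\mu_k)^\perp$ versus the ideal $\spn(\myZv_{\pi(1)},\dotsc,\myZv_{\pi(k)})^\perp$ is a real subtlety, but the paper handles it entirely inside the proof of Theorem~\ref{thm:single-recovery-main} (via Lemma~\ref{lem:zeroing-nonbasis-coords}) rather than as part of this theorem's proof, so it does not need to reappear here.
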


To simplify the exposition, the proofs of \cref{thm:single-recovery-main,thm:full-recovery-main} are deffered to \cref{sec:proof-details-pert}.

We now consider the running time of \textsc{RobustGI-Recovery}.
First, $I$, $N_1$, and $N_2$ can be viewed as parameters controlling the running time of the algorithm.
More formally, we have the following result.
\begin{thm}\label{thm:running-time}
  Suppose that we are working in a computation model supporting the following operations:
  Basic arithmetic operations, square roots, and trigonometric functions on scalars, branches on conditional; inner products in $\R^\dn$; and computations of $\pertgF(\vec u)$.
  Then,
  \textsc{RobustGI-Recovery} runs in $O(\hat \dm( N_1 + I N_2 ) + \hat \dm \dn^2)$ time.
\end{thm}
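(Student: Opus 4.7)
The plan is to decompose \textsc{RobustGI-Recovery} into its constituent pieces and carefully bound the cost of each, relying on the given computation model in which inner products in $\R^\dn$ and evaluations of $\pertgF$ are unit-cost operations. The high-level structure is: \textsc{RobustGI-Recovery} makes $\hat \dm$ calls to \textsc{FindBasisElement}, and each call to \textsc{FindBasisElement} performs (a) a one-time initialization step and (b) $I$ iterations of the main loop; the main loop and the initialization in turn invoke \textsc{GI-Loop}.

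First I would handle the innermost primitive. A single call $\hat \GG(\vec u)$ consists of one evaluation of $\pertgF(\vec u)$, one inner product (to compute $\norm{\pertgF(\vec u)}$), one square root, one scalar division, and the formation of the resulting unit vector. Under the stated cost model, each such operation is $O(1)$, so a single $\hat \GG$ step is $O(1)$. Therefore \textsc{GI-Loop}$(\vec u, N)$ runs in time $O(N)$.

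Next I would bound \textsc{FindBasisElement}. Step~\ref{alg:step-xi-spread} (producing an orthonormal basis of $\spn(\gvec \mu_1, \dotsc, \gvec \mu_k)^\perp$) can be done by Gram--Schmidt using $O(\dn)$ Gram--Schmidt steps, each requiring $O(\dn)$ inner products and scalar updates; this is $O(\dn^2)$. Step~\ref{alg:step:best-xi} performs $\dn - k = O(\dn)$ oracle calls to $\pertgF$ and $O(\dn)$ norm computations, for a total of $O(\dn)$. Steps~\ref{alg:step:zero-nonbasis-coords}--\ref{alg:step:zero-nonbasis-coords2} cost $O(N_1)$ by the \textsc{GI-Loop} bound. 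Each iteration of the main loop performs: one random draw and one rotation in the plane $\spn(\vec u, \vec x)$ (both $O(1)$ using the cost model), followed by a \textsc{GI-Loop} call of cost $O(N_2)$; summed over $I$ iterations this is $O(IN_2)$. Adding these contributions gives a per-call cost of $O(\dn^2 + N_1 + IN_2)$ for \textsc{FindBasisElement}.

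Finally, \textsc{RobustGI-Recovery} simply loops $\hat \dm$ times over \textsc{FindBasisElement}, so the total running time is $O(\hat \dm \dn^2 + \hat \dm(N_1 + IN_2))$, which is the claimed bound. No probabilistic or convergence facts are invoked here, and no step is truly hard: the only subtlety is to be explicit that in the stated model each of the operations inside a single $\hat \GG$ step and each step of the jump in steps~\ref{alg:step:draw-jump}--\ref{alg:step:jump1} counts as $O(1)$ rather than $O(\dn)$, so that the $N_1$ and $IN_2$ terms appear without an additional factor of $\dn$. The $\hat \dm \dn^2$ term is entirely attributable to the Gram--Schmidt re-orthogonalization performed once per outer call to \textsc{FindBasisElement}.
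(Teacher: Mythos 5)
Your proof is correct and reconstructs the paper's (essentially unwritten) argument: the paper provides no formal proof of this theorem, only a one-line remark attributing the $O(\hat\dm\dn^2)$ term to Gram--Schmidt at step~\ref{alg:step-xi-spread}, so your step-by-step accounting fills in the rest. You also rightly flag the key subtlety --- the bound requires vector--scalar multiplication and vector addition in $\R^\dn$ (for normalizing after each $\hat\GG$ call, for the update in step~\ref{alg:step:jump1}, and inside Gram--Schmidt) to count as unit-cost alongside inner products; this more generous reading is confirmed by the paper's subsequent phrase ``scalar and vector operations,'' and without it both the $N_1 + IN_2$ terms and the Gram--Schmidt cost would pick up an extra factor of $\dn$.
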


To see the $O(\hat \dm\dn^2)$ portion of the upper bound on scalar and vector operations in \cref{thm:running-time}, we note that step~\ref{alg:step-xi-spread} of \textsc{FindBasisElement} can be implemented using Gram-Schmidt orthogonalization involving the $\gvec \mu_i$s and the canonical vectors in the ambient space.
When the desired number of basis elements $\dm$ is known, then $\hat \dm$ can be chosen as $\dm$.
When the number of basis elements is unknown, then $\hat \dm$ may be chosen as $\dn$, and in a more practical setting the values of $\norm{\pertgF(\gvec \mu_\ell)}$ may be thresholded to determine which returned vectors correspond to hidden basis elements.

In addition, we note that $\nabla F$ is an $\epsilon$-approximation to itself for any $\epsilon > 0$.
As such, \cref{thm:full-recovery-main} also implies a polynomial time algorithm for recovering each hidden basis element within a preset but arbitrary precision $\eta$.
In the following Corollary of \cref{thm:full-recovery-main}, we characterize the running time of \textsc{RobustGI-Recovery} as a function of the precision of the hidden basis estimate.
\begin{cor}\label{cor:robust-all-recovery-error-version}
  Suppose that
  \begin{compactitem}
  \item $\sigma \leq \frac{C_0}{\sqrt{\dn(1+\delta)}} \big[ \frac {\beta\gamma}{16\alpha\delta}\big]^{\frac 1 \gamma}\dm^{-\frac \delta \gamma}$,
  \item $\eta \leq C_1 4^{-\frac{4+2\delta}{\gamma}}\sigma \big[\frac {\beta\gamma}{\alpha \delta} \big]^{\frac{4\delta + 7}{2\gamma}}\dm^{-\frac \delta \gamma(2\delta - \gamma + \frac 7 2)}\dn^{-\frac 1 2 - \delta}$,
  \item $N_1 \geq C_2 \lceil \log_{1+2\gamma}(\log_2(\frac{1}{\eta}))\rceil$, and
  \item $N_2 \geq C_3\bigr\lceil 4^{\frac 2 \gamma}\frac{\sqrt \dn}{\sigma} (\frac{\alpha\delta}{\beta\gamma})^{\frac {\delta+2} \gamma}\dm^{\frac \delta \gamma(\delta - \gamma + 2)}[\frac 1 \gamma \log(\frac{\alpha \delta}{\beta \gamma}) + \frac \delta \gamma \log(\dm)]\bigr\rceil \\phantom{.}\qquad\ + C_2 \lceil \log_{1+2\gamma}(\log_2(\frac{1}{\eta}))\rceil$.
  \end{compactitem}
  Let $p \in (0, 1)$,
  and suppose $I \geq C_4 \dm \lceil \log(\dm / p) \rceil$.
  Suppose further that $\pertgF$ is a $C_5 \frac \beta \delta \eta$-approximation to $\nabla F$.
  If $\hat \dm \geq \dm$ and we execute
  \begin{displaymath}
    \gvec \mu_1, \dotsc, \gvec \mu_{\hat \dm} \leftarrow \textsc{RobustGI-Recovery}(\hat \dm, \sigma, \pertgF, N_1, N_2, I) \ ,
  \end{displaymath}
  then with probability at least $1-p$, $\gvec \mu_1, \dotsc, \gvec \mu_\dm$ is an $\eta$-recovery of $\myZv_1, \dotsc, \myZv_\dm$.
\end{cor}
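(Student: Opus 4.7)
The plan is to obtain this corollary as a direct substitution into Theorem~\ref{thm:full-recovery-main}. Since the final error guarantee of that theorem is $\norm{\pm\gvec \mu_i - \myZv_{\pi(i)}} \leq C_4 \delta \epsilon / \beta$, I would set $\epsilon := C_4^{-1} (\beta / \delta) \eta$ so that the conclusion of Theorem~\ref{thm:full-recovery-main} immediately yields the desired $\eta$-accurate recovery. With this identification, the constant $C_5$ appearing in the corollary's hypothesis ``$\pertgF$ is a $C_5 \tfrac{\beta}{\delta}\eta$-approximation of $\nabla F$'' is simply $C_5 = 1/C_4$ (up to renaming).

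Next, I would verify one by one that each hypothesis of Theorem~\ref{thm:full-recovery-main} is implied by the hypotheses of the corollary under this substitution. The condition on $\sigma$ is identical in both statements. For the condition on $\epsilon$, observe that $\epsilon = C_4^{-1} (\beta/\delta)\eta$ satisfies the upper bound in Theorem~\ref{thm:full-recovery-main} if and only if
\begin{equation*}
\eta \leq C_4 C_1 \cdot 4^{-\frac{4+2\delta}{\gamma}} \sigma \Bigl[\tfrac{\beta\gamma}{\alpha\delta}\Bigr]^{\frac{4\delta+7}{2\gamma}} \dm^{-\frac{\delta}{\gamma}(2\delta-\gamma+\tfrac{7}{2})} \dn^{-\frac{1}{2}-\delta},
\end{equation*}
which matches the corollary's bound on $\eta$ after absorbing $C_4$ into the universal constant $C_1$. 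For the conditions on $N_1$ and $N_2$, note that with $\epsilon = C_4^{-1}(\beta/\delta)\eta$ we have $\log_2(\beta/(\delta\epsilon)) = \log_2(C_4/\eta)$, which differs from $\log_2(1/\eta)$ by an additive constant; applying $\log_{1+2\gamma}$ and absorbing constants into $C_2$ gives the corresponding bounds stated in the corollary. The condition on $I$ is identical.

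Finally, applying Theorem~\ref{thm:full-recovery-main} under this substitution yields that with probability at least $1-p$ there exists a permutation $\pi$ of $[\dm]$ such that
\begin{equation*}
\norm{\pm\gvec \mu_i - \myZv_{\pi(i)}} \leq C_4 \frac{\delta \epsilon}{\beta} = C_4 \cdot \frac{\delta}{\beta} \cdot C_4^{-1} \frac{\beta}{\delta} \eta = \eta
\end{equation*}
for every $i \in [\dm]$, which is precisely the conclusion of the corollary.

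There is no real obstacle here since this is essentially a reparameterization: the only care needed is to track the universal constants ($C_0$ through $C_5$) through the substitution and to confirm that the logarithmic dependence on $\beta/(\delta\epsilon)$ in Theorem~\ref{thm:full-recovery-main} transforms cleanly into the logarithmic dependence on $1/\eta$ claimed in the corollary. Since $\log_{1+2\gamma}(\log_2(\cdot))$ is monotone and its argument changes only by a bounded multiplicative factor, this transformation costs at most an additive $O(1)$ which is absorbed into the constant $C_2$.
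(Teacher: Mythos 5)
Your proposal is correct and is exactly the reparameterization the paper intends: the text immediately preceding the corollary signals that it is obtained from Theorem~\ref{thm:full-recovery-main} by setting the target accuracy $\eta$ and working back to the required $\epsilon$-approximation of $\nabla F$; the paper gives no separate proof. One minor imprecision: when you compare $\log_{1+2\gamma}(\log_2(\beta/(\delta\epsilon)))$ to $\log_{1+2\gamma}(\log_2(1/\eta))$, the inner argument shifts by an additive constant $\log_2 C_4$, which after the outer $\log_{1+2\gamma}$ is not universally ``additive $O(1)$'' (the shift $\log_{1+2\gamma}(2)$ depends on $\gamma$); the clean argument is that for $\eta$ below the stated threshold one has $\log_2(C_4/\eta)\leq (\log_2(1/\eta))^2$, whence the ceiling quantity at most doubles and $C_2$ need only be inflated by a multiplicative factor, which is permitted since $C_2$ is a free universal constant.
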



\end{vlong}
\begin{vshort}
\refstepcounter{section}\label{sec:robust-ICA}
\end{vshort}
\begin{vlong}
\section{A Provably Robust Algorithm for Independent Component Analysis}
\label{sec:robust-ICA}

In addition to being a very popular technique for blind source separation, Independent Component Analysis (ICA) has been of recent interest in the computer science theory community.
Frieze, et al.~\citep{FriezeJK96} gave an early analysis of ICA in the setting where the underlying source distributions are continuous uniform distributions. The analysis of this setting was simplified in a cryptographic context by~\citep{Nguyen2009}.
More recently, there have been a number of works which discuss provable ICA in the presence of additive Gaussian noise~\citep{Vempala2011,AroraGMS12,Belkin2012,DBLP:conf/stoc/GoyalVX14}. 

In this section, we
show how our {\sef} framework can be used to analyze ICA.
In so doing, we provide the first analysis of a general perturbed ICA model.
We assume throughout this section that $\vec X = A \vec S$ is an ICA model where realizations of $\vec X$ and $\vec S$ are both in $\R^\dn$ (i.e., we consider the fully determined setting in which the number of latent sources equals the ambient dimension of the space).
For a random variable $Y$, we denote its $r$\textsuperscript{th} moment $m_r(Y) := \E[Y^r]$ and its order $r$ cumulant by $\kappa_r(Y)$.
We make the following assumptions:
\begin{assump}
  \label{ICA-A:first}
  \label{ICA-A:ambiguities}
  $\vec S$ has identity covariance.
\end{assump}
\begin{assump}
  \label{ICA-A:min-kappa4}
  For all $i \in [\dn]$, $\abs{\kappa_4(S_i)} > 0$
\end{assump}
\begin{assump}
  \label{ICA-A:max-m8}
  For all $i \in [\dn]$, $m_8(S_i) < \infty$.
\end{assump}
\begin{assump}
  \label{ICA-A:last} \label{ICA-A:A-orth} $A$ is an orthogonal matrix
  and $\vec S$ has $\vec 0$ mean.
\end{assump}

\Cref{ICA-A:ambiguities} is commonly used within the ICA literature in order to minimize the ambiguities of the ICA model.
\Cref{ICA-A:min-kappa4} is commonly made for cumulant-based ICA algorithms which are used in practice.
\Cref{ICA-A:max-m8} will play an important role in our error analysis for cumulant estimation.
We include \cref{ICA-A:A-orth} in order to simplify the exposition and more quickly highlight how our framework applies to ICA.
It is common in many ICA algorithms to preprocess the data by placing the data in isotropic position (this is typically referred to as whitening) so that it has $\vec 0$ mean and identity covariance.
After this preprocessing step, $A$ is of the desired form.
By including the final assumption, we remove the necessity of analyzing the whitening step and propagating the resulting error.
Our approach can be generalized to include an error analysis of the whitening step.

We first recall from the discussion on ICA in \cref{sec:example-befs} that the function $F: \sphere^{\dn - 1} \rightarrow \R$ defined by $F(\vec u) := \kappa_4( \ipCanonical{\vec u}{\vec X})$ is a basis encoding function with associated contrasts $g_i(x) := x^4 \kappa_4(S_i)$ (for $i \in [\dn]$) and hidden basis elements $\myZv_i := A_i$ (for $i \in [\dn]$).
We now see that this choice of $F$ is actually a robust {\sef}.

\begin{lem}\label{lem:cum4-BEF-robustness}
  Define $\kappa_{\min} := \min_{i \in [\dn]} \abs{\kappa_4(S_i)}$ and $\kappa_{\max} := \max_{i \in [\dn]}\abs{\kappa_4(S_i)}$.
  Let $F : \sphere^{\dn - 1} \rightarrow \R$ be defined by $F(\vec u) := \kappa_4(\ipCanonical{\vec u}{\vec X})$.
  Then, $F$ is a $(2\kappa_{\max}, 2\kappa_{\min}, 1, 1)$-robust {\sef}.
\end{lem}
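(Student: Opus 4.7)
The plan is a direct verification from the definitions. I would first recall from the ICA discussion in Section~\ref{sec:example-befs} that $F(\vec u) = \kappa_4(\ip{\vec u}{\vec X}) = \sum_{i=1}^{\dn} \kappa_4(S_i)\ip{\vec u}{A_i}^4$ is {\ansef} with contrast functions $g_i(x) := \kappa_4(S_i)x^4$ and basis elements $\myZv_i := A_i$. Assumption~\ref{ICA-A:A-orth} (orthogonality of $A$) guarantees that the $\myZv_i$ form an orthonormal set in $\R^{\dn}$, and Assumption~\ref{ICA-A:min-kappa4} guarantees $\kappa_4(S_i) \neq 0$ for every $i$, so that each $g_i$ actually satisfies the strict convexity requirement~\ref{assumpt:convex} (the other assumptions~\ref{assumpt:gsymmetries},~\ref{assumpt:deriv0},~\ref{assumpt:origin-val} are trivial for a fourth power).

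The only nontrivial step is to pin down $|h_i''|$ on $(0,1]$, where $h_i$ is the function associated to $g_i$ as in Section~\ref{sec:extrema-structure}. I would substitute directly:
\begin{equation*}
h_i(x) \;=\; g_i(\sign(x)\sqrt{|x|}) \;=\; \kappa_4(S_i)\bigl(\sign(x)\sqrt{|x|}\bigr)^{4} \;=\; \kappa_4(S_i)\,x^{2}.
\end{equation*}
Differentiating twice then gives $h_i''(x) = 2\kappa_4(S_i)$, a constant, so that $|h_i''(x)| = 2|\kappa_4(S_i)|$ for every $x \in (0,1]$.

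From the definitions $\kappa_{\min} = \min_i |\kappa_4(S_i)|$ and $\kappa_{\max} = \max_i |\kappa_4(S_i)|$, we obtain the uniform bounds
\begin{equation*}
2\kappa_{\min}\,x^{1-1} \;=\; 2\kappa_{\min} \;\leq\; |h_i''(x)| \;\leq\; 2\kappa_{\max} \;=\; 2\kappa_{\max}\,x^{1-1}
\qquad \text{for all } x \in (0,1],
\end{equation*}
which is exactly the $(\alpha,\beta,\gamma,\delta)$-robustness condition of Definition~\ref{defn:BEF-robust} with $\alpha = 2\kappa_{\max}$, $\beta = 2\kappa_{\min}$, and $\gamma = \delta = 1$. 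Note that $\kappa_{\min} > 0$ by~\ref{ICA-A:min-kappa4}, so $\beta$ is indeed strictly positive.

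There is essentially no obstacle in this proof; the only thing worth emphasizing is that the equality $\gamma = \delta = 1$ reflects the fact that the fourth cumulant gives rise to a purely quadratic $h_i$, the simplest nontrivial case of robustness. More general polynomial contrasts $x^{2r}$ would yield $h_i \propto x^r$ and hence $(\gamma,\delta) = (r-1,r-1)$; the fourth-cumulant case is the easiest one to handle, which is why we can write down the constants explicitly.
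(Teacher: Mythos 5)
Your proof is correct and follows essentially the same route as the paper's: compute $h_i(x) = \kappa_4(S_i)\,x^2$, observe $h_i'' \equiv 2\kappa_4(S_i)$, and read off the robustness constants. The extra framing you include (verifying the BEF structure and noting $\kappa_{\min}>0$) is sound but was already established in the paper's surrounding discussion, so the core argument is the same.
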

\begin{proof}
  Using the definition of $h_i$ from \cref{sec:extrema-structure}, we obtain for all $i \in [\dn]$ that $h_i(x) = g_i(\sign(x)\sqrt{\abs x}) = x^2 \kappa_4(S_i)$.
  Taking derivatives, we see that $h_i''(x) = 2 \kappa_4(S_i)$, and hence that $2 \kappa_{\min} \leq \abs {h_i''(x)} \leq 2 \kappa_{\max}$.
  Recalling \cref{defn:BEF-robust} with $(\alpha, \beta, \gamma, \delta) = (2\kappa_{\max}, 2\kappa_{\min}, 1, 1)$ completes the proof.
\end{proof}

We do not have direct access to $F$.
Instead, we will estimate $F$ from samples.
We note that for any $\vec u \in \sphere^{\dn-1}$, $\var(\ipCanonical{\vec u}{\vec X}) = 1$.
For a 0-mean random variable $Y$ with unit variance, the fourth cumulant is known to take on a very simple form:  $\kappa_4(Y) = m_4(Y) - 3$.
This provides a natural sample estimate for the fourth cumulant in our setting.
Given samples $y \ind 1, y \ind 2, \dotsc, y \ind N$ of a random variable $Y$, we will estimate $\kappa_4(Y)$ by $\hat \kappa_4(y \ind i) := \frac 1 N \sum_{i=1}^N (y \ind i) ^4 - 3$.

Let $p_{\vec Y}$ denote the probability density function of a random vector $\vec Y$.
In order to handle a perturbation away from the ICA model, we will consider metrics
of the form $\mu_k(\vec X, \vec Y) := \int_{t\in \R^{\dn}} \norm{\vec t}^8 \abs{p_{\vec X}(\vec t) - p_{\vec Y}(\vec t)} \mathrm d \vec t$ on the space of probability densities. \vnote{Can use $(1+\norm{t})^8$ rather than $\norm{t}^8$ in order to make it so that $\mu_r(\vec X, \vec Y) \leq \mu_s(\vec X, \vec Y)$ whenever $r \leq s$.}%
We will assume sample access to a random variable $\tvec X$ such that $\mu_8(\vec X, \tvec X)$ is sufficiently small (to be quantified later).
Given samples $\tvec x \ind 1, \tvec x \ind 2, \dotsc, \tvec x \ind N$ i.i.d.\ from $\tvec X$, we estimate $F$ by the function $\hat F(\vec u) := \frac 1 N \sum_{i=1}^N \ipCanonicalp{\vec u}{\tvec x\ind i}^4 - 3$.
The gradient of $\hat F$ is easily computed as $\nabla \hat F(\vec u) = \frac 4 N \sum_{i=1}^N \ipCanonicalp{\vec u}{\tvec x \ind i}^3 \tvec x \ind i $ and acts as an estimate of $\nabla F$.
As such, we have all of the information required to implement \textsc{RobustGI-Recovery} using $\pertgF:= \nabla \hat F$.

We now provide uniform bounds on the estimate errors for $\nabla F$ under this model.

\begin{lem}\label{lem:ICA-BEF-error-bounds}
  Fix $\delta > 0$ and $\eta > 0$.
  Let $M_8 := \max_{i \in [\dn]} m_8(S_i)$.
  Let $\tvec X$ be a random vector in $\R^\dn$ such that $\mu_8(\vec X, \tvec X)$ is finite.
  Suppose that $\tvec x \ind 1, \tvec x \ind 2, \dotsc, \tvec x \ind N$ are drawn i.i.d.\ from $\hat {\vec X}$ with $N \geq \frac{d^4[M_8 + \mu_8(\vec X, \tvec X)]}{\eta^2 \delta}$.
  If $\hat F(\vec u) := \frac 1 N \sum_{i=1}^\dn \ipCanonicalp{\vec u}{\tvec x\ind i}^4 - 3$ and $F(\vec u) := \kappa_4(\ipCanonical{\vec u}{\vec X})$,
  then with probability $1 - \delta$ the following bounds hold for all $\vec u \in \sphere^{\dn - 1}$: (1) $\abs{F(\vec u) - \hat F(\vec u)} \leq (\eta+\mu_4(\vec X, \tvec X))d^2$ and (2) $\norm{ \nabla F(\vec u) - \nabla \hat F(\vec u)} \leq 4(\eta + \mu_4(\vec X, \tvec X))\dn^2$. 
\end{lem}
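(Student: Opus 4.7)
The plan is to prove the result by a standard bias--variance decomposition, combined with a monomial expansion that exploits the multilinear structure of $\ip{\vec u}{\vec X}^4$. Writing
\begin{equation*}
F(\vec u) - \hat F(\vec u) = \bigl(\E[\ip{\vec u}{\vec X}^4] - \E[\ip{\vec u}{\tvec X}^4]\bigr) + \bigl(\E[\ip{\vec u}{\tvec X}^4] - \tfrac{1}{N}\textstyle\sum_{j=1}^N \ip{\vec u}{\tvec x^{(j)}}^4\bigr),
\end{equation*}
I will bound the bias and the sampling error separately and then sum them. Expanding $\ip{\vec u}{\vec t}^4 = \sum_{i_1,i_2,i_3,i_4} u_{i_1}u_{i_2}u_{i_3}u_{i_4}\,t_{i_1}t_{i_2}t_{i_3}t_{i_4}$ and using $\|\vec u\|_1 \leq \sqrt{d}\,\|\vec u\|_2 = \sqrt{d}$ will ultimately produce the $d^2$ factor in the stated bound since $\|\vec u\|_1^4 \leq d^2$.

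For the bias term, I would control each monomial separately: for any tuple $(i_1,i_2,i_3,i_4)$,
\begin{equation*}
\bigl|\E[X_{i_1}X_{i_2}X_{i_3}X_{i_4}] - \E[\tvec X_{i_1}\tvec X_{i_2}\tvec X_{i_3}\tvec X_{i_4}]\bigr|
\leq \int |t_{i_1}t_{i_2}t_{i_3}t_{i_4}|\,|p_{\vec X}(\vec t) - p_{\tvec X}(\vec t)|\,d\vec t
\leq \mu_4(\vec X,\tvec X),
\end{equation*}
since $|t_{i_1}t_{i_2}t_{i_3}t_{i_4}| \leq \|\vec t\|^4$. For the sampling error, I would apply Chebyshev's inequality to each of the at most $d^4$ monomials. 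The variance of $\tvec X_{i_1}\tvec X_{i_2}\tvec X_{i_3}\tvec X_{i_4}$ is bounded above by its second moment $\E[\tvec X_{i_1}^2\tvec X_{i_2}^2\tvec X_{i_3}^2\tvec X_{i_4}^2]$, which by AM--GM is at most $\max_i m_8(\tvec X_i)$; and $\max_i m_8(\tvec X_i) \leq \max_i m_8(X_i) + \mu_8(\vec X,\tvec X)$ by definition of $\mu_8$. A union bound over the $d^4$ monomials with $N \geq d^4[M_8 + \mu_8]/(\eta^2\delta)$ then gives, with probability $\geq 1-\delta$, simultaneous per-monomial accuracy $\eta$. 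Combining both bounds yields
\begin{equation*}
|F(\vec u)-\hat F(\vec u)| \leq (\eta + \mu_4)\sum_{(i_1,i_2,i_3,i_4)} |u_{i_1}u_{i_2}u_{i_3}u_{i_4}| = (\eta + \mu_4)\|\vec u\|_1^4 \leq (\eta+\mu_4)d^2,
\end{equation*}
uniformly in $\vec u \in S^{\dn-1}$.

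For the gradient, I would use $[\nabla F(\vec u)]_k = 4\sum_{i_1,i_2,i_3} u_{i_1}u_{i_2}u_{i_3}\,\E[X_{i_1}X_{i_2}X_{i_3}X_k]$ (and similarly for $\nabla \hat F$), so the \emph{same} good event controls the error per coordinate: $|[\nabla F - \nabla \hat F]_k| \leq 4(\eta+\mu_4)\|\vec u\|_1^3 \leq 4(\eta+\mu_4)d^{3/2}$. Summing the squared errors over the $\dn$ coordinates and taking a square root gives $\|\nabla F(\vec u) - \nabla \hat F(\vec u)\| \leq 4(\eta+\mu_4)d^2$, as desired.

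The main technical obstacle is getting the variance bound per monomial tight enough to match the stated sample complexity of $d^4[M_8+\mu_8]/(\eta^2\delta)$. A naive bound $\E[\tvec X_{i_1}^2\tvec X_{i_2}^2\tvec X_{i_3}^2\tvec X_{i_4}^2] \leq \E[\|\tvec X\|^8]$ would cost an extra factor of $d^4$ because $\|\vec S\|^8 \leq d^3\sum_j S_j^8$. To recover the correct $d^4$, I rely on the sharper coordinate-wise estimate $m_8(X_i) = O(M_8)$, which holds because $X_i = \sum_j A_{ij}S_j$ is a sum of independent zero-mean terms with $\sum_j A_{ij}^2 = 1$ and $|A_{ij}|\leq 1$ (a Rosenthal/Marcinkiewicz--Zygmund-type estimate). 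Everything else---union bound, Chebyshev, and the $\|\vec u\|_1^4 \leq d^2$ bookkeeping---is routine once the per-monomial variance is controlled.
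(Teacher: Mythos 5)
Your decomposition, per-monomial Chebyshev, and union-bound strategy match the paper's proof almost line for line; the one real discrepancy is exactly the point you flagged as ``the main technical obstacle.'' You want $m_8(X_i) = O(M_8)$ for the \emph{observed} coordinate $X_i = \sum_j A_{ij}S_j$ and propose to get it from a Rosenthal/Marcinkiewicz--Zygmund estimate. That would work, but Rosenthal's inequality at order eight carries a universal constant strictly greater than $1$, and that constant leaks straight into the sample-complexity bound: you would end up proving $N \geq C\,d^4[M_8+\mu_8]/(\eta^2\delta)$ for some absolute $C>1$, not the constant-free bound in the statement. So your argument, as written, proves a weaker lemma than the one claimed.

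The paper sidesteps this by a change of coordinates you are free to make but did not: index everything with respect to the hidden orthonormal basis $\myZv_i := A_i$ (recall $A$ is orthogonal by Assumption~\ref{ICA-A:A-orth}), so that the $i$-th hidden coordinate of $\vec X$ is $\ip{A_i}{\vec X} = S_i$. In that basis $m_8(X_i) = m_8(S_i) \leq M_8$ holds \emph{exactly}, with no Rosenthal constant, and your ``main obstacle'' disappears. Nothing else in your argument changes under this rotation: $F$, $\hat F$, and $\nabla F$ are defined through the rotation-invariant expression $\ip{\vec u}{\cdot}$, and the perturbation metric $\mu_r(\vec X, \tvec X) = \int \|\vec t\|^r\,\abs{p_{\vec X}(\vec t)-p_{\tvec X}(\vec t)}\,d\vec t$ depends only on $\|\vec t\|$, hence is invariant under orthogonal changes of variable. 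With that one observation, your Cauchy--Schwarz/AM--GM step $\E[\tvec X_{i_1}^2\tvec X_{i_2}^2\tvec X_{i_3}^2\tvec X_{i_4}^2] \le \max_\ell \E[\tvec X_\ell^8] \le M_8 + \mu_8$, the Chebyshev application, the union bound over the $d^4$ monomials, and the $\|\vec u\|_1^k \le d^{k/2}$ bookkeeping all go through as written and recover the stated constants. Your coordinate-wise gradient estimate $\abs{[\nabla F - \nabla\hat F]_k} \le 4(\eta+\mu_4)d^{3/2}$ summed in quadrature over $d$ coordinates is a clean and perfectly acceptable variant of the paper's vector-norm bookkeeping and yields the same $4(\eta+\mu_4)d^2$.
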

\begin{proof}
  In this proof, we proceed with the convention that we are indexing with respect to the hidden basis in which $\myZv_i := A_i$ for all $i \in [\dn]$.
  In particular, this implies $X_i = \ipCanonical{A_i}{\vec X} = S_i$.
  
  We use multi-index notation to compress our discussion as follows:
  $J \in [\dn]^k$ will denote a multi-index $J = (j_1, j_2, \dotsc, j_k)$ such that each $j_\ell \in [\dn]$.
  For a vector $\vec v$, $v_J$ denotes the product $\prod_{\ell = 1}^k v_{j_\ell}$.
  Our objective function $\hat F(\vec u)$ may be expanded as a polynomial of the $u_j$s as follows:
  \begin{align*}
    \hat F(\vec u) 
    &= \frac 1 N \sum_{i=1}^N \ipCanonicalp{\vec u}{\tvec x \ind i}^4 - 3
    = \frac 1 N \sum_{i=1}^N \sum_{J \in [\dn]^4} u_J \hat x_J(i) - 3 \\
    &= \sum_{J \in [\dn]^4} u_J \biggr[\frac 1 N \sum_{i=1}^N \hat x_J \ind i\biggr] - 3 \ .
  \end{align*}
  By a similar argument, it can be shown that $F(\vec u) = \sum_{J \in [\dn]^4} u_J \E[X_J] - 3$.
  We obtain the error bound $\abs{\hat F(\vec u) - F(\vec u)} \leq \sum_{J \in [\dn]^4} u_J \abs{ \frac 1 N \sum_{i=1}^\dn \hat x_J(i) - \E[X_J] }$.
  Similarly, we can bound the error estimate for $\nabla F(\vec u)$: 
  \begin{align*}
    \norm{\nabla \hat F(\vec u) - \nabla F(\vec u)}
    &= 4 \sum_{J \in [\dn]^3} u_J \biggr\lVert\frac 1 N \sum_{i=1}^{N} \hat x_J \ind i \tvec x \ind i - \E[X_J \vec X]\biggr\rVert. 
  \end{align*}
  With $J \in [\dn]^4$, we define $\varepsilon_J := \frac 1 N \sum_{i=1}^N \hat x_J(i) - \E[X_J]$ and $\varepsilon_{\max} := \max_{J \in [\dn]^4} \abs {\varepsilon_J}$.
  Using that each $\vec u$ is a unit vector, we see that $\abs{\sum_{J \in [\dn]^k} u_J} \leq \norm[1]{\vec u}^k \leq \dn^{k/2}$.
  Using the norm inequalities that for vector $\vec v \in \R^\dn$ and matrix $A \in \R^{\dn \times \dn}$, $\norm{\vec v} \leq  \max_{i \in [\dn]} \abs {v_i} \sqrt \dn$ and $\norm{A} \leq \max_{(i, j) \in [\dn]^2} \abs{a_{ij}} \dn$, we are able to obtain the following bounds for all $\vec u \in \sphere^{\dn - 1}$:  $\abs{\hat F(\vec u) - F(\vec u)} \leq \dn^2 \varepsilon_{\max}$ and $\norm{\nabla \hat F(\vec u) - \nabla F(\vec u)} \leq 4 \dn^2 \varepsilon_{\max}$. 
  All that remains is to bound $\varepsilon_{\max}$.
  To do so, we will bound each $\varepsilon_J$ using Chebyshev's inequality.
  
  For each $J \in [\dn]^4$, we obtain under the sampling process that
  \begin{align*}
    \var\biggr(\frac 1 N \sum_{i = 1}^{N}\hat x_J \ind i\biggr)
    &= \frac 1 {N^2} \var\biggr(\sum_{i = 1}^{N}\hat x_J \ind i\biggr)
    = \frac 1 {N} \var( \hat X_J )
    \leq \frac 1 N \E[(\hat X_J)^2] \\
    &\leq \frac 1 N \E[\hat X_{j_1}^4 \hat X_{j_2}^4]^{\frac 1 2}\E[\hat X_{j_3}^4 \hat X_{j_4}^4]^{\frac 1 2}
    \leq \frac 1 N \biggr({\prod_{\ell=1}^{4}}\E[\hat X_{j_\ell}^8]\biggr)^{\frac 1 4}
    \leq \frac 1 N \max_{\ell \in [\dn]} \E[\hat X_\ell^8]
  \end{align*}
  where the first equality uses that variance is order-2 homogeneous\index{homogeneity}, the second equality uses independence, the first inequality follows from the formula $\var(\hat X_J) = \E[(\hat X_J)^2] - \E[\hat X_J]^2$, and
  the second and third inequalities use the Cauchy-Schwartz inequality.
  We bound $\max_{\ell \in [\dn]} \E[\hat X_\ell^8]$ as:
  \begin{align*}
    \E[\hat X_\ell^8]
    &= \int_{\vec t \in \R^\dn} t_{\ell}^8 p_{\tvec X}(\vec t) \mathrm d \vec t \\
    &= \int_{\vec t \in \R^\dn} t_{\ell}^8 p_{\vec X}(\vec t) \mathrm d \vec t +
      \int_{\vec t \in \R^\dn} t_{\ell}^8 (p_{\tvec X}(\vec t) - p_{\vec X}(\vec t)) \mathrm d \vec t
    \leq M_8 + \mu_8(\vec X, \tvec X) \ .
  \end{align*}
  Thus, $\var\bigr(\frac 1 N \sum_{i = 1}^{N}\hat x_J \ind i\bigr) \leq \frac 1 N (M_8 + \mu_8(\vec X, \tvec X))$.

  Chebyshev's inequality states that for any random variable $Y$ and any $k > 0$, $\Pr[\abs{ Y - \E[Y] } \geq k \sqrt{\var(Y)}] \leq \frac 1 {k^2}$.
  We fix any $J \in [\dn]^4$, choose $Y = \frac 1 N \sum_{i = 1}^N \hat x_J(i)$, and choose $k = \frac{\dn^2}{\sqrt \delta}$.
  We obtain that with probability at least $1 - \delta / {\dn^4}$,
  \begin{equation}\label{eq:Chebyshev-bound}
    \Abs{\frac 1 N \sum_{i=1}^N \hat x_J(i) - \E[\hat X_J]} < \frac{\dn^2}{\sqrt \delta} \sqrt{\frac 1 N (M_8 + \mu_8(\vec X, \tvec X))} \leq \eta
  \end{equation}
  by our given bound on $N$.
  Taking a union bound, then with probability at least $1 - \delta$, the bound in \cref{eq:Chebyshev-bound} holds for all $J \in [\dn]^4$.

  We then obtain the following bound on each $\varepsilon_{J}$ for each $J \in [\dn]^4$ (with probability at least $1-\delta$):
  \begin{align*}
    \abs{\varepsilon_{J}}
    &= \Abs{\frac 1 N \sum_{i=1}^N \hat x_J(i) - \E[X_J]}
    \leq \eta + \abs{\E[\hat X_J] - \E[X_J]} \\
    &= \eta + \Abs{\int_{\vec t \in \R^4} t_J[p_{\tvec X}(\vec t) - p_{\vec X}(\vec t)] \mathrm d \vec t }
    \leq \eta + \mu_4(\tvec X, \vec X) \ .
  \end{align*}
  To obtain the result, we use $\varepsilon_{\max} \leq \eta + \mu_4(\tvec X, \vec X)$ in our previously derived uniform bounds over all $\vec u \in \sphere^{\dn-1}$ of $\abs{\hat F(\vec u) - F(\vec u)} \leq \dn^2 \varepsilon_{\max}$ and $\norm{\nabla \hat F(\vec u) - \nabla F(\vec u)} \leq 4 \dn^2 \varepsilon_{\max}$. 
\end{proof}

We now state our result for ICA.
We assume $\vec X = A \vec S$ is an ICA model satisfying \crefrange{ICA-A:ambiguities}{ICA-A:A-orth} with associated constants $\kappa_{\min} := \min_{i \in [\dn]} \abs{\kappa_4(S_i)}$, $\kappa_{\max} := \max_{i \in [\dn]} \abs{\kappa_4(S_i)}$, and $M_8 := \max_{i \in [\dn]} m_8(S_i)$.
We assume $\tvec X$ is a perturbed ICA model, and we approximate the {\sef} $F(\ipCanonical{\vec u}{\vec X}) := \kappa_4(\ipCanonical{\vec u}{\vec X})$ from an i.i.d.\@ sample $\tvec x(1), \dotsc, \tvec x(\NN)$ of $\tvec X$.
That is, we define $\hat F(\vec u) := \frac 1 \NN \sum_{i=1}^\dn \ipCanonicalp{\vec u}{\tvec x \ind i}^4 - 3$ and compute its gradient as $\nabla \hat F(\vec u) := \frac 4 \NN \sum_{i=1}^\dn \ipCanonicalp{\vec u}{\tvec x \ind i}^3\tvec x \ind i$.
We further assume that we are working in a computation model which can perform the following operations in $O(\dn)$ time:  Inner products in $\R^\dn$, scalar operations including basic arithmetic operations, trigonometric functions, square roots, and branches on conditionals.
In the following, $C_1, C_2, \dotsc$ are positive universal constants.

\begin{thm}\label{thm:ICA-result-reprise}
Fix $\delta > 0$ and $\varepsilon > 0$.
Suppose  $\sigma \leq \frac{C_0}{\dn^2}\frac{\kappa_{\min}}{\kappa_{\max}}$,
$\varepsilon \leq C_1 \sigma \big( \frac{\kappa_{\min}}{\kappa_{\max}} \big)^{9/2}\dn^{-6}$, 
$\mu_4(\tvec X, \vec X) \leq C_2 \frac{\kappa_{\min}}{d^2}\varepsilon$, and
$\NN \geq \frac{C_3 \dn^8[M_8 + \mu_8(\tvec X, \vec X)]}{\kappa_{\min}^2 \varepsilon^2 \delta}$.
 Suppose we execute $\hat A_1, \dotsc, \hat A_\dn \leftarrow \textsc{RobustGI-Recovery}(\dn, \sigma, \nabla \hat F, N_1, N_2, I)$, where  
 $N_1 \geq C_4 \big\lceil \log_2(\log_2(\frac{1}{\varepsilon}))\big\rceil$, 
 $N_2 \geq C_5 \big\lceil \frac {\dn^{2.5}}{\sigma} \big(\frac {\kappa_{\max}}{\kappa_{\min}}\big)^3 \log(\dn \cdot \frac{\kappa_{\max} }{\kappa_{\min}}) \big\rceil + \big\lceil \log_2(\log_2(\frac{1}{\varepsilon}))\big\rceil$,
 and $I \geq C_6 \dn \log (\dn / \delta)$.
 Then, with probability at least $1 - \delta$,
 $\hat A_1, \dotsc, \hat A_\dn$ is a $\varepsilon$-recovery of $A_1, \dotsc A_\dn$.
 \textsc{RobustGI-Recovery} recovers such $\hat A_i$s in $C_7 \NN[\dn^4 + \dn^2 N_1 + \dn^2 I N_2]$ time.
\end{thm}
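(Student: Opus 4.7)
The plan is to reduce this ICA recovery theorem to the perturbed BEF recovery guarantees (Corollary~\ref{cor:robust-all-recovery-error-version} / Theorem~\ref{thm:full-recovery-main}) via two already-proven bridges: the identification of $F(\vec u) := \kappa_4(\ip{\vec u}{\vec X})$ as a robust BEF (Lemma~\ref{lem:cum4-BEF-robustness}) and the sample/perturbation error bound for $\nabla \hat F$ (Lemma~\ref{lem:ICA-BEF-error-bounds}). Essentially all the technical work is already packaged in those statements; the remaining task is bookkeeping of constants after substituting the ICA-specific parameters $(\alpha,\beta,\gamma,\delta) = (2\kappa_{\max}, 2\kappa_{\min}, 1, 1)$ and then accounting for running time.

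First I would invoke Lemma~\ref{lem:cum4-BEF-robustness} to conclude that $F(\vec u)$ is a $(2\kappa_{\max}, 2\kappa_{\min}, 1, 1)$-robust BEF whose hidden basis elements are precisely the columns $A_1,\ldots,A_\dn$ of $A$ (this uses Assumption~\ref{ICA-A:A-orth} to ensure orthonormality and Assumption~\ref{ICA-A:min-kappa4} for $\kappa_{\min} > 0$). Next I would substitute $(\alpha,\beta,\gamma,\delta)=(2\kappa_{\max}, 2\kappa_{\min}, 1, 1)$ and $\dm=\dn$ into the hypotheses of Corollary~\ref{cor:robust-all-recovery-error-version}. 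The condition on $\sigma$ simplifies (up to constants) to $\sigma = O(\dn^{-3/2}\kappa_{\min}/\kappa_{\max})$, which is implied by the stricter assumption $\sigma \leq C_0 \dn^{-2}\kappa_{\min}/\kappa_{\max}$; similarly the condition on the recovery precision $\eta$ becomes $\eta = O(\sigma(\kappa_{\min}/\kappa_{\max})^{11/2}\dn^{-6})$, which is implied by the theorem's slightly tighter $\varepsilon \leq C_1 \sigma(\kappa_{\min}/\kappa_{\max})^{9/2}\dn^{-6}$ (after absorbing a factor into the constant). The iteration counts $N_1, N_2$ collapse to $N_1 = O(\log_2\log_2(1/\varepsilon))$ and $N_2 = O(\dn^{2.5}\sigma^{-1}(\kappa_{\max}/\kappa_{\min})^3\log(\dn \kappa_{\max}/\kappa_{\min})) + N_1$ after setting $\gamma=\delta=1$, again matching the stated bounds.

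To close the loop, the corollary requires $\pertgF = \nabla\hat F$ to be a $C_5 \tfrac{\beta}{\delta}\varepsilon = 2C_5\kappa_{\min}\varepsilon$-approximation of $\nabla F$ uniformly on $\overline{B(0,1)}$. By Lemma~\ref{lem:ICA-BEF-error-bounds} this approximation error is at most $4(\eta + \mu_4(\vec X, \tvec X))\dn^2$ with probability $\geq 1-\delta/2$; I would absorb the model perturbation by assuming $\mu_4(\vec X, \tvec X) \leq C_2 \kappa_{\min}\varepsilon/\dn^2$ (as in the theorem statement), and then choose the tolerance $\eta$ inside Lemma~\ref{lem:ICA-BEF-error-bounds} equal to a constant multiple of $\kappa_{\min}\varepsilon/\dn^2$, which forces the sample complexity bound $\NN \geq \dn^4[M_8+\mu_8]/(\eta^2 \delta) = \Theta(\dn^8[M_8+\mu_8]/(\kappa_{\min}^2 \varepsilon^2 \delta))$ exactly as stated (absorbing a factor of 2 from a union bound between the gradient-estimation event and the \textsc{RobustGI-Recovery} success event into the constant $C_3$). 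Applying Corollary~\ref{cor:robust-all-recovery-error-version} then yields, with probability $\geq 1-\delta$, a permutation $\pi$ and signs $s_i$ with $\norm{\hat A_i - s_i A_{\pi(i)}}\leq \varepsilon$.

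Finally, for the running time: Theorem~\ref{thm:running-time} gives $O(\hat\dm(N_1 + IN_2) + \hat\dm\dn^2)$ operations in the abstract model where a single evaluation of $\pertgF$ is one operation. Each such evaluation, however, costs $O(\NN \dn)$ time under the given arithmetic model (one inner product plus one cubed-scalar-times-vector per sample), and the $\hat\dm\dn^2$ term for Gram-Schmidt becomes $O(\hat\dm \dn^3)$ in scalar operations. With $\hat\dm=\dn$, multiplying through gives $O(\NN \dn^2 (N_1 + IN_2) + \NN \dn^4)$, matching the claim. The main subtlety in the entire argument — and the only place where some care is needed beyond mechanical substitution — is coupling the parameter $\eta$ used inside Lemma~\ref{lem:ICA-BEF-error-bounds} with the perturbation tolerance demanded by Corollary~\ref{cor:robust-all-recovery-error-version}, so that both the sample complexity $\NN$ and the model-distance assumption on $\mu_4$ emerge in the clean form stated. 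Everything else is absorbing universal constants.
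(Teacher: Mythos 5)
Your proof follows essentially the same route as the paper's: identify $F(\vec u) = \kappa_4(\ip{\vec u}{\vec X})$ as a $(2\kappa_{\max},2\kappa_{\min},1,1)$-robust BEF via Lemma~\ref{lem:cum4-BEF-robustness}, bound the gradient-estimation error via Lemma~\ref{lem:ICA-BEF-error-bounds} with $\eta = O(\kappa_{\min}\varepsilon/\dn^2)$, apply Corollary~\ref{cor:robust-all-recovery-error-version}, and convert Theorem~\ref{thm:running-time} to scalar time using the $O(\NN\dn)$ cost of a $\nabla\hat F$ evaluation. The only slip is your parenthetical "slightly tighter": substituting $(\gamma,\delta)=(1,1)$, $\dm=\dn$ into the corollary actually demands $\varepsilon = O(\sigma(\kappa_{\min}/\kappa_{\max})^{11/2}\dn^{-6})$, which (since $\kappa_{\min}/\kappa_{\max}\le 1$) is a \emph{stronger} constraint than the theorem's stated $\varepsilon \leq C_1\sigma(\kappa_{\min}/\kappa_{\max})^{9/2}\dn^{-6}$, not a weaker one — that discrepancy appears to be in the paper's theorem statement itself and is not a defect in your argument.
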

\begin{proof}
  By \cref{lem:ICA-BEF-error-bounds} with the choice of $\eta = O(\frac{\kappa_{\min}}{\dn^2}\varepsilon)$, we obtain that with probability at least $1 - \frac \delta 2$, 
  \begin{align*}
    \norm{\nabla F(\vec u) - \nabla \hat F(\vec u)}
    &\leq 4(\eta + \mu_4(\vec X, \hat {\vec X}))d^2
    \leq O\Big(\frac{\kappa_{\min}}{\dn^2}\varepsilon + \frac{\kappa_{\min}}{\dn^2}\varepsilon\Big)d^2
    = O(\kappa_{\min} \varepsilon), 
  \end{align*}
  for all $\vec u \in \sphere^{\dn - 1}$.
  In particular, $\hat F$ is an $O(\kappa_{\min}\varepsilon)$-approximation to $F$.

  We recall from \cref{lem:cum4-BEF-robustness} that $F$ is an $(2\kappa_{\max}, 2\kappa_{\min}, 1, 1)$-robust {\sef}.
  As such, we may apply \cref{cor:robust-all-recovery-error-version} to obtain that $\textsc{RobustGI-Recovery}$ returns vectors $\hat A_1, \dotsc, \hat A_\dn$ of the desired form.
  Finally, we note that within our computational model for this theorem, computations of $\nabla \hat F(\vec u)$ take $O(\NN \dn)$ time.
  Thus, applying \cref{thm:running-time} with $\hat \dm = \dn$ yields the claimed time bound.
\end{proof}


\end{vlong}
\begin{vshort}
  \refstepcounter{section}\label{sec:spectral-pert-analysis}
\end{vshort}

\appendix

\begin{vlong}

\begin{vstandard}
\section{Chart of notation}

We use a number of notations throughout this paper, many of which are standard and some of which are not.
For the reader's reference, we list notations used throughout the paper here.

\LTXtable{\linewidth}{notation_chart}
\end{vstandard}

\section{Proof Details for Perturbed Gradient Iteration Results}
\label{sec:proof-details-pert}
In this appendix, we prove the theorem statements made in section~\ref{sec:gi-error-analysis} about our robust algorithm for hidden basis recovery.
We continue with the notation introduced in that section.
In addition, we will make use of several new notations in our analysis.

Given a $\SS \subset [\dn]$, we define the projection matrix $\Pmap_\SS := \sum_{i \in \SS} \myZv_i\myZv_i^T$.
In particular, this implies $\Pmap_\SS \vec u := \sum_{i \in \SS} u_i \myZv_i$.
We will denote the set complement by $\bar \SS := [\dn] \setminus \SS$.
Two projections will be of particular interest:  the projection onto the distinguished basis elements $\Pmap_{[\dm]} \vec u := \sum_{i=1}^\dm u_i \myZv_i$ and its complement projection which we will denote by $\Pmap_0 \vec u := \sum_{i=\dm + 1}^\dn u_i \myZv_i$.
In addition, if $\mathcal X$ is a subspace of $\R^\dn$, we will denote by $\Pmap_{\mathcal X}$ the orthogonal projection operator onto the subspace $\mathcal X$.
In particular, if $\SS \subset [\dn]$, then the operators $\Pmap_\SS$ and $\Pmap_{\spn(\{\myZv_i \suchthat i \in \SS\})}$ are identical.

\subsection{Small Coordinates Decay Rapidly}
\label{sec:small-coord-analysis}
We will be particularly interested in sequences under the gradient iteration, that is sequences of the form: $\{\vec u(n)\}_{n=0}^\infty$ defined recursively by $\hat \GG(\vec u(n)) = \vec u(n-1)$ and $\vec u(0) \in \sphere^{\dn -1}$.
In this section, we demonstrate two main things about sequences of this form:
First, $\norm{\Pmap_0 \vec u(n)}$ should rapidly become very small.
Second, for any $i \in [\dm]$ such that $u_i(0)$ has sufficiently small magnitude, then the gradient iteration should make $u_i(n)$ decay rapidly until it is very small.
Using these two ideas, we will be able to guarantee that under applications of gradient iteration, the number of hidden coordinates of $\vec u(n)$ which are near zero out can only increase.

We quantify these effects in the following Lemmas.
\Cref{lem:small-coord-decrease-step} characterizes how a single step of the gradient iteration makes $\norm{\Pmap_0 \vec u}$ and the small coordinates of $\vec u$ contract.
Then, \cref{lem:small-coord-decrease-time} expands upon \cref{lem:small-coord-decrease-step} to provide a bound on the number of steps required to decay the small coordinates of $\vec u(n)$ down to a magnitude of order $\epsilon$.

\begin{lem}\label{lem:small-coord-decrease-step}
  Fix $\vec u \in \sphere^{\dn - 1}$ such that $\epsilon < \frac 1 2 \norm{\nabla F(\vec u)}$.
  The following hold:
  \begin{enumerate}
    \item\label{lem:small-coord-trivial} $\norm{\Pmap_0\hat \GG(\vec u)} \leq \frac{2 \epsilon}{\norm{\nabla F(\vec u)}}$.
    \item\label{lem:small-coord-projection}
      Fix any $C \geq 0$.
      Let $\SS \subset [\dn]$ be such that $\abs{u_i} \leq \bigr[\frac {C\gamma}{8 \alpha} \norm{ \nabla F(\vec u) }\bigr]^{\frac 1 {2\gamma}}$ for all $i \in \SS \cap [\dm]$.
      Then, $\norm{\Pmap_{\SS} \hat \GG(\vec u)} \leq \max( C \norm{\Pmap_{\SS \cap [\dm]} \vec u}, \frac{4 \epsilon}{\norm{\nabla F(\vec u)}} )$.
  \end{enumerate}
\end{lem}
\begin{proof}
  Let $A \subset [\dn]$.
  Expanding $\norm{\Pmap_A \hat \GG(\vec u)}$ we obtain for each $i \in [\dn]$:
  \begin{equation} \label{eq:Gi-upper-bound-noassumpt}
    \norm{\Pmap_A \hat \GG(\vec u)}
    = \frac{\norm{\Pmap_A \pertgF(\vec u)}}{\norm{\pertgF(\vec u)}}
    \leq \frac{\norm{\Pmap_A \nabla F(\vec u)} + \epsilon}{\norm{\nabla F(\vec u)} - \epsilon} \ .
  \end{equation}
  As we assumed $\epsilon \leq \frac 1 2 \norm{\nabla F(\vec u)}$,
  \begin{equation}\label{eq:Gi-upper-1}
    \norm{\Pmap_A \hat \GG(\vec u)}
    \leq \frac{\norm{\Pmap_A \nabla F(\vec u)} + \epsilon}{\norm{\nabla F(\vec u)} - \frac 1 2 \norm{\nabla F(\vec u)}}
    = 2 \cdot \frac{\norm{\Pmap_A \nabla F(\vec u)} + \epsilon}{\norm{\nabla F(\vec u)}} \ .
  \end{equation}
  If $A = [\dn] \setminus [\dm]$, then $\Pmap_A \nabla F(\vec u) = 0$, and \cref{eq:Gi-upper-1} implies that $\norm{\Pmap_0 \hat \GG(\vec u)} = \norm{\Pmap_A \hat \GG(\vec u)} \leq \frac{2 \epsilon}{\norm{\nabla F(\vec u)}}$.

  We now prove part~\ref{lem:small-coord-projection}.
  If $\norm{\Pmap_\SS\nabla F(\vec u)} \leq \epsilon$, then \cref{eq:Gi-upper-1} implies that $\norm{\Pmap_\SS \hat \GG(\vec u)} \leq \frac{4 \epsilon}{\norm{\nabla F(\vec u)}}$.
  If $\epsilon \leq \norm{\Pmap_\SS\nabla F(\vec u)}$, then we obtain from \cref{eq:Gi-upper-1} that
    $\norm{\Pmap_\SS \hat \GG(\vec u)}
    \leq \frac{4 \norm{\Pmap_\SS\nabla F(\vec u)}}{\norm{\nabla F(\vec u)}}$.
  We expand $\norm{\Pmap_\SS \nabla F(\vec u)}$ to obtain:
  \begin{align*}
    \norm{\Pmap_{\SS} \nabla F(\vec u)}^2
    &= \sum_{i \in \SS} \abs{g_i'(u_i)}^2
    \leq \sum_{i \in \SS\cap [\dm]} (2\frac \alpha \beta \abs{u_i}^{1+2\gamma})^2 \\
    &\leq \sum_{i \in \SS\cap [\dm]} (2 \frac \alpha \beta \abs {u_i} \cdot \frac{C \gamma}{8 \alpha} \norm{\nabla F(\vec u)})^2
    \leq [\frac{C}{4}\norm{\nabla F(\vec u)}\norm{\Pmap_{\SS \cap [\dm]}\vec u}]^2 \ ,
  \end{align*}
  by using \cref{lem:F-robust-implications} for the first inequality and we use the upper bound on each $\abs{u_i}$ for the second inequality.
  It follows that $\norm{\Pmap_{\SS} \hat \GG(\vec u)} \leq C \norm{\Pmap_{\SS \cap [\dm]} \vec u}^2$.
  Whether $\epsilon \leq \norm{\Pmap_\SS \nabla F(\vec u)}$ or $\epsilon \geq \norm{\Pmap_\SS \nabla F(\vec u)}$, 
  $\norm{\Pmap_\SS \hat \GG(\vec u)} \leq \max(C \norm {\Pmap_{\SS \cap [\dm]}\vec u},\allowbreak \frac{4 \epsilon}{\norm{\nabla F(\vec u)}})$ holds.
\end{proof}

\begin{lem}\label{lem:small-coord-decrease-time}
  Let $\{ \vec u(n) \}_{n=0}^\infty$ be a sequence in $\SS^{\dn-1}$ defined recursively by $\vec u(n) = \hat \GG(\vec u(n\nobreak-\nobreak 1))$.
  Let $L > 0$ be such that $\norm{\nabla F(\vec u(n))} \geq L$ for all $n \in \N$.
  Fix $\SS \subset [\dn]$.
  Suppose $\epsilon < \min\Big(\frac 1 2 L, \allowbreak \bigr(\frac{\gamma L^{1+2\gamma}}{8 \cdot 4^{2\gamma}\alpha}\bigr)^{\frac 1 {2\gamma}} \Big)$ and suppose there exists $C \in (0, 1)$ such that $\norm{ \Pmap_{\SS \cap [\dm]} \vec u(0)} \leq [\frac {C\gamma}{8 \alpha} L]^{\frac 1 {2\gamma}}$.
  If
  \begin{displaymath}
    N \geq
    \log_{1+2\gamma}\left(
      \log_{ \frac 1 C } \left(\frac{L \gamma}{8\alpha}\right)
      + 2\gamma \log_{ \frac 1 C } \left(\frac L {4\epsilon}\right)
    \right) \ ,
  \end{displaymath}
  is a positive integer,
  then for each $n \geq N$, $\norm{\Pmap_\SS \vec u(n)} \leq \frac{4 \epsilon}{L}$.
\end{lem}
\begin{proof}
  Let $N_0$ denote the least integer such that $\norm{\Pmap_\SS \vec u(N_0)} \leq \frac{ 4 \epsilon }{\norm{ \nabla F(\vec u)} }$ (or $\infty$ if it does not exist).
  Also, for compactness of notation, we define $A := \SS \cap [\dm]$.
  \begin{claim}\label{claim:small-coord-exponentiation}
    For each $n < N_0$, $\norm{\Pmap_{A} \vec u(n)} \leq [\frac \gamma {8 \alpha} C^{(1+2\gamma)^n} L]^{\frac 1 {2\gamma}}$.
  \end{claim}
  \begin{subproof}
    We proceed by induction on $n$.
    The base case of $n=0$ is true from the givens of this Lemma.
    We now choose $k < N_0 - 1$ and suppose that the claim holds for $n = k$.
    We see
    \begin{align*}
      \norm{\Pmap_\SS \vec u(k+1)}
      &\leq C^{(1+2\gamma)^k}\norm{\Pmap_{A} \vec u(k)}
      \leq C^{(1+2\gamma)^k}[\frac \gamma {8\alpha} C^{(1+2\gamma)^k} L]^{\frac 1 {2\gamma}} \\
      & = [\frac \gamma {8\alpha} C^{2\gamma(1+2\gamma)^k+(1+2\gamma)^k} L]^{\frac 1 {2\gamma}}
      = [\frac \gamma {8\alpha} C^{(1+2\gamma)^{k+1}} L]^{\frac 1 {2\gamma}} \ ,
    \end{align*}
    using \cref{lem:small-coord-decrease-step} in the first inequality.
    Noting that $\norm{\Pmap_{A}\vec u(k+1)} \leq \norm{\Pmap_\SS \vec u(k+1)}$ since $A \subset \SS$ gives the desired result.
  \end{subproof}
  By our assumptions, we may write the lower bound on $N$ as
  $\log_{1+2\gamma}\bigr(\log_{\frac 1  C} (\frac{\gamma L^{1+2\gamma}}{8 \cdot 4^{2\gamma}\alpha\epsilon^{2\gamma}})\bigr)$. Thus,
  \begin{align*}
    [\frac \gamma {8 \alpha} C^{(1+2\gamma)^N} L]^{\frac 1 {2\gamma}}
    &\leq
    \Big[\frac \gamma {8 \alpha} C^{\log_{\frac 1  C} \bigr(\frac{\gamma L^{1+2\gamma}}{8\cdot 4^{2\gamma}\alpha\epsilon^{2\gamma}}\bigr)}L\Big]^{\frac 1 {2\gamma}}
    = \Big[\frac \gamma {8 \alpha}  \Big(\frac{8 \cdot 4^{2\gamma}\alpha\epsilon^{2\gamma}}{\gamma L^{1+2\gamma}}\Big) L\Big]^{\frac 1 {2\gamma}}
    \leq \frac {4 \epsilon} L \ .
  \end{align*}
  By \cref{claim:small-coord-exponentiation}, it follows that $N_0 \leq N$.

  We note that for some constant $C' \in [0, 1)$,
  \begin{displaymath}
    \frac{4\epsilon} L \leq
    4C' \Bigr(\frac{\gamma L^{1+2\gamma}}{8\cdot 4^{2\gamma}\alpha}\Bigr)^{\frac 1 {2\gamma}} / L
    = [\frac \gamma {8 \alpha} (C')^{2\gamma} L]^{\frac 1 {2\gamma}} \ .
  \end{displaymath}
  If $\norm{\Pmap_A \vec u(n)} \leq \frac {4\epsilon}{L}$, then \cref{lem:small-coord-decrease-step} implies that
  \begin{displaymath}
    \norm{\Pmap_A \vec u(n+1)} \leq \norm{\Pmap_\SS \vec u(n+1)} \leq \max\left((C')^{2\gamma}\norm{\Pmap_A \vec u(n)},\ \tfrac{4\epsilon} L\right) \leq \tfrac{4\epsilon} L \ .
  \end{displaymath}
  It follows by induction on $n$ that $\norm{\Pmap_\SS \vec u(n)} \leq \frac{4\epsilon} L$ for all $n \geq N_0$.
\end{proof}

In \cref{lem:small-coord-decrease-step,lem:small-coord-decrease-time}, one detail seems to be missing, namely the dependence on $\norm{\nabla F(\vec u)}$.
Since during most steps of \textsc{FindBasisElement} $\norm{\Pmap_0 \vec u}$ will be small, we will typically be able use of the following lemma to lower bound $\norm{\nabla F(\vec u)}$.
\begin{lem}\label{lem:P0small-gradf-lower-bound}
  Let $\vec u \in \sphere^{\dn - 1}$.
  Let $\SS \subset [\dm]$ be non-empty.
  If $\norm{\Pmap_{\compl \SS} \vec u} \leq \frac{1}{\sqrt{2(1+2\delta)}}$, then $\norm{\Pmap_{\SS} \vec u}^{1+2\delta} \geq \frac 1 2$ and $\norm{\nabla F(\vec u)} \geq \frac{\beta}{\delta} \abs{\SS}^{-\delta}$.
\end{lem}
Note that in the worst case where $\SS = [\dm]$, may apply \cref{lem:P0small-gradf-lower-bound} to obtain the lower bound that $\norm{\nabla F(\vec u)} \geq \frac \beta \delta \dm^{-\delta}$.
\begin{proof}[\proofword\@of \cref{lem:P0small-gradf-lower-bound}]
  Once we prove that $\norm{\Pmap_{\SS} \vec u}^{1+2\delta} \geq \frac 1 2$, then the lower bound on $\norm{\nabla F(\vec u)}$ follows from \cref{lem:fg-upper-bounds}.
  We now focus on the proof of the lower bound for $\norm{\Pmap_{\SS} \vec u}^{1+2\delta}$.

  We let $f : \R \rightarrow \R$ be defined by $f(x) = x^{2+4\delta}$.
  As such, $\sqrt{f(\norm{\Pmap_{\SS}\vec u})} = \norm{\Pmap_{\SS}\vec u}^{1+2\delta}$.
  The Taylor expansion of $f$ around 1 for any $x \in [0, 1]$ is
  \begin{displaymath}
    f(x) = f(1) + f'(1)(x-1) + \frac 1 2 f''(y)(x - 1)^2
  \end{displaymath}
  for some $y \in [x, 1]$.
  Notice that $\frac 1 2 f''(y)(x-1)^2 = \frac 1 2 (2+4\delta)(1+4\delta)y^{4\delta}(x-1)^2 \geq 0$.
  As such, $f(x) \geq f(1) + f'(1)(x-1) = 1-2(1+2\delta)(1-x)$.
  To obtain that $f(x)\geq \frac 1 4$, it suffices to show that $1-2(1+2\delta)(1-x) \geq \frac 1 4$.
  Rearranging terms, we see that this occurs if $x \geq 1 - \frac{3}{8(1+2\delta)}$.

  In order for $\norm{\Pmap_{\SS}\vec u}^{1+2\delta} = \sqrt{f(\norm{\Pmap_{\SS}\vec u})} \geq \frac 1 2$, it suffices that $\norm{\Pmap_{\SS}\vec u} \geq 1 - \frac{3}{8(1+2\delta)}$.
  Note that
  \begin{displaymath}
    \sqrt{\frac 3 {4(1+2\delta)} - \frac{9}{64(1+2\delta)^2}}
    > \sqrt{\frac 3 {4(1+2\delta)} - \frac{1}{4(1+2\delta)}}
    = \frac 1 {\sqrt {2(1+2\delta)} }
    \geq \norm{\Pmap_{\compl \SS} \vec u} \ .
  \end{displaymath}
  As such,
  \begin{displaymath}
    \norm{\Pmap_{\SS}\vec u}
    = \sqrt{1 - \norm{\Pmap_{\bar \SS} \vec u}^2}
    \geq \sqrt{ 1 - \frac 3 {4(1+2\delta)} + \frac 9 {64(1+2\delta)^2} }
    \geq 1 - \frac 3 {8(1+2\delta)} \ ,
  \end{displaymath}
  as desired.
\end{proof}

Importantly, since $\norm{\Pmap_0 \vec u(n)}$ rapidly goes to 0 under the gradient iteration, the precondition that $\norm{\Pmap_0 \vec u(n)} \leq \frac 1 {\sqrt {2(1+2\delta)} }$ used in \cref{lem:P0small-gradf-lower-bound} when $\SS = [\dm]$ is actually closed under applications of the gradient iteration when $\epsilon$ is sufficiently small.
In particular, we have the following result.
\begin{cor}\label{cor:P0small-closure}
  Suppose that the sequence $\{\vec u(n)\}_{n=0}^\infty$ is recursively defined by $\vec u(n+1) = \hat \GG(\vec u(n))$, and that $\norm{\Pmap_0 \vec u(0)} \leq \frac{1}{\sqrt{2(1+2\delta)}}$.
  If $\epsilon \leq \frac{\beta}{2 \dm^\delta \delta\sqrt{2(1+2\delta)}}$, then $\norm{\Pmap_0 \vec u(n)} \leq \frac 1 {\sqrt{2(1+2\delta)}}$ and $\norm{\nabla F(\vec u)} \geq \frac \beta \delta \dm^{-\delta}$ for all $n$.
\end{cor}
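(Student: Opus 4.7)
The plan is to prove the corollary by a straightforward induction on $n$, using Lemma~\ref{lem:P0small-gradf-lower-bound} (with the choice $\SS = [\dm]$) to convert the hypothesis $\norm{\Pmap_0 \vec u(n)} \leq \tfrac{1}{\sqrt{2(1+2\delta)}}$ into the gradient lower bound $\norm{\nabla F(\vec u(n))} \geq \tfrac{\beta}{\delta}\dm^{-\delta}$, and then using part~\ref{lem:small-coord-trivial} of Lemma~\ref{lem:small-coord-decrease-step} to propagate the bound on $\norm{\Pmap_0 \vec u(\cdot)}$ one step forward.

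The base case $n=0$ is exactly the hypothesis plus a single application of Lemma~\ref{lem:P0small-gradf-lower-bound}. For the inductive step, assume $\norm{\Pmap_0 \vec u(n)} \leq \tfrac{1}{\sqrt{2(1+2\delta)}}$. Lemma~\ref{lem:P0small-gradf-lower-bound} (with $\compl \SS = [\dn]\setminus[\dm]$, so $\Pmap_{\compl\SS} = \Pmap_0$) gives $\norm{\nabla F(\vec u(n))} \geq \tfrac{\beta}{\delta}\dm^{-\delta}$. The assumed bound on $\epsilon$ then gives
\begin{equation*}
\epsilon \;\leq\; \frac{\beta}{2\dm^\delta\delta\sqrt{2(1+2\delta)}} \;\leq\; \frac{\beta}{2\dm^\delta\delta} \;\leq\; \tfrac{1}{2}\norm{\nabla F(\vec u(n))},
\end{equation*}
which verifies the precondition of Lemma~\ref{lem:small-coord-decrease-step}. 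Applying part~\ref{lem:small-coord-trivial} of that lemma yields
\begin{equation*}
\norm{\Pmap_0 \vec u(n+1)} \;=\; \norm{\Pmap_0 \hat\GG(\vec u(n))} \;\leq\; \frac{2\epsilon}{\norm{\nabla F(\vec u(n))}} \;\leq\; \frac{2\epsilon\,\delta\,\dm^\delta}{\beta} \;\leq\; \frac{1}{\sqrt{2(1+2\delta)}},
\end{equation*}
where the last inequality is again the hypothesis on $\epsilon$. This closes the induction, and a final invocation of Lemma~\ref{lem:P0small-gradf-lower-bound} at each step gives the claimed lower bound on $\norm{\nabla F(\vec u(n))}$.

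There is no real obstacle here: the choice of constant in the hypothesis $\epsilon \leq \tfrac{\beta}{2\dm^\delta\delta\sqrt{2(1+2\delta)}}$ is precisely what is needed to ensure both that Lemma~\ref{lem:small-coord-decrease-step} applies (the factor $\tfrac12$) and that the resulting upper bound $\tfrac{2\epsilon\,\delta\,\dm^\delta}{\beta}$ is bounded by $\tfrac{1}{\sqrt{2(1+2\delta)}}$ (the extra $\sqrt{2(1+2\delta)}$ factor). The only subtlety worth flagging is making sure we apply Lemma~\ref{lem:P0small-gradf-lower-bound} with $\SS = [\dm]$ so that $\compl\SS \cap [\dn]$ corresponds to the indices projected onto by $\Pmap_0$, and noting that $|\SS| = \dm$ gives the stated exponent $-\delta$ on $\dm$.
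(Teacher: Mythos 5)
Your proof is correct and takes essentially the same route as the paper's: an induction on the hypothesis $\norm{\Pmap_0 \vec u(n)} \leq \tfrac{1}{\sqrt{2(1+2\delta)}}$, invoking Lemma~\ref{lem:P0small-gradf-lower-bound} with $\SS = [\dm]$ to get the gradient lower bound and then Lemma~\ref{lem:small-coord-decrease-step} part~\ref{lem:small-coord-trivial} to propagate the projection bound forward. You are slightly more careful than the paper in explicitly verifying the precondition $\epsilon < \tfrac12 \norm{\nabla F(\vec u(n))}$ of Lemma~\ref{lem:small-coord-decrease-step}, which the paper's proof leaves implicit.
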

\begin{proof}
  We argue by induction on the hypothesis $\norm{\Pmap_0 \vec u(n)}
  \leq \frac 1 {\sqrt{2(1+2\delta)}}$.
  The base case is given.
  Further, if $\norm{\Pmap_0 \vec u(n)} \leq \frac{1}{\sqrt{2(1+2\delta)}}$,
  then $\norm{\nabla F(\vec u)} \geq \frac \beta \delta \dm^{-\delta}$ by
  \cref{lem:P0small-gradf-lower-bound}.
  Using \cref{lem:small-coord-decrease-step}, we obtain
  \begin{displaymath}
    \norm{\Pmap_0
      \vec u(n+1)} = \norm{\Pmap_0 \hat \GG(\vec u(n))} \leq
    \frac{2\epsilon}{\norm{\nabla F(\vec u(n))}} \leq 2 \epsilon \dm^\delta
    \delta / \beta \leq \frac 1 {\sqrt{2(1+2\delta)}} \ . \qedhere
  \end{displaymath}
\end{proof}

We now combine the \cref{lem:small-coord-decrease-step,lem:P0small-gradf-lower-bound} to provide a time bound for the rapid decay of the small coordinates of $\vec u(n)$.
Here and later in our analysis, we will introduce a number of useful constants and expressions for various lemmas and propositions that we prove, indexing these expressions by the lemma/proposition number of the result for which they were introduced.
We define $\tcompress := [\frac {\beta\gamma} {16 \alpha\delta }\dm^{- \delta}]^{\frac 1 {2\gamma}}$.
This magnitude is treated as a threshold for the cutoff between small and large coordinates of $\vec u$.
Those coordinates of $\vec u(0)$ for which $\abs{u_i(0)} \leq \tcompress$ shrink and then stay small under the gradient iteration unless $\norm{\Pmap_0 \vec u(0)}$ is unusually large.
More formally, we have the following result.

\begin{prop}\label{prop:GI-Loop-Small-Coords-Error}
  Let $\vec u \in \sphere^{\dn - 1}$.
  Suppose that $\epsilon < \frac \beta {8\delta \sqrt{1+2\delta} }\dm^{- \delta-\frac 1 2}\left[\frac{\beta \gamma}{16\alpha \delta}\dm^{- \delta}\right]^{\frac 1 {2\gamma}}$, that
  $\norm{\Pmap_0 \vec u} \leq \frac 1 {\sqrt{2(1+2\delta)}}$, and that
  \begin{displaymath}
  N \geq \log_{1+2\gamma} \biggr(
  \log_2 \biggr( \frac{\beta \gamma}{8\alpha \delta} \biggr)
  + 2\gamma \log_2 \biggr( \frac{\beta}{4\delta\epsilon} \biggr)
  \biggr) \ ,
\end{displaymath}
  is a positive integer.
  Let $\vec w \leftarrow \textsc{GI-Loop}(\vec u, 2N)$.
  The following hold:
  \begin{enumerate}
    \item \label{prop:GI-small-coord:it:trivial} $\norm{\Pmap_0 \vec w} \leq 2\delta \dm^{\delta} \epsilon / \beta$.
    \item \label{prop:GI-small-coord:it:nontrivial}
    Let $\SS \subset [\dm]$.
    If $\abs{u_j} \leq [\frac {\beta\gamma} {16 \alpha\delta }\dm^{- \delta}]^{\frac 1 {2\gamma}}$ for all $j \in \SS$,
    then $\norm{(\Pmap_0 + \Pmap_\SS) \vec w} \leq 4\delta (\dm-\abs \SS)^{\delta} \epsilon / \beta$.
  \end{enumerate}
\end{prop}
\def\Nsc{\ensuremath{N_{\ref{prop:GI-Loop-Small-Coords-Error}}}}%
Also for later reference, we define \Nsc\@ to be $\bigr\lceil\log_{1+2\gamma} \bigr(
  \log_2 \bigr( \frac{\beta \gamma}{8\alpha \delta} \bigr)
  + 2\gamma \log_2 \bigr( \frac{\beta}{4 \delta \epsilon} \bigr)\bigr)\bigr\rceil$.
  Note that since $\frac{\beta \gamma}{8 \alpha \delta} < 1$, we have that $\log_2(\frac{\beta \gamma}{8 \alpha \delta}) < 0$.
  In particular, it is actually sufficient in \cref{prop:GI-Loop-Small-Coords-Error} that $N \geq \log_{1+2\gamma}(2\gamma \log_2(\frac {\beta}{4\delta \epsilon}))$.
  Further, $\log_{1+2\gamma}(2\gamma) + \log_{1+2\gamma}( \log_2(\frac {\beta}{4\delta \epsilon})) \leq \log_{1+2\gamma}( \log_2(\frac {\beta}{4\delta \epsilon})) + 1$ implies that it is sufficient that $N \geq C \log_{1+2\gamma}( \log_2(\frac {\beta}{4\delta \epsilon}))$ for some universal constant $C$.
  In particular, this time bound represents a super-linear (order $1+2\gamma$) rate of convergence of the small coordinates to $\epsilon$-error which corresponds to the convergence rate guarantees that were seen in \cref{thm:gi-convergence} for the unperturbed setting.
  We also use this simplified version of the bound in the statements of \cref{thm:single-recovery-main,thm:full-recovery-main}.

\begin{proof}[\proofword\@of \cref{prop:GI-Loop-Small-Coords-Error}]
  We first define the sequence $\{\vec u(n)\}_{n=0}^\infty$ recursively by $\vec u(0) = \vec u$ and $\vec u(n+1) = \hat \GG(\vec u(n))$.
  By construction, $\vec w = \vec u(2N)$.
  As such, it suffices to prove the desired properties on this sequence.

  We first show (by induction on $n$) that for every $n \in \N \cup \{0\}$,
  $\norm{\nabla F(\vec u(n))} \geq \frac \beta \delta \dm^{- \delta}$.
  The base case when $n = 0$ follows by \cref{lem:P0small-gradf-lower-bound} (choosing $\SS$ in \cref{lem:P0small-gradf-lower-bound} as $[\dm]$).
  Letting $L = \frac \beta {\delta} \dm^{-\delta}$,
  it is easily verified that $\epsilon \leq \frac 1 2 L$, and in particular, we may apply \cref{lem:small-coord-decrease-step} whenever our inductive hypothesis holds.
  We suppose that our inductive hypothesis holds for $n = k$.
  Using \cref{lem:small-coord-decrease-step} part~\ref{lem:small-coord-trivial}, we see that
  \begin{align}\label{eq:null-proj-bound}
    \norm{\Pmap_0 \vec u(k+1)}
    &\leq \frac {2 \epsilon}{ \norm{\nabla F(\vec u(k))} }
    \leq \frac {2 \epsilon} L \nonumber \\
    &\leq \frac{\beta}{8\delta\sqrt{1+2\delta}} \dm^{-\delta-\frac 1 2}\left[\frac{\beta\gamma}{16\alpha\delta\dm^{\delta}}\right]^{\frac 1 {2\gamma}} / L
    < \frac 1 {2\sqrt{2(1+2\delta)}} \ .
  \end{align}
  As such, we may apply \cref{lem:P0small-gradf-lower-bound} to see that $\norm{ \nabla F(\vec u(k+1)) } \geq L$ as desired.
  By the principle of mathematical induction, $\norm{\nabla F(\vec u(n))} \geq L$ for all $n \in \N \cup \{0\}$.

  To obtain part~\ref{prop:GI-small-coord:it:trivial}, apply \cref{lem:small-coord-decrease-step} to see that
  \begin{displaymath}
  \norm{\Pmap_0 \vec w} = \norm{\Pmap_0 \vec u(2N)} \leq \frac{2 \epsilon}{\norm{\nabla F(\vec u(2N-1))}} \leq \frac{2\delta\dm^{\delta} \epsilon} \beta \ .
  \end{displaymath}

  We now prove part~\ref{prop:GI-small-coord:it:nontrivial}.
  With $L$ as before, we note that by construction,
  \begin{align*}
    \epsilon
    &< \frac{\beta}{8\delta\sqrt{1+2\delta}\dm^{\delta+\frac 1 2}}
    \left[\frac{\beta\gamma}{16\alpha\delta\dm^{\delta}}\right]^{\frac 1 {2\gamma}} \\
    &= \frac{L}{8\sqrt{1+2\delta}\dm^{\frac 1  2}}\left[\frac{L\gamma}{16\alpha}\right]^{\frac 1 {2\gamma}}
    \leq \min\biggr( \frac 1 2 L, \Big(\frac{\gamma L^{1+2\gamma}}{8\cdot 4^{2\gamma} \alpha} \Big)^{\frac 1 {2\gamma}} \biggr) \ .
  \end{align*}
  Notice that $L$ is a lower bound on $\norm{\nabla F(\vec u(n))}$ for all $n$.
  We apply \cref{lem:small-coord-decrease-time} with the choice $\SS_{\ref{lem:small-coord-decrease-time}} =\{j\}$ such that $j \in \SS$, the choice $C = \frac 1 2$, and our choice of $L$.
  We obtain $\abs{u_j(n)} \leq \frac {4\epsilon} L \leq 4\delta\dm^{\delta}\epsilon / \beta$ for all $n \geq N$.

  We fix $k \geq N$ an arbitrary integer.
  We note that
  \begin{align*}
    \norm{\Pmap_{\SS} \vec u(k)}
    &\leq \sqrt{\textstyle{\sum_{j \in \SS}} (4\delta \dm^{\delta} \epsilon / \beta)^2}
    \leq 4\delta \dm^{\delta+\frac 1 2} \epsilon / \beta \\
    &\leq \min\left( \frac 1 {2\sqrt{2(1+2\delta)}},\ [\frac{\beta \gamma}{16 \alpha \delta} \dm^{- \delta}]^{\frac 1 {2\gamma}}\right)
  \end{align*}
  by our choice of $\epsilon$.
  Combining with \cref{eq:null-proj-bound}, we see that $\norm{(\Pmap_0 + \Pmap_\SS)\vec u(k)} \leq \frac 1 {\sqrt{2(1+2\delta)}}$.
  Applying \cref{lem:P0small-gradf-lower-bound} with $\SS_{\ref{lem:P0small-gradf-lower-bound}} = \compl \SS \cap [\dm]$, we see that $\norm{\nabla F(\vec u(k))} \geq \frac \beta \delta \abs{\compl \SS \cap [\dm]}^{- \delta} = \frac \beta \delta (m - \abs{\SS})^{- \delta}$.
  We set a new choice of lower bound $L = \frac \beta \delta (m - \abs{\SS})^{- \delta}$, and we note that $\norm{\Pmap_\SS \vec u(k)} \leq [\frac{\beta \gamma}{16 \alpha \delta} \dm^{- \delta}]^{\frac 1 {2\gamma}} \leq [\frac{\beta \gamma}{16 \alpha \delta} (\dm-\abs \SS)^{- \delta}]^{\frac 1 {2\gamma}} \leq [\frac{\gamma L}{16 \alpha}]^{\frac 1 {2\gamma}}$.
  With our new choice of $L$, we may thus apply \cref{lem:small-coord-decrease-time} on the sequence $\{\vec u(n)\}_{n=N}^\infty$ to obtain that $\norm{\Pmap_\SS \vec w} = \norm{\Pmap_\SS \vec u(2N)} \leq \frac {4 \epsilon} L \leq 4 \delta (\dm-\abs \SS)^\delta \epsilon / \beta$.
\end{proof}

\Cref{prop:GI-Loop-Small-Coords-Error} foreshadows a bound for the final estimation error for the recovery of any hidden basis element using \textsc{FindBasisElement}.
We have not yet demonstrated that the main loop of \textsc{FindBasisElement} drives every coordinate of $\vec u$ (except 1) to become small in the sense of \cref{prop:GI-Loop-Small-Coords-Error}; however, we will eventually do so.
Combining \cref{lem:hbe-error-bound} below with the error bound from \cref{prop:GI-Loop-Small-Coords-Error} (with $\SS$ chosen such that $\dm - \abs{\SS} = 1$), we will predict that for $\vec u$ returned by \textsc{FindBasisElement}, there exists a sign $s \in \{ \pm 1 \}$ and a hidden basis element $\myZv_i$ such that $i \in [\dm]$ and $\norm{s \myZv_k - \gvec \nu} \leq 4 \sqrt 2 \delta \epsilon / \beta$.
Later, when arguing about the accuracy of \textsc{FindBasisElement}, we will use this predicted bound when making assumptions on how accurately the previously recovered $\gvec \mu_k$s estimate hidden basis elements.

\begin{lem}\label{lem:hbe-error-bound}
  Fix $\vec u \in \sphere^{\dn - 1}$ and $j \in [\dm]$.
  Let $\SS = \{ j \}$.
  Then, there exists $s \in \{ \pm 1 \}$ such that $\norm{s \myZv_j - \vec u} \leq \norm{P_{\compl \SS} \vec u} \sqrt 2$. 
\end{lem}
\begin{proof}
  We choose $s$ such that $s u_j = \abs {u_j}$.
  We note that
  \begin{displaymath}
    \norm{s \myZv_j - \vec u}^2
    = \norm{P_{\compl \SS} \vec u}^2 + (s - u_j)^2
    \leq \norm{P_{\compl \SS} \vec u}^2 + \abs{(s - u_j)(s + u_j)} \ ,
  \end{displaymath}
  where the inequality uses that $s$ and $u_j$ are of the same sign and hence that $s+u_j$ has at least the same magnitude as $s - u_j$.
  But since $(s - u_j)(s + u_j) = 1 - u_j^2 \geq 0$, we obtain:
  \begin{displaymath}
    \norm{s \myZv_j - \vec u}^2
    \leq \norm{P_{\compl \SS} \vec u}^2 + 1 - u_j^2 \ .
  \end{displaymath}
  Since $\vec u \in \sphere^{\dn - 1}$ is a unit vector, $u_j^2 = 1 - \norm{P_{\compl \SS} \vec u}^2$.
  Thus, $\norm{s \myZv_j - \vec u}^2
    \leq 2\norm{P_{\compl \SS} \vec u}^2 $.
  Taking square roots gives the desired result.
\end{proof}

\subsubsection{Setting Up the Main Loop of \textsc{FindBasisElement}}

We now demonstrate that the steps in \textsc{FindBasisElement} preceding the main loop create a warm start for the main loop.
That is, we wish to demonstrate that after line~\ref{alg:step:zero-nonbasis-coords2} of \textsc{FindBasisElement}, $\norm{\Pmap_0 \vec u}$ is small and so are the coordinates of $\vec u$ corresponding to the hidden basis directions approximately recovered in $\gvec \mu_1, \dotsc, \gvec \mu_k$.
This line of argument is carried out in \cref{lem:zeroing-nonbasis-coords,lem:main-loop-precondition} below.

\begin{lem}\label{lem:zeroing-nonbasis-coords}
    Consider an execution of \textsc{FindBasisElement}.
    Fix $\eta \in [0, \frac 1 {4 \dm \sqrt \dn}]$.
    Suppose that $k < \dm$; that there is a permutation $\pi$ on $[\dm]$; there are sign values $s_1, \dotsc, s_k \in \{+1, -1\}$ such that for each $i \in [k]$, $\norm{s_i \gvec \mu_i - \myZv_{\pi(i)}} \leq \eta$; and that $\epsilon \leq \frac \beta {4 \sqrt 2 \delta} \dm^{- \delta} \dn^{- \frac 1 2 - \delta}$.
    At the end of the execution of step~\ref{alg:step:zero-nonbasis-coords} of \textsc{FindBasisElement}, the following hold:
    \begin{enumerate}
        \item\label{item:step5:null-space} $\norm{\Pmap_0 \gvec u} \leq \dm^{\delta}\dn^{\frac 1 2 + \delta}\delta \epsilon / \beta$.
        \item\label{item:step5:controlled-space} If $\eta \leq 4\sqrt 2 \delta \epsilon/ \beta$, and $i \in \{\pi(j) \suchthat j \in [k]\}$, then $ \abs{u_i} \leq \frac{25 \alpha\delta}{2\beta\gamma} \dm^{\delta} \dn^{\frac 1 2 + \delta} \delta \epsilon/\beta$.
    \end{enumerate}
\end{lem}

\begin{proof}
    First, we demonstrate that one of the vectors $\vec x_i$ from step~\ref{alg:step-xi-spread} of \textsc{FindBasisElement} has $\norm{\Pmap_{[\dm]} \vec x_i}^2 \geq \frac{\dm-k}{\dn}$.
    We will later use this to demonstrate that $j$ in step~\ref{alg:step:best-xi} satisfies that $\norm{\nabla F(\vec x_j)}$ is sufficiently large for $\hat \GG(\vec x_j)$ to work as intended.

    \begin{claim}\label{claim:xi-in-non-negligeable-region}
        There exists $i \in [\dn - k]$ such that $\norm{\Pmap_{[\dm]} \vec x_i}^2 \geq \frac{\dm-k}\dn$.
    \end{claim}
    \begin{subproof}
        We let $\SS = \{\pi(k+1), \pi(k+2), \dotsc, \pi(\dm) \}$.
        We extend the list of vectors $\vec x_1, \dotsc, \vec x_{\dn-k}$ to be an orthonormal basis of the space: $\vec x_1, \dotsc, \vec x_\dn$.
        Since each $\myZv_i$ is a unit vector, it follows:
        \begin{equation}
        \label{eq:PtildeS-mean}
        \frac 1 \dn \sum_{i=1}^{\dn} \norm{\Pmap_\SS \vec x_i}^2
        = \frac 1 \dn \sum_{i=1}^{\dn} \sum_{j=k+1}^\dm \ipCanonicalp{\vec x_i}{\myZv_{\pi(j)}}^2
        = \frac 1 \dn \sum_{j={k+1}}^\dm \norm{\myZv_{\pi(j)}}^2
        = \frac{\dm-k}{\dn} \ .
        \end{equation}
        Treating \cref{eq:PtildeS-mean} as a sample average, there exists $i \in [\dn]$ such that $\norm{\Pmap_{\SS} \vec x_i}^2 \geq \frac{\dm - k}\dn$.

        To complete the proof, we need only demonstrate that for any $i > \dn - k$, $\norm{\Pmap_{S} \vec u} < \sqrt{\frac{\dm - k}\dn}$.
        To show this, we first demonstrate that $\gvec \mu_1, \dotsc, \gvec \mu_k$ span a $k$ dimensional space.
        Note that this implies that $\vec x_1, \dotsc, \vec x_{\dn - k}$ span the space $\spn(\gvec \mu_1, \dotsc, \gvec \mu_k)^\perp$.
        Therefore, for any $i > \dn - k$ we have $\vec x_i \in \spn(\gvec \mu_1, \dotsc, \gvec \mu_k)$.
        Then to complete the proof, we demonstrate that for any $\vec v \in \spn(\gvec \mu_1, \dotsc, \gvec \mu_k)$, we have $\norm{\Pmap_{S} \vec u} < \sqrt{\frac{\dm - k}\dn}$.

        Now consider the matrices $A = A_0 = \sum_{i=1}^k \gvec \mu_i \gvec \mu_i^T$ and $\tilde A = \tilde A_0 = \sum_{i=1}^k \myZv_{\pi(i)} \myZv_{\pi(i)}^T$.
        We note:
        \begin{align*}
        \norm{A_0 - \tilde A_0}
        &= \Big \lVert { \sum_{i=1}^k[(\gvec \mu_i - \myZv_{\pi(i)})\gvec \mu_i^T
            + \myZv_{\pi(i)}(\gvec \mu_i - \myZv_{\pi(i)})^T ]} \Big\lVert \\
        &\leq 2 \sum_{i=1}^k\norm{\gvec \mu_i - \myZv_{\pi(i)}}\norm{\gvec \mu_i}
        \leq 2k\eta \ .
        \end{align*}
        In particular, Weyl's inequality (reproduced in \cref{thm:Weyls-inequality}) implies that the $k$\textsuperscript{th} lowest eigenvalue $\lambda_k(A_0) \geq \lambda_k(\tilde A_0) - 2k\eta \geq 1 - 2k\eta > 0$.
        The vectors $\gvec \mu_1, \dotsc, \gvec \mu_k$ are linearly independent.
        As the $k$ eigenvalues of $A_0$ are contained in the interval $[1 - 2k\eta,\ 1 + 2k\eta]$ by Weyl's inequality,
        \cref{thm:sin-theta-thm} (the Davis-Kahan sin $\Theta$ theorem\index{sin $\Theta$ theorem}) with $\tilde A_1 = \sum_{i=k+1}^\dn 0 \myZv_{\pi(i)}\myZv_{\pi(i)}^T$, implies that
        \begin{displaymath}
        (1-2k\eta)\norm{\Pmap_{\spn(\myZv_{\pi(k+1)}, \dotsc, \myZv_{\pi(\dn)})} \Pmap_{\spn(\gvec \mu_1, \dotsc, \gvec \mu_k)}}
        \leq 2k\eta
        \end{displaymath}
        \begin{align*}
        \norm{\Pmap_{\spn(\myZv_{\pi(k+1)}, \dotsc, \myZv_{\pi(\dn)})} \Pmap_{\spn(\gvec \mu_1, \dotsc, \gvec \mu_k)}}
        &\leq \frac{2k\eta}{1-2k\eta}
        < \frac {1/(2\sqrt{\dn})}{1-1/(2\sqrt \dn)}
        \leq \frac {1/(2\sqrt{\dn})}{1/2} \\
        &\leq \frac 1 {\sqrt \dn}
        \leq \sqrt{\frac {\dm-k} \dn} \ .
        \end{align*}
        As such, if $\vec v \in \spn(\gvec \mu_1, \dotsc, \gvec \mu_k)$, then
        \begin{displaymath}
          \norm {\Pmap_\SS \vec v} \leq \norm{\Pmap_{\spn(\myZv_{\pi(k+1)}, \dotsc, \myZv_{\pi(\dn)})} \Pmap_{\spn(\gvec \mu_1, \dotsc, \gvec \mu_k)}} < \sqrt{\frac{\dm - k}{\dn}} \ .
      \end{displaymath}

    \end{subproof}
    We now fix $i\in [\dn - k]$ such that $\norm{\Pmap_{[\dm]} \vec x_i}^2 \geq \frac{\dm - k}{\dn}$ according to \cref{claim:xi-in-non-negligeable-region}, and we fix $j$ according to step~\ref{alg:step:best-xi} from \textsc{FindBasisElement}.
    Note that,
    \begin{displaymath}
    \norm{\grad F(\vec x_i)}
    \geq \frac {2\beta} {\delta} \norm{\Pmap_{[\dm]} \vec x_i}^{1+2\delta}\dm^{-\delta}
    \geq \frac {2 \beta} \delta \left(\frac{m-k}{\dn}\right)^{\frac {1+2\delta} 2} \dm^{- \delta}
    \geq \frac{\beta}{2\delta} \dm^{- \delta} \dn^{- \frac{1+2\delta} 2} \ ,
    \end{displaymath}
    using \cref{lem:fg-upper-bounds} (with projection on the set $[\dn]$) for the first inequality and that $k < m$ implies $m-k \geq 1$ for the final inequality.

    It follows that
    \begin{equation}\label{eq:1}
    \norm{\pertgF(\vec x_j)}
    \geq \norm{\pertgF(\vec x_i)}
    \geq \norm{\nabla F(\vec x_i)} - \epsilon
    \geq \frac{2\beta} \delta \dm^{-\delta} \dn^{-\frac {1+2\delta} 2} - \epsilon
    \end{equation}
    where the first inequality follows from by the choice of $j$ from step~\ref{alg:step:best-xi} of \textsc{FindBasisElement}, and the second inequality uses that $\pertgF$ is an $\epsilon$-approximation to $\nabla F$.

    We now show part~\ref{item:step5:null-space}.
    By the assumption that
    $\epsilon
    \leq \frac \beta {4 \sqrt 2 \delta} \dm^{- \delta} \dn^{- \frac {1+2\delta} 2 }
    \leq \frac \beta \delta \dm^{- \delta} \dn^{- \frac {1+2\delta} 2 }$,
    we have that $\norm{\pertgF(\vec x_j)} \geq \frac{\beta}{\delta}\dm^{-\delta} \dn^{-\frac {1+2\delta} 2}$.
    We see
    \begin{displaymath}
    \norm{\Pmap_0\hat \GG(\vec x_j)}
    \leq \frac{\epsilon}{\norm{\pertgF(\vec x_j)}}
    \leq \dm^{\delta}\dn^{\frac{1+2\delta} 2} \delta \epsilon / \beta \ .
    \end{displaymath}

    We now show part~\ref{item:step5:controlled-space}.
    We let $\vec w = \vec x_j$.
    We let $\ell \in [k]$, and noting that $\vec w \perp \gvec \mu_\ell$ by construction, we obtain the following bound for $\abs {w_{\pi(\ell)}}$:
    \begin{align}\label{eq:2}
    \abs{w_{\pi(\ell)}}
    &= \abs{\ipCanonical{\vec x_j}{(\myZv_{\pi(\ell)} - s_\ell \gvec \mu_\ell + s_\ell \gvec \mu_\ell)}}
    = \abs{\ipCanonical{\vec x_j}{(\myZv_{\pi(\ell)} - s_\ell \gvec \mu_\ell)}} \nonumber \\
    &\leq \norm{\vec x_j} \norm{\myZv_{\pi(\ell)} - s_\ell \gvec \mu_\ell}
    \leq 4 \sqrt 2 \delta \epsilon / \beta \ .
    \end{align}
    We now fix $i \in \{ \pi(\ell) \suchthat \ell \in [k]\}$
    and bound $\abs {u_i} = \abs {\hat \GG_i(\vec w)}$:
    \begin{align*}
    \abs {u_i} = \abs{\hat \GG_i(\vec w)}
    & = \frac{\abs{\pertgF_i(\vec w)}}{\norm{\pertgF(\vec w)}}
    \leq \frac{\abs{\partial_i F(\vec w)}+\epsilon}{\norm{\pertgF(\vec w)}}
    \leq \frac{\frac {2\alpha}{\gamma} \abs{w_i}^{1+2\gamma} + \epsilon }{ \frac{2 \beta} \delta \dm^{-\delta} \dn^{-\frac {1+2\delta} 2} - \epsilon} \\
    &\leq
    \frac{8\sqrt 2 \cdot \frac{\alpha\delta}{\beta\gamma} \left( 4\sqrt 2 \delta \epsilon / \beta \right)^{2\gamma} \epsilon + \epsilon }{ \frac{\beta}{\delta} \dm^{-\delta} \dn^{-\frac {1+2\delta} 2}}
    \end{align*}
    In the above, the second inequality uses \cref{lem:fg-upper-bounds} and \cref{eq:1}; the third inequality uses \cref{eq:2} and that $\epsilon \leq \frac{\beta}{4\sqrt 2 \delta}\dm^{- \delta}\dn^{- \frac {1+2\delta} 2} \leq \frac{\beta}{\delta}\dm^{- \delta}\dn^{- \frac {1+2\delta} 2}$.
    By our given bound on $\epsilon$, we note that $\left( 4\sqrt 2 \delta \epsilon / \beta \right)^{2\gamma} \leq 1$.
    As such, we obtain that
    \begin{align*}
    \abs {u_i} = \abs{\hat \GG_i(\vec w)}
    &\leq  \frac{[8\sqrt 2 \cdot \frac{\alpha\delta}{\beta\gamma} + 1] \epsilon }{ \frac{\beta}{\delta} \dm^{-\delta} \dn^{-\frac {1+2\delta} 2}}
    \leq \frac{(8\sqrt 2 + 1) \cdot \frac{\alpha\delta}{\beta\gamma}\dm^{\frac {1+2\delta} 2} \epsilon }{ \frac{\beta}{\delta} \dm^{-\delta} \dn^{-\frac {1+2\delta} 2}} \\
    &\leq  (8\sqrt 2 + 1) \cdot \frac{\alpha\delta^2}{\beta^2\gamma} \dm^{\delta} \dn^{\frac {1+2\delta} 2} \epsilon
    \leq \frac{25 \alpha \delta^2}{2 \beta^2\gamma} \dm^{\delta} \dn^{\frac {1+2\delta} 2} \epsilon
    \ . \qedhere
    \end{align*}
\end{proof}

\begin{lem}\label{lem:main-loop-precondition}
    Let $k$ be defined as in an execution of \textsc{FindBasisElement}.
    Suppose that $k < \dm$,
    that
    $\epsilon < \frac 1 {16 \sqrt 2} \frac{\beta^2\gamma}{\alpha \delta^2 \sqrt{1+2\delta}}\dm^{-\frac 1 2 -\delta}\dn^{- \frac 1 2 - \delta}\tcompress$,
    that $N_1 \geq 2 N_{\ref{prop:GI-Loop-Small-Coords-Error}}$,
    and that
    there exists a permutation $\pi$ of $[\dm]$ and sign values $s_1, \dotsc, s_k \in \{\pm 1\}$ such that $\norm{s_j \myZv_{\pi(j)} - \gvec \mu_j} \leq 4 \sqrt 2 \delta \epsilon / \beta$ for each $j \in [k]$.
    At the beginning of the execution of the main loop of \textsc{FindBasisElement}, the following hold for $\vec u$:
    \begin{enumerate}
        \item $\norm{\Pmap_0 \vec u} \leq 2 \dm^{\delta} \delta \epsilon / \beta$.
        \item Let $\SS = \{ \pi(j) \suchthat j \in [k] \}$.
        Then, $\norm{(\Pmap_0 + \Pmap_\SS) \vec u} \leq 4(\dm - \abs \SS)^{\delta} \delta \epsilon / \beta$.
    \end{enumerate}
\end{lem}
\begin{proof}
    We first notice that for each $j \in [k]$,
    \begin{displaymath}
    \norm{s_j \myZv_{\pi(j)} - \gvec \mu_j}
    \leq 4 \sqrt 2 \delta \epsilon / \beta
    < \frac{\beta\gamma}{4\alpha\delta\sqrt{1+2\delta}\dm^{\frac 1 2+\delta}}\dn^{-\frac{1+2\delta} 2}\left[\frac{\beta}{16\alpha}\dm^{-\delta}\right]^{\frac 1 {2\gamma}}
    < \frac 1 {4 \dm \sqrt \dn} \ ,
    \end{displaymath}
    where the final inequality uses that $\delta \geq \gamma$ to see that $\dm^{- \frac{\delta}{2\gamma}} \leq \dm^{-\frac 1 2}$.
    As such, we may apply \cref{lem:zeroing-nonbasis-coords} to see that at the end of step~\ref{alg:step:zero-nonbasis-coords} of \textsc{FindBasisElement} that $\norm{\Pmap_0 \vec u} \leq \dm^{\delta} \dn^{\frac {1+2\delta} 2}\delta \epsilon / \beta$ and for each $j \in [k]$ that $\abs{u_{\pi(j)}} \leq \frac{25 \alpha \delta^2}{2\beta^2\gamma} \dm^{\delta} \dn^{\frac{1+2\delta} 2} \epsilon$.
    By our choice of $\epsilon$, it may be verified that
    \begin{displaymath}
    \norm{\Pmap_0 \vec u} \leq \dm^{\delta} \dn^{\frac {1+2\delta} 2} \delta \epsilon / \beta
    < \frac 1 {\sqrt{2(1+2\delta)}}
    \end{displaymath}
    and that for each $j \in [k]$,
    \begin{displaymath}
    \abs{u_{\pi(j)}}
    \leq \frac{25 \alpha\delta^2}{2\beta^2\gamma} \dm^{\delta} \dn^{\frac{1+2\delta} 2} \epsilon
    < \Bigr[ \frac{\beta\gamma}{16 \alpha\delta}\dm^{-\delta}\Bigr]^{\frac 1 {2\gamma}} \ .
    \end{displaymath}
    As the step $\vec u \leftarrow \textsc{GI-Loop}(\vec u, N_1)$ in line~\ref{alg:step:zero-nonbasis-coords2} sets up the main loop of \textsc{FindBasisElement}, applying \cref{prop:GI-Loop-Small-Coords-Error} gives the desired result.
\end{proof}

\subsection{The Big Become Bigger}
\label{sec:large-coord-analysis}
\label{sec:large-coord-analysis-1}

We saw in \cref{sec:small-coord-analysis} that the small coordinates of $\vec u$ rapidly decay under the gradient iteration until they are on the order of $\epsilon$.
In this section, we demonstrate how the large coordinates of $\vec u$ diverge under the gradient iteration, causing some to become bigger and other large coordinates to become small.
In particular, we create a robust version of \cref{prop:gi-expand-all} from \cref{sec:gi-divergence-criteria}.
We recall that in \cref{prop:gi-expand-all}, it was seen that if $\vec v$ is a fixed point of $\GG / {\sim}$ and $\vec u$ satisfied that there exists $i \in [\dm]$ with $v_i \neq 0$ and $\abs{u_i} > v_i$, then for some $u_j$ with $j$ among the non-zero coordinates of $\vec v$ is driven towards 0 by the gradient iteration.
This proposition provided a very useful characterization of the instability of all fixed points of $\GG/{\sim}$ other than the hidden basis $\pm \myZv_1, \dotsc, \pm \myZv_\dm$.

Before proceeding, we will need the following technical result.
\begin{lem}\label{lem:vi-spread}
 Let $\SS \subset [\dm]$ be non-empty, and let $\vec v \in \porth^{\dn - 1}$ be a fixed point of $\GG/{\sim}$ such that $v_i \neq 0$ if and only if $i \in \SS$.
 If $k \in \SS$, then $\abs {v_k} \geq  \big(\frac{\beta \gamma }{\alpha \delta \abs \SS^{\delta}} \big)^{\frac 1 {2\gamma}}$.
\end{lem}
\begin{proof}
  Since $\sum_{i \in \SS} v_i^2 = 1$, there exists $j \in \SS$ with $v_j^2 \geq \frac 1 {\abs \SS}$.
  By \cref{lem:F-robust-implications}, $\abs{h_j'(v_j^2)} \geq \frac \beta \alpha \abs {v_j}^{2\delta} \geq \frac{\beta}{\alpha \abs \SS^{\delta}}$.

  Fix any $k \in \SS$.  Then, by \cref{lem:F-robust-implications}, $\abs{h_k'(v_k^2)} \leq \frac{\alpha}{\gamma} \abs {v_k}^{2\gamma}$.
  Since $h_k'(v_k^2) = h_j'(v_j^2)$ by \cref{obs:stationary-criterion}, it follows that $\frac{\alpha}{\gamma}\abs {v_k}^{2\gamma} \geq \frac{\beta}{\delta \abs \SS^{\delta}}$.
  In particular, $\abs {v_k} \geq \big(\frac{\beta\gamma}{\alpha\delta \abs{\SS}^{\delta} }\big)^{\frac 1 {2\gamma}}$.
\end{proof}

We now demonstrate that when there exists a large coordinate $k$ such that $\abs{u_k}$ is sufficiently greater than the corresponding coordinate $\abs{v_k}$ of a fixed point $\vec v$ of $\GG/{\sim}$, then the separation between large coordinates of $\vec u$ expands under the Gradient iteration.
This expansion was the main idea underlying the proof of \cref{prop:gi-expand-all} (see \cref{claim:expand}).
\begin{lem}\label{lem:robust-GI-progress}
  Let $\vec u \in \sphere^{\dn-1}$ be such that the set $\SS := \{ i \suchthat \abs{u_i} \geq \tcompress\}$ is a subset of $[\dm]$ containing at least 2 elements.
  Suppose $\norm{\Pmap_0 \vec u} \leq \frac 1 {2\sqrt{1+2\delta}}$.
  Let $\vec v \in \porth^{\dn - 1}$ be the fixed point of $\GG/{\sim}$ such that $v_i \neq 0$ if and only if $i \in \SS$.
  Let ${\ell} = \argmax_{i\in\SS} \frac{\abs{u_i}}{v_i}$, and ${k} = \argmin_{i \in \SS} \frac{\abs{u_i}} {v_i}$.
  Fix $\eta \in (0, 1]$, suppose that $\frac{\abs{u_{\ell}}/v_{\ell}}{\abs{u_{k}}/v_{k}} \geq (1 + \eta)^2$, and that $\abs {u_\ell} \geq v_\ell$.
  The following hold:
  \begin{enumerate}
  \item\label{lem:robust-GI-progress:it:1}
    If $\epsilon \leq \frac 1 8 \frac \beta \delta \big(\frac{\beta \gamma}{\alpha \delta}\big)^{\frac \delta \gamma}\abs{\SS}^{-\frac \delta \gamma(\delta - \gamma)} \tcompress^{1 + 2\delta}\eta$, then
    \begin{displaymath}
      \max_{i, j \in \SS} \frac{\abs{\hat \GG_i(\vec u)}/v_i}{\abs{\hat \GG_j(\vec u)}/v_j}
      \geq \left(1 + \frac 3 4 \fcondexpand^{\frac{\delta}{\gamma}}\abs{\SS}^{-\frac \delta \gamma (\delta - \gamma)}\eta \right)\frac{\abs{u_{\ell}}/v_{\ell}}{\abs{u_{k}}/v_{k}} \ .
    \end{displaymath}
  \item\label{lem:robust-GI-progress:it:3}
    If $\epsilon \leq \frac 9 {256} \frac \beta \delta \tcompress^2 \Big(\frac{\beta \gamma}{\alpha \delta}\Big)^{\frac {2\delta + 1}{2\gamma}}\abs{\SS}^{-\frac {\delta}{\gamma}(\frac 1 2 + \delta - \gamma) - \delta}\eta$, then there exists $i \in \SS$ such that $\abs{\GG_i(\vec u)} \geq v_i$.
  \end{enumerate}
\end{lem}

\newcommand{\epsdiverge}[1]{\ensuremath{E_{\ref{lem:robust-GI-progress}}\left(#1\right)}}
For later use, we define the expression
\begin{displaymath}
  \epsdiverge{\eta, \SS}
  := \frac 9 {256}\frac \beta {\delta} \big(\frac{\beta \gamma}{\alpha \delta}\big)^{\frac {2\delta+1}{2\gamma}}\abs{\SS}^{-\frac {\delta}{\gamma}(\frac 1 2 + \delta - \gamma) - \delta} \tcompress^{2 + 2\delta}\eta \ ,
\end{displaymath}
 which serves as a sufficient upper bound for $\epsilon$ in both parts~\ref{lem:robust-GI-progress:it:1} and~\ref{lem:robust-GI-progress:it:3} of \cref{lem:robust-GI-progress}.

\begin{proof}[\proofword\@of \cref{lem:robust-GI-progress}]
  We first prove part~\ref{lem:robust-GI-progress:it:1}.
  In doing so, we will make use of the following claims.
    \begin{claim}\label{claim:robust-GI-progress-single-step}
      Suppose there exists $\Delta > 0$ such that one of the following holds: (1) $h_{\ell}'(u_{\ell}^2) \geq (1+\Delta)h_{\ell}'(v_{\ell}^2)$ or (2) $h_{k}'(u_{k}^2) \leq (1+\Delta)^{-1}h_{k}'(v_{k}^2)$.
      Suppose there exists $\zeta \in (0, \min(\frac 1 {16}, \frac 1 8 \Delta)]$ such that $\epsilon \leq \zeta \min_{i \in \SS} \abs{\partial_i F(\vec u)}$.
      Then,
      $\max_{i, j \in \SS} \frac{\abs{\hat \GG_i(\vec u)}/v_i}{\abs{\hat \GG_j(\vec u)}/v_j} \geq (1+\frac 1 4 \Delta) \frac{\abs {u_{\ell}}/v_{\ell}}{\abs {u_{k}}/v_{k}}$.

  \end{claim}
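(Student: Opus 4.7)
The plan is to reduce the claim to an unperturbed divergence step (as in Claim~\ref{claim:expand}) and then absorb the perturbation via the coordinatewise inequalities induced by $\|\widehat{\nabla F}-\nabla F\|\le\epsilon\le\zeta\min_{i\in\SS}|\partial_i F(\vec u)|$. Since the max over pairs in the conclusion is only a lower bound, it suffices to exhibit the ratio for the specific pair $(\ell,k)$.

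First, I would set $\lambda:=h_i'(v_i^2)$, which by Observation~\ref{obs:stationary-criterion} is a common positive value for every $i\in\SS$ since $\vec v$ is a fixed point of $\GG/{\sim}$ whose support is $\SS$. The hypothesis $|u_\ell|\ge v_\ell$ together with the strict monotonicity of $h_\ell'$ on $\pinterval$ (Lemma~\ref{lem:h-props}) gives $h_\ell'(u_\ell^2)\ge \lambda$. For the $k$-coordinate, observe that $\sum_{i\in\SS}u_i^2\le 1=\sum_{i\in\SS}v_i^2$, so some index $i\in\SS$ has $|u_i|/v_i\le 1$; since $k$ minimizes this ratio, $|u_k|\le v_k$ and hence $h_k'(u_k^2)\le \lambda$. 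Consequently, whether alternative (1) or (2) of the hypothesis holds, we obtain the single unified bound $h_\ell'(u_\ell^2)/h_k'(u_k^2)\ge 1+\Delta$.

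Next I would write the ratio using the perturbed gradient. By definition $\hat\GG_i(\vec u)=\widehat{\nabla F}_i(\vec u)/\|\widehat{\nabla F}(\vec u)\|$, so the normalizer cancels:
\begin{equation*}
\frac{|\hat\GG_\ell(\vec u)|/v_\ell}{|\hat\GG_k(\vec u)|/v_k}
= \frac{|\widehat{\nabla F}_\ell(\vec u)|/v_\ell}{|\widehat{\nabla F}_k(\vec u)|/v_k}.
\end{equation*}
The coordinatewise bound $|\widehat{\nabla F}_i(\vec u)-\partial_iF(\vec u)|\le\epsilon$ combined with $\epsilon\le\zeta|\partial_i F(\vec u)|$ for $i\in\SS$ yields $|\widehat{\nabla F}_i(\vec u)|\in[(1-\zeta)|\partial_i F(\vec u)|,(1+\zeta)|\partial_i F(\vec u)|]$. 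Together with $|\partial_i F(\vec u)|=2h_i'(u_i^2)|u_i|$ and the assumption $|u_\ell|/v_\ell\ge|u_k|/v_k$, this gives
\begin{equation*}
\frac{|\hat\GG_\ell(\vec u)|/v_\ell}{|\hat\GG_k(\vec u)|/v_k}
\ \ge\ \frac{1-\zeta}{1+\zeta}\cdot\frac{h_\ell'(u_\ell^2)}{h_k'(u_k^2)}\cdot\frac{|u_\ell|/v_\ell}{|u_k|/v_k}
\ \ge\ \frac{(1-\zeta)(1+\Delta)}{1+\zeta}\cdot\frac{|u_\ell|/v_\ell}{|u_k|/v_k}.
\end{equation*}

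The remaining step is purely algebraic: verify that $\frac{(1-\zeta)(1+\Delta)}{1+\zeta}\ge 1+\tfrac14\Delta$ whenever $\zeta\le\min(\tfrac1{16},\tfrac\Delta8)$. Cross-multiplying, this reduces to $\tfrac34\Delta\ge 2\zeta+\tfrac54\zeta\Delta$, and the two bounds on $\zeta$ give $2\zeta\le\tfrac\Delta4$ and $\tfrac54\zeta\Delta\le\tfrac{5}{64}\Delta$, whose sum is at most $\tfrac{21}{64}\Delta\le\tfrac34\Delta$. I expect no obstacle of substance here; the only mildly delicate point is making sure the argument that $|u_k|\le v_k$ really follows from unit norm of $\vec u$ (the $\|\Pmap_0\vec u\|$ contribution and coordinates outside $\SS$ can only decrease $\sum_{i\in\SS}u_i^2$, which strengthens rather than weakens the needed inequality), so this comparison step transfers cleanly from the exact setting of Claim~\ref{claim:expand} to the perturbed regime.
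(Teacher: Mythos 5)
Your proof is correct and takes essentially the same route as the paper: reduce the two-sided hypothesis to the unified bound $h_\ell'(u_\ell^2)/h_k'(u_k^2)\ge 1+\Delta$ using monotonicity of $h_i'$, the fact that $|u_\ell|\ge v_\ell$ (from the enclosing lemma's hypothesis), and the pigeonhole observation $|u_k|\le v_k$ from $\sum_{i\in\SS}u_i^2\le 1=\sum_{i\in\SS}v_i^2$; then absorb the perturbation and close with the algebraic check on $\zeta$. The one place you differ is a small but real cleanup: you observe that the normalizer $\lVert\pertgF(\vec u)\rVert$ cancels in the ratio $\abs{\hat\GG_\ell(\vec u)}/\abs{\hat\GG_k(\vec u)}$, so only a single factor $\frac{1-\zeta}{1+\zeta}$ appears. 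The paper instead bounds $\abs{\hat\GG_i(\vec u)}$ against $\abs{\GG_i(\vec u)}$ separately for numerator and denominator, incurring $\bigl(\frac{1-\zeta}{1+\zeta}\bigr)^2$, which makes the final verification $(1+\Delta)\bigl(\frac{1-\zeta}{1+\zeta}\bigr)^2\ge 1+\frac14\Delta$ slightly looser but still valid under the same constraint $\zeta\le\min(\frac1{16},\frac\Delta8)$. Your version is tighter in the constants and arguably clearer, but both use the same underlying idea and the same hypotheses, so the conclusions coincide.
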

  \begin{subproof}
  We first bound the error on calculating $\GG_i(\vec u)$.
  For each $i \in \SS$, we have:
  \begin{displaymath}
    \abs{\hat \GG_i(\vec u)}
    = \frac{\abs{\pertgF_i(\vec u)}}{\norm{\pertgF(\vec u)}}
    \leq \frac{\abs{\partial_i F(\vec u)} + \epsilon}{\norm{\grad F(\vec u)}-\epsilon}
    \leq \frac{1+\zeta}{1-\zeta} \cdot \frac{\abs{\partial_i F(\vec u)}}{\norm{\grad F(\vec u)}}
    = \frac{1+\zeta}{1-\zeta} \cdot \abs{\GG_i(\vec u)}
  \end{displaymath}
  \begin{displaymath}
    \abs{\hat \GG_i(\vec u)}
    = \frac{\abs{\pertgF_i (\vec u)}}{\norm{\pertgF(\vec u)}}
    \geq \frac{\abs{\partial_i F(\vec u)} - \epsilon}{\norm{\grad F(\vec u)}+\epsilon}
    \geq \frac{1-\zeta}{1+\zeta} \cdot \frac{\abs{\partial_i F(\vec u)}}{\norm{\grad F(\vec u)}}
    = \frac{1-\zeta}{1+\zeta} \cdot \abs{\GG_i(\vec u)} \ .
  \end{displaymath}
  Since $\sum_{i\in \SS} u_i^2 \leq \sum_{i\in \SS} v_i^2 = 1$, it follows that $\abs{u_{k}} \leq v_{k}$.
  As such, we have both that $\abs{u_{k}} \leq v_{k}$ and $\abs{u_{\ell}} \geq v_{\ell}$.
  We have that
  \begin{align*}
    \max_{i, j\in \SS} \frac{\abs{\hat \GG_i(\vec u)}/v_i}{\abs{\hat \GG_j(\vec u)}/v_j}
    & \geq  \Bigr(\frac{1-\zeta}{1+\zeta}\Bigr)^2 \max_{i, j\in \SS}  \frac{\abs{\GG_i(\vec u)}/v_i}{\abs{\GG_j(\vec u)}/v_j}
    = \Bigr(\frac{1-\zeta}{1+\zeta}\Bigr)^2 \max_{i, j\in \SS} \frac{\abs{h_i'(u_i^2)}\abs {u_i}/v_i}{\abs{h_j'(u_j^2)}\abs {u_j}/v_j} \\
    &\geq \Bigr(\frac{1-\zeta}{1+\zeta}\Bigr)^2 \frac{\abs{h_{\ell}'(u_{\ell}^2)}\abs {u_{\ell}}/v_{\ell}}{\abs{h_{k}'(u_{k}^2)}\abs {u_{k}}/v_{k}}
    \geq \Bigr(\frac{1-\zeta}{1+\zeta}\Bigr)^2\frac{(1+\Delta)\abs{h_{\ell}'(v_{\ell}^2)}\abs{u_{\ell}}/v_{\ell}}{\abs{h_{k}'(v_{k}^2)}\abs{u_{k}}/v_{k}} \\
    &\geq (1+\Delta)\Bigr(\frac{1-\zeta}{1+\zeta}\Bigr)^2\frac{\abs{u_{\ell}}/v_{\ell}}{\abs{u_{k}}/v_{k}} \ .
  \end{align*}
  In the second to last inequality, we use the monotonicity of $h_i'$ (see \cref{lem:h-props}) along with the assumption that one of the following holds:  either (1) $h_{\ell}'(u_{\ell}^2) \geq (1+\Delta)h_{\ell}'(v_{\ell}^2)$ or (2) $h_{k}'(u_{k}^2) \leq (1+\Delta)^{-1} h_{k}'(v_{k}^2)$.
  In the last inequality, we use \cref{obs:stationary-criterion} to note that $\abs{h_{\ell}'(v_{\ell}^2)} = \abs{h_{k}'(v_{k}^2)}$.

  We now only need bound $(1 + \Delta)\Bigr(\frac{1-\zeta}{1+\zeta}\Bigr)^2$.
  We first note that $\Bigr(\frac{1-\zeta}{1+\zeta}\Bigr)^2=\Bigr(1 - \frac{2\zeta}{1+\zeta}\Bigr)^2 \geq (1-2\zeta)^2 \geq 1 - 4 \zeta$.
  Thus, $(1+\Delta)\Bigr(\frac{1-\zeta}{1+\zeta}\Bigr)^2 \geq 1 + \Delta - 4\zeta - 4\zeta\Delta$.
  Using the upper bounds on $\zeta$, we see that
$1 + \Delta - 4\zeta - 4\zeta\Delta \geq 1 + \Delta - \frac 1 2 \Delta - \frac 1 4 \Delta \geq 1 + \frac 1 4 \Delta$.
  Thus, we obtain
  \begin{displaymath}
    \max_{i, j\in \SS} \frac{\abs{\hat \GG_i(\vec u)}/v_i}{\abs{\hat \GG_j(\vec u)}/v_j} \geq \left(1 + \frac 1 4 \Delta\right) \max_{i, j \in \SS} \frac{\abs{u_i}/v_i}{\abs{u_j}/v_j} \ . \qedhere
  \end{displaymath}
\end{subproof}

    \begin{claim}\label{claim:robust-gi-frac-trapping}
      Suppose $\Delta > 0$,
      $\eta \geq \frac 4 3 \Big(\frac {\alpha \delta}{\beta \gamma}\Big)^{\frac \delta \gamma} \abs{\SS}^{\frac \delta \gamma (\delta - \gamma)}\Delta$,
      and $\frac{\abs{u_{\ell}} / v_{\ell}}{\abs{u_{k}}/v_{k}}\geq (1+\eta)^2$.
      Then one of the following holds: either
      (1) $\abs{h_{\ell}'(u_{\ell}^2)} \geq (1+\Delta)\abs{h_{\ell}'(v_{\ell}^2)}$ or
      (2) $\abs{h_{k}'(u_{k}^2)} \leq (1+\Delta)^{-1} \abs{h_{k}'(v_{k}^2)}$.
    \end{claim}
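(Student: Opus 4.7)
\textbf{Plan for Claim~\ref{claim:robust-gi-frac-trapping}.}
I will argue by cases, not by contradiction. First, I will note that both $|u_\ell| \geq v_\ell$ (given) and $|u_k| \leq v_k$ (from $\sum_{i\in\SS}u_i^2\leq 1=\sum_{i\in\SS}v_i^2$, since $k$ minimizes the ratio $|u_i|/v_i$). By monotonicity of $|h_i'|$ (Lemma~\ref{lem:h-props}), this gives $|h_\ell'(u_\ell^2)|\geq |h_\ell'(v_\ell^2)|$ and $|h_k'(u_k^2)|\leq |h_k'(v_k^2)|$. Next, from the hypothesis $\frac{|u_\ell|/v_\ell}{|u_k|/v_k}\geq(1+\eta)^2$ I would split into two cases: either (A) $|u_\ell|/v_\ell \geq 1+\eta$, or (B) $v_k/|u_k| \geq 1+\eta$. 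If both inequalities failed, multiplying them would give a ratio strictly less than $(1+\eta)^2$, contradicting the hypothesis.

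\textbf{Case A.} Here $u_\ell^2 \geq (1+\eta)^2 v_\ell^2$. Using the fundamental theorem of calculus and the lower bound $|h_\ell''(t)|\geq \beta t^{\delta-1}$ from $(\alpha,\beta,\gamma,\delta)$-robustness, I would derive
\begin{equation*}
|h_\ell'(u_\ell^2)|-|h_\ell'(v_\ell^2)|=\int_{v_\ell^2}^{u_\ell^2}|h_\ell''(t)|\,dt\geq \tfrac{\beta}{\delta}\bigl(u_\ell^{2\delta}-v_\ell^{2\delta}\bigr)\geq \tfrac{\beta}{\delta}\,v_\ell^{2\delta}\bigl((1+\eta)^{2\delta}-1\bigr).
\end{equation*}
Conclusion (1) is equivalent to the left side being at least $\Delta |h_\ell'(v_\ell^2)|$. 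Since the upper bound $|h_\ell'(v_\ell^2)|\leq (\alpha/\gamma)v_\ell^{2\gamma}$ holds (integrating the upper bound on $|h_\ell''|$ from $0$), it suffices to show $\bigl((1+\eta)^{2\delta}-1\bigr)\geq \frac{\alpha\delta}{\beta\gamma}\,\Delta\,v_\ell^{2(\gamma-\delta)}$. Now Lemma~\ref{lem:vi-spread} gives $v_\ell^{2\gamma}\geq \frac{\beta\gamma}{\alpha\delta}|\SS|^{-\delta}$; raising both sides to the (nonpositive) power $(\gamma-\delta)/\gamma$ yields $v_\ell^{2(\gamma-\delta)}\leq \bigl(\frac{\alpha\delta}{\beta\gamma}\bigr)^{(\delta-\gamma)/\gamma}|\SS|^{\delta(\delta-\gamma)/\gamma}$, so the required bound reduces to $(1+\eta)^{2\delta}-1\geq \bigl(\frac{\alpha\delta}{\beta\gamma}\bigr)^{\delta/\gamma}|\SS|^{\delta(\delta-\gamma)/\gamma}\Delta$. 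By the hypothesis on $\eta$, the right-hand side is at most $\tfrac{3}{4}\eta$, so the inequality boils down to $(1+\eta)^{2\delta}\geq 1+\tfrac{3}{4}\eta$.

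\textbf{Case B.} Here $u_k^2\leq v_k^2/(1+\eta)^2$. An entirely parallel calculation using the fundamental theorem of calculus on $[u_k^2,v_k^2]$ gives
\begin{equation*}
|h_k'(v_k^2)|-|h_k'(u_k^2)|\geq \tfrac{\beta}{\delta}\,v_k^{2\delta}\bigl(1-(1+\eta)^{-2\delta}\bigr),
\end{equation*}
and conclusion (2) is equivalent to the left side being at least $\tfrac{\Delta}{1+\Delta}|h_k'(v_k^2)|$. The same bounds on $|h_k'(v_k^2)|$ and $v_k$ from Lemma~\ref{lem:vi-spread} reduce the requirement to $1-(1+\eta)^{-2\delta}\geq \frac{\Delta}{1+\Delta}\bigl(\frac{\alpha\delta}{\beta\gamma}\bigr)^{\delta/\gamma}|\SS|^{\delta(\delta-\gamma)/\gamma}$, which after clearing $\frac{\Delta}{1+\Delta}\leq\Delta$ follows from the same elementary inequality as in Case A.

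\textbf{Main obstacle.} The substantive technical step is the elementary inequality $(1+\eta)^{2\delta}\geq 1+\tfrac{3}{4}\eta$ for $\eta\in(0,1]$ and $\delta>0$ (and analogously for $1-(1+\eta)^{-2\delta}$). When $2\delta\geq 1$, Bernoulli's inequality yields $(1+\eta)^{2\delta}\geq 1+2\delta\eta\geq 1+\eta$, which is more than enough. The delicate regime is $2\delta<1$, where one must be more careful; one viable route is to absorb the needed slack into a slightly sharper constant than $4/3$, or to use $\eta\leq 1$ together with the exact expansion $(1+\eta)^{2\delta}-1=2\delta\int_0^\eta (1+t)^{2\delta-1}\,dt\geq 2\delta\eta\cdot 2^{2\delta-1}$ and then verify the resulting numeric constant. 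Everything else is pushing through bookkeeping via Lemma~\ref{lem:vi-spread} and the robust-BEF bounds.
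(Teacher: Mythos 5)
Your plan follows the paper's own proof very closely: the same disjunction into $|u_\ell|/v_\ell \geq 1+\eta$ versus $v_k/|u_k| \geq 1+\eta$, the same fundamental-theorem-of-calculus bound $\int_{x^2}^{y^2}|h_i''(t)|\,dt \geq \frac{\beta}{\delta}\bigl[|y|^{2\delta}-|x|^{2\delta}\bigr]$, and the same invocation of Lemma~\ref{lem:vi-spread} to convert powers of $v_\ell$ into the $\bigl(\frac{\alpha\delta}{\beta\gamma}\bigr)^{\delta/\gamma}|\SS|^{\frac{\delta}{\gamma}(\delta-\gamma)}$ expression appearing in the hypothesis on $\eta$. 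Where you diverge, you are the more careful of the two: integrating $\beta t^{\delta-1}$ over $[v_\ell^2,(1+\eta)^2 v_\ell^2]$ really does give $\tfrac{\beta}{\delta}v_\ell^{2\delta}\bigl[(1+\eta)^{2\delta}-1\bigr]$, exactly as you wrote, whereas the paper at this step writes $\tfrac{\beta}{\delta}v_\ell^{2\delta}\bigl[(1+\eta)^{2}-1\bigr]$ (and analogously $\bigl[1-(1+\eta)^{-2}\bigr]$ in Case 2) and then uses $(1+\eta)^2-1\geq 2\eta$. That substitution is not justified when $\delta<1$, so you have spotted a genuine error in the paper's argument.

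However, the gap you yourself flag in the final paragraph is real, and neither of your proposed patches closes it. Both your reduction and the paper's (once corrected) come down to the elementary inequality $(1+\eta)^{2\delta}-1\geq\tfrac{3}{4}\eta$ for $\eta\in(0,1]$, and the identity you quote, $(1+\eta)^{2\delta}-1=2\delta\int_0^\eta(1+t)^{2\delta-1}\,dt$, shows this is false for small $\delta$: the left side is $\Theta(\delta\eta)$ as $\delta\to 0$ while the right side stays $\Theta(\eta)$, so no $\delta$\nobreakdash-independent replacement for $\tfrac{4}{3}$ can rescue it. Concretely, the inequality fails already for $\delta<3/8$ with $\eta$ small, a regime the framework does allow (third-order orthogonal tensors $g_i(x)=w_ix^3$ give $\delta=1/2$; the quartic/ICA case has $\delta=1$ and is fine). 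The missing idea is that the lower bound on $\eta$ in the claim's hypothesis must carry a factor of order $1/\delta$, e.g.\ $\eta\geq\tfrac{C}{\delta}\bigl(\tfrac{\alpha\delta}{\beta\gamma}\bigr)^{\delta/\gamma}|\SS|^{\frac{\delta}{\gamma}(\delta-\gamma)}\Delta$; with that change, your integral estimate gives $(1+\eta)^{2\delta}-1\geq\delta\eta$ for $\eta\leq 1$ and both cases go through. So: your structure is right, your bookkeeping is right, and your identification of the bottleneck is right, but a hypothesis adjustment — not a sharper constant — is what is actually needed; the paper hides the same problem behind the $(1+\eta)^2$ slip.
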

    \begin{subproof}
      Before proceeding, we note that for $y^2 \geq x^2$ (with $x, y \in \R$), we have that \[\sign(h_i'(x^2)) = \sign(h_i'(y^2)) = \sign(h_i''(t))\] for all $t \in [x^2, y^2]$.
      As such, we may use \cref{lem:F-robust-implications} to see:
      \begin{align}\label{eq:hi-change}
        \abs{h_i'(y^2)} - \abs{h_i'(x^2)}
        &= \int_{x^2}^{y^2}\abs{h_i''(t)} dt
        \geq \beta \int_{x^2}^{y^2} t^{\delta - 1} dt
        \geq \frac \beta \delta [\abs{y}^{2\delta} - \abs{x}^{2\delta}]
      \end{align}

      By the assumption $\frac{\abs{u_{\ell}} / v_{\ell}}{\abs{u_{k}}/v_{k}}\geq (1+\eta)^2$, one of the following must hold:
      (1) $\abs{u_{\ell}} / v_{\ell} \geq (1+\eta)$ or
      (2) $\abs{u_{k}}/v_{k} \leq (1+\eta)^{-1}$.
      We consider these cases separately, and demonstrate that in each case one of our desired results holds.
      \resetcasecount
      \begin{case}
        $\abs{u_{\ell}} / v_{\ell} \geq (1+\eta)$.
      \end{case}
      \begin{caseblock}
        We obtain that
        \begin{align*}
          \abs{h_{\ell}'(u_{\ell}^2)} - \abs{h_{\ell}'(v_{\ell}^2)}
          &\geq \abs{h_{\ell}'((1+\eta)^2v_{\ell}^2) - h_{\ell}'(v_{\ell}^2)}
          = \frac\beta\delta v_{\ell}^{2\delta}[(1+\eta)^2 - 1]
          \geq 2\eta \frac\beta\delta v_{\ell}^{2\delta}
        \end{align*}
        where the first inequality uses the monotonicity of $h_{\ell}'$ (see \cref{lem:h-props}), and the second inequality uses \cref{eq:hi-change}.
        We note that for any $i \in \SS$,
        \begin{align}\label{eq:vi2del-lb}
          v_{i}^{2\delta}
          &= \frac{\gamma}{\alpha} v_{i}^{2(\delta-\gamma)} \cdot \frac \alpha \gamma v_{i}^{2\gamma}
          \geq \frac{\gamma}{\alpha} v_{i}^{2(\delta-\gamma)} \abs{h_i'(v_{i}^2)}
          \geq \frac{\gamma}{\alpha} \Big(\frac {\beta \gamma}{\alpha \delta}\Big)^{\frac \delta \gamma} \abs{\SS}^{-\frac \delta \gamma (\delta - \gamma)} \abs{h_{i}'(v_{i}^2)}
        \end{align}
        using \cref{lem:F-robust-implications} for the first inequality and \cref{lem:vi-spread} for the final inequality.
        As such, we obtain:
        \begin{align*}
          \abs{h_{\ell}'(u_{\ell}^2)} - \abs{h_{\ell}'(v_{\ell}^2)}
          &\geq 2\eta \Big(\frac {\beta \gamma}{\alpha \delta}\Big)^{\frac \delta \gamma} \abs{\SS}^{-\frac \delta \gamma (\delta - \gamma)} \abs{h_{\ell}'(v_{\ell}^2)}
        \end{align*}

        By the lower bound on $\eta$, we obtain $\abs{h'_{\ell}(u_{\ell}^2)} - \abs{h_{\ell}'(v_{\ell}^2)} \geq 2 \Delta \abs{h_{\ell}'(v_{\ell}^2)} \geq \Delta \abs{h_{\ell}'(v_{\ell}^2)}$ as desired.
      \end{caseblock}
      \begin{case}
        $\abs{u_{k}}/v_{k} \leq (1+\eta)^{-1}$.
      \end{case}
      \begin{caseblock}
        Note that $\abs{u_{k}} \leq (1 + \eta)^{-1}v_{k}$.
        We apply \cref{eq:hi-change} to obtain
        \begin{displaymath}
          \abs{h_{k}'(v_{k}^2)} - \abs{h_{k}'(u_{k}^2)}
          \geq \abs{h_{k}'(v_{k}^2) - h_{k}'((1 + \eta)^{-2}v_{k}^2)}
          \geq \frac{\beta}{\delta}v_{k}^{2\delta}[1 - (1+\eta)^{-2}]
        \end{displaymath}
        Using the bound $\eta \geq 1$, we see that
        \begin{displaymath}
          1 - (1+\eta)^{-2}
          = \Big[1 + \frac 1 {1+\eta}\Big]\Big[1 - \frac 1 {1+\eta}\Big]
          = \Big[1 + \frac 1 {1+\eta}\Big]\Big[\frac \eta {1+\eta}\Big]
          \geq \frac 3 4 \eta
        \end{displaymath}
        we obtain
        \begin{displaymath}
          \abs{h_{k}'(v_{k}^2)} - \abs{h_{k}'(u_{k}^2)}
          \geq \abs{h_{k}'(v_{k}^2) - h_{k}'((1 + \eta)^{-2}v_{k}^2)}
          \geq \eta \frac{\beta}{\delta}v_{k}^{2\delta} \ .
        \end{displaymath}
        As such, $\abs{h_k'(v_k^2)} - \abs{h_k'(u_k^2)} \geq \frac {3\beta}{4\delta} v_k^{2\delta} \eta$.
        Applying \cref{eq:vi2del-lb}, we obtain
        \begin{displaymath}
          \abs{h_{k}'(v_{k}^2)} - \abs{h_{k}'(u_{k}^2)}
          \geq \frac 3 4 \eta \Big(\frac {\beta \gamma}{\alpha \delta}\Big)^{\frac \delta \gamma} \abs{\SS}^{-\frac \delta \gamma (\delta - \gamma)} \abs{h_{k}'(v_{k}^2)}
          \geq \frac 3 4 \Delta \abs{h_{k}'(v_{k}^2)} \ .
        \end{displaymath}
        Rearranging terms yields $\abs{h_{k}'(u_{k}^2)} \leq \abs{h_{k}'(v_{k}^2)}(1-\Delta)$.
        As $(1+\Delta)^{-1} = \frac{1+\Delta - \Delta}{1+\Delta} = 1 - \frac {\Delta}{1+\Delta} \geq 1 - \Delta$, it follows that $\abs{h_{k}'(u_{k}^2)} \leq (1+\Delta)^{-1} \abs{h_{k}'(v_{k}^2)}$ as desired. \qedhere
      \end{caseblock}
    \end{subproof}

    \noindent  To use these claims, we set parameters $\Delta = \frac 3 4 \Big(\frac{\beta \gamma}{\alpha \delta}\Big)^{\frac \delta \gamma}\abs{\SS}^{-\frac \delta \gamma (\delta - \gamma)} \eta$ and $\zeta = \min(\frac 1 8 \Delta, \frac 1 {16})$.
    By \cref{lem:F-robust-implications}, it follows that $\min_{i \in \SS} \abs{\partial_i F(\vec u)} = \min_{i \in \SS} \abs{g_i'(u_i)} \geq 2\frac \beta \gamma \tcompress^{1 + 2\delta}$.
    We note that $\epsilon \leq \frac{\beta}{8\delta}\tcompress^{1+2\delta} \leq \frac 1 {16} \min_{i \in \SS} \abs{\partial_i F(\vec u)}$, and also that
    \begin{displaymath}
      \epsilon
      \leq \frac \beta {8\delta}\Big(\frac {\beta \gamma}{\alpha \delta}\Big)^{\frac \delta \gamma}\big(\frac 1 {\abs{\SS}}\big)^{\frac \delta \gamma(\delta - \gamma)}\tcompress^{1+2\delta}\eta
      \leq \frac 1 {6} \Delta \frac \beta \delta \tcompress^{1+2\delta}
      \leq \frac 1 {12} \Delta \min_{i \in \SS}\abs{\partial_i F(\vec u)} \ .
    \end{displaymath}
    As such, $\epsilon \leq \zeta \min_{i \in \SS} \abs{\partial_i F(\vec u)}$, and we may apply our claims.
    We apply \cref{claim:robust-gi-frac-trapping} followed by \cref{claim:robust-GI-progress-single-step} to complete the proof of part~\ref{lem:robust-GI-progress:it:1}.

    We now prove part~\ref{lem:robust-GI-progress:it:3}.
    We will make use of the following additional claim.
    \begin{claim}\label{claim:robust-increasing-hi-un}
      Suppose that $\Delta > 0$,
      that \[\eta \geq \frac 4 3 \Big(\frac {\alpha \delta}{\beta \gamma}\Big)^{\frac \delta \gamma} \abs{\SS}^{\frac \delta \gamma(\delta - \gamma)}\Delta,\]
      and that $\frac{\abs{u_{\ell}} / v_{\ell}}{\abs{u_{k}}/v_{k}}\geq (1+\eta)^2$.
      If $\epsilon \leq \frac{3}{32} \tcompress^2\frac{2\Delta + \Delta^2}{(1+\Delta)^2} \cdot \big(\frac{\beta\gamma}{\alpha \delta \abs{\SS}^\delta}\big)^{\frac 1 {2\gamma}} \frac{\beta}{\delta \abs{\SS}^\delta}$, then there exists $i \in \SS$ such that $\abs{\GG_i(\vec u)} \geq \abs{v_i}$.
    \end{claim}
    \begin{subproof}
      We define $i^* = \argmax_{i \in \SS} \abs{h_i'(u_i^2)}$.
      There exists $\lambda = \abs{h_i'(v_i^2)}$ for all $i \in \SS$ (by \cref{obs:stationary-criterion}).
      Since $\abs{u_\ell} \geq \abs{v_\ell}$, the monotonicity of the $h_i'$s with $h_i'(0) = 0$ (see \cref{lem:h-props}) implies that $\abs{h_i'(u_{\ell}^2)} \geq \abs{h_i'(v_\ell^2)} = \lambda$.
      In particular, $\abs{h_{i^*}(u_{i^*}^2)} \geq \lambda = h_{i^*}(v_{i^*}^2)$. 

      We proceed in the following two cases, which by \cref{claim:robust-gi-frac-trapping} cover all possible cases.
      \resetcasecount
      \begin{case}
        $\abs{h_\ell'(u_\ell^2)} \geq (1+\Delta)\lambda$.
      \end{case}
      \begin{caseblock}
        Since $\abs{h_i'(u_{i^*}^2)} \geq \abs{h_\ell'(u_\ell^2)}$, it follows that $\abs{h_{i^*}'(u_i^2)} \geq (1+\Delta)\lambda$.
        Since $\sum_{i \in \SS} u_i^2 \leq \sum_{i \in \SS} v_i^2 = 1$ implies the existence of $i \in \SS$ such that $u_i^2 \leq v_i^2$, it follows that $u_k^2 < v_k^2$ and hence that $\abs{h_k'(u_k^2)} \leq \lambda$.
        In particular, we obtain $\frac{\abs{h_{i^*}'(u_{i^*}^2)}}{\abs{ h_k'(u_k^2)}} \geq \frac{(1+\Delta)\lambda}{\lambda} \geq (1+\Delta)$.
      \end{caseblock}
      \begin{case}
        $\abs{h_k'(u_k^2)} \leq (1+\Delta)^{-1}\lambda$.
      \end{case}
      \begin{caseblock}
        Using that $\abs{h_{i^*}(u_{i^*}^2)} \geq \lambda$, we obtain $\frac{\abs{h_{i^*}'(u_{i^*}^2)}}{\abs{ h_k'(u_k^2)}} \geq \frac{\lambda}{(1+\Delta)^{-1}\lambda} \geq (1+\Delta)$.
      \end{caseblock}
    \noindent
    In both cases, we obtain
    \begin{equation}\label{eq:96305269}
      \frac{\abs{h_{i^*}'(u_{i^*}^2)}}{\abs{h_k'(u_k^2)}}
      \geq (1+\Delta)
    \end{equation}
    We use this fact to bound $\GG_{i^*}(\vec u)^2$ from below.
    \begin{align*}
      \frac{1}{\GG_{i^*}(\vec u)^2}
      &= \frac{\sum_{i=1}^\dn\GG_i(\vec u)^2}{\GG_{i^*}(\vec u)^2}
      = \frac{\sum_{i=1}^\dn[h_i'(u_i^2)u_i]^2}{[h_{i^*}'(u_{i^*}^2)u_{i^*}]^2} \\
      &\leq \frac{\sum_{i=1}^\dn u_i^2 - [1 - (1+\Delta)^{-2}]u_k^2}{u_{i^*}^2}
      \leq \frac{1 - [1 - (1+\Delta)^{-2}]\tcompress^2}{u_{i^*}^2} \\
      &= \frac{(1+\Delta)^2[1-\tcompress^2] + \tcompress^2}{(1+\Delta)^2 u_{i^*}^2} \ .
    \end{align*}
    In the above, for the first inequality, we use our bound from \cref{eq:96305269} and that $\abs{h_{i^*}'(u_{i^*}^2)} \geq \abs{h_i'(u_i^2)}$ for all $i \in [\dn]$.
    To see that this holds for $i \in \compl \SS$, we note that by \cref{cor:tcompress} treating $\nabla F$ as a 0-approximation of itself that $\abs{\GG_i(\vec u)} \leq \frac 1 2 \abs{u_i}$.
    In particular, since $\norm{\GG(\vec u)} = 1 = \norm{\vec u}$, there exists $j \in [\dn]$ such that $\abs{\GG_j(\vec u)} \geq \abs{u_j}$.
    Hence, $1 < \frac{\abs{\GG_j(\vec u)}/{\abs{u_j}}}{\abs{\GG_i(\vec u)}/\abs{u_i}} =\frac{h_j'(u_j^2)}{h_i'(u_i^2)}$ implies that $\abs{h_i'(u_i^2)}$ is not maximal for any $i \in \compl \SS$.
    In particular, $\ell = \argmax_{i \in [\dn]} \abs{h_i'(u_i^2)}$.

    Continuing, we obtain:
    \begin{align*}
      \frac{\GG_{i^*}(\vec u)^2}{u_{i^*}^2}
      &\geq \frac{(1+\Delta)^2}{(1+\Delta)^2[1-\tcompress^2] + \tcompress^2}
      = 1 + \tcompress^2 \frac{(1+\Delta)^2 - 1}{(1+\Delta)^2[1-\tcompress^2] + \tcompress^2} \\
      &= 1 + \tcompress^2\frac{2\Delta + \Delta^2}{(1+\Delta)^2 - \tcompress[(1+\Delta)^2 - 1]}
      \geq 1 + \tcompress^2 \frac{2\Delta + \Delta^2}{(1+\Delta)^2} \ .
    \end{align*}
    Applying \cref{lem:G-est-error}, we see that:
    \begin{displaymath}
      \abs{\hat \GG_{i^*}(\vec u)}
      \geq \abs{\GG_{i^*}(\vec u)} - 4 \delta \abs{\SS}^\delta \epsilon / \beta
      \geq \abs{u_{i^*}}\sqrt{1 + \tcompress^2\frac{2\Delta+\Delta^2}{(1+\Delta)^2}}
      - 4 \delta \abs{\SS}^\delta \epsilon / \beta \ .
    \end{displaymath}
    Noting that $\abs{u_{i^*}} \geq \abs{v_{i^*}}$ and applying the lower bound from \cref{lem:vi-spread}, we obtain
    \begin{align*}
      \frac{\abs{\hat \GG_{i^*}(\vec u)}}{\abs{u_{i^*}}}
      &\geq \sqrt{1 + \tcompress^2\frac{2\Delta+\Delta^2}{(1+\Delta)^2}}
      - 4 \frac{\delta \abs{\SS}^\delta \epsilon}{\beta\abs{u_{i^*}}} \\
      &\geq \sqrt{1 + \tcompress^2\frac{2\Delta+\Delta^2}{(1+\Delta)^2}} - 4\big(\frac{\alpha \delta \abs{\SS}^\delta}{\beta \gamma}\big)^{\frac 1 {2\gamma}} \delta \abs \SS^\delta \epsilon / \beta \ .
    \end{align*}
    From the Taylor expansion of $f(x) = \sqrt{1+x}$, we see that $f(x) \geq 1 + \frac 1 2 x - \frac 1 8 x^2$.
    When $x \in [0, 1]$, we have that $f(x) \geq 1 + \frac 3 8 x$.
    In particular,
    \begin{displaymath}
      \frac{\abs{\hat \GG_{i^*}(\vec u)}}{\abs{u_{i^*}}}
      \geq 1 + \frac 3 8 \tcompress^2\frac{2\Delta+\Delta^2}{(1+\Delta)^2}
      - 4\big(\frac{\alpha \delta \abs{\SS}^\delta}{\beta \gamma}\big)^{\frac 1 {2\gamma}} \delta \abs \SS^\delta \epsilon / \beta \ .
    \end{displaymath}
    By the bound of $\epsilon \leq \frac{3}{32} \tcompress^2\frac{2\Delta + \Delta^2}{(1+\Delta)^2} \cdot \big(\frac{\beta\gamma}{\alpha \delta \abs{\SS}^\delta}\big)^{\frac 1 {2\gamma}} \frac{\beta}{\delta \abs{\SS}^\delta}$, we see that $\frac{\abs{\hat \GG_{i^*}(\vec u)}}{\abs{u_{i^*}}} \geq 1$.
  \end{subproof}
  Recall that to apply \cref{claim:robust-increasing-hi-un}, it suffices
  \begin{displaymath}
    \epsilon \leq \frac{3}{32} \tcompress^2\frac{2\Delta + \Delta^2}{(1+\Delta)^2} \cdot \big(\frac{\beta\gamma}{\alpha \delta \abs{\SS}^\delta}\big)^{\frac 1 {2\gamma}} \frac{\beta}{\delta \abs{\SS}^\delta}.
  \end{displaymath}
  We choose $\Delta = \frac 3 4 \frac {\beta \gamma}{\alpha \delta} \Big( \frac{\beta \gamma}{\alpha \delta\abs{\SS}^\delta} \Big)^{\frac{\delta - \gamma}{\gamma}}\eta$.
  We note that $\Delta \in (0, 1)$ since $\eta \in (0, 1)$.
  Thus, $\frac{2\Delta + \Delta^2}{(1+\Delta)^2} \geq \frac 1 2 \Delta$.
  It follows:
  \begin{align*}
    \frac{3}{32} \tcompress^2\frac{2\Delta + \Delta^2}{(1+\Delta)^2} \cdot \big(\frac{\beta\gamma}{\alpha \delta \abs{\SS}^\delta}\big)^{\frac 1 {2\gamma}} \frac{\beta}{\delta \abs{\SS}^\delta}
    &\geq \frac 9 {256}\tcompress^2 \cdot \frac {\beta \gamma}{\alpha \delta} \Big( \frac{\beta \gamma}{\alpha \delta\abs{\SS}^\delta} \Big)^{\frac{\delta - \gamma}{\gamma}}\eta \cdot \big(\frac{\beta\gamma}{\alpha \delta \abs{\SS}^\delta}\big)^{\frac 1 {2\gamma}} \frac{\beta}{\delta \abs{\SS}^\delta} \\
    &\geq \frac{9}{256} \frac \beta \delta \cdot \tcompress^2 \cdot \Big(\frac{\beta \gamma}{\alpha \delta}\Big)^{\frac {2\delta + 1}{2\gamma}}\abs{\SS}^{-\frac {\delta}{\gamma}(\frac 1 2 + \delta - \gamma) - \delta}\eta .
  \end{align*}
  In particular, it suffices that $\epsilon \leq \frac{9}{256} \frac \beta \delta \cdot \tcompress^2 \Big(\frac{\beta \gamma}{\alpha \delta}\Big)^{\frac {2\delta + 1}{2\gamma}}\abs{\SS}^{-\frac {\delta}{\gamma}(\frac 1 2 + \delta - \gamma) - \delta}\eta$ in order to apply \cref{claim:robust-increasing-hi-un} with the choice of $\Delta = \frac {\beta \gamma}{\alpha \delta} \Big( \frac{\beta \gamma}{\alpha \delta\abs{\SS}^\delta} \Big)^{\frac{\delta - \gamma}{\gamma}}\eta$.
  By actually applying \cref{claim:robust-increasing-hi-un}, we complete the proof.
\end{proof}

We now create a time bound for driving a new coordinate of $\vec u(0)$ to be small under the gradient iteration based on the preceding \cref{lem:robust-GI-progress}.
\begin{prop}\label{prop:ui-spread-time-bound}
  Let $\{ \vec u(n) \}_{n=0}^\infty$ be a sequence defined recursively in $\SS^{\dn - 1}$ by $\vec u(n) = \hat \GG(\vec u(n-1))$.
  Define the sets $\SS_n := \{ i \suchthat \abs{u_i(n)} \geq \tcompress \}$.
  Suppose $\abs {\SS_0} \geq 2$,
  $\SS_0 \subset [\dm]$,
  $\vec v\in \porth^{\dn - 1}$ is the stationary point of $\GG/{\sim}$ such that $v_i \neq 0$ if and only if $i \in \SS_0$,
  $\norm{\Pmap_0 \vec u} \leq \frac 1 {2\sqrt{1+2\delta}}$,
  and fix an $\eta \in (0, 1]$.
  Let
  \begin{displaymath}
    N = \Big\lceil \frac 3 {2\eta} \big(\frac{\alpha \delta}{\beta \gamma}\big)^{\frac \delta \gamma}\dm^{\frac \delta \gamma(\delta - \gamma)} \Big[\frac 1 \gamma \ln(\frac{4 \alpha \delta}{\beta \gamma}) + \frac \delta \gamma \ln \dm\Big]\Big\rceil + 2N_{\ref{prop:GI-Loop-Small-Coords-Error}} \ .
  \end{displaymath}
  If $\max_{i, j \in \SS_0} \frac {\abs{u_i(0)}/v_i}{u_j(0)/v_j} \geq (1+\eta)^2$,
  $\max_{i \in \SS_0} \abs{u_i}/v_i \geq 1$,
  and $\epsilon \leq \epsdiverge{\eta, [\dm]}$, then there exists $j \in \SS_0$ such that $\norm{\Pmap_{\compl \SS_0 \cup \{j\}} \vec u(n)} \leq 4\epsilon \delta (\abs {\SS_0}-1)^\delta / \beta$ for all $n \geq N$.
\end{prop}
\newcommand{\Nspread}[1]{\ensuremath{N_{\ref{prop:ui-spread-time-bound}}\left(#1\right)}}
For future reference, we define the expression $\Nspread{\eta}$ to be the value of $N$ in \cref{prop:ui-spread-time-bound} above.
\begin{proof}[\proofword\@of \cref{prop:ui-spread-time-bound}]
  We first use \cref{cor:P0small-closure} to see that $\norm{\Pmap_0 \vec u(n)} \leq \frac{1}{2\sqrt{1+2\delta}}$ for all $n$.
  Further, by repeated application of \cref{cor:tcompress} (with $A = [\dm]$), we see that $\compl \SS_0 \subset \compl \SS_1 \subset \compl \SS_2 \subset \cdots$, and in particular $\SS_0 \supset \SS_1 \supset \SS_2 \supset \cdots$.
  We let $N_0$ be the least integer such that $\SS_{N_0}$ is a strict subset of $\SS_0$.
  In order to compress notation, we define the constant $\kappa := \frac 3 4 \fcondexpand^{\frac{\delta}{\gamma}}\dm^{-\frac \delta \gamma (\delta - \gamma)}$.

  The proof has two parts.
  First we bound $N_0$.
  Then, we apply the small coordinate analysis from \cref{prop:GI-Loop-Small-Coords-Error} to $\vec u(N_0)$ to see that $\norm{\Pmap_{\compl \SS_{N_0}} \vec u(n)}$ becomes small as desired.
  The first part of the proof will rely on the following claim.

  \begin{claim}\label{claim:gi-expand-recursed}
    If $n < N_0$, then
    (1) $\max_{i, j \in \SS_n}\frac{\abs{u_i(n)}/v_i}{\abs{u_j(n)}/v_j} \geq (1+\kappa\eta)^n(1+\eta)^2$ and (2) $\max_{i \in \SS_n} \abs {u_i}/v_i \geq 1$.
  \end{claim}
  \begin{subproof}
    We proceed by induction on $n$.
    The base case $n=0$ is true by the givens of this Lemma.
    If the inductive hypothesis holds for some $k < N_0 - 1$, then we see that:
    \begin{displaymath}
      \max_{i, j \in \SS} \frac{\abs{u_i(k)}/v_i}{\abs{u_j(k)}/v_j} \geq (1+\kappa\eta)^k(1+\eta)^2 \geq (1+\eta)^2 \ .
    \end{displaymath}
    As such, we may apply \cref{lem:robust-GI-progress} to see that
    \begin{displaymath}
      \max_{i, j \in \SS} \frac{\abs{u_i(k+1)}/v_i}{\abs{u_j(k+1)}/v_j}
      \geq (1+\kappa\eta) \max_{i, j \in \SS} \frac{\abs{u_i(k)}/v_i}{\abs{u_j(k)}/v_j}
      \geq (1+\kappa\eta)^{k+1}(1+\eta)^2 \ .
    \end{displaymath}
    Further, part~\ref{lem:robust-GI-progress:it:3} of \cref{lem:robust-GI-progress} implies that $\max_{i\in \SS_k} \abs{u_i(k+1)/v_i} \geq 1$.
  Since $\SS_{k+1} \supset \SS_k$, it follows that $\max_{i\in \SS_{k+1}} \abs{u_i(k+1)/v_i} \geq 1$.
  \end{subproof}
  \noindent
  We now derive an upper bound for $N_0$.
  By construction, $N_0 \geq 1$.
  We note that
  \begin{displaymath}
    \max_{i, j \in \SS_{N_0-1}}\frac{\abs{u_i(N_0)}/v_i}{\abs{u_j(N_0)}/v_j}
    \leq \max_{i, j \in \SS_{N_0-1}} \frac{1/v_i}{\abs{u_j}}
    \leq \tcompress^{-1}\Big(\frac{\alpha \delta \abs {\SS_0}^\delta}{\beta \gamma}\Big)^{\frac 1 {2\gamma}}
    \leq \Big(\frac{4\alpha \delta}{\beta \gamma}\Big)^{\frac 1 \gamma}\dm^{\frac \delta \gamma} ,
  \end{displaymath}
  where we use that $\vec u$ and $\vec v$ are unit vectors in the first inequality, the bounds from the definition of $\SS_{N_0-1}$ and \cref{lem:vi-spread} for the second inequality, and that $\abs {\SS_0} \leq \dm$ combined with the definition of $\tcompress$ in the final inequality.
  From \cref{claim:gi-expand-recursed}, we obtain $\max_{i, j \in \SS_{N_0-1}}\frac{\abs{u_i(N_0-1)}/v_i}{\abs{u_j(N_0-1)}/v_j} \geq (1+\kappa\eta)^{N_0-1}(1+\eta)^2 \geq (1+\kappa\eta)^{N_0+1} \geq (1+\kappa\eta)^{N_0}$ since $\kappa \leq 1$.
  It follows that $(1+\kappa \eta)^{N_0} \leq \big(\frac{4\alpha \delta}{\beta \gamma}\big)^{\frac 1 \gamma}\dm^{\frac \delta \gamma}$, and in particular,
  \begin{displaymath}
    N_0 \leq \frac{\frac 1 \gamma \ln(\frac{4 \alpha \delta}{\beta \gamma}) + \frac \delta \gamma \ln \dm}{\ln(1+\kappa \eta)} \ .
  \end{displaymath}
  We now simplify our bound on $N_0$ using the Taylor expansion of $\ln(1+x) \geq x - \frac 1 2 x^2$.
  In particular, when $x \in [0, 1]$, $\ln(1+x) \geq \frac 1 2 x$.
  Since $\kappa \leq 1$ and $\eta \leq 1$, it follows that $\ln(1+\kappa \eta) \geq \frac 1 2 \kappa \eta$.
  In particular,
  \begin{displaymath}
    N_0 \leq \frac 2 {\kappa \eta}\Big[\frac 1 \gamma \ln\Big(\frac{4 \alpha \delta}{\beta \gamma}\Big) + \frac \delta \gamma \ln \dm \Big] \ .
  \end{displaymath}

  We now construct the final time bound for our lemma.
  Fix $j \in \SS_0 \setminus \SS_{N_0}$.
  By applying \cref{prop:GI-Loop-Small-Coords-Error} to $\vec u(N_0)$, we obtain that $\norm{\Pmap_{\compl \SS_0 \cup \{j\}} \vec u(n)} \leq \norm{\Pmap_{\compl \SS_{N_0}} \vec u(n)} \leq 4 \delta \abs{\SS_{N_0}}^\delta\epsilon / \beta$ for all $n \geq N_0 + 2N_{\ref{prop:GI-Loop-Small-Coords-Error}}$.
\end{proof}

\subsection{Jumping Out of Stagnation}\label{sec:perturb-step-analysis}

In this section, we analyze the effect of taking a random jump on the sphere from a starting point $\vec p \in \sphere^{\dn - 1}$.
In particular, we analyze steps~\ref{alg:step:draw-jump} and~\ref{alg:step:jump1} of \textsc{FindBasisElement} and demonstrate that under the random jump $\vec w \leftarrow \vec u \cos(\norm{\vec x}) + \frac{\norm{\vec x}}{\vec x}\sin(\norm{\vec x})$ from step~\ref{alg:step:jump1}, we with non-negligible probability obtain a new starting vector $\vec w$ from which running the \textsc{GI-Loop} causes a coordinate of $\vec w$ to be driven towards zero under \cref{prop:ui-spread-time-bound}.

We first recall that for $\vec p \in \sphere^{\dn - 1}$, the tangent space of the sphere $\sphere^{\dn - 1}$ is $T_{\vec p}\sphere^{\dn - 1} = \vec p^{\perp}$.
Geometrically, this can be interpreted as the plane perpendicular to $\vec p$ treating $\vec p$ as the plane's origin.
In this section, we will be particularly interested in the map $\vec x \mapsto \vec u \cos(\norm{\vec x}) + \frac{\vec x}{\norm{\vec x}}\sin(\norm{\vec x})$ from step~\ref{alg:step:jump1} of \textsc{FindBasisElement}.
This map is sometimes referred to as the exponential map on the sphere $\sphere^{\dn - 1}$.
That is, at any $\vec p \in \sphere^{\dn - 1}$ the exponential map $\exp_{\vec p} : T_{\vec p} \sphere^{\dn - 1} \rightarrow \sphere^{\dn - 1}$ is defined as
\begin{displaymath}
  \exp_{\vec p}(\vec x) =
  \vec p \cos(\norm {\vec x}) + \frac{\vec x}{\norm{\vec x}}\sin(\norm{\vec x})  ,
\end{displaymath}
where it is understood that $\exp_{\vec p}(\vec 0) = \vec p$.

We now proceed in showing that the random jump from steps~\ref{alg:step:draw-jump} and~\ref{alg:step:jump1} of \textsc{FindBasisElement} is able to break stagnation of the gradient iteration algorithm without causing any harm.
By breaking stagnation, we mean that if $\SS \subset [\dm]$ is the set of large coordinates---i.e., $\abs{u_i} \geq \tcompress$ if and only if $i \in \SS$---then with constant probability, applying the random jump to $\vec u$ should set up the preconditions of \cref{prop:ui-spread-time-bound}.
By causing no harm, we mean that if coordinates in $\compl \SS$ are very small---i.e., $\norm{\Pmap_{\compl \SS} \vec u} \leq 4\delta \abs \SS^\delta \epsilon / \beta$---then by applying the random jump, the coordinates in $\compl \SS$ remain small---i.e., $\norm{\Pmap_{\compl \SS} \vec u} \leq \tcompress$.
As we apply the gradient iteration for sufficiently many steps between random jumps in step~\ref{alg:step:gi-in-main-loop} of \textsc{FindBasisElement}, any coordinates of $\vec u$ which are small after a random jump return to being very small before the next random jump.
In this manner, the random jump causes no harm.

We first provide conditions under which the random jump causes no harm.

\begin{lem}\label{lem:jump-small-coords}
  Let $\vec u \in \SS^{\dn - 1}$.
  Let $\SS \subset [\dm]$ be such that $\norm{\Pmap_{\compl \SS} \vec u} \leq 4 \delta \abs \SS^\delta \epsilon / \beta$.
  Suppose that
  $\epsilon \leq \frac 1 {8\sqrt{2(1+2\delta)}} \tcompress \frac{\beta}{\delta \abs \SS^\delta}$,
  and suppose that $\sigma \leq \frac 1 {2\sqrt{2(1+2\delta)}} \tcompress$.
  If $\vec w$ is drawn uniformly at random from $\sigma \sphere^{\dn - 1} \cap \vec u^\perp$ and $\vec w = \exp_{\vec u}(\vec x)$, then $\norm{\Pmap_{\compl \SS} \vec u} \leq \frac{\tcompress}{\sqrt{2(1+2\delta)}}$.
\end{lem}
\begin{proof}
    We note:
  \begin{align*}
    \norm{\Pmap_{\compl \SS} \vec w}
    &= \Norm{\Pmap_{\compl \SS} [\vec u \cos (\norm{\vec x})
      + \frac {\vec x}{\norm{\vec x}}\sin(\norm{\vec x})] } \\
    &\leq \norm{\Pmap_{\compl \SS} \vec u} + \abs{\sin(\norm{\vec x})}
    \leq \norm{\Pmap_{\compl \SS} \vec u} + \norm{\vec x}
    \leq \norm{\Pmap_{\compl \SS} \vec u} + \sigma \\
    &\leq \frac \tcompress {2\sqrt{2(1+2\delta)}} +  4\epsilon \delta \abs \SS^\delta / \beta
    \leq \frac{\tcompress}{\sqrt{2(1+2\delta)}}.
  \end{align*}
  In the above, the first inequality uses the triangle inequality, that $\abs {\cos(\norm{\vec x})} \leq 1$ to obtain the first summand, and that $\norm{\Pmap_{\compl \SS}\vec u} \leq 1$ and $ \norm{\frac {\vec x}{\norm{\vec x}}} = 1$ to obtain the second summand.
  The second inequality uses the Taylor series of $\sin$ to bound $\sin(\norm {\vec x}) \leq \norm {\vec x}$.
  The third inequality uses that $\norm {\vec x} = \sigma$ since $\vec x$ is drawn from a sphere with radius $\sigma$.
  The fourth inequality uses the bounds on $\sigma$ and $\norm{\Pmap_{\compl \SS} \vec u}$.
  The final inequality uses the upper bound on $\epsilon$.
\end{proof}

Before proceeding stating conditions under which the random jump breaks stagnation, we first state two minor technical results which will useful in showing that the random jump actually creates a significant change in the coordinates of interest.

\begin{fact}\label{fact:perturb-sphere-0sep}
  Let $\sigma \geq 0$, and suppose that $\vec x$ is drawn uniformly at random from the sphere $\sigma \sphere^{n-1}$.
  If $\XX \ni \vec 0$ is a $k$-dimensional subspace of $\R^n$,
  then with probability at least $(1-\exp(-\tfrac{1}{16} k))(1 - \exp(-\tfrac{2-\sqrt 3}{2} (n-k))$, $\norm{\Pmap_{\XX} \vec x}^2 \geq \frac k {4 n}\sigma^2$.
\end{fact}
\def\cprobi{\ensuremath{C_{\ref{fact:perturb-sphere-0sep}}}}
For later usage, we denote by $\cprobi := \left(1-\exp\left(-\frac 1 {16}\right)\right)\left(1-\exp\left(-\frac {2-\sqrt 3}{2}\right)\right)$, which is a constant lower bound on the probability that is obtained in \cref{fact:perturb-sphere-0sep} with $0 < k < n$.
\begin{proof}[\proofword\@of \cref{fact:perturb-sphere-0sep}]
  We consider the following sampling process for constructing $\vec x$: (1) draw $\vec w \sim \NN(\vec 0, \Id)$, and (2) let $\vec x \leftarrow \sigma\frac{\vec w}{\norm{\vec w}}$.
  We note that this process is equivalent to drawing $\vec x$ from $\sigma \sphere^{n - 1}$ uniformly at random almost surely.
  Further, $\norm{\Pmap_{\XX} \vec w}^2$ and $\norm{\Pmap_{\XX^\perp} \vec w}^2$ are independent random variables which are distributed according to the chi-squared distribution with $k$ and $n-k$ degrees of freedom respectively.
  In particular, applying \cref{cor:chi2-concentration}, we see that with probability at least
  $$\left(1-\exp\left(-\tfrac{1}{16} k\right)\right)\left(1 - \exp\left(-\tfrac{2-\sqrt 3}{2} (n-k)\right)\right),$$
  both of the events $\norm{\Pmap_{\XX} \vec w}^2 \geq \frac 1 2 k$ and $\norm{\Pmap_{\XX^\perp} \vec w}^2 \leq 2(n-k)$ occur.

  We proceed in the setting in which these events occur.
  We see that
  \begin{displaymath}
    \frac 1 {\sigma^2} \norm{\Pmap_{\XX} \vec x}^2
    = \frac{\norm{\Pmap_{\XX} \vec w}^2}{\norm{\Pmap_{\XX} \vec w}^2 + \norm{\Pmap_{\XX^\perp} \vec w}^2}
    \geq \frac{\norm{\Pmap_{\XX} \vec w}^2}{\norm{\Pmap_{\XX} \vec w}^2 + 2(n - k)} .
  \end{displaymath}
  We note that for any $C \geq 0$, the function $f(t) = \frac t {t+C} = 1 - C(t+C)^{-1}$ has derivative $f'(t) = C(t+C)^{-2} \geq 0$.
  In particular, $f$ is an increasing function.
  As such, we obtain:
  \begin{displaymath}
    \frac 1 {\sigma^2} \norm{\Pmap_{\XX} \vec x}^2
    \geq \frac{\frac 1 2 k}{\frac 1 2 k + 2(n - k)}
    \geq \frac{\frac 1 2 k}{2 k + 2(n - k)}
    = \frac k {4n}. \qedhere
  \end{displaymath}
\end{proof}

\begin{lem}\label{lem:pert-good-growth}
  Let $\SS \subset [\dm]$ be such that $\abs \SS \geq 2$.
  Suppose that $\vec u \in \sphere^{\dn - 1}$ and
  that $\tau \leq \min_{i \in \SS} \abs {u_i}$.
  Let $\mathcal X := T_{\vec u} \sphere^{\dn - 1} \cap \spn(\{\myZv_i \suchthat i \in \SS\})$.
  Let $\vec x \in \mathcal X$ and let $s \in \{+, -\}$ be a sign.
  We define $\Lambda_s(\vec x) := \{ i \in \SS \suchthat \sign(u_i) = s \sign(x_i)\}$.
  Then, $\norm{\Pmap_{\Lambda_s} \vec x} \geq \frac \tau {\sqrt 2} \norm{\vec x}$.
\end{lem}
\begin{proof}
  Since $\vec u \perp \vec x$, it follows that $\ipCanonical{\vec u}{\Pmap_{\Lambda_+}\vec x} = - \ipCanonical{\vec u}{\Pmap_{\Lambda_-}\vec x}$.
  Further, it can be seen that
  \begin{align*}
    \norm{\Pmap_{\Lambda_+}\vec x}
    = \norm{\vec u}\norm{\Pmap_{\Lambda_+}\vec x}
    \geq \ipCanonical{\vec u}{\Pmap_{\Lambda_+}\vec x}
    =-\ipCanonical{\vec u}{\Pmap_{\Lambda_-}\vec x}
    \geq \tau \norm[1]{\Pmap_{\Lambda_-} \vec x}
    \geq \tau \norm{\Pmap_{\Lambda_-} \vec x}
  \end{align*}
  using that $1 = \norm{\vec u}$ for the first equality, the Cauchy-Schwartz inequality for the first inequality, the lower bound on $\abs{u_i} \geq \tau$ for the second inequality, and the relationship $\norm[1]\argdot \geq \norm[2]\argdot$ for the final inequality.
  By similar reasoning swapping the roles of $\Lambda_+$ and $\Lambda_-$, we obtain:
  \begin{displaymath}
    \norm{\Pmap_{\Lambda_-} \vec x} \geq \tau \norm{\Pmap_{\Lambda_+}\vec x} \ .
  \end{displaymath}

  We note that one of $\norm{\Pmap_{\Lambda_+} \vec x}$ and $\norm{\Pmap_{\Lambda_-} \vec x}$ must be at least $\frac 1 {\sqrt 2} \norm{\vec x}$ by the Pytha\-gorean Theorem.
  In particular, both $\norm{\Pmap_{\Lambda_+} \vec x}$ and $\norm{\Pmap_{\Lambda_-} \vec x}$ are at least $\frac \tau {\sqrt 2} \norm{\vec x}$.
\end{proof}

The following fact will be useful when bounding individual coordinates of $\exp_{\vec u} (\vec x)$ for the random jump.
\begin{fact}\label{fact:sin-cos-bounds1}
  The trigonometric functions $\sin t$ and $\cos t$ can be lower bounded as follows:
  \begin{enumerate}
  \item When $t \in [0, 1]$, then $\sin t \geq \frac 2 3 t$.
  \item $\cos t \geq 1 - \frac 1 2 t^2$.
  \end{enumerate}
\end{fact}
\begin{subproof}
  We use the Taylor series of $\sin t$ and $\cos t$.
  For $\cos t$, this is a direct implication of the Taylor expansion.
  For $\sin t$, we note that $\sin t \geq t - \frac 1 3 t^3 \geq \frac 2 3  t$.
\end{subproof}

In the remaining Lemmas of this section, we provide conditions under which the random jump from step~\ref{alg:step:jump1} of \textsc{FindBasisElement} sets up the preconditions of \cref{prop:ui-spread-time-bound} with constant probability.
We first deal with the case where we are stagnated.
We say that the gradient iteration is stagnated for a choice of $\vec u$ if for the fixed point $\vec v$ of $\GG/{\sim}$ such that $v_i \neq 0$ if and only if $i$ is a ``large'' coordinate of $\vec u$, there is no $i$ among the large coordinates such that $\abs{u_i} > \abs{v_i}$ with a sufficient gap (as defined in the preconditions of \cref{prop:ui-spread-time-bound}).
We demonstrate that with at least constant probability, stagnation can be escaped by the random jump.
In the following Lemma, $C_{\ref{lem:lpert-analysis-expected}}$ is a strictly positive universal constant.
\begin{lem}\label{lem:lpert-analysis-expected}
  Let $\SS \subset [\dm]$ contain at least 2 elements.
  Let $\vec u \in \SS^{\dn - 1}$.
  Let $\vec v \in \porth^{\dn - 1}$ be the fixed point of $\GG / {\sim}$ such that $v_i \neq 0$ if and only if $i \in \SS$.
  Let $\eta := C_{\ref{lem:lpert-analysis-expected}} \frac{\sigma}{\sqrt \dn}(\frac {\beta \gamma}{\alpha \delta\abs \SS^\delta})^{\frac 1 \gamma} \tcompress^2$.
  Suppose that $\vec w$ is a random jump of $\vec u$ created according to the following process:
  Draw $\vec x$ uniformly at random from $\sigma \sphere^{\dn - 1} \cap \vec u^\perp$ and let $\vec w = \exp_{\vec u}(\vec x)$.

  If $\abs{u_i} \geq \tcompress$ for all $i \in \SS$,
  if $\norm{\Pmap_{\compl \SS} \vec u} \leq 4 \epsilon \delta \abs \SS^\delta / \beta$,
  if $\epsilon \leq \epsdiverge{\eta, \abs \SS}$,
  if $\sigma \leq \frac{1}{6\sqrt{2 \dn}}\tcompress^2 $,
  and if $\frac {\abs{u_i}}{v_i} < (1+\eta)^2$ for all $i \in \SS$,
  then with probability at least $\cprobi$, there exists $i \in \SS$ such that $\frac{\abs {w_i}}{v_i} \geq (1+\eta)^2$.
\end{lem}
\begin{proof}
  The main crux of the argument is captured in the following two claims.
  These claims use the construction of $\vec w$ and the choice of $\vec u$ from the lemma statement as unstated givens.
  \begin{claim}\label{claim:gi-precond-setup}
    Fix a constant $\Delta > 0$.
    Define $\kappa := \norm{\Pmap_{\compl \SS } \vec u}^2 + (1+\Delta)^2 - 1$. Suppose that $\kappa \leq \frac 1 2 (\frac{\beta \gamma}{\alpha \delta \abs \SS})^{\frac 1 {2\gamma}}$, and
    that $\frac{\abs{u_i}}{v_i} \leq 1 + \Delta$ for all $i \in \SS$ and that
    \begin{displaymath}
    {12\sqrt{2\dn}}\Big(\frac{\alpha \delta \abs \SS^\delta}{\beta \gamma}\Big)^{\frac 1 {\gamma}}\kappa / \tcompress^2
    \leq \sigma
    \leq \frac 1 {6\sqrt {2 \dn}} \tcompress^2 \ ,
    \end{displaymath}
    then with probability at least $\cprobi$, there exists $j \in \SS$ such that $\frac{\abs{w_j}}{v_j} \geq 1 + \Delta$.

    \vnote{Use case:  We will be plugging into the Lemma 6.15 preconditions to make $\abs{u_i}/\abs{v_i} \geq (1+\eta)^2$.  As such, we choose $\Delta$ large enough that $1+\Delta \geq (1 + \eta)^2$.  For this purpose, we may assume that $\eta \leq 1$.
    At the end, we will need to determine the appropriate $\epsilon$.}
  \end{claim}
  \begin{subproof}
    We prove this claim in several parts.
    First, we demonstrate a lower bound on $\abs{u_i}$ in terms of $v_i$ for each $i \in \SS$.
    Then, we demonstrate that for some $j \in \SS$, $x_j$ is sufficiently large to make $\abs{w_j}/v_j \geq 1 + \Delta$.

    We now proceed with constructing a lower bound on the $\abs {u_i}$s.
    We fix a $j \in \SS$.
    \begin{align}
      u_j^2
      &= 1 - \norm{\Pmap_{\compl \SS}\vec u}^2 - \sum_{i \in \SS \setminus \{j\}} u_i^2
      = 1 - \norm{\Pmap_{\compl \SS}\vec u}^2 - \sum_{i \in \SS \setminus \{j\}} \frac{u_i^2}{v_i^2}v_i^2 \nonumber \\
      &\geq 1 - \norm{\Pmap_{\compl \SS}\vec u}^2 - (1+\Delta)^2\sum_{i \in \SS \setminus \{j\}} v_i^2 
      = 1 - \norm{\Pmap_{\compl \SS}\vec u}^2 - (1+\Delta)^2[1 - v_j^2] \nonumber \\
      &\geq v_j^2(1+\Delta)^2 - [(1+\Delta)^2-1 + \norm{\Pmap_{\compl \SS} \vec u}^2]
      = v_j^2(1+\Delta)^2 - \kappa . \label{eq:nearv-coord-lb}
    \end{align}

    We now demonstrate that with a random jump from $\vec u$, one of the coordinates $j \in \SS$ increases sufficiently from the lower bound in \cref{eq:nearv-coord-lb} with the desired probability.
    First, we note that there exists a 1-dimensional subspace $\mathcal \XX \subset \vec u^\perp \cap \spn(\myZv_{i_1}, \myZv_{i_2})$, where $i_1, i_2 \in \SS$.
    Let $\vec p$ be a unit vector in $\mathcal \XX$.
    We may apply \cref{fact:perturb-sphere-0sep} to obtain with probability at least $\cprobi$
    that $\abs{\ipCanonical{\vec p}{\vec x}} \geq \frac \sigma {2\sqrt{\dn - 1}} \geq \frac \sigma {2\sqrt \dn}$.
    We proceed under the assumption that this event occurs.

    Letting $\Lambda_+ := \{ i \in \{i_1, i_2\} \suchthat \sign(x_i) = \sign(u_i) \}$, we apply \cref{lem:pert-good-growth} to obtain $\norm{\Pmap_{\Lambda_+} \vec x} \geq \frac \tcompress {\sqrt 2} \norm{\vec x} \geq \frac 1 {2\sqrt {2\dn}} \sigma \tcompress$.
    In particular, there exists $j \in \SS$ such that $\sign(u_j)x_j \geq \frac 1 {2\sqrt {2\dn}} \sigma \tcompress$.
    It only remains to be seen that for this choice of $j$, $\abs{w_j}/\abs{v_j} \geq 1 + \Delta$.

    We note (using that $\norm{\vec x} = \sigma$):
    \begin{equation}\label{eq:lpert-1}
      w_j^2
      = [\exp_{\vec u}(\vec x)]_j^2
      \geq u_j^2 \cos^2(\sigma) + 2 u_j \cos( \sigma ) \frac{x_j}{\sigma}\sin(\sigma)
    \end{equation}
    Using \cref{fact:sin-cos-bounds1}, we see that
    \begin{displaymath}
      \cos^2(\sigma)
      \geq [1 - \frac 1 2 \sigma^2]^2
      \geq 1 - \sigma^2 \ .
    \end{displaymath}
    Further, since $\sigma \leq \tcompress \leq 1 < \frac \pi 3$, we see that $\cos(\sigma) \geq \frac 1 2$.
    Thus,
    \begin{displaymath}
      2 u_j \cos( \sigma ) \frac{x_j}{\sigma}\sin(\sigma)
      \geq u_j \frac{x_j}{\sigma}\sin(\sigma)
      \geq \frac 2 3 u_j x_j
      \geq \frac 1 {3\sqrt{2\dn}} \sigma \tcompress^2
    \end{displaymath}
    where we use \cref{fact:sin-cos-bounds1} in the second inequality and the lower bounds on $x_j$ and $u_j$ in the final inequality.
    Continuing from \cref{eq:lpert-1}, we obtain:
    \begin{align*}
      w_j^2
      &\geq (1-\sigma^2)u_j^2 + \frac 1 {3\sqrt{2\dn}} \sigma \tcompress^2
      \geq u_j^2[1-\sigma^2 + \frac 1 {3\sqrt {2\dn}}\sigma \tcompress^2] \\
      &\geq u_j^2[1+\frac 1 {6\sqrt{2\dn}} \sigma \tcompress^2]
      \geq \big[v_j^2(1+\Delta)^2 - \kappa \big] \cdot [1+\frac 1 {6\sqrt{2\dn}} \sigma \tcompress^2]
    \end{align*}
    by using the upper bound on $\sigma$ in the third inequality, and by applying \cref{eq:nearv-coord-lb} in the final inequality.
    Using the lower bound on $\sigma$, we see that
    \begin{align*}
    w_j^2
    &\geq [v_j^2(1+\Delta)^2 - \kappa]\Big[1 + 2\Big(\frac{\alpha \delta}{\beta \gamma}\abs\SS^\delta\Big)^{\frac 1 {\gamma}}\kappa \Big] \\
    &= v_j^2(1+\Delta)^2 + \kappa\left(2v_j^2\Big(\frac{\alpha \delta}{\beta \gamma}\abs\SS^\delta\Big)^{\frac 1 {\gamma}}(1+\Delta)^2 - 1 - 2\Big(\frac{\alpha \delta}{\beta \gamma}\abs\SS^\delta\Big)^{\frac 1 {\gamma}}\kappa\right) \\
    &\geq v_j^2(1+\Delta)^2 + \kappa\left(2 - 1 - 1\right)
    = v_j^2(1+\Delta)^2  ,
    \end{align*}
    where we use the lower bound on $v_j^2$ from \cref{lem:vi-spread} and the assumed upper bound on $\kappa$ in the final inequality.
    Rearranging terms completes the proof.
  \end{subproof}

  \begin{claim}\label{claim:gi-precond-setup2}
    Let $\lambda \in (0, 1]$ and set $\sigma = \frac {\lambda} {6\sqrt{2\dn}} \tcompress^2$.
    Suppose $\eta = \frac{C_1 \lambda}{\dn}\Big(\frac{\beta \gamma}{\alpha \delta \abs \SS^\delta}\Big)^{\frac 1 \gamma} \tcompress^4$ (where $C_1$ is a universal constant which happens to satisfy $C_1 \in (0, 1)$),
    and suppose $\epsilon \leq \epsdiverge{\eta, \SS}$.
    Then, with probability at least $\cprobi$, there exists $i \in \SS$ such that $\frac{\abs{w_i}}{v_i} \geq (1 + \eta)^2$.
  \end{claim}
  \begin{subproof}
  During the proof, we will wish to apply \cref{claim:gi-precond-setup} with $1+\Delta = (1+\eta)^2$.
  Since $\eta \leq 1$, we have that
  \begin{displaymath}
    (1+\Delta)^2
    = (1+\eta)^4
    = 1 + 4\eta + 6\eta^2 + 4\eta^3 + \eta^4
    \leq 1 + 15 \eta \ .
  \end{displaymath}
  As such, we may bound the expression $\kappa := \norm{\Pmap_{\compl \SS} \vec u}^2 + (1+\Delta)^2 - 1$ by
  \begin{displaymath}
    \kappa
    \leq 4 \delta \abs \SS^\delta \epsilon / \beta + 15 \eta
    < 16 \eta ,
  \end{displaymath}
  where we use a loose version of our bound $\epsilon \leq \epsdiverge{\eta, \SS}$ in the final inequality.
  We note that with the universal constant $C_1 = \frac 1 {2304} = \frac 1 {12\sqrt 2 \cdot 6\sqrt 2 \cdot 16}$, we obtain that
  \begin{displaymath}
    12\sqrt{2\dn}\Big(\frac{\alpha \delta \abs \SS^\delta}{\beta \gamma}\Big)^{\frac 1 \gamma} \frac{\kappa}{\tcompress^2}
    \leq 16 \cdot 12\sqrt{2\dn}\Big(\frac{\alpha \delta \abs \SS^\delta}{\beta \gamma}\Big)^{\frac 1 \gamma} \frac{\eta}{\tcompress^2}
    = \frac{\lambda}{6\sqrt{2\dn}} \tcompress^2
    = \sigma
  \end{displaymath}
  by using the upper bound on $\kappa$ in the inequality,
  by using the given choice of $\eta$ in the first equality,
  and by using the choice of $\sigma$ in the second equality.
  As such, we may apply \cref{claim:gi-precond-setup} to obtain that with probability at least $\cprobi$, there exists $i \in \SS$ such that $\abs{w_i} / v_i \geq (1+\eta)^2$.
   \end{subproof}
   To complete the proof, we apply \cref{claim:gi-precond-setup2}.
   We note that
   \begin{displaymath}
     \eta = O\Big(\frac{\sigma}{\sqrt{\dn}}\Big(\frac{\beta \gamma}{\alpha \delta \abs \SS^\delta}\Big)^{\frac 1 \gamma}\tcompress^2\Big).
   \end{displaymath}
   In particular, it suffices that $\epsilon$ be upper bounded by $O\big(\epsdiverge{\frac{\sigma}{\sqrt{\dn}}(\frac{\beta \gamma}{\alpha \delta \abs \SS^\delta})^{\frac 1 \gamma}\tcompress^2, \abs \SS} \big)$.
\end{proof}

In the following two Lemmas, we consider the case where the gradient iteration is not stagnated, and we demonstrate with at least constant probability the random jump does not move us into stagnation.
In \cref{lem:jump-with-large-coord}, we deal with the case that the preconditions of \cref{prop:ui-spread-time-bound} are actually met, and we demonstrate that with some constant probability, applying the random jump does not cause the essential preconditions for gradient iteration progress to be undone.
Finally, in \cref{lem:jump-with-small-coord}, we demonstrate that if there is a coordinate of $\vec u$ which is not known to be small but has decreased beneath the threshold $\tcompress$, then applying the random jump leaves that coordinate small with constant probability.
In essence, these Lemmas demonstrate that the random jump does not undo unforeseen progress of the gradient iteration.
\begin{lem}\label{lem:jump-with-large-coord}
  Let $\SS \subset [\dm]$,
  let $\vec u \in \sphere^{\dn - 1}$,
  let $\eta > 0$ be a constant,
  and let $\vec v \in \porth^{\dn - 1}$ be the fixed point of $\GG/{\sim}$ such that $v_i \neq 0$ if and only if $i \in \SS$.
  Let $\vec w$ be constructed according to the following random process:
  Draw $\vec x$ uniformly at random from $\sigma \sphere^{\dn - 1} \cap \vec u^\perp$, and let $\vec w = \exp_{\vec u}(\vec x)$.

  If $\abs{u_i} \geq \tcompress$ for all $i \in \SS$,
  if there exists $j \in \SS$ such that $\abs{u_j} \geq (1+\eta)^2v_j$,
  and if $\sigma \leq \frac 2 {3\sqrt {2 \dn}} \tcompress$,
  then with probability at least $\frac 1 2 \cprobi$ we obtain $\abs{w_j} \geq \abs {u_j}$ and in particular $\abs{w_j} \geq (1+\eta)^2v_j$.
\end{lem}
\begin{proof}
  Since $\abs{u_j} > v_j$, it follows that $\vec v$ is not a canonical vector, and in particular that $\abs{\SS} \geq 2$.
  We set $A = \{ i, j \}$ by choosing $i \neq j$ such that $i \in \SS$.
  We apply \cref{fact:perturb-sphere-0sep} to see that with probability at least $\cprobi$, $\norm{\Pmap_A \vec x} \geq \frac 1 {2\sqrt \dn} \sigma$.
  Further, by the spherical symmetry of the distribution of $\vec x$, with probability at least $\frac 1 2 \cprobi$ both $\sign(u_j) x_j \geq 0$ and $\norm{\Pmap_A \vec x} \geq \frac 1 {2\sqrt \dn} \sigma$.
  We proceed under the assumption that this event occurs.

  Applying \cref{lem:pert-good-growth} with $\tau = \tcompress$, we see that $\abs{x_j} \geq \frac \tcompress {\sqrt 2} \norm{\Pmap_{A} \vec x} \geq \frac \tcompress {2\sqrt{2\dn}}\sigma$.
  As such, we obtain:
  \begin{align*}
    \abs{w_j}
    &= \abs{(\exp_{\vec u}(\vec x))_j}
    = \abs{u_j \cos(\sigma) + \frac 1 \sigma x_j \sin(\sigma)}
    \geq \abs{u_j (1-\frac 1 2 \sigma^2) + \frac 2 3 x_j} \\
    &\geq \abs{u_j}(1 - \frac 1 2 \sigma^2) + \frac \tcompress {3\sqrt{2\dn}} \sigma
    \geq \abs {u_j} - \frac 1 2 \sigma^2 + \frac \tcompress {3\sqrt{2\dn}} \sigma
    \geq \abs {u_j},
  \end{align*}
  where we use that $\norm{\vec x} = \sigma$ in the second equality,
  \cref{fact:sin-cos-bounds1} in the first inequality,
  that $\abs {u_j} \leq 1$ in the third inequality,
  and the bound on $\sigma$ in the final inequality.
\end{proof}

\begin{lem}\label{lem:jump-with-small-coord}
  Let $\vec u \in \sphere^{\dn - 1}$
  Suppose there exists $j \in [\dn]$ such that $\abs{u_j} \leq \tcompress$.
  Let $\vec w$ be a random jump of $\vec u$ constructed by the following process:
  Draw $\vec x$ uniformly at random from $\sigma \sphere^{\dn - 1} \cap \vec u^\perp$, and let $\vec w = \exp_{\vec u}(\vec x)$.

  If $\sigma \leq \frac 1 2 \tcompress$, then with probability at least $\frac 1 2$, $\abs{w_j} \leq \tcompress$.
\end{lem}
\begin{proof}
  We will assume without loss of generality that $u_j \geq 0$.
  Using the spherical symmetry of the sampling process, we see that with probability at least $\frac 1 2$, $x_j \leq 0$.
  We proceed under the assumption that this event occurs.
  We first upper bound $w_j$:
  \begin{displaymath}
    w_j
    = (\exp_{\vec u}(\vec x))_j
    = u_j \cos(\sigma) + \frac 1 \sigma x_j \sin(\sigma)
    \leq u_j \leq \tcompress .
  \end{displaymath}
  since both $\cos(\sigma) \leq 1$ and $x_j \leq 0$.
  Further, we may also lower bound $w_j$:
  \begin{align*}
    w_j
    &= u_j \cos(\sigma) + \frac 1 \sigma x_j \sin(\sigma)
    \geq u_j ( 1 - \frac 1 2 \sigma^2 ) - \abs{x_j}
    \geq - \frac 1 2 \tcompress \sigma^2  - \sigma \\
    &\geq - \frac 1 8 \tcompress^3 - \frac 1 2 \tcompress
    > - \tcompress,
  \end{align*}
  where we use \cref{fact:sin-cos-bounds1} to bound $\cos(\sigma)$ and $\sin(\sigma) \leq \sigma$ to bound $\sin(\sigma)$ in the first inequality,
  we use $0 \leq u_j \leq \tcompress$ in the second inequality,
  we use the given bound $\sigma \leq \frac 1 2 \tcompress^2$ in the third inequality,
  and we use $\tcompress \leq 1$ in the final inequality.
\end{proof}

\subsection{Gradient Iteration Proof of Robustness}

We now have all of the technical tools needed to prove that \textsc{RobustGI-Recovery} robustly recovers the hidden basis elements.
To do so, we first demonstrate that \textsc{FindBasisElement} can be used to approximate a single undiscovered basis element.
We then show that by repeated application of \textsc{FindBasisElement}, all hidden basis elements may be recovered.
In particular, we now prove this section's main theoretical results (\cref{thm:single-recovery-main,thm:full-recovery-main}).
We restate each theorem with more precise bounds before its proof.

For clarity, we denote strictly positive universal constant by $C_0, C_1, C_2, ... $.
\begin{thm}\label{thm:single-recovery-main-reprise}
  Suppose $\sigma \in \left(0, \frac 1 {6\sqrt {2\dn(1+2\delta)}} \tcompress^2\right]$ and
  \begin{displaymath}
    \epsilon \leq C_0 \epsdiverge{\frac{\sigma}{\sqrt \dn}\left(\frac {\beta \gamma}{\alpha \delta \dm^\delta}\right)^{\frac 1 \gamma}\tcompress^2, [\dm]}\dn^{-\delta} \ .
  \end{displaymath}
  Let $p_{\ref{thm:single-recovery-main-reprise}} \in (0,1)$.
  Suppose $N_1 \geq 2\Nsc$,
  $N_2 \geq 2\Nsc+C_0\Nspread{\frac{\sigma}{\sqrt \dn}(\frac {\beta \gamma}{\alpha \delta \dm^\delta})^{\frac 1 \gamma}\tcompress^2}$,
  and $I \geq C_1 \dm \lceil \log(\dm / p_{\ref{thm:single-recovery-main-reprise}}) \rceil$.
  Let $\pi$ be a permutation of $[\dm]$, let $s_1, \dotsc, s_k \in \{\pm 1\}$, and suppose that $\norm{s_i \gvec \mu_i - \myZv_{\pi(i)}} \leq 4\sqrt 2 \delta \epsilon / \beta$ for each $i \in [k]$.

  If we execute $\gvec \mu_{k+1} \leftarrow \textsc{FindBasisElement}(\{\gvec \mu_1, \dotsc, \gvec \mu_k\}, \sigma)$, then with probability at least $1 - p_{\ref{thm:single-recovery-main-reprise}}$, there will exist $s_{k+1} \in \{\pm 1\}$ and an index $j \in [\dm] \setminus [k]$ such that $\norm{s_{k+1} \gvec \mu_{k+1} - \myZv_{\pi(j)}} \leq 4\sqrt 2 \delta \epsilon / \beta$.
\end{thm}
\begin{proof}
  We define the set $A_1 := \{ \pi(\ell) \suchthat \ell \in [k] \}$.
  Notice that $\abs {A_1} = k$.
  By \cref{lem:main-loop-precondition}, we see that $\norm{(\Pmap_0 + \Pmap_{A_1}) \vec u} \leq 4(\dm - \abs {A_1})^\delta\delta \epsilon / \beta$ at the beginning of the first execution of the main loop of \textsc{FindBasisElement}.
  We now establish the following loop invariant.
  \begin{claim}\label{claim:main-loop-invariant}
    Suppose that at the start of the $i$\textsuperscript{th} iteration of the main loop of \textsc{FindBasisElement}, there exists $A_i \subset [\dm]$ such that $\norm{(\Pmap_0 + \Pmap_{A_i}) \vec u} \leq 4 (\dm - \abs {A_i})^\delta \delta \epsilon / \beta$.
    Then,
    \begin{enumerate}
    \item\label{main-loop-invariant:item:1}
      At the end of the $i$\textsuperscript{th} iteration of the main loop, there exists $A_{i+1} \subset [\dm]$ such that $A_{i+1} \supset A_i$ and $\norm{(\Pmap_0 + \Pmap_{A_{i+1}})\vec u} \leq 4(\dm - \abs{A_{i+1}})^\delta \epsilon / \beta$.
    \item\label{main-loop-invariant:item:2}
      If $\abs{A_i} \leq \dm - 2$, then with probability at least $\frac 1 2 \cprobi$, $A_{i+1}$ from part~\ref{main-loop-invariant:item:1} is a strict superset of $A_i$.
    \end{enumerate}
  \end{claim}
  \begin{subproof}
    We define $\SS_i := [\dm] \setminus A_i$.
    We proceed in our analysis at the start of the $i$\textsuperscript{th} iteration of the main loop of \textsc{FindBasisElement}.
    We view $\SS_i$ as being the set of large coordinates of $\vec u$.
    If for each $\ell \in \SS_i$, $\abs{u_\ell} \geq \tcompress$, then this is true in the informal sense that we have considered throughout our discussions.
    However, this is not guaranteed, and we must proceed in distinct cases.
    In all cases, we will make use of the following two facts (without explicitly saying we are doing so) when applying previous lemmas about the random jump and its effects:
    \begin{enumerate}
    \item At the start of the current iteration of the main loop, $\norm{\Pmap_{\compl \SS_i} \vec u} \leq 4 \delta \abs{\SS_i}^\delta \epsilon/\beta $.
    \item At the end of the execution of line~\ref{alg:step:jump1} of \textsc{FindBasisElement}, $\norm{\Pmap_{\compl \SS_i} \vec w} \leq \frac{\tcompress}{\sqrt{2(1+2\delta)}}$.
    In particular, this means $\norm{\Pmap_0 \vec w} \leq \frac 1 {\sqrt{2(1+2\delta)}}$ and $\norm{\Pmap_{\compl \SS_i} \vec w} \leq \tcompress$.
    \end{enumerate}
    To see the first fact, we use that $\norm{\Pmap_{\compl \SS_i} \vec u} = \norm{(\Pmap_0 + \Pmap_{A_i})\vec u} \leq 4 \delta (\dm - \abs{A_i})^\delta \delta \epsilon / \beta = 4\delta \abs{\SS_i}^\delta \epsilon/\beta$.
    To see the second fact, we apply \cref{lem:jump-small-coords} and recall also that $\tcompress < 1$ and $\delta > 0$.

    We let $\vec v \in \porth^{\dn - 1}$ be the fixed point of $\GG/{\sim}$
    such that $v_j \neq 0$ if and only if $j \in \SS_i$.
    We define
    $\eta := C_{\ref{lem:lpert-analysis-expected}} \frac{\sigma}{\sqrt \dn}(\frac {\beta \gamma}{\alpha \delta \dm^\delta})^{\frac 1 \gamma}\tcompress^2$.
    \resetcasecount
    \begin{case}
      $\abs{u_j} \geq \tcompress$ and $\abs{u_j} < (1+\eta)^2v_j$ for all $j \in \SS_i$.
    \end{case}
    \begin{caseblock}
      If $\abs{\SS_i} \geq 2$, then we apply
      \cref{lem:lpert-analysis-expected} to see that with probability at
      least $\cprobi$, at the end of line~\ref{alg:step:jump1} of
      \textsc{FindBasisElement} there exists $\ell \in \SS_i$ such that
      $\abs{w_\ell}/v_\ell \geq (1+\eta)^2$.
      If this happens, we apply \cref{prop:ui-spread-time-bound} to see
      that at the end of end of the current iteration of the main loop of \textsc{FindBasisElement}, there exists $j
      \in \SS_i$ such that $\norm{\Pmap_{\compl \SS_{i} \cup \{j\}} \vec u}
      \leq 4\epsilon\delta(\abs {\SS_i} - 1)^\delta / \beta$.
      In particular, when this occurs, we define $A_{i+1} := A_i \cup \{j\}$,
      and we see that $\norm{\Pmap_{\compl \SS_{i} \cup \{j\}} \vec u} =
      \norm{(\Pmap_0 + \Pmap_{A_{i+1}})\vec u} \leq 4\epsilon\delta(\dm - \abs
      {A_{i+1}})^\delta / \beta$ at the end of the $i$\textsuperscript{th}
      iteration of the main loop.

      If it occurs that there is no $\ell \in \SS_i$ such that
      $\abs{w_\ell}/v_\ell \geq (1+\eta)^2$, we define $A_{i+1} := A_i$ and
      apply \cref{prop:GI-Loop-Small-Coords-Error} to see that at
      the end of the $i$\textsuperscript{th} iteration of the main loop,
      $\norm{(\Pmap_0 + \Pmap_{A_{i+1}})\vec u} \leq 4\epsilon\delta(\dm -
      \abs {A_{i+1}})^\delta / \beta$.
    \end{caseblock}
    \begin{case}
      $\abs{u_j} \geq \tcompress$ for all $j \in \SS_i$ and there exists $\ell \in [\dm]$ such that $\abs{u_\ell} \geq (1+\eta)^2v_\ell$.
    \end{case}
    \begin{caseblock}
      We apply \cref{lem:jump-with-large-coord} to see that with
      probability at least $\frac 1 2 \cprobi$, at the end of
      line~\ref{alg:step:jump1} of \textsc{FindBasisElement} there exists
      $\ell \in \SS_i$ such that $\abs{w_\ell}/v_\ell \geq (1+\eta)^2$.
      When this occurs, we apply \cref{prop:ui-spread-time-bound} to see
      that at the end of the current iteration of the main loop of \textsc{FindBasisElement}, there exists $j
      \in \SS_i$ such that $\norm{\Pmap_{\compl \SS_{i} \cup \{j\}} \vec u}
      \leq 4\epsilon\delta(\abs {\SS_i} - 1)^\delta / \beta$.
      In particular, we define $A_{i+1} := A_i \cup \{ j
      \}$, and we see that $\norm{\Pmap_{\compl \SS_{i} \cup \{j\}} \vec u} =
      \norm{(\Pmap_0 + \Pmap_{A_{i+1}})\vec u} \leq 4\epsilon\delta(\dm - \abs
      {A_{i+1}})^\delta / \beta$ at the end of the $i$\textsuperscript{th}
      iteration of the main loop.

      If it happens that there is no $\ell \in \SS_i$ such that
      $\abs{w_\ell}/v_\ell \geq (1+\eta)^2$, we define $A_{i+1} := A_i$ and
      apply \cref{prop:GI-Loop-Small-Coords-Error} to see that at
      the end of the $i$\textsuperscript{th} iteration of the main loop,
      $\norm{(\Pmap_0 + \Pmap_{A_{i+1}})\vec u} \leq 4\epsilon\delta(\dm -
      \abs {A_{i+1}})^\delta / \beta$.
    \end{caseblock}
    \begin{case}
      There exists $\ell \in \SS_i$ such that $\abs{u_\ell} < \tcompress$.
    \end{case}
    \begin{caseblock}
      We apply \cref{lem:jump-with-small-coord} to see that with probability at least $\frac 1 2$, $\abs{w_\ell} \leq \tcompress$ at the end of the execution of line~\ref{alg:step:jump1} of \textsc{FindBasisElement}.
      If this occurs, we define $A_{i+1} := A_i \cup \{\ell\}$, and otherwise we define $A_{i+1} := A_i$.
      Then, applying \cref{prop:GI-Loop-Small-Coords-Error}, we see that $\norm{(\Pmap_0 + \Pmap_{A_{i+1}})\vec u} \leq 4(\dm - \abs{A_{i+1}})^\delta \epsilon / \beta$.
    \end{caseblock}
    \noindent
    Note that in all three cases, we have the following summary outcome:  If $\abs {\SS_i} \geq 2$, then with probability at least $\frac 1 2 \cprobi$, there exists $A_{i+1}$ a strict superset of $A_i$ such that
    $\norm{(\Pmap_0 + \Pmap_{A_{i+1}})\vec u}
    \leq 4\epsilon\delta(\dm - \abs {A_{i+1}})^\delta / \beta$.
    Further, it is guaranteed that there exists $A_{i+1} \supset A_i$ (where the superset is not necessarily strict) such that
    $\norm{(\Pmap_0 + \Pmap_{A_{i+1}})\vec u}
    \leq 4\epsilon\delta(\dm - \abs {A_{i+1}})^\delta / \beta$ at the end of the current iteration of the main loop of \textsc{FindBasisElement}.
    Noting that $\abs{\SS_i} \geq 2$ if and only if $\abs{A_i} \leq \dm - 2$ completes the proof of the claim.
  \end{subproof}
  To complete the proof, we will apply \cref{claim:main-loop-invariant} and study the state of $\vec u$ at the return statement of line~\ref{alg:step:exit} of \textsc{FindBasisElement}.

  To begin with, we note that if $A_{I+1}$ can contain at most $\dm - 1$ elements, since otherwise $\abs{A_{I+1}} = \dm$ would imply that $\norm{\vec u} = \norm{(\Pmap_0 + \Pmap_{A_{I+1}})\vec u} = 0$.
  Since all steps of \textsc{FindBasisElement} maintain that $\vec u \in \sphere^{\dn - 1}$, this would contradict that $\vec u$ is a unit vector.

  If $\abs{A_{I+1}} = \dm - 1$, then the loop invariant from \cref{claim:main-loop-invariant} implies that $A_{I+1} \supset A_1$, and hence the lone $j \in A_{I+1}$ satisfies $j \not \in \{\pi(1), \dotsc, \pi(\abs{A_1})\}$.
  By applying \cref{lem:hbe-error-bound}, we see that there exists $s_{k+1} \in \{\pm 1\}$ such that
  $\norm{s_{k+1}\vec u - \myZv_j}
  \leq \norm{(\Pmap_0 + \Pmap_{A_{I+1}})\vec u}^2 \sqrt 2
  \leq 4\sqrt 2 \delta \epsilon / \beta$ as desired.

  It only remains to be seen that $\abs{A_{I+1}} \geq \dm - 1$ with the claimed probability.
  To see this, it suffices to show that the size of $\abs{A_i}$ increases at least $\dm - k - 1$ times during the execution of the main loop.
  We will make use of the following claim:
  \begin{claim}\label{claim:main-probability-accumulation}
    Suppose that at the beginning of the $i_j$\textsuperscript{th} iteration of the main loop of \textsc{FindBasisElement} that $\abs{A_i} < \dm - 1$.
    Let $\eta \in (0, 1)$.
    If $N \geq \log(\frac 1 \eta) / \log((1 - \frac 1 2 \cprobi)^{-1})$, then after $N$ additional iterations of the main loop, with probability at least $1 - \eta$, $A_{i+N}$ is a strict superset of $A_i$.
  \end{claim}
  \begin{subproof}
    Using the probability bound $\frac 1 2 \cprobi$ from \cref{claim:main-loop-invariant} and that the random jumps in \textsc{FindBasisElement} are independent of each other, we see that $\Pr[A_{i+N} = A_i]$ is bounded by
    \begin{align*}
      \Pr[A_{i+N} = A_i]
      &\leq (1 - \frac 1 2 \cprobi)^N
      \leq (1-\frac 1 2 \cprobi)^{\log(\frac 1 \eta) / \log((1 - \frac 1 2 \cprobi)^{-1})} \\
      &= (1-\frac 1 2 \cprobi)^{\log_{(1 - \frac 1 2 \cprobi)}(\eta)}
      = \eta
    \end{align*}
    As such, $\Pr[A_{i+N} \text{ is a strict superset of } A_i ] \geq 1 - \eta$ since this is the complement event (by the loop invariant from \cref{claim:main-loop-invariant}).
  \end{subproof}
  We apply \cref{claim:main-probability-accumulation} with the choice of $\eta = p_{\ref{thm:single-recovery-main-reprise}} / (\dm - k - 1)$.
  By taking a union bound, we see that when $I \geq C_1(\dm-k-1) \lceil \log((\dm-k-1) / p_{\ref{thm:single-recovery-main-reprise}} ) \rceil$ with the choice of $C_1 = \frac 1 {\log((1-\frac 1 2 \cprobi)^{-1})}$, then with probability $1 - p_{\ref{thm:single-recovery-main-reprise}}$,  $\abs{A_{I+1}} \geq \abs{A_1} + {\dm - k - 1} = \dm - 1$ as desired.
  In particular, it suffices that $I \geq C_1 \dm \lceil \log(\dm / p_{\ref{thm:single-recovery-main-reprise}})\rceil$.
\end{proof}

\begin{thm}\label{thm:full-recovery-main-reprise}
  Suppose $\sigma \in (0,\, \frac 1 {6\sqrt {2\dn(1+2\delta)}} \tcompress^2]$,
  \begin{displaymath}
    \epsilon \leq C_0 \epsdiverge{\frac{\sigma}{\sqrt \dn}\left(\frac {\beta \gamma}{\alpha \delta \dm^\delta}\right)^{\frac 1 \gamma}\tcompress^2, [\dm]}\dn^{-\delta} \ ,
  \end{displaymath}
  $N_1 \geq 2\Nsc $,
  $N_2 \geq 2\Nsc + C_0\Nspread{\frac{\sigma}{\sqrt \dn}(\frac {\beta \gamma}{\alpha \delta \dm^\delta})^{\frac 1 \gamma}\tcompress^2}$,
  $p_{\ref{thm:full-recovery-main-reprise}} \in (0, 1)$,
  and $I \geq C_2 \dm \lceil \log(\dm / p_{\ref{thm:full-recovery-main-reprise}}) \rceil$.
  If we execute $\gvec \mu_1, \dotsc, \gvec \mu_{\hat \dm}\leftarrow \Call{RobustGI-Recovery}{\hat m}$ for some integer $\hat \dm \in [\dm, \dn]$,
  then $\gvec \mu_1, \dotsc, \gvec \mu_\dm$ forms a $4\sqrt{2}\delta \epsilon / \beta$-approximation to the hidden basis.
  More precisely, there exists a permutation $\pi$ of $[\dm]$ and signs $s_1, \dotsc, s_\dm \in \{+1, -1\}$ such that $\norm{ s_i \gvec \mu_i - \myZv_{\pi(i)} } \leq 4 \sqrt{2} \delta \epsilon / \beta$ for each $i \in [\dm]$.
\end{thm}
\begin{proof} 
  We let $\gvec \mu_1, \dotsc, \gvec \mu_\dm$ denote the first $\dm$ approximate basis elements returned by \textsc{RobustGI-Recovery}.
  We proceed by induction on the following statement (with $k \in [\dm] \cup \{0\}$).  \newline \newline
  \noindent\textbf{Inductive Hypothesis:}
  With probability at least $1 - kp_{\ref{thm:full-recovery-main-reprise}} / \dm$,
  there exist sign values $s_1, \dotsc, s_k$ and a permutation $\pi_k$ of $[\dm]$ such that $\norm{s_i \gvec \mu_i - \myZv_{\pi_k(i)}}\leq 4 \sqrt{2} \delta \epsilon / \beta$ for each $i \leq k$.
  \newline\newline
  The base case $k = 0$ holds trivially.
  Suppose that the inductive hypothesis holds for some $k = n$ with $n < \dm$.
  In order to apply \cref{thm:single-recovery-main-reprise}, we set $p_{\ref{thm:single-recovery-main-reprise}} = \frac 1 \dm p_{\ref{thm:full-recovery-main-reprise}}$.
  In order to apply \cref{thm:single-recovery-main-reprise},
  we require that $I \geq C_1 \dm \lceil \log(\dm^2 / p_{\ref{thm:full-recovery-main-reprise}}) \rceil$ (where $C_1$ is as in \cref{thm:single-recovery-main-reprise}), for which it suffices that $I \geq 2C_1 \dm \lceil \log(\dm / p_{\ref{thm:full-recovery-main-reprise}}) \rceil$.
  In particular, it suffices that $C_2 = 2C_1$.

  We now consider the case $k = n$, and
  we operate conditionally on the case that there exist sign values $s_1, \dotsc, s_n$ and a permutation $\pi_n$ of $[\dm]$ such that $\norm{s_i \gvec \mu_i - \myZv_{\pi_n(i)}}\leq 4 \sqrt{2} \delta \epsilon / \beta$ for each $i \leq n$.
  By \cref{thm:single-recovery-main-reprise}, with probability at least $1 - \frac 1 \dm p_{\ref{thm:full-recovery-main-reprise}}$ there exists $j \in [\dm] \setminus \{\pi_n(i) \suchthat i \in [n]\}$ and a sign $s$ such that $\norm{s\gvec \mu_{n+1} - \myZv_j} \leq 4 \sqrt{2} \delta \epsilon / \beta$.
  Defining $s_{n+1} := s$ and $\pi_{n+1}$ to be a permutation of $[\dm]$ such that $\pi_{n+1}(n+1) = j$ and $\pi_{n+1}(i) = \pi_n(i)$ for $i \leq n$ gives the result for $\gvec \mu_{n+1}$.
  Further, we see that the probability that the inductive hypothesis holds for $k = n + 1$ is lower bounded by $(1 - np_{\ref{thm:full-recovery-main-reprise}} / \dm)(1 - p_{\ref{thm:full-recovery-main-reprise}} / \dm) \geq 1 - (n+1)p_{\ref{thm:full-recovery-main-reprise}} / \dm$ as desired.

  Applying induction on $k$ completes the proof.
\end{proof}


\section{{\Sef} growth and perturbation bounds}
In this appendix, we provide some useful bounds for $(\alpha, \beta, \gamma, \delta)$-robust {\sef}s that are used throughout the error analysis proofs.
In particular, \cref{lem:F-robust-implications} provides useful bounds on each $g_k$, $h_k$ and their derivatives.
\cref{lem:fg-upper-bounds} provides bounds on the magnitude of $\nabla F$ and its projections.
Then, the remaining lemmas provide bounds on estimation and location perturbation errors for $\nabla F$ and $\GG$.

\begin{lem}\label{lem:F-robust-implications}
    The following bounds hold for every $k \in [\dm]$:
    \begin{compactenum}
        \item \label{lem:F-rb-impl:item:monomials}
        For every $x \in [-1, 1]$, $\frac{\beta}{(\delta + 1)\delta} \abs x^{2+2\delta} \leq \abs{g_k(x)} \leq \frac{\alpha}{(\gamma + 1)\gamma} \abs x^{2+2\gamma}$.
        \item \label{lem:F-rb-impl:item:1}
        For every $x \in [-1, 1]$, $2\frac{\beta}{\delta} \abs x^{1+2\delta} \leq \abs{g'_k(x)} \leq 2\frac \alpha \gamma \abs{x}^{1+2\gamma}$.
        \item \label{lem:F-rb-impl:item:1-5}
        For every $x \in [-1, 1]$, $2(2+1/\delta)\beta \abs x^{2\delta} \leq \abs{g_k''(x)} \leq 2(2+1/\gamma)\alpha \abs x^{2\gamma}$.
        \item \label{lem:F-rb-impl:item:2}
        For every $x \in [-1, 1]$,
        $\frac \beta \delta \abs x^{2\delta} \leq \abs{h_k'(\sign(x)x^2)} \leq \frac \alpha \gamma \abs x^{2\gamma}$.
        \item \label{lem:F-rb-impl:item:defn-equiv}
        For every $x \in [-1, 0) \cup (0, 1]$, $\beta \abs x^{2\delta-2} \leq \abs{h_k''(\sign(x)x^2)} \leq \alpha \abs x^{2\gamma-2}$
    \end{compactenum}
\end{lem}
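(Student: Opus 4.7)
The plan is to prove the five bounds in essentially the order 5, 4, 3, 2, 1, each reducing (by differentiation/integration or by the $g_k$--$h_k$ relations) to the one before it, with the first bound being a direct substitution from the definition of $(\alpha, \beta, \gamma, \delta)$-robustness. All work will initially be done on $x \in (0, 1]$; extension to $[-1, 0)$ (and $x = 0$ for parts 1--4) will follow from the even/odd symmetry in Assumption~\ref{assumpt:gsymmetries} and from continuity.

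First I would establish part~\ref{lem:F-rb-impl:item:defn-equiv}. Definition~\ref{defn:BEF-robust} gives $\beta y^{\delta - 1} \leq \abs{h_k''(y)} \leq \alpha y^{\gamma - 1}$ for $y \in (0, 1]$. Setting $y = x^2$ with $x \in (0, 1]$ yields $\beta x^{2\delta - 2} \leq \abs{h_k''(x^2)} \leq \alpha x^{2\gamma - 2}$. For $x \in [-1, 0)$, note that since $g_k$ is even or odd (Assumption~\ref{assumpt:gsymmetries}), $h_k(t) = g_k(\sign(t)\sqrt{|t|})$ is itself even or odd on $[-1, 1]$, so $\abs{h_k''}$ is an even function of its argument; hence $\abs{h_k''(-x^2)} = \abs{h_k''(x^2)}$ and the bound extends. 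Part~\ref{lem:F-rb-impl:item:2} then follows by integrating from $0$: using $h_k'(0) = 0$ (Lemma~\ref{lem:h-props}\ref{lem:h-props:hi-deriv0}), $h_k'(x^2) = \int_0^{x^2} h_k''(s)\,ds$, and by Assumption~\ref{assumpt:convex} the integrand $h_k''$ has constant sign on $(0,1]$, so the magnitudes pull through the integral:
\begin{equation*}
\tfrac{\beta}{\delta} x^{2\delta} \;=\; \int_0^{x^2}\beta s^{\delta - 1}\,ds \;\leq\; \abs{h_k'(x^2)} \;\leq\; \int_0^{x^2}\alpha s^{\gamma - 1}\,ds \;=\; \tfrac{\alpha}{\gamma} x^{2\gamma} \ .
\end{equation*}
Extension to $\sign(x)x^2$ again uses the even/odd symmetry of $h_k$, which makes $\abs{h_k'}$ an even function.

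Next I would derive parts~\ref{lem:F-rb-impl:item:1} and~\ref{lem:F-rb-impl:item:1-5} for $x \in (0, 1]$ by invoking Lemma~\ref{lem:h-g-relations}: $g_k'(x) = 2 h_k'(x^2) x$ and $g_k''(x) = 4 h_k''(x^2) x^2 + 2 h_k'(x^2)$. For part~\ref{lem:F-rb-impl:item:1}, take absolute values and substitute the bounds of part~\ref{lem:F-rb-impl:item:2}. For part~\ref{lem:F-rb-impl:item:1-5}, the two summands have the same sign on $(0,1]$ because $\sign(h_k'(x^2))$ matches the sign of $h_k''$ on $(0,1]$ (Lemma~\ref{lem:h-props}\ref{lem:h-props:sign}), so absolute values distribute: $\abs{g_k''(x)} = 4\abs{h_k''(x^2)}x^2 + 2\abs{h_k'(x^2)}$. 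Plugging in the bounds from parts~\ref{lem:F-rb-impl:item:2} and~\ref{lem:F-rb-impl:item:defn-equiv} and collecting gives the stated $2(2+1/\gamma)\alpha x^{2\gamma}$ upper bound and $2(2+1/\delta)\beta x^{2\delta}$ lower bound. Extension to $x \in [-1,0)$ follows since $g_k'$ is even/odd in the opposite parity to $g_k$ and $g_k''$ in the same parity, so $\abs{g_k'}$ and $\abs{g_k''}$ are even.

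Finally, part~\ref{lem:F-rb-impl:item:monomials} follows by integrating part~\ref{lem:F-rb-impl:item:1} from $0$: since $g_k(0) = 0$ (Assumption~\ref{assumpt:origin-val}) and $g_k'$ has constant sign on $(0, 1]$ by part~\ref{lem:F-rb-impl:item:1}, $\abs{g_k(x)} = \int_0^x \abs{g_k'(s)}\,ds$, and the bounds integrate cleanly to $\frac{\beta}{(\delta+1)\delta}x^{2+2\delta}$ below and $\frac{\alpha}{(\gamma+1)\gamma}x^{2+2\gamma}$ above. The extension to $[-1,0)$ is again by the parity of $g_k$ ($\abs{g_k}$ is even in either case).

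The main obstacle is purely bookkeeping: carefully tracking that Lemma~\ref{lem:h-g-relations}, which is stated on $[0,1]$, extends cleanly to negative arguments via the $h_k$ and $g_k$ symmetries, and making sure the sign agreement between $h_k'$ and $h_k''$ on $(0,1]$ (so that absolute values distribute over the sum defining $g_k''$) is correctly justified from Lemma~\ref{lem:h-props}\ref{lem:h-props:sign} rather than assumed. There is no genuine analytic difficulty once those sign/symmetry issues are handled; the rest is monotone integration of the fundamental inequality $\beta t^{\delta-1} \leq \abs{h_k''(t)} \leq \alpha t^{\gamma-1}$.
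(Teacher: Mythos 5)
Your proof is correct and follows essentially the same path as the paper's: establish part 5 by substituting $y=x^2$ in the robustness definition, integrate to get part 4, pass to parts 2--3 via the $g_k$--$h_k$ relations (using the sign agreement between $h_k'$ and $h_k''$ to split the absolute value), and integrate again for part 1. The only variation is that for part 1 you integrate $\abs{g_k'}$ directly using $g_k(0)=0$, while the paper integrates $\abs{h_k'}$ to bound $\abs{h_k(x^2)}$ and then substitutes; both are one-line computations that land on the same constants.
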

\begin{proof}
    Using the symmetries from \cref{assumpt:gsymmetries}, it suffices to consider $x \geq 0$.

    To see part~\ref{lem:F-rb-impl:item:defn-equiv}, we apply \cref{defn:BEF-robust} to $h_k''(x^2)$ to obtain $\beta x^{2(\delta - 1)} \leq \abs{h_k''(x^2)} \leq \alpha x^{2(\gamma - 1)}$ on $x > 0$.

    For part~\ref{lem:F-rb-impl:item:2}, we use that $h_k'(0) = 0$ by \cref{assumpt:deriv0} to obtain that $h_k'(x) = \int_0^{x} h_k''(t) dt$.
    Further, since $h_k$ is either strictly convex or strictly concave on $[0, 1]$, it follows that the sign of $h_k''$ unchanging on $(0, 1]$.
    Thus, $\abs{h_k'(x)} = \int_0^{x} \abs{h_k''(t)} dt$.
    The upper bound is obtained as
    \begin{displaymath}
    \abs{h_k'(x^2)}
    \leq \int_0^{x^2} \alpha t^{\gamma - 1} dt
    = \frac \alpha \gamma t^{\gamma}\Big|_{t=0}^{t=x^2}
    = \frac \alpha \gamma x^{2\gamma} \ .
    \end{displaymath}
    By similar reasoning (replacing $\leq$ with $\geq$, $\gamma$ with $\delta$, and $\alpha$ with $\beta$) we obtain that $\abs{h_k'(x^2)} \geq \frac \beta \delta x^{2\delta}$.

    To obtain parts~\ref{lem:F-rb-impl:item:1} and \ref{lem:F-rb-impl:item:1-5}, we use the formulas from \cref{lem:h-g-relations} to express the derivatives of $g_k$ as $g_k'(x) = 2h_k'(x^2)x$ and $g_k''(x) = \indicator{x \neq 0}[4h_i''(x^2)x^2 + 2h_i'(x^2)]$.
    By part~\ref{lem:F-rb-impl:item:2}, we obtain the desired bounds on $g_k'(x)$.
    We note that
    \begin{displaymath}
    \abs{g_k''(x)}
    \leq 4 \alpha x^{2\gamma} + 2 \frac \alpha \gamma x^{2\gamma}
    \leq 2\alpha (2 + 1 / \gamma) x^{2\gamma} \ .
    \end{displaymath}
    Note that $h_i''(x)$ and $h_i'(x)$ share the same sign on $(0, x]$ (see \cref{lem:h-props} and recall that $h_i$ is convex if $h_i'' \geq 0$ on its domain and concave if $h_i'' \leq 0$ on its domain).
    As such,
    \begin{displaymath}
    \abs{g_k''(x)}
    = \indicator{x \neq 0} \abs{4h_k''(x^2)x^2 + 2h_k'(x^2)}
    \geq 4 \beta x^{2\delta} + 2 \frac \beta \delta x^{2\delta}
    \geq 2 \beta (2+1/\delta) x^{2\delta} \ .
    \end{displaymath}

    To obtain part~\ref{lem:F-rb-impl:item:monomials}, we first find bounds for $h_k$.
    Since $h_k'$ is strictly monotonic and $h_k'(0) = 0$, it follows that $\abs{h_k(x)} = \int_0^x \abs{h_k'(t)} dt$.
    We obtain the upper bound as:
    \begin{align*}
    \abs{g_k(x)}
    &= \abs{h_k(x^2)}
    = \int_0^{x^2} \abs{ h_k'(t) } dt \\
    &\leq \frac \alpha \gamma \int_0^{x^2} t^{\gamma} dt
    = \frac{\alpha}{(\gamma + 1)\gamma} t^{\gamma+1} \bigr|_{t = 0}^{t = x^2}
    = \frac{ \alpha}{(\gamma + 1)\gamma} x^{2\gamma + 2} \ .
    \end{align*}
    The lower bound is obtained in a similar manner.
\end{proof}

\begin{lem}\label{lem:fg-upper-bounds}
    If $F$ is $(\alpha, \beta, \gamma, \delta)$-robust, then its gradient is bounded as follows for any $\vec u \in S^{\dn - 1}$.
    \begin{compactenum}
        \item\label{bound:gradf-upper2}
        Let $\SS \subset [\dn]$.
        Then, $\norm{\Pmap_\SS \nabla F(\vec u)} \leq \frac{2\alpha} \gamma \norm{\Pmap_{\SS \cap [\dm]} \vec u}^{1+2\gamma}$.
        \item\label{bound:gradf-lower}
        Let $S \subset [\dn]$.
        Then, $\norm{\Pmap_\SS \nabla F(\vec u)} \geq \frac {2\beta} \delta \norm{\Pmap_{\SS \cap [\dm]} \vec u}^{1+2\delta} / \abs {\SS \cap [\dm]}^{\delta}$.
    \end{compactenum}
\end{lem}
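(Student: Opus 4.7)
The plan is to reduce both bounds to simple scalar inequalities on the coordinates $\{u_i\}_{i \in T}$, where $T := \SS \cap [\dm]$. Since the hidden directions $\myZv_1, \dotsc, \myZv_\dn$ are orthonormal and since $g_i \equiv 0$ for $i \in [\dn] \setminus [\dm]$, we have
\begin{equation*}
\nabla F(\vec u) = \sum_{i=1}^\dm g_i'(u_i)\myZv_i,
\qquad
\Pmap_\SS \nabla F(\vec u) = \sum_{i \in T} g_i'(u_i)\myZv_i,
\end{equation*}
so $\norm{\Pmap_\SS \nabla F(\vec u)}^2 = \sum_{i \in T} g_i'(u_i)^2$. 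From Lemma~\ref{lem:F-robust-implications} part~\ref{lem:F-rb-impl:item:1}, $\tfrac{2\beta}{\delta}\abs{u_i}^{1+2\delta} \le \abs{g_i'(u_i)} \le \tfrac{2\alpha}{\gamma}\abs{u_i}^{1+2\gamma}$ for each $i \in [\dm]$. Substituting these pointwise bounds yields
\begin{equation*}
\Big(\tfrac{2\beta}{\delta}\Big)^{2}\!\!\sum_{i \in T} u_i^{2+4\delta}
\;\le\; \norm{\Pmap_\SS \nabla F(\vec u)}^2
\;\le\; \Big(\tfrac{2\alpha}{\gamma}\Big)^{2}\!\!\sum_{i \in T} u_i^{2+4\gamma}.
\end{equation*}

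For the upper bound, I will use monotonicity of $\ell^p$ norms for finite sequences: if $p \ge 2$, then $\norm{\vec y}_p \le \norm{\vec y}_2$ for any real vector $\vec y$. Applying this with $p = 2+4\gamma \ge 2$ to the vector with entries $u_i$, $i\in T$, gives
\begin{equation*}
\sum_{i \in T} u_i^{2+4\gamma} \;\le\; \Big(\sum_{i \in T} u_i^2\Big)^{1+2\gamma} = \norm{\Pmap_T \vec u}^{2+4\gamma},
\end{equation*}
and taking square roots in the displayed bound yields the claimed upper bound (noting that $\Pmap_T = \Pmap_{\SS \cap [\dm]}$).

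For the lower bound, I will invoke the power mean inequality in the form $\frac{1}{n}\sum_{j=1}^{n} a_j^{r} \ge \big(\frac{1}{n}\sum_{j=1}^{n} a_j\big)^{r}$ for $r \ge 1$ and $a_j \ge 0$. Applying this with $a_j = u_i^2$, $n = \abs{T}$, and $r = 1+2\delta$ gives
\begin{equation*}
\sum_{i \in T} u_i^{2+4\delta}
\;\ge\; \abs{T}\cdot\Big(\tfrac{1}{\abs{T}}\sum_{i \in T} u_i^{2}\Big)^{1+2\delta}
\;=\; \abs{T}^{-2\delta}\,\norm{\Pmap_T \vec u}^{2+4\delta}.
\end{equation*}
Substituting into the scalar lower bound and taking square roots yields $\norm{\Pmap_\SS \nabla F(\vec u)} \ge \tfrac{2\beta}{\delta}\norm{\Pmap_T \vec u}^{1+2\delta}/\abs{T}^{\delta}$, as desired. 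There is no substantial obstacle here; the only choice to make is picking the correct direction of each norm inequality — the upper bound wants to collapse a high $\ell^p$ norm into the $\ell^2$ norm (monotonicity in $p$), and the lower bound wants the reverse comparison, which costs the factor $\abs{T}^{-\delta}$ via power-mean.
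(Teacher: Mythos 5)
Your proof is correct and follows essentially the same route as the paper's: both use the pointwise bounds on $\abs{g_i'(u_i)}$ from Lemma~\ref{lem:F-robust-implications}, collapse $\sum_{i\in T}\abs{u_i}^{2+4\gamma}\le\big(\sum_{i\in T}u_i^2\big)^{1+2\gamma}$ for the upper bound (the paper phrases this via the normalization $t^{1+2\gamma}\le t$ for $t\in[0,1]$, which is equivalent to your $\ell^p$-monotonicity argument), and apply Jensen/power-mean with exponent $1+2\delta$ for the lower bound, paying the factor $\abs{T}^{-\delta}$.
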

\begin{proof}
    We first prove part~\ref{bound:gradf-upper2}.  We let $A = \SS \cap [\dm]$.
    \begin{align*}
    \norm{\Pmap_{\SS} \nabla F(\vec u)}^2
    &= \sum_{i \in A} g_i'(u_i)^2
    \leq \sum_{i \in A} \left(2\frac{\alpha}{\gamma} \abs{u_i}^{1+2\gamma}\right)^2 \\
    &= 4\frac{\alpha^2}{\gamma^2} \norm{\Pmap_A \vec u}^{2+4\gamma}\sum_{i \in A} \biggr(\frac{u_i^2}{\norm{\Pmap_A \vec u}^2}\biggr)^{1+2\gamma} \ .
    \end{align*}
    In the above, the inequality uses \cref{lem:F-robust-implications} part~\ref{lem:F-rb-impl:item:1}.
    For each $i \in A$, $u_i^2/\norm{\Pmap_A\vec u}^2 \leq 1$ holds.
    Since $\gamma > 0$, it follows that $(u_i^{2} / \norm{\Pmap_A \vec u}^2)^{1+2\gamma} \leq u_i^{2} / \norm{\Pmap_A \vec u}^2$.
    Thus,
    \begin{displaymath}
    \norm{\Pmap_{\SS} \nabla F(\vec u)}^2
    \leq 4\frac{\alpha^2}{\gamma^2} \norm{\Pmap_A \vec u}^{2+4\gamma} \sum_{i \in A} \frac{u_i^2}{\norm{\Pmap_A \vec u}^2}
    = 4\frac{\alpha^2}{\gamma^2} \norm{\Pmap_A \vec u}^{2+4\gamma}  \ .
    \end{displaymath}

    We now prove part~\ref{bound:gradf-lower}.
    We let $A = \SS \cap [\dm]$, and we note:
    \begin{displaymath}
    \norm{\Pmap_{\SS} \nabla F(\vec u)}^2
    = \sum_{i \in A} g_i'(u_i)^2
    \geq \sum_{i \in A} \left(\frac{2\beta}{\delta} \abs{u_i}^{1+2\delta}\right)^2
    = \frac{4 \beta^2}{\delta^2} \abs{A}\sum_{i \in A} \frac{1}{\abs A}\bigr(u_i^2\bigr)^{1+2\delta} \ .
    \end{displaymath}
    In the above, the inequality uses \cref{lem:F-robust-implications}.
    But by Jensen's inequality, we see that
    \begin{displaymath}
    \sum_{i \in A} \frac{1}{\abs A}\bigr(u_i^2\bigr)^{1+2\delta}
    \geq \biggr(\sum_{i \in A} \frac{1}{\abs A}u_i^2\biggr)^{1+2\delta}
    = \biggr(\frac{\norm{\Pmap_A \vec u}^2}{\abs A}\biggr)^{1+2\delta}
    \end{displaymath}
    Thus, $\norm{\Pmap_{\SS} \nabla F(\vec u)}^2 \geq 4(\beta/\delta)^2 (\norm{\Pmap_A \vec u}^{2+4\delta})/\abs A^{\delta}$.
    Taking square roots gives the desired bound.
\end{proof}

\begin{lem}\label{lem:nablaF-pert-error}
    Suppose that $\vec u, \vec w \in \overline{B(0, 1))}$.
    Then, $\norm{\nabla F(\vec  u) - \nabla F(\vec w)} \leq 2(1+\frac 1 \gamma) \alpha \norm{\vec u - \vec w}$.
\end{lem}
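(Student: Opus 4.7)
The plan is to exploit the coordinate-wise separable structure of $F$ in the hidden basis, combined with the uniform second-derivative bound on each $g_i$ that was established in Lemma~\ref{lem:F-robust-implications} part~\ref{lem:F-rb-impl:item:1-5}. Concretely, indexing with respect to the hidden basis so that $F(\vec u) = \sum_{i=1}^{\dm} g_i(u_i)$, we have $\nabla F(\vec u) = \sum_{i=1}^{\dm} g_i'(u_i)\myZv_i$, and the components of $\nabla F(\vec u) - \nabla F(\vec w)$ in the hidden basis are simply $g_i'(u_i) - g_i'(w_i)$ for $i \in [\dm]$ and $0$ otherwise.

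My plan is to apply the scalar mean value theorem coordinate-wise: for each $i \in [\dm]$ there exists $\xi_i$ lying between $u_i$ and $w_i$ with $g_i'(u_i)-g_i'(w_i) = g_i''(\xi_i)(u_i - w_i)$. Because $\vec u, \vec w \in \overline{B(0,1)}$, we have $|\xi_i| \leq 1$, so Lemma~\ref{lem:F-robust-implications} part~\ref{lem:F-rb-impl:item:1-5} gives $|g_i''(\xi_i)| \leq 2(2 + 1/\gamma)\alpha |\xi_i|^{2\gamma} \leq 2(2+1/\gamma)\alpha$. Since the hidden basis is orthonormal, Pythagoras then yields
\begin{equation*}
\norm{\nabla F(\vec u) - \nabla F(\vec w)}^2
= \sum_{i=1}^{\dm} [g_i''(\xi_i)]^2 (u_i - w_i)^2
\leq [2(2+1/\gamma)\alpha]^2 \norm{\vec u - \vec w}^2,
\end{equation*}
and taking square roots completes the bound.

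The main subtlety is nailing the exact constant: the stated bound has prefactor $2(1 + 1/\gamma)\alpha$, whereas the direct argument above, which is the natural one given the tools developed, produces $2(2+1/\gamma)\alpha$. The gap comes from how tightly one bounds $|g_i''(x)|$ via the decomposition $g_i''(x) = 4 h_i''(x^2)x^2 + 2h_i'(x^2)$ from Lemma~\ref{lem:h-g-relations}; the first term contributes $\leq 4\alpha|x|^{2\gamma}$ and the second $\leq (2\alpha/\gamma)|x|^{2\gamma}$, totaling $(4 + 2/\gamma)\alpha|x|^{2\gamma}$. To match the cleaner stated constant one would either need to exploit cancellation between the two terms (their signs agree, so no direct cancellation is available along the segment), or to integrate $g_i''$ against a specific weight to pick up a factor of $1/(1+2\gamma)$ on the first term; absent such an improvement, my plan is to present the argument above and simply record the constant it produces, treating the prefactor discrepancy as the sole loose end.
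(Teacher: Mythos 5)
Your worry about the constant is unfounded: the paper's own proof also produces $2(2+\tfrac{1}{\gamma})\alpha$, not $2(1+\tfrac{1}{\gamma})\alpha$. The paper proceeds by the fundamental theorem of calculus along the segment from $\vec w$ to $\vec u$,
\begin{equation*}
\nabla F(\vec u) - \nabla F(\vec w) = \int_0^1 \HH F(t\vec u + (1-t)\vec w)(\vec u - \vec w)\,dt ,
\end{equation*}
and then bounds $\norm{\HH F(\vec p)} \leq \max_i \abs{g_i''(p_i)} \leq 2(2+\tfrac{1}{\gamma})\alpha$ via the diagonal structure of the Hessian in the hidden basis together with Lemma~\ref{lem:F-robust-implications} part~\ref{lem:F-rb-impl:item:1-5} --- exactly the same second-derivative bound you invoke. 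The lemma statement's prefactor $2(1+\tfrac{1}{\gamma})\alpha$ is therefore a typo; it should read $2(2+\tfrac{1}{\gamma})\alpha$ to match what is actually established (and indeed the proof's display line in the paper has the further slip of omitting the $\alpha$). So you should not treat the discrepancy as a loose end in your argument, but rather report the constant your argument yields.

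Your coordinate-wise mean value theorem argument and the paper's integral-form argument are essentially the same derivation written in two equivalent ways: both reduce to the observation that $\HH F$ is diagonal in the hidden basis with entries $g_i''(\cdot)$, uniformly bounded in magnitude by $2(2+\tfrac{1}{\gamma})\alpha$ on $\combinterval$. The coordinate-wise MVT is slightly more elementary in that it avoids Minkowski's integral inequality, and the convex-combination path $t\vec u + (1-t)\vec w$ is implicitly the same object as your per-coordinate intermediate points $\xi_i$. Either presentation is fine.
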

\begin{proof}
    The proof is by the fundamental theorem of calculus and Minkowski's inequality for integrals:
    \begin{align*}
    \norm{\nabla F(\vec u) - \nabla F(\vec w)}
    &= \Norm{ \int_0^1 \HH F(t \vec u + (1-t) \vec w)(\vec u - \vec w) dt  } \\
    & \leq \int_0^1 \norm{\HH F(t \vec u + (1-t) \vec w)} \norm{\vec u - \vec w} dt \\
    & \leq 2\left(2 + \frac 1 \gamma\right)\norm{\vec u - \vec w} \ .
    \end{align*}
    In the last inequality, we note that for any $\vec p \in \overline{\vec u \vec w}$, $\HH F(\vec p)$ is a diagonal matrix, that as such $\norm {\HH F(\vec p)} \leq \max_{i \in \dn}\abs{[\HH F(\vec p)]_{ii}} = \max_{i \in [\dn]}\abs{g_{i}''(p_i)} \leq 2(2 + \frac 1 \gamma) \alpha$ by \cref{lem:F-robust-implications}.
\end{proof}

\begin{lem}\label{lem:G-est-error}
    Let $\SS \subset [\dm]$.
    If $\norm {\Pmap_{\compl \SS} \vec u} \leq \frac 1 {\sqrt{2(1+2\delta)}}$ and $\epsilon \leq \frac \beta {2\delta \abs \SS^\delta}$ , then $\norm{ \hat \GG(\vec u) - \GG(\vec u)} \leq 4 \delta \abs \SS^\delta \epsilon / \beta$.
\end{lem}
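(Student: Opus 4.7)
The plan is to combine a uniform lower bound on $\norm{\nabla F(\vec u)}$ with the standard perturbation identity for normalized vectors.

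First, I would invoke Lemma~\ref{lem:P0small-gradf-lower-bound} with the set $\SS$ from the hypothesis. The hypothesis $\norm{\Pmap_{\compl \SS} \vec u} \leq \frac{1}{\sqrt{2(1+2\delta)}}$ is precisely the precondition of that lemma (applied to this $\SS$), which delivers the bound $\norm{\nabla F(\vec u)} \geq \frac{\beta}{\delta} \abs{\SS}^{-\delta}$. Given the assumption $\epsilon \leq \frac{\beta}{2\delta \abs{\SS}^\delta}$, we immediately get $\epsilon \leq \tfrac{1}{2}\norm{\nabla F(\vec u)}$, in particular $\norm{\pertgF(\vec u)} \geq \norm{\nabla F(\vec u)} - \epsilon \geq \tfrac{1}{2}\norm{\nabla F(\vec u)} > 0$, so both $\GG(\vec u)$ and $\hat \GG(\vec u)$ are defined via the quotient formulas (not by the $\vec u \mapsto \vec u$ fallback).

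Next, writing $\vec a := \nabla F(\vec u)$ and $\vec b := \pertgF(\vec u)$, I would use the algebraic identity
\begin{equation*}
\frac{\vec b}{\norm{\vec b}} - \frac{\vec a}{\norm{\vec a}}
  = \frac{\vec b - \vec a}{\norm{\vec b}} + \frac{\vec a\,(\norm{\vec a} - \norm{\vec b})}{\norm{\vec a}\norm{\vec b}}.
\end{equation*}
Taking norms, applying the triangle inequality together with $\abs{\norm{\vec a}-\norm{\vec b}} \leq \norm{\vec a - \vec b} \leq \epsilon$, and using $\norm{\vec b} \geq \tfrac{1}{2}\norm{\vec a}$ yields
\begin{equation*}
\norm{\hat \GG(\vec u) - \GG(\vec u)} \leq \frac{2\epsilon}{\norm{\vec b}} \leq \frac{4\epsilon}{\norm{\nabla F(\vec u)}}.
\end{equation*}

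Substituting the lower bound $\norm{\nabla F(\vec u)} \geq \beta/(\delta \abs{\SS}^\delta)$ gives the claimed estimate $\norm{\hat \GG(\vec u) - \GG(\vec u)} \leq 4\delta \abs{\SS}^\delta \epsilon / \beta$. No step here looks like a genuine obstacle: the only care required is ensuring the preconditions of Lemma~\ref{lem:P0small-gradf-lower-bound} match (they do verbatim) and that $\epsilon \leq \tfrac{1}{2}\norm{\nabla F(\vec u)}$, which falls straight out of the numerical hypothesis on $\epsilon$.
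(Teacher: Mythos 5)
Your proof is correct and follows essentially the same route as the paper's: both invoke Lemma~\ref{lem:P0small-gradf-lower-bound} for the lower bound on $\norm{\nabla F(\vec u)}$, use the identical two-term decomposition of $\hat\GG(\vec u)-\GG(\vec u)$, bound it by $2\epsilon/\norm{\pertgF(\vec u)}$, and then substitute the lower bound (the paper writes $\norm{\nabla F(\vec u)}-\epsilon \ge \beta/(2\delta\abs{\SS}^\delta)$ where you write $\norm{\pertgF(\vec u)}\ge\tfrac12\norm{\nabla F(\vec u)}$, which is the same estimate). Your added observation that $\pertgF(\vec u)\neq\vec 0$ so the non-fallback branch of $\hat\GG$ applies is a small but welcome extra check.
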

\begin{proof}
    \begin{align*}
    \norm{\hat \GG(\vec u) - \GG(\vec u)}
    &= \Norm{\frac{ \pertgF(\vec u)}{\norm{\pertgF(\vec u)}}
        - \frac{\nabla F(\vec u)}{\norm{\nabla F(\vec u)}}} \\
    &\leq \Norm{\frac{\pertgF(\vec u) - \nabla F(\vec u)}{\norm{\pertgF(\vec u)}}
        + \frac{(\norm{\nabla F(\vec u)}-\norm{\pertgF(\vec u)})\nabla F(\vec u)}{\norm{\nabla F(\vec u)}\norm{\pertgF(\vec u)}}} \\
    &\leq \frac{2\epsilon}{\norm{\pertgF(\vec u)}}
    \leq \frac{2\epsilon}{\norm{\nabla F(\vec u)} - \epsilon}
    \end{align*}
    We apply \cref{lem:P0small-gradf-lower-bound} to see that $\norm{\nabla F(\vec u)} \geq \frac{\beta}{\delta \abs{\SS}^{\delta}}$.
    Using the bound on $\epsilon$, we obtain that $\norm{\nabla F(\vec u)} - \epsilon \geq \frac{\beta}{2\delta \abs{\SS}^{\delta}}$.
    Thus, $\norm{\hat \GG(\vec u) - \GG(\vec u)} \leq 4 \delta \abs{\SS}^\delta \epsilon / \beta$ as desired.
\end{proof}

Note that perhaps the most interesting case of \cref{lem:G-est-error} is the case in which $\SS = [\dm]$.
In this case, the result simplifies to:
\emph{If $\norm{\Pmap_0 \vec u} \leq \frac{1}{\sqrt{2(1+2\delta)}}$ and $\epsilon \leq \frac{\beta}{2 \delta \dm^\delta}$, then $\norm{\hat \GG(\vec u) - \GG(\vec u)} \leq 4 \delta \dm^\delta \epsilon / \beta$}.

\begin{lem}\label{lem:G-lpert-error}
    Let $\vec u, \vec w \in S^{\dn - 1}$.
    Let $\SS \subset [\dm]$, and suppose that $\norm{\Pmap_{\compl \SS} \vec u} \leq \frac 1 {\sqrt {2(1+2\delta)} }$, and further suppose that $w_i \neq 0$ for some $i \in [\dm]$.
    Then, $\norm{\GG(\vec u) - \GG(\vec w)} \leq 2\frac \alpha
    \beta\delta(2+\frac 1 \gamma) \abs \SS^\delta \norm{\vec u - \vec w}$.
\end{lem}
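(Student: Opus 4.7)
The plan is to prove this Lipschitz-type bound for the gradient iteration map by reducing it to (i) Lipschitz continuity of $\nabla F$ and (ii) a uniform lower bound on $\|\nabla F(\vec u)\|$. First I would verify that $\GG$ is given by its non-trivial case at both arguments: at $\vec u$, applying Lemma~\ref{lem:P0small-gradf-lower-bound} with $\SS_{\ref{lem:P0small-gradf-lower-bound}} = \SS$ gives $\|\nabla F(\vec u)\| \geq \beta/(\delta\abs{\SS}^\delta) > 0$; at $\vec w$, the hypothesis that $w_i \neq 0$ for some $i \in [\dm]$, combined with the sign formula $\partial_i F(\vec w) = g_i'(w_i)$ and the strict lower bound $|g_i'(w_i)| \geq (2\beta/\delta)|w_i|^{1+2\delta} > 0$ from Lemma~\ref{lem:F-robust-implications}, forces $\nabla F(\vec w) \neq 0$ as well. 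Hence $\GG(\vec u)$ and $\GG(\vec w)$ are genuine normalizations of $\nabla F$.

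Next I would exploit the standard identity
\begin{equation*}
  \frac{\vec a}{\norm{\vec a}} - \frac{\vec b}{\norm{\vec b}}
    = \frac{\vec a - \vec b}{\norm{\vec a}} + \vec b\left(\frac{1}{\norm{\vec a}} - \frac{1}{\norm{\vec b}}\right),
\end{equation*}
with $\vec a = \nabla F(\vec u)$ and $\vec b = \nabla F(\vec w)$. The triangle inequality bounds the first term by $\norm{\nabla F(\vec u) - \nabla F(\vec w)}/\norm{\nabla F(\vec u)}$. For the second term, $\norm{\vec b}\cdot\big|\tfrac{1}{\norm{\vec a}} - \tfrac{1}{\norm{\vec b}}\big| = \big|\norm{\vec a} - \norm{\vec b}\big|/\norm{\vec a} \leq \norm{\vec a - \vec b}/\norm{\vec a}$ by the reverse triangle inequality. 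Summing these gives
\begin{equation*}
  \norm{\GG(\vec u) - \GG(\vec w)} \leq \frac{2\norm{\nabla F(\vec u) - \nabla F(\vec w)}}{\norm{\nabla F(\vec u)}}.
\end{equation*}

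Finally, I would plug in the Lipschitz bound on $\nabla F$ from Lemma~\ref{lem:nablaF-pert-error} in the numerator and the lower bound on $\|\nabla F(\vec u)\|$ already established in the denominator, obtaining the bound $(4(2+1/\gamma)\alpha\delta\abs{\SS}^\delta/\beta)\norm{\vec u-\vec w}$, of the announced form. The main obstacle is simply that the hypothesis $\norm{\Pmap_{\compl \SS}\vec u} \leq 1/\sqrt{2(1+2\delta)}$ is asymmetric between $\vec u$ and $\vec w$: we have no a priori lower bound on $\norm{\nabla F(\vec w)}$, so the derivation must route the denominator entirely through $\vec u$, which is exactly what the additive splitting above accomplishes.
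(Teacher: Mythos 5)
Your proposal is correct and follows essentially the same approach as the paper: split $\GG(\vec u) - \GG(\vec w)$ into a $\nabla F$-difference term and a norm-difference term, bound both through the single lower bound on $\norm{\nabla F(\vec u)}$ (deliberately routing around the fact that no lower bound on $\norm{\nabla F(\vec w)}$ is available), and then plug in Lemma~\ref{lem:nablaF-pert-error} and Lemma~\ref{lem:P0small-gradf-lower-bound}; your algebraic decomposition of $\vec a/\norm{\vec a} - \vec b/\norm{\vec b}$ is a cosmetic variant of the paper's, with the same factor of $2$ emerging. One observation worth recording: the constant you actually obtain, $4(2+1/\gamma)\alpha\delta\abs{\SS}^\delta/\beta$, is also what the paper's own chain of inequalities produces, so the factor-of-two gap between that and the stated constant $2\frac\alpha\beta\delta(2+\frac1\gamma)\abs{\SS}^\delta$ lies in the lemma statement itself, not in your derivation.
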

\begin{proof}
    Since there exists $i \in [\dm]$ such that $w_i \neq 0$, it follows that  $\nabla F(\vec w) \neq \vec 0$.
    \begin{align*}
    \norm{\GG(\vec u) - \GG(\vec w)}
    &=\Norm{ \frac{\nabla F(\vec u)}{\norm{\nabla F(\vec u)}} - \frac{\nabla F(\vec w)}{\norm{\nabla F(\vec w)}} } \\
    &= \Norm{
        \frac{\norm{\nabla F(\vec w)}[\nabla F(\vec u) - \nabla F(\vec w)] + [\norm{\nabla F(\vec w)} - \norm{\nabla F(\vec u)}] \nabla F(\vec w) }{\norm{\nabla F(\vec u)} \norm{\nabla F(\vec w)}}
    } \\
    &\leq 2 \frac{\norm{\nabla F(\vec u) - \nabla F(\vec w)}}{\norm{\nabla F(\vec u)}} \ .
    \end{align*}
    By \cref{lem:P0small-gradf-lower-bound}, we have that $\norm{\nabla
        F(\vec u)} \geq \frac \beta \delta \abs \SS^{-\delta}$.
    Further, by \cref{lem:nablaF-pert-error}, we see that $\norm{\nabla
        F(\vec u) - \nabla F(\vec w)} \leq 2(2+\frac 1 \gamma) \alpha \norm{\vec u
        - \vec w}$.
    As such, we obtain that $\norm{\GG(\vec u) - \GG(\vec w)}
    \leq 2\frac \alpha \beta\delta(2+\frac 1 \gamma) \abs \SS^\delta
    \norm{\vec u - \vec w}$.
\end{proof}

In addition, the following is a useful Corollary to the \cref{lem:small-coord-decrease-step,lem:P0small-gradf-lower-bound}.
\begin{cor}\label{cor:tcompress}
    Let $\SS \subset [\dm]$ and let $A \subset [\dm]$ be non-empty.
    Suppose that $\vec u \in S^{\dn - 1}$ satisfies $\norm{\Pmap_{\compl A} \vec u} \leq \frac{1}{\sqrt{2(1+2\delta)}}$, and that $\epsilon \leq \frac \beta {2\delta}\dm^{-\delta}$.
    If for all $i \in \compl \SS \cap [\dm]$ that $\abs {u_i} \leq \tcompress$.
    Then, the following hold:
    \begin{compactenum}
        \item For all $i \in \compl \SS$, $\abs{\hat \GG_i(\vec u)} \leq \max( \frac 1 2 \abs{u_i}, 4 \delta\abs A^{\delta}\epsilon/\beta )$.
        \item $\norm{\Pmap_{\compl \SS} \hat \GG(\vec u)} \leq \max( \frac 1 2 \norm{\Pmap_{\compl \SS}\vec u}, 4 \delta\abs A^{\delta}\epsilon/\beta )$.
    \end{compactenum}
\end{cor}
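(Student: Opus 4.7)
The statement is a direct corollary of Lemma~\ref{lem:small-coord-decrease-step}, assembled with the norm lower bound from Lemma~\ref{lem:P0small-gradf-lower-bound}. The plan is to check the hypotheses of Lemma~\ref{lem:small-coord-decrease-step} once and then read off both parts.

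First, I would apply Lemma~\ref{lem:P0small-gradf-lower-bound} with its $\SS$ chosen as $A$: the precondition $\norm{\Pmap_{\compl A}\vec u} \leq 1/\sqrt{2(1+2\delta)}$ is exactly our hypothesis, so we obtain $\norm{\nabla F(\vec u)} \geq \frac{\beta}{\delta}\abs{A}^{-\delta}$. Since $\abs A \leq \dm$, the assumption $\epsilon \leq \frac{\beta}{2\delta}\dm^{-\delta}$ then gives $\epsilon \leq \tfrac{1}{2}\norm{\nabla F(\vec u)}$, which is the standing hypothesis of Lemma~\ref{lem:small-coord-decrease-step}.

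Next I would verify the projection hypothesis of Lemma~\ref{lem:small-coord-decrease-step} part~\ref{lem:small-coord-projection} with $C = 1/2$ applied to the index set $\compl \SS$: I need $\abs{u_i} \leq [\frac{\gamma}{16\alpha}\norm{\nabla F(\vec u)}]^{1/(2\gamma)}$ for every $i \in \compl\SS \cap [\dm]$. Using the lower bound on $\norm{\nabla F(\vec u)}$ together with $\abs A \leq \dm$,
\begin{equation*}
  \Big[\tfrac{\gamma}{16\alpha}\norm{\nabla F(\vec u)}\Big]^{\frac 1 {2\gamma}}
  \;\geq\; \Big[\tfrac{\beta\gamma}{16\alpha\delta}\dm^{-\delta}\Big]^{\frac 1 {2\gamma}}
  \;=\; \tcompress ,
\end{equation*}
and by hypothesis every such $\abs{u_i} \leq \tcompress$, so the precondition holds.

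With the preconditions verified, part~\ref{lem:small-coord-projection} of Lemma~\ref{lem:small-coord-decrease-step} applied to the index set $\compl \SS$ yields
\begin{equation*}
  \norm{\Pmap_{\compl \SS}\hat\GG(\vec u)}
  \;\leq\; \max\!\Big(\tfrac{1}{2}\norm{\Pmap_{\compl\SS \cap [\dm]}\vec u},\; \tfrac{4\epsilon}{\norm{\nabla F(\vec u)}}\Big)
  \;\leq\; \max\!\Big(\tfrac{1}{2}\norm{\Pmap_{\compl\SS}\vec u},\; 4\delta\abs{A}^\delta\epsilon/\beta\Big),
\end{equation*}
which is part (2) of the corollary; here I use $\norm{\Pmap_{\compl \SS \cap [\dm]}\vec u} \leq \norm{\Pmap_{\compl\SS}\vec u}$ and the previously derived lower bound on $\norm{\nabla F(\vec u)}$. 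For part (1), I would apply the same lemma again, but with the singleton index set $\{i\}$ for each $i \in \compl \SS$: when $i \in [\dm]$ the precondition is again $\abs{u_i}\leq \tcompress$, and when $i \in [\dn]\setminus[\dm]$ the intersection with $[\dm]$ is empty, so the first term in the max vanishes. In either case we get $\abs{\hat \GG_i(\vec u)} \leq \max(\tfrac{1}{2}\abs{u_i},\,4\delta\abs{A}^\delta\epsilon/\beta)$. There is no real obstacle here; the only thing to watch is that the instance of Lemma~\ref{lem:small-coord-decrease-step} is applied with $C=\tfrac{1}{2}$ and that the $\abs A^{-\delta}$ lower bound on $\norm{\nabla F(\vec u)}$ is what converts $4\epsilon/\norm{\nabla F(\vec u)}$ into the desired $4\delta\abs{A}^\delta\epsilon/\beta$.
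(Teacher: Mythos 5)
Your proof is correct and follows essentially the same route as the paper's own proof: both invoke Lemma~\ref{lem:P0small-gradf-lower-bound} to obtain the lower bound $\norm{\nabla F(\vec u)} \geq \frac{\beta}{\delta}\abs{A}^{-\delta}$, then apply Lemma~\ref{lem:small-coord-decrease-step} with $C = \tfrac12$ to the singleton $\{i\}$ for part (1) and to $\compl\SS$ for part (2). Your version simply spells out the verification of the preconditions of Lemma~\ref{lem:small-coord-decrease-step} (the $\epsilon \leq \tfrac12\norm{\nabla F(\vec u)}$ bound and the $\tcompress$ threshold check) that the paper leaves implicit.
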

\begin{proof}
    We note by \cref{lem:P0small-gradf-lower-bound}, $\norm{\nabla F(\vec u)} \geq \frac \beta \delta \abs A^{-\delta}$.
    Parts 1 and 2 follow by applying \cref{lem:small-coord-decrease-step} with the choice of $C = 1/2$ and with this lower bound for $\norm {\nabla F(\vec u)}$.
    For part 1, we apply \cref{lem:small-coord-decrease-step} to the set $\{i\}$, and for part 2 we apply \cref{lem:small-coord-decrease-step} to the set $\compl \SS$.
\end{proof}
Note that the most interesting cases of this corollary are when $A = \SS$ and when $A = [\dm]$.

\section{Miscellany}

In this section, we collect a number useful results and statements which are used in the proofs of the theorems of this paper.


The following is a special case of Lemma 7.25 of~\mycitet{Rudin}{rudin1986real}.
\begin{thm}\label{thm:continuous-f-measure0}
  Let $U \subset \R^k$ be an open set.
  Suppose $f : V \rightarrow \R^k$ is differentiable on its entire domain.
  If $E \subset V$ has Lebesgue measure 0, then $f(E)$ has Lebesgue measure 0.
\end{thm}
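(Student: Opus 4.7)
The plan is to localize the problem to pieces on which $f$ is Lipschitz, since a Lipschitz map from a subset of $\R^k$ to $\R^k$ sends null sets to null sets, and then patch the pieces together by countable subadditivity. Throughout, let $V$ denote the (open) domain of $f$ and write $m$ for Lebesgue measure on $\R^k$.

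First I would extract a local Lipschitz estimate from the hypothesis of differentiability. Fix $x \in V$. By differentiability at $x$, there exists $\delta_x > 0$ with $\overline{B(x,\delta_x)}\subset V$ and
\[
\|f(y) - f(x) - Df_x(y-x)\| \le \|y-x\| \quad\text{for all } y \in B(x,\delta_x),
\]
from which $\|f(y) - f(x)\| \le L_x \|y-x\|$ with $L_x := \|Df_x\| + 1$. So on each ball $B(x,\delta_x)$, $f$ is $L_x$-Lipschitz from $x$. Since $V$ is an open subset of $\R^k$, it is second-countable, hence the Lindelöf property furnishes a countable subcover: we can choose countably many points $x_n \in V$ so that $V = \bigcup_n B(x_n, \delta_{x_n})$, with associated Lipschitz constants $L_n := L_{x_n}$.

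Next I would show that $f(E \cap B(x_n, \delta_{x_n}))$ has measure zero for each $n$. Fix $n$ and $\epsilon>0$. Since $E \cap B(x_n,\delta_{x_n})$ has Lebesgue measure zero, we can cover it by countably many open balls $B(y_i, r_i) \subset B(x_n,\delta_{x_n})$ with $\sum_i r_i^k < \epsilon$. For each such ball, the Lipschitz bound centered at $x_n$ gives (via the triangle inequality applied twice through $x_n$, or equivalently the same bound between any two points in $B(x_n,\delta_{x_n})$ with constant $2L_n$)
\[
\operatorname{diam} f(B(y_i, r_i)) \le 2L_n \cdot 2 r_i,
\]
so $f(B(y_i, r_i))$ lies inside a ball of radius $2L_n r_i$, whose Lebesgue measure is a constant $C_k (L_n)^k$ times $r_i^k$. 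Summing, $m^*(f(E \cap B(x_n,\delta_{x_n}))) \le C_k L_n^k \epsilon$. Letting $\epsilon \downarrow 0$ shows this set has outer measure zero, hence measure zero.

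Finally, by countable subadditivity,
\[
m^*(f(E)) \;\le\; \sum_{n=1}^\infty m^*\bigl(f(E \cap B(x_n,\delta_{x_n}))\bigr) \;=\; 0,
\]
so $f(E)$ is Lebesgue measurable of measure $0$, as desired. The only delicate point is ensuring a uniform Lipschitz bound on each patch: pointwise differentiability alone need not give continuity of $Df$, but it does give the one-sided estimate $\|f(y)-f(x)\|\le L_x\|y-x\|$ on a neighborhood of $x$, which is all that is used above. Everything else is standard covering and subadditivity.
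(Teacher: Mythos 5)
The paper does not prove this statement itself; it simply cites Lemma~7.25 of Rudin's \emph{Real and Complex Analysis}. So I am comparing your argument to the standard one there. Your high-level plan is the right one (reduce to a local Lipschitz estimate and use countable subadditivity), but the key estimate as written is incorrect, and the fix requires a genuinely different decomposition.

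The gap is in the claim $\operatorname{diam} f(B(y_i,r_i)) \le 2L_n\cdot 2r_i$. Differentiability at $x_n$ gives only the \emph{pointed} estimate $\|f(y)-f(x_n)\|\le L_n\|y-x_n\|$ for $y\in B(x_n,\delta_{x_n})$. From this and the triangle inequality you get, for $y,z\in B(y_i,r_i)$,
\[
\|f(y)-f(z)\|\le L_n\bigl(\|y-x_n\|+\|z-x_n\|\bigr)\le 2L_n\bigl(\|y_i-x_n\|+r_i\bigr),
\]
which is of order $L_n\delta_{x_n}$, \emph{not} of order $L_n r_i$: the bound does not shrink with $r_i$ when $y_i$ is far from $x_n$. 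A ``Lipschitz from one point'' estimate does not imply a two-sided Lipschitz estimate on the ball (the standard example $f(x)=x\sin(1/x)$, $f(0)=0$, is differentiable everywhere near $0$ and Lipschitz from $0$, yet not Lipschitz on any neighborhood of $0$; a differentiable map with unbounded derivative near a point behaves similarly in $\R^k$). So your remark at the end --- that the one-sided estimate ``is all that is used above'' --- is precisely where the argument goes wrong: the diameter bound you use actually requires the two-sided estimate, which pointwise differentiability does not supply.

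The repair, which is what Rudin does, is to stratify $E$ rather than $V$. Set $E_p:=\{x\in E: \|f(y)-f(x)\|\le p\|y-x\|\text{ whenever }\|y-x\|<1/p\}$; differentiability at every point of $V$ gives $E=\bigcup_p E_p$. Fix $p$ and $\epsilon>0$ and cover $E_p$ by countably many cubes (or balls) $Q_i$ of diameter $<1/p$ with $\sum_i m(Q_i)<\epsilon$ and, crucially, with each $Q_i$ meeting $E_p$; picking $x_i\in Q_i\cap E_p$ gives $\|f(y)-f(x_i)\|\le p\operatorname{diam}(Q_i)$ for all $y\in Q_i$, hence $m^*(f(Q_i))\le C_k\,p^k\,m(Q_i)$. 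Summing and letting $\epsilon\to 0$ gives $m(f(E_p))=0$, and then $m(f(E))=0$ by subadditivity over $p$. The essential structural change from your proposal is that the anchor point for the pointed Lipschitz estimate must lie \emph{inside each small covering set} and in $E_p$, which is why one stratifies by the uniform constant $p$ and the uniform radius $1/p$ rather than covering $V$ once and for all.
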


\subsection{Rates of Convergence}
\label{sec:rates-convergence}

For reference, we recall here the definitions of various orders of convergence.

Let $\{\vec x_n\}_{n=0}^\infty$ be a sequence in a normed vector space that
converges to $L$.
If there exists $q \geq 1$ and $\mu \in (0, 1)$ such that
\begin{equation*}
  \lim_{n \rightarrow \infty} \frac{\norm{\vec x_{n+1} - L}}{\norm{\vec x_n - L}^q} = \mu,
\end{equation*}
then the sequence $\{\vec x_n\}_{n=0}^\infty$ is said to converge to $L$ with \text{order} $q$.
In the case where $q = 1$, convergence is said to be \textit{linear}.
If
\begin{equation*}
  \lim_{n \rightarrow \infty} \frac{\norm{\vec x_{n+1} - L}}{\norm{\vec x_n - L}} = 0,
\end{equation*}
then the rate of convergence is said to be \textit{superlinear}.

More generally, if there is a sequence $\{\epsilon_n\}_{n=0}^\infty$ in $\R$ such that $\epsilon_n$ converges to 0 with order $q$ (or superlinearly respectively) under the above definitions and if $\norm{\vec x_n - L} \leq \epsilon_n$ for all $n$, then we also say that $\vec x_n$ converges to $L$ with order at least $q$ (or superlinearly respectively).

\subsection{Error bounds on eigenvalues and eigenspaces}
\label{sec:error-bounds-eigenv}

We now recall some classic results about the perturbation of eigenvalues and eigenspaces.
The following inequality is a known version of Weyl's inequality for matrix eigenvalues.
\begin{thm}[Weyl's inequality]\label{thm:Weyls-inequality}
  Let $A$, $\tilde A$, and $H$ be symmetric (or more generally Hermitian) $n \times n$ matrices such that $\tilde A = A + H$.
  Let the eigenvalues of $A$, $\tilde A$, and $H$ be given by $\lambda_1, \dotsc, \lambda_n$, $\tilde \lambda_1, \dotsc, \tilde \lambda_n$, and $\rho_1, \dotsc, \rho_n$ respectively.
  Assume that the eigenvalues are indexed in decreasing order, i.e., $\lambda_1 \geq \cdots \geq \lambda_n$.
  Then, for each $i \in [n]$, $\lambda_i + \rho_i \leq \tilde \lambda_i \leq \lambda_i \rho_n$.
\end{thm}

The next Theorem (namely the  $\sin \Theta$ theorem of~\mycitet{Davis and Kahan}{davis1970rotation}) allows us to bound the error in eigenvector subspaces of a matrix under a perturbation.
This theorem requires a bit more explanation.
In particular, we will still assume that we have a Hermitian matrix $A$ which is the matrix we are interested in, and that $\tilde A = A + H$ is a perturbed version of $A$ (with $\tilde A$ and $H$ also both Hermitian).
Suppose that $A = \sum_{i=1}^n \lambda_i \vec v_i \vec v_i^T$ and $\tilde A \sum_{i=1}^n \tilde \lambda_i \tilde {\vec v_i} \tilde {\vec v_i}^T$ give eigendecompositions with the ordering of the eigenvalues $\lambda_i$ not yet determined.
We may split the indices at a point $k$ and define the matrices $A_0 = \sum_{i=1}^k \lambda_i \vec v_i \vec v_i^T$, $A_1 = \sum_{i=k+1}^n \lambda_i \vec v_i \vec v_i^T$, $\tilde A_0 = \sum_{i=1}^k \tilde \lambda_i \tilde {\vec v_i}\tilde{ \vec v_i}^T$, $\tilde A_0 = \sum_{i=k+1}^m \tilde \lambda_i \tilde {\vec v_i}\tilde{ \vec v_i}^T$.
\begin{thm}[Davis-Kahan $\sin \Theta$ theorem]\label{thm:sin-theta-thm}
  Suppose that there exists an interval $[\alpha, \beta]$ and a $\Delta > 0$ such that the eigenvalues of $A_0$ lie within $[\alpha, \beta]$ and the eigenvalues of $\tilde A_1$ all lie outside the interval $(\alpha - \Delta, \beta + \Delta)$ [or alternatively, the eigenvalues of $\tilde A_1$ lie within $[\alpha, \beta]$ and the eigenvalues of $A_0$ all lie outside the interval $(\alpha - \Delta, \beta + \Delta)$].
  Then, $\Delta \norm{\sin \Theta_0} \leq \norm H$.
\end{thm}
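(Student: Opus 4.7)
The plan is to reduce the bound to a Sylvester equation and use the spectral-gap hypothesis to invert the associated Lyapunov operator. Let $V_0$ be a matrix whose columns form an orthonormal basis of the range of $A_0$ and $\tilde V_1$ an analogous matrix for $\tilde A_1$, with corresponding diagonal eigenvalue matrices $\Lambda_0$ (entries in $[\alpha,\beta]$) and $\tilde \Lambda_1$ (entries outside $(\alpha-\Delta,\beta+\Delta)$). The canonical identification is $\norm{\sin \Theta_0} = \norm{\tilde V_1^* V_0}$, so it suffices to bound this operator norm. To arrive at an equation for $X := \tilde V_1^* V_0$, I would compute $\tilde V_1^* \tilde A V_0$ two ways: once by substituting $\tilde A = A + H$ and using $A V_0 = V_0 \Lambda_0$, and once by using $\tilde A^* \tilde V_1 = \tilde V_1 \tilde \Lambda_1$ (Hermitian-ness of $\tilde A$). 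Equating the two expressions gives the Sylvester equation
\begin{equation*}
\tilde \Lambda_1 X - X \Lambda_0 = \tilde V_1^* H V_0.
\end{equation*}

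The next step is to solve this equation entrywise. Since both $\tilde \Lambda_1$ and $\Lambda_0$ are diagonal, the $(i,j)$ entry reads $(\tilde \lambda_i^{(1)} - \lambda_j^{(0)}) X_{ij} = (\tilde V_1^* H V_0)_{ij}$, and the spectral separation hypothesis guarantees $|\tilde \lambda_i^{(1)} - \lambda_j^{(0)}| \ge \Delta$ for every such pair. Dividing and recombining shows that the linear map $T : X \mapsto \tilde \Lambda_1 X - X \Lambda_0$ is invertible with $\norm{T^{-1}} \le 1/\Delta$ in operator norm (this is the standard Lyapunov/Sylvester norm bound for diagonal operators with separated spectra). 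Hence
\begin{equation*}
\norm{X} \le \tfrac{1}{\Delta} \norm{\tilde V_1^* H V_0} \le \tfrac{1}{\Delta}\norm{H},
\end{equation*}
where the last inequality uses that $V_0$ and $\tilde V_1$ have orthonormal columns so their action cannot increase operator norm. Combining with the identification $\norm{\sin \Theta_0} = \norm{X}$ yields the desired inequality $\Delta \norm{\sin \Theta_0} \le \norm{H}$.

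The symmetric variant of the hypothesis (spectra of $\tilde A_1$ in $[\alpha,\beta]$ and spectra of $A_0$ outside $(\alpha-\Delta,\beta+\Delta)$) is handled by swapping the roles of $A$ and $\tilde A$ in the derivation of the Sylvester equation; nothing else changes. The main conceptual obstacle is the first step: one must recognize that the Hermitian structure lets us pivot $\tilde V_1^*\tilde A V_0$ onto the eigenvalues from either side, which is exactly what produces the Sylvester commutator form. After that, the work is purely algebraic and the spectral-gap hypothesis does the rest.
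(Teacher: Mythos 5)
The paper does not prove this statement: it is presented as a classical result and cited directly to \citet{davis1970rotation}, so there is no in-paper proof to compare against. I will therefore assess your proof on its own terms.

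Your setup is the right one: the identification $\norm{\sin\Theta_0} = \norm{\tilde V_1^* V_0}$ and the derivation of the Sylvester equation $\tilde\Lambda_1 X - X\Lambda_0 = \tilde V_1^* H V_0$ from computing $\tilde V_1^* \tilde A V_0$ two ways are both correct, and this is the standard reduction. The gap is in the step you dispatch most quickly. Writing $X_{ij} = (\tilde V_1^* H V_0)_{ij} / (\tilde\lambda_i^{(1)} - \lambda_j^{(0)})$ and using $\abs{\tilde\lambda_i^{(1)} - \lambda_j^{(0)}} \ge \Delta$ does give $\norm{T^{-1}} \le 1/\Delta$ for the \emph{Frobenius} norm, since that norm is computed entrywise. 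But the operator $T^{-1}$ is a Hadamard (Schur) multiplier, and Schur multipliers with entrywise bound $1/\Delta$ can have operator norm much larger than $1/\Delta$ (e.g., the upper-triangular truncation has Schur norm growing like $\log n$). So ``dividing and recombining'' does \emph{not} show $\norm{T^{-1}} \le 1/\Delta$ in the operator norm, and this is precisely the nontrivial content of the Davis--Kahan theorem that your proof sidesteps. For the special case where $\sigma(\tilde\Lambda_1)$ lies entirely on one side of $[\alpha,\beta]$, the bound does follow cleanly from the exponential integral representation $X = \pm\int_0^\infty e^{\mp t\tilde\Lambda_1} (\tilde V_1^* H V_0) e^{\pm t\Lambda_0}\,dt$, and that would make your argument complete. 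But the theorem as stated permits the ``annular'' configuration where $\sigma(\tilde\Lambda_1)$ surrounds $[\alpha,\beta]$ on both sides, and in that generality the $1/\Delta$ operator-norm bound requires the genuine Davis--Kahan argument (a functional-calculus / contour-integral representation tailored to the Hermitian structure), not a generic Sylvester bound --- the generic Schur-multiplier constant for an annular gap is $\pi/2$, not $1$. You should either restrict to the one-sided case and give the integral representation explicitly, or supply (or cite precisely) the argument that handles the two-sided gap with constant $1$.
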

The definition of $\sin \Theta_0$ is somewhat involved (see \citep{davis1970rotation} for details).
 In our setting it suffices to note that $\norm{\sin \Theta_0}$ bounds certain projection operators.
In particular, if $\Pi_0 = \sum_{i=1}^k \vec v_i \vec v_i^T$ and $\tilde \Pi_0 = \sum_{i=1}^k \tilde{\vec v_i} \tilde{\vec v_i}^T$, then $\norm{(\Id-\tilde \Pi_0) \Pi_0} \leq \norm{\sin \Theta_0} \leq \frac 1 \Delta \norm H$.

\subsection{Concentration and anti-concentration of the $\chi^2$ distribution}
The following bounds are a direct implication of~\citep[Lemma 1]{laurent2000adaptive}.
\begin{lem}\label{lem:chi2-concentration}
  Let $Z$ be distributed according to the $\chi^2$ distribution with $D$ degrees of freedom.
  Then, for all $x > 0$, the following hold:
  \begin{compactenum}
  \item $\Pr[Z - D \geq 2\sqrt{Dx} + 2x] \leq \exp(-x)$.
  \item $\Pr[D - Z \geq 2\sqrt{Dx}] \leq \exp(- x)$.
  \end{compactenum}
\end{lem}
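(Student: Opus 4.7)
The plan is to prove both tail bounds via the standard Chernoff / Cramér approach, exploiting that $Z = \sum_{i=1}^{D} X_i^2$ for i.i.d.\ standard Gaussians $X_i$, whose moment generating function admits a clean closed form.

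First I would compute $\E[\exp(\lambda Z)] = (1-2\lambda)^{-D/2}$ for every $\lambda < 1/2$, by the independence of the $X_i^2$ and the well-known MGF of a single $\chi^2_1$ variable. Then for any $\lambda \in (0, 1/2)$ and any threshold $t > 0$, Markov's inequality applied to $\exp(\lambda(Z - D))$ yields
\begin{equation*}
\Pr[Z - D \geq t] \;\leq\; \exp\!\bigl(-\lambda t - \lambda D - \tfrac{D}{2}\log(1-2\lambda)\bigr).
\end{equation*}
For the lower tail one applies the same argument with $-\lambda$ in place of $\lambda$, using any $\lambda > 0$, to obtain
\begin{equation*}
\Pr[D - Z \geq t] \;\leq\; \exp\!\bigl(-\lambda t + \lambda D - \tfrac{D}{2}\log(1+2\lambda)\bigr).
\end{equation*}

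Next I would convert these Chernoff bounds into the stated form by choosing $\lambda$ and bounding the log terms by their Taylor remainders. For the lower tail, using the elementary inequality $-\log(1+u) \leq -u + u^2/2$ valid for $u \geq 0$ (applied with $u = 2\lambda$) gives the bound $\lambda^2 D - \lambda t$ on the exponent; optimizing $\lambda = t/(2D)$ then yields $\Pr[D - Z \geq t] \leq \exp(-t^2/(4D))$, which with the substitution $t = 2\sqrt{Dx}$ delivers the second claimed inequality $\exp(-x)$. For the upper tail, the analogous one-sided inequality $-\log(1-u) \leq u + u^2/(2(1-u))$ for $u \in [0,1)$ (or equivalently, rearranging Laurent-Massart's bound $\tfrac12 \log\tfrac{1}{1-2\lambda} - \lambda \leq \lambda^2/(1-2\lambda)$) reduces the exponent to $-\lambda t + D\lambda^2/(1-2\lambda)$. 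I would then plug in $\lambda = \tfrac{1}{2} \cdot \tfrac{t}{t + D}$ (so that $1-2\lambda = D/(t+D)$), set $t = 2\sqrt{Dx} + 2x$, and verify algebraically that the exponent collapses to exactly $-x$.

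The main obstacle is purely algebraic: the first bound has the asymmetric form $2\sqrt{Dx} + 2x$ rather than just $2\sqrt{Dx}$, reflecting the sub-exponential (not sub-Gaussian) nature of the upper $\chi^2$ tail. Getting the exact constants $2$ and $2$ requires choosing $\lambda$ so that the quadratic-in-$\sqrt{x}$ and linear-in-$x$ contributions to $\lambda t$ cancel the corresponding terms from $D\lambda^2/(1-2\lambda)$ without slack; I would verify this by writing $t + D = D + 2\sqrt{Dx} + 2x = (\sqrt{D} + \sqrt{2x})^2 \cdot (\text{correction})$ or equivalently by direct substitution, then checking the resulting identity. The lower tail calculation is strictly easier since the $-\log(1+u)$ bound is quadratic and sub-Gaussian.
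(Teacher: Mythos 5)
Your overall Chernoff/Cram\'er strategy is exactly the one used by Laurent--Massart, which the paper simply cites rather than reproving (the paper's ``proof'' is just the citation). Your lower-tail argument is correct: $-\log(1+u)\leq -u+u^2/2$ gives exponent $-\lambda t + D\lambda^2$, optimized at $\lambda = t/(2D)$ to give $-t^2/(4D) = -x$ at $t=2\sqrt{Dx}$.

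The upper tail, however, has a concrete error: the proposed substitution $\lambda = \tfrac12\cdot\tfrac{t}{t+D}$ does \emph{not} make the exponent collapse to $-x$. With that $\lambda$ you get exponent $-\lambda t + \frac{D\lambda^2}{1-2\lambda} = -\frac{t^2}{4(t+D)}$, and with $t = 2\sqrt{Dx}+2x$ this equals
\begin{equation*}
-\frac{t^2}{4(t+D)} \;=\; -\,x\cdot\frac{D + 2\sqrt{Dx} + x}{D + 2\sqrt{Dx} + 2x} \;>\; -x \quad\text{for all }x>0,
\end{equation*}
so the resulting tail bound $\exp\bigl(-\tfrac{t^2}{4(t+D)}\bigr)$ is strictly \emph{weaker} than the claimed $\exp(-x)$, and the proof does not close. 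The Legendre transform of $\lambda\mapsto \frac{D\lambda^2}{1-2\lambda}$ is actually $t\mapsto \frac{(\sqrt{2t+D}-\sqrt D)^2}{4}$, attained at $1-2\lambda = \sqrt{D/(2t+D)}$, i.e.\ $\lambda = \tfrac12\bigl(1-\sqrt{D/(2t+D)}\bigr)$; this is \emph{not} your $\tfrac12\cdot\tfrac{t}{t+D}$. The algebraic reason is that the perfect square hides in $2t+D$, not $t+D$: with $t=2\sqrt{Dx}+2x$ one has $2t+D = D + 4\sqrt{Dx}+4x = (\sqrt D + 2\sqrt x)^2$, hence $\sqrt{2t+D}-\sqrt D = 2\sqrt x$ and the exponent is $\frac{(2\sqrt x)^2}{4}=x$ on the nose. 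Equivalently, in terms of $x$ the correct multiplier is $\lambda = \frac{\sqrt x}{\sqrt D + 2\sqrt x}$ (so that $1-2\lambda = \frac{\sqrt D}{\sqrt D + 2\sqrt x}$), and a direct computation gives $-\lambda t + \frac{D\lambda^2}{1-2\lambda} = -x$ exactly. You even wrote ``$(\sqrt D + \sqrt{2x})^2\cdot(\text{correction})$'' for $t+D$, which signals you noticed the square wasn't clean — the fix is that the clean square is $2t+D=(\sqrt D + 2\sqrt x)^2$, forcing the different $\lambda$ above.
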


We have the following Corollary, which is useful in our error analysis.
\begin{cor}\label{cor:chi2-concentration}
    There exists universal constant $C > 0$ such that the following holds:
    Let $Z$ be distributed according to the $\chi^2$ distribution with $D$ degrees of freedom.
    Then,
    \begin{compactenum}
      \item
       $\Pr[Z \geq 2D] \leq \exp(- \frac{2-\sqrt 3}{2} D)$.
      \item $\Pr[Z \leq \frac 1 2 D] \leq \exp(-\frac 1 {16} D)$.
    \end{compactenum}
     with probability at least $\exp(-C D)$,
\end{cor}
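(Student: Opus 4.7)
The plan is to apply Lemma~\ref{lem:chi2-concentration} (the Laurent--Massart tail bounds) directly, choosing the free parameter $x$ in each bullet so that the deviation threshold exactly matches $D$ or $D/2$.

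For part~(1), I want to match the event $\{Z \geq 2D\}$ with $\{Z - D \geq 2\sqrt{Dx} + 2x\}$, so I would solve $2\sqrt{Dx} + 2x = D$ for $x$ in terms of $D$. Substituting $y = \sqrt{x/D}$ this becomes the quadratic $2y^2 + 2y - 1 = 0$, whose positive root is $y = \tfrac{\sqrt 3 - 1}{2}$. Hence $x = Dy^2 = D\cdot\tfrac{2-\sqrt 3}{2}$, and Lemma~\ref{lem:chi2-concentration}(1) gives $\Pr[Z \geq 2D] \leq \exp\bigl(-\tfrac{2-\sqrt 3}{2} D\bigr)$ as claimed.

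For part~(2), I match $\{Z \leq D/2\}$ with $\{D - Z \geq 2\sqrt{Dx}\}$ by solving $2\sqrt{Dx} = D/2$, which gives $x = D/16$. Applying Lemma~\ref{lem:chi2-concentration}(2) then yields $\Pr[Z \leq D/2] \leq \exp(-D/16)$.

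There is no real obstacle here; both bullets reduce to a single-line algebraic substitution into an already-stated tail bound. The only care needed is to pick $x$ so that the coefficients $2\sqrt{Dx}(+2x)$ hit the desired threshold exactly, and to verify that the resulting exponent is positive (which it is, since $\tfrac{2-\sqrt 3}{2} > 0$ and $\tfrac{1}{16} > 0$). The dangling phrase ``with probability at least $\exp(-CD)$'' at the end of the corollary appears to be a leftover typo unrelated to the two displayed bounds, so I would simply ignore it in the proof.
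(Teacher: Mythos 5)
Your proposal is correct and follows essentially the same approach as the paper: both prove each bullet by substituting the appropriate $x$ into Lemma~\ref{lem:chi2-concentration}, solving $2\sqrt{Dx}+2x=D$ (your substitution $y=\sqrt{x/D}$ and the paper's quadratic formula are the same computation) to get $x=\tfrac{2-\sqrt 3}{2}D$ for part~(1), and $2\sqrt{Dx}=D/2$ to get $x=D/16$ for part~(2). Your reading of the dangling clause as a leftover artifact is also a sensible one.
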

\begin{proof}
    We first prove part 1.
    We note that
    \begin{displaymath}
        \Pr[Z \geq 2D] = \Pr[Z - D \geq D]
    \end{displaymath}
    In order to apply \cref{lem:chi2-concentration}, we need to choose $x$ such that $2\sqrt{Dx} + 2x = D$.
    That is, $2\sqrt{Dx} + 2x - D = 0$.
    But by the quadratic formula, it follows that
    \begin{displaymath}
        \sqrt x
        = \frac{-2\sqrt D + \sqrt{4 D + 8 D}}{4}
        = \frac{\sqrt 3 - 1}{2}\sqrt D \ .
    \end{displaymath}
    Squaring both sides yields $x = \frac{2-\sqrt 3}{2}D$.

    In order to prove part 2, we note that $\Pr[Z \leq \frac 1 2 D] = \Pr[D - Z \geq \frac 1 2 D]$.
    We apply \cref{lem:chi2-concentration} with the choice of $x$ such that $2\sqrt{D x} = \frac 1 2 D$, which is to say $x = \frac{1}{16} D$.
\end{proof}


\end{vlong}

\section*{Acknowledgments}
This material is based upon  work supported by the National Science Foundation under Grants No.\@ 1350870, 1422830, 15507576, 1657939
and 1117707.

\bibliography{biblio,proceedings}

\end{document}